\def\eqref#1{equation~\ref{#1}}
\def\floor#1{\lfloor #1 \rfloor}
\def\1{\bm{1}}
\def\mB{{\mathbb{B}}}
\def\mE{{\mathbb{E}}}
\def\mR{{\mathbb{R}}}
\def\mS{{\mathbb{S}}}
\DeclareMathAlphabet{\mathsfit}{\encodingdefault}{\sfdefault}{m}{sl}
\SetMathAlphabet{\mathsfit}{bold}{\encodingdefault}{\sfdefault}{bx}{n}
\newcommand{\KL}{D_{\mathrm{KL}}}
\DeclareMathOperator*{\argmax}{arg\,max}
\DeclareMathOperator*{\argmin}{arg\,min}
\newcommand{\todos}[2][]{\todo[size=\scriptsize,color=green!20!white,#1]{SL: #2}}
\newcommand{\Log}{\mathrm{Log}}
\newcommand{\Lin}{\mathrm{Lin}}
\newcommand{\mcL}{\mathcal{L}}
\newcommand{\bF}{\mathbf{F}}
\newcommand{\Ber}{\mathrm{Ber}}
\newcommand{\SR}{\mathfrak{R}} %\mathfrak{R}}
\newcommand{\conf}[1]{\cC_{#1}(\hat\theta_{#1},\delta)}
\newcommand{\MU}{\mathrm{MU}}
\newcommand{\bH}{\bm{H}}
\newcommand{\cA}{\mathcal{A}}
\newcommand{\cC}{\mathcal{C}}
\newcommand{\cD}{\mathcal{D}}
\newcommand{\cE}{\mathcal{E}}
\newcommand{\cF}{\mathcal{F}}
\newcommand{\cG}{\mathcal{G}}
\newcommand{\cK}{\mathcal{K}}
\newcommand{\cM}{\mathcal{M}}
\newcommand{\cN}{\mathcal{N}}
\newcommand{\cO}{\mathcal{O}}
\newcommand{\cU}{\mathcal{U}}
\newcommand{\cV}{\mathcal{V}}
\newcommand{\cW}{\mathcal{W}}
\newcommand{\cY}{\mathcal{Y}}
\newcommand{\EE}{\mathbb{E}}
\newcommand{\HH}{\mathbb{H}}
\newcommand{\II}{\mathbb{I}}
\newcommand{\NN}{\mathbb{N}}
\newcommand{\PP}{\mathbb{P}}
\newcommand{\RR}{\mathbb{R}}
\newtheorem{definition}{Definition}
\newtheorem{theorem}{Theorem}
\newtheorem{lemma}{Lemma}
\newtheorem{corollary}{Corollary}
\newtheorem{proposition}{Proposition}
\newtheorem{assumption}[definition]{Assumption}
\newcommand{\norm}[1]{||#1||}
\DeclarePairedDelimiterX{\inp}[2]{\langle}{\rangle}{#1, #2}
\newcommand{\MULin}{\textnormal{\textsc{MULin}}\xspace}
\newcommand{\MULog}{\textnormal{\textsc{MULog}}\xspace}
\newcommand{\LinTS}{\textnormal{\textsc{SimpleLinTS}}\xspace}
\newcommand{\THATS}{\textnormal{\textsc{THaTS}}\xspace}
\title{Learning What to Recommend: Minimax Optimal Simple Regret in Logistic Bandits}
\author{%
  Shuai Liu \\
  University of Alberta \\
  \texttt{shuailiu725@gmail.com}
  \And
  Alireza Bakhtiari\thanks{Work done during study at University of Alberta.} \\
  University of Washington, Seattle \\
  \texttt{alirezaa@cs.washington.edu}
  \And
  Alex Ayoub \\
  University of Alberta \\
  \texttt{aayoub@ualberta.ca}
  \And
  Botao Hao \\
  OpenAI \\
  \texttt{haobotao000@gmail.com}
  \And
  Csaba Szepesv\'ari \\
  University of Alberta \\
  \texttt{csaba.szepesvari@gmail.com}
}
\begin{document}

\maketitle

\begin{abstract}
We study stochastic logistic bandits with $d$-dimensional action features under the simple-regret objective, where a learner uses $T$ rounds of exploration to output a single final action. The logistic structure is essential here: because the informativeness of an action depends on the local curvature of the sigmoid, actions that are best for immediate reward need not be the most useful for identifying the best final recommendation. We show that the first-order minimax difficulty is governed by $\kappa_*$, the inverse slope of the sigmoid at the optimal action. The lower bound is realized by a shifted saturated hard family in which saturation simultaneously limits the information available about the final decision and controls the value loss from a wrong recommendation. This reveals a hard mechanism distinct from cumulative-regret constructions, even though online-to-batch reductions recover the same leading order in expectation. We then develop two curvature-aware algorithms: \MULog, a pure-exploration method whose final recommendation satisfies a high-probability upper bound of order $\tilde O(d/\sqrt{\kappa_* T})$, matching the lower bound up to logarithmic factors, and \THATS, a Thompson-sampling-style method that provides a computationally lighter alternative. Experiments on both hard and easy geometries support the same picture: informative low-reward actions can make instances substantially easier, and the curvature-aware methods exploit this structure especially effectively.
\end{abstract}

%!TEX spellcheck = en-US
%!TEX root =  neurips_2026.tex
\section{Introduction}\label{sec:intro}

We study stochastic logistic bandits \citep{Filippi2010parametric} under the simple-regret objective \citep{bubeck2009pure}. In each round, the learner selects an action and observes a Bernoulli reward whose mean is a sigmoid function of the inner product between a known feature vector and an unknown parameter in $\RR^d$. After $T$ rounds, the learner is evaluated only by the quality of its final recommendation.

Logistic bandits are a basic model for sequential decisions with binary feedback: clicks, conversions, preferences, successes, and adverse events. They are also the simplest generalized-linear bandit model in which information is genuinely nonuniform. In saturated regions of the sigmoid, rewards are nearly deterministic and each sample carries little information; closer to the origin, the same feature direction can be much more informative. Thus the logistic link creates a geometry that is absent from linear bandits and central to pure exploration: high-reward actions need not be the most useful actions for identifying the final recommendation.

The simple-regret objective is natural when exploration and deployment are separated. A learner may receive a limited experimental budget for data collection and then deploy a single action or policy. This setting appears in A/B testing, clinical dose finding, recommendation pilots, preference learning, and reward-model data collection. In these applications, rewards gathered during exploration are secondary; what matters is the final decision.

\begin{wrapfigure}[18]{r}{0.52\textwidth}
    \vspace{-1.2em}
    \centering
    \includegraphics[width=0.52\textwidth]{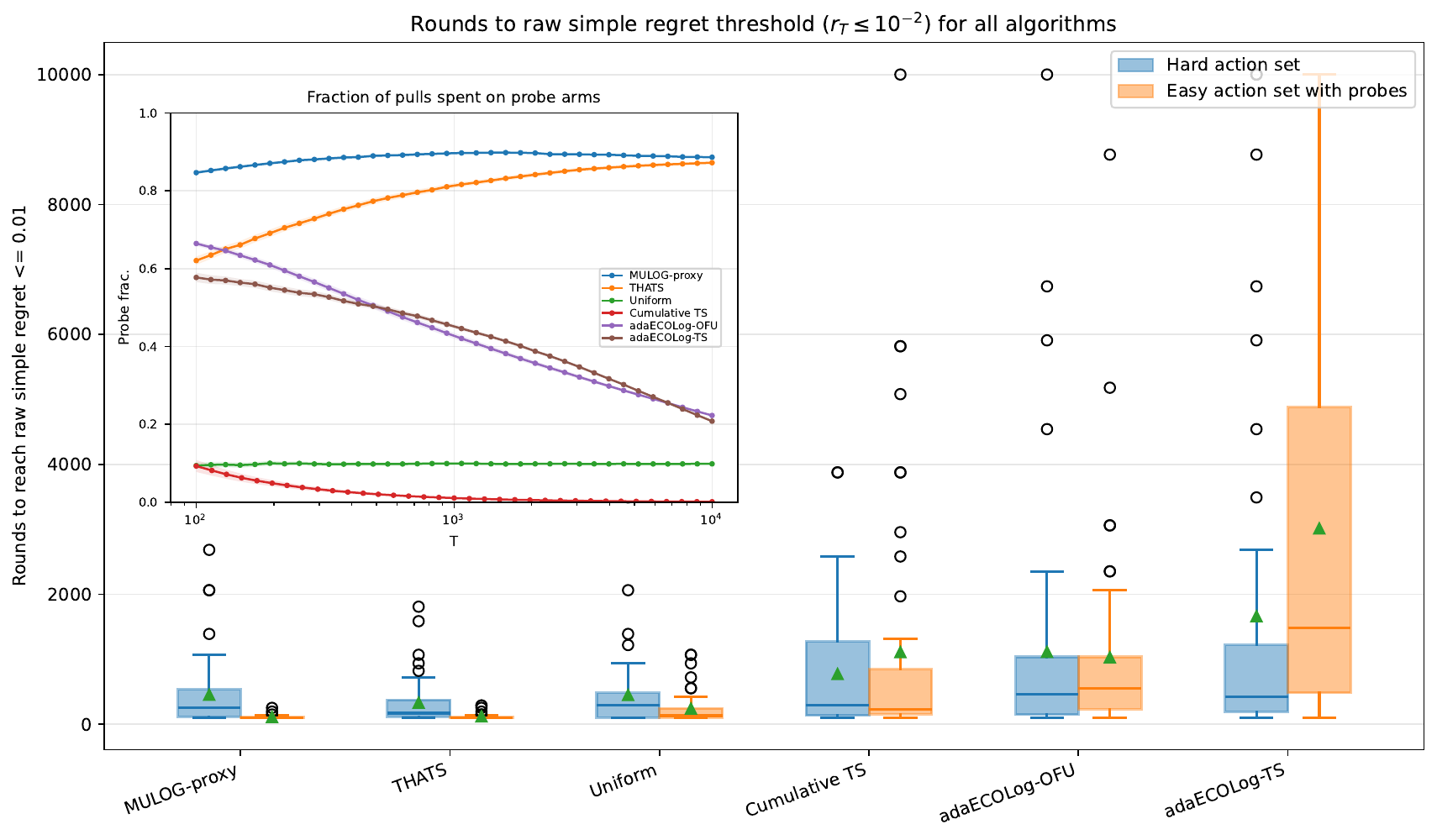}
    \caption{\textbf{Informative probes change the problem.} Adding low-reward probe arms can substantially reduce the rounds needed to reach raw simple regret $10^{-2}$. The inset shows the mechanism: pure-exploration methods keep sampling probes, while cumulative-regret methods avoid them.}
    \label{fig:intro-probes}
    \vspace{-1.0em}
\end{wrapfigure}

\Cref{fig:intro-probes} shows the phenomenon we want to understand. In a logistic instance with low-reward but informative probe arms, pure-exploration methods can identify a good final action much faster than methods designed for cumulative regret. The point is not merely that different algorithms have different constants. The point is that, for logistic feedback, the actions that are statistically useful can be different from the actions that are good to deploy. The full experimental setup and per-algorithm diagnostics are given in \cref{sec:experiment} and \cref{appendix:experiment}.

This raises a basic question: what is the first-order difficulty of logistic simple regret when the learner is free to sample for information? Cumulative-regret lower bounds for logistic bandits, such as that of \mbox{\citet{abeille2021instance}}, are tied to the cost of information: a low-regret learner must play near-optimal actions often, and therefore cannot freely sample directions that are useful but suboptimal. Simple regret removes that cost. The learner may spend the entire budget on actions that are bad to deploy if they are informative.

Our lower bound identifies a different obstruction. We construct shifted instances in a common saturated region of the sigmoid, where both statistical information and value sensitivity are controlled by the same local curvature. This makes the problem hard even for a learner that is allowed to sample purely for information. The resulting lower bound is of order $\Omega(\frac{d}{\sqrt{T\kappa_*}})$, where $\kappa_*$ is the inverse slope of the sigmoid at the optimal action. Equivalently, if $\sigma_* = 1/\kappa_*$ is the reward variance at the optimal action, the bound is minimax over instances with the same $\sigma_*$.

This minimax rate captures the first-order saturated worst case, but it does not exhaust the geometry of logistic pure exploration. Suboptimal actions can be highly informative, as in \cref{fig:intro-probes}, and this can make an instance much easier than the shifted lower-bound family. The algorithmic problem is therefore to control uncertainty in the right curvature-weighted feature directions while still allowing the learner to exploit such informative probes.

Motivated by this geometry, we design Max-Uncertainty-Log (\MULog), a deterministic pure-exploration algorithm with simple regret $\tilde \cO(\frac{d}{\sqrt{T\kappa_*}})$, matching the lower bound up to polylogarithmic factors and lower-order terms. \MULog maintains confidence sets for the unknown parameter, uses them to build a conservative curvature-weighted proxy for the logistic Hessian, and samples the action-parameter pair with largest certified uncertainty. This gives the first direct pure-exploration guarantee matching the minimax logistic simple-regret rate up to logarithmic factors, while controlling the final returned action with high probability. For finite action sets, each action-selection step reduces to a finite collection of convex optimization problems.

We also design and analyze Try Hard Thompson Sampling (\THATS), a randomized logistic algorithm that is computationally lighter than \MULog. \THATS samples a zero-centered Gaussian direction with covariance given by the curvature-weighted design matrix and selects the action with the highest randomized uncertainty score. Its analysis compares this one-sample randomized score to the deterministic max-uncertainty direction, yielding a regret bound of $\tilde \cO(\frac{d^{3/2}}{\sqrt{T\kappa_*}})$. The extra $\sqrt d$ factor comes from using a geometry-agnostic randomized approximation, which is the price of avoiding additional structure assumptions on the action set.

Pure exploration for generalized linear bandits has been studied in fixed-confidence and fixed-budget best-arm identification, primarily for finite-arm structured problems \citep{kazerouni2021best,azizi2022fixed}. These methods are not designed for the fixed-budget, high-probability final-recommendation guarantees we need here, and adapting them to our setting would still have to address the logistic curvature issue that drives our analysis.

The logistic bandit model was introduced by \citet{Filippi2010parametric}. For cumulative regret, early bounds carried a potentially large factor $\kappa$; this factor was later pushed to lower-order terms by \citet{faury2020improved}. \citet{abeille2021instance} obtained an instance-wise cumulative-regret bound of $\tilde O(d\sqrt{T/\kappa_*})$ with a matching lower bound. Together with the standard online-to-batch conversion, this known cumulative-regret guarantee gives a useful expected simple-regret comparison: the conversion returns a uniformly random past action, giving only an expectation guarantee, and the resulting methods, as illustrated in \cref{fig:intro-probes}, fail to exploit the informative low-reward actions.
The cumulative-regret literature also includes Thompson-sampling and randomized-exploration methods for logistic and generalized linear bandits \citep{Dong2019Performance,kveton2020randomized,ding2021efficient,jun2017scalable}, building on the linear Thompson-sampling analysis of \citet{abeille2017linear}. The ECOLog algorithms of \citet{faury2022jointly}, including the adaptive ECOLog baselines used in \cref{fig:intro-probes}, achieve near-optimal cumulative-regret guarantees.
Unlike these cumulative-regret methods, \MULog and \THATS exploit this finer geometry and give direct high-probability guarantees for the final recommendation.

\textbf{Our main contributions are as follows.}
\textbf{First,} we show that simple regret in logistic bandits has a distinct hard mechanism from cumulative regret. We give a shifted saturated lower-bound construction in which value sensitivity and statistical information are controlled by the same local curvature, yielding a lower bound of order $\Omega(\frac{d}{\sqrt{T\kappa_*}})$. This identifies the correct first-order worst-case geometry at fixed optimal-arm variance.
\textbf{Second,} we design curvature-aware pure-exploration algorithms tailored to this geometry. \MULog matches the lower bound up to logarithmic factors via an online conservative curvature-weighted Hessian proxy and a decreasing-uncertainty argument. Its guarantee is direct and holds with high probability for the final returned action. \THATS provides a computationally lighter randomized alternative with guarantee $\widetilde O(d^{3/2}/\sqrt{T\kappa_*})$, and its analysis relates randomized exploration to deterministic max-uncertainty rules.
\textbf{Third,} we show that the same geometric viewpoint goes beyond the worst-case lower-bound family. Our experiments demonstrate that informative low-reward actions can make the problem substantially easier, and that \MULog and \THATS exploit this finer structure especially effectively.

%!TEX root =  ../neurips_2026.tex
\section{Preliminaries}
We introduce notation, the problem formulation, and the logistic-regression objects used later.
\subsection{Notation}
% The set of reals is denoted by $\RR$. 
For a real-valued single-variable differentiable function $f:\RR\to\RR$, we use $\dot f, \ddot f$ to denote the first and second order derivatives of $f$, respectively.
% the set of nonnegative reals by $\R_+$. 
% For a set $\cU\subseteq \RR$, we denote its interior by $\cU^\circ$. %\todoc[inline]{Awkward to start a sentence with a symbol.} 
% For real numbers $a,b$, we use $a\wedge b$ and $a \vee b$ to denote the $\min\{a,b\}$ and $\max\{a,b\}$, respectively.
The $\ell_2$-norm on $\RR^d$ is denoted by $\|\cdot\|$.
We use $\mS^{d-1}$ and $\mB_d(r)$ to denote the unit sphere and the $d$-dimensional ball with radius $r$ in $(\mR^d,\|\cdot\|)$, respectively.
For a positive definite matrix $A$, we define the norm induced by $A$ to be $\|x\|_A = \sqrt{x^\top Ax}$.
% With $\psi$ a logical expression,  $\II\{\psi\}=1$ if $\psi$ evaluates to true and $\II\{\psi\}=0$, otherwise.
For a set $\cK$, let $\cM_1(\cK)$ denote the set of all distributions over $\cK$;
we assume the associated measurability structure is clear from context.  
For a distribution $P \in \cM_1(\cK)$, we write $\mathrm{supp}(P) \subseteq \cK$ to denote its support.
The abbreviation ``a.s.'' refers to statements that hold almost surely. For positive semidefinite matrices $P,S$, we use $P\preceq S$ to denote that $S-P$ is positive semidefinite. We use $\RR^+$ to denote the positive reals. For $a,b\in \RR$, $a\asymp b$ denotes $a$ equals to $b$ up to constant factors.

\subsection{Problem Setup}
A logistic bandit instance is described by a tuple $(\cA,\phi,\theta_*)$, where $\cA$ is the action space (possibly infinite),
% The set of available actions can depend on the context; the structure $\cA$ specifies these sets.
% In particular, when the context is  $s\in \cS$, 
% the actions available are the elements of the set $\cA(s)$, which we assume to be finite.\footnote{Or an infinite set such that $\{\phi(s,a)\}_{a\in \cA(s)}$ is a compact subset of $\RR^d$.} 
% We let $\cZ = \cup_{s\in \cS} (\{ s \} \times \cA(s))$ denote the context-action space.
Next, $\phi:\cA\to\RR^d$ is a feature map and $\theta_*$ is the underlying parameter vector such that 
% $\nu\in \cM_1(\cS)$ is a distribution over the contexts and without loss of generality (w.l.o.g.) 
% we assume $\mathrm{supp}(\nu)=\cS$.
% $\cP$ is a probability kernel from $\cA$ to the reals: For $a\in \cA$, $P(\cdot|a)\in \cM_1(\RR)$ is a distribution of the reward that will be received when in context action $a$ is chosen.
% Finally, 
given an action $a\in \cA$, the reward is a Bernoulli distribution with mean $\mu(\phi(a)^\top\theta_*)=\frac{1}{1+\exp(-\phi(a)^\top\theta_*)}$.
For $T\ge 1$, the learner interacts with the environment in rounds $t=1,2,\ldots,T$.
The learner knows $\cA$ and $\phi$, but does not know $\theta_*$. 
At the beginning of each round $t$, 
% the learner observes
% a context $S_t \in \cS$ sampled from $\nu$. 
% Next, 
given its past information, the learner
chooses an action $A_t \in \cA$, after which they receive 
the reward 
\begin{align*}
    X_t \sim P_{\theta_*}(\cdot \vert A_t)=\Ber(\mu(\phi(A_t)^\top\theta_*))\,.
\end{align*}
% We will find it useful to introduce the ``noise'' associated with the reward $X_t$,
% $\epsilon_t = X_t - \mu(\phi(S_t,A_t)^\top \theta_*)$.
% $\mu$ is the identity function (i.e., $\mu(x)=x$ for all $x\in \RR$).
% $\mu(\cdot)$ is the logistic function, i.e., $\mu(x)=\frac{1}{1+\exp(-x)}$
% and $\cP(\cdot|s,a)$ is the Bernoulli distribution with parameter $\mu(\phi(s,a)^\top \theta_*)$.

After $T$ rounds, the learner returns a possibly randomized recommendation represented by $\pi\in \cM_1(\cA)$.
% Here, a policy is a map from $\cS$ to $\cup_{s\in \cS} \cA(s)$ such that for any $s\in \cS$ context, $\pi(s)\in \cA(s)$.
% We define the \emph{value} of a policy $\pi$ to be
% \begin{align*}
%     v(\pi) &= \int \nu(ds) \int r \cP( dr|s, \pi(s))\\
%     &= \int  \mu( \phi(s,\pi(s))^\top \theta_* ) \nu(ds)\,,
% \end{align*}
%
% which gives the expected reward when action $\pi(s)$ is used and the second equality is by our assumptions.
% \[
% v(\pi) = \int  \mu( \phi(s,\pi(s))^\top \theta_* ) \nu(ds)\,.
% \]
% Let $\Pi$ denote the set of all possible policies.
We assume the existence of an optimal action $a_*(\theta_*)\in \argmax_{a\in \cA}\mu(\phi(a)^\top\theta_*)$. When the identity of $\theta_*$ is clear from the context, we denote the optimal action as $a_*$.
The simple regret of $\pi$ is defined to be
\begin{align}
    \SR(\pi, \theta_*) &= \int_{\cA}\left[\mu(\phi(a_*)^\top\theta_*) - \mu(\phi(a)^\top\theta_*)\right]\pi(da)\,.
\end{align}
This is the loss compared to playing an optimal action. 
% Since we assumed that all action sets are finite, an optimal policy $\pi^*$ exists and is determined by $\pi^*(s) = \argmax_{a\in \cA(s)} 
% \mu( \phi(s,a)^\top \theta_* ) $.\footnote{For the extension to infinite action sets, $\pi^*$ also exists because a continuous image of a compact set is compact and hence the maximum is attained.}

%
%Let $(\Omega,\cF,\mP)$ be a probability space
%and $\cF_t:= \sigma(S_1, A_1, X_1, S_2, \cdots, S_t, A_t,X_t)$
%be the filtration over the contexts up to
%the time step $t\ge 1$.  We define $\cF_0:=\{\emptyset,\cF\}$ to be the 
%trivial $\sigma$-algebra.
%W.l.o.g., we only consider greedy policies
%$\pi: \cS \to \cA$ which are functions that map a context $s$ to an action available to $s$.
%in which we compare the optimal policy $\pi^*:s\mapsto \arg\max_{a\in \cA(s)}\mu(\phi(s,a)^\top\theta^*)$
%to the policy $\hat\pi$ that is output by the learner after $T$
%rounds of interaction
%\begin{align}
%    \SR(\hat{\pi}) &= \mE\left[\mu\left(\phi(S_{T+1},\pi^*(S_{T+1}))^\top \theta^*\right) - \mu\left(\phi(S_{T+1}, \hat{\pi}(S_{T+1}))^\top \theta^*\right)\vert \cF_T\right]\\
%    &=\int_{s\in \cS}\mu\left(\phi(s,\pi^*(s))^\top \theta^*\right) - \mu\left(\phi(s, \hat{\pi}(s))^\top \theta^*\right)\nu(ds)\notag.
%\end{align}

We assume that the features are normalized in the following sense:
\begin{assumption}
\label[assumption]{ass:bounded_feature}
    For all $a\in \cA$, $\|\phi(a)\|\le 1$. 
\end{assumption}
The algorithms also need to know an upper bound on the norm of the unknown parameter:
\begin{assumption}
\label[assumption]{ass:bounded_para}
    There is a \emph{known} constant $S>0$ such that $\|\theta_*\|\le S$.
\end{assumption}
These assumptions are standard when studying generalized linear bandits \citep{faury2020improved,abeille2021instance,janz2024exploration,lee2024unified,liu2024almost}. 

The nonlinearity of the mean function $\mu$ with respect to the parameter $\theta$ is measured by the following quantity \citep{Filippi2010parametric,faury2020improved}
\begin{align}
    \kappa = \sup_{u\in \phi(\cA),\theta\in \mB_d(S)}\frac{1}{\dot\mu(u^\top\theta)}\,.
    \label{eq:kappas}
\end{align}
Here, $\phi(\cA) = \{\phi(a)\,:a\in \cA\}$, as usual.
In particular, the nonlinearity at the optimal action, as discussed in \cref{sec:intro}, is important for characterizing the instance-dependent behavior: 
\begin{align*}
    \kappa_*(\theta_*)=\frac{1}{\dot\mu(a_*^\top\theta_*)}.
\end{align*}
When the identity of $\theta_*$ is clear from context, we abbreviate $\kappa_*(\theta_*)$ as $\kappa_*$.
For the sigmoid link, $1/\dot\mu(z)=2+\exp(z)+\exp(-z)$.
Thus $\kappa$ can scale exponentially with the size of the admissible parameter set $S$, for example, when 
$\phi(\cA)=\mB_d(1)$.

\subsection{Logistic regression}
The algorithms estimate $\theta_*$ via regularized maximum likelihood. Given data $\cD=\{(\phi(A_i),X_i)\}_{i=1}^{t-1}$, define the regularized negative log-likelihood
% $\mcL_t^\lambda: \RR^d \to \RR$: $\hat\theta_t = \argmin_{\theta\in\mR^d} \mcL_t^\lambda(\theta)$.
% The estimators differ in terms of the loss used.
% In this work, we will consider regularized versions of the respective losses.

% \paragraph{Linear regression}

% For the linear case, we consider the regularized squared loss defined via
% \begin{align}
%     \mcL^\lambda_t(\theta) := \lambda\|\theta\|^2 + \sum_{i=1}^{t-1}(X_i - \phi(A_i)^\top\theta)^2\label{eq:RLS}.
% \end{align}
% The regularized least-squares (RLS) estimator returns the minimizer of this loss.
% This minimizer in this case is availably in closed form:
% $\hat\theta_t=V_{t}^{-1} \sum_{i=1}^{t-1}X_i \phi( A_i)$, 
% where 
% \begin{align}
% V_t=\lambda I + \sum_{i=1}^{t-1}\phi(A_i)\phi(A_i)^\top\,.
% \label{eq:vtdef}
% \end{align}

% \paragraph{Logistic regression} 
% For the logistic case, we consider the regularized negative log-likelihood function associated with our probabilistic model:
\begin{align}
    \label{eq:logistic-loss}
    \mcL_t^\lambda(\theta) = \lambda \|\theta\|^2
    - \sum_{i=1}^{t-1} \ell(\mu(\phi(A_i)^\top \theta),X_i),
\end{align}
where $\ell(x,y)=y\log(x)+(1-y)\log(1-x)$ is the binary cross-entropy function. 
%  (To reduce clutter, we recycle 
% $\mcL_t^\lambda$ to denote the loss both for the linear and logistic case:
% the meaning of $\mcL_t^\lambda$ should always be clear from the context.)
The regularized maximum likelihood estimate (MLE) is $\hat\theta_t = \argmin_{\theta\in\mR^d} \mcL_t^\lambda(\theta)$.
Since $\mcL_t^\lambda$ is strongly convex, $\hat\theta_t$ is its unique stationary point.
Defining
$g_t(\theta)=\sum_{i=1}^{t-1}\mu(\phi(A_i)^\top \theta)\phi(A_i)+\lambda \theta$,
it follows that
\begin{align}\label{eq:rmle_grad}
    \nabla_\theta\mcL_t^\lambda(\hat\theta_t)=g_t(\hat\theta_t)-\sum_{i=1}^{t-1} \phi(A_i)X_i=0,
\end{align}
and this equation has no other solution. The Hessian is
\begin{align}\label{eq:losshess}
    H_t(\theta) &:=\nabla_{\theta}^2 \mcL_t(\theta)
    = \lambda I+\sum_{i=1}^{t-1} \dot\mu(\phi(A_i)^\top \theta)\phi(A_i)\phi(A_i)^\top\,.
\end{align}

%!TEX root =  ../neurips_2026.tex
\section{Main Results}\label{sec:algorithms}
We first give the lower bound, which isolates the curvature-driven hardness of logistic simple regret and separates it from cumulative-regret mechanisms. We then give two curvature-aware upper bounds — a deterministic algorithm matching the lower bound up to logarithmic factors and a Thompson-sampling variant lighter at the cost of an extra $\sqrt d$ factor — both borrowing tightly coupled components from cumulative-regret work \citep{faury2020improved,abeille2021instance}; the central challenge is coordinating proxy growth and confidence-set shrinkage so that the global $\kappa$ cancels, which in the linear case is automatic but under the logistic link requires new analysis.
\subsection{Lower Bound}

In this section we present a lower-bound construction that yields a
hypercube-type simple regret lower bound of order
\(
    \Omega\left(\frac{d}{\sqrt{\kappa_* T}}\right).
\)
Although online-to-batch reductions from cumulative regret can already recover this leading expected order, they do not explain what makes it unavoidable under the simple-regret objective.
This construction is conceptually different from the cumulative-regret
lower bound of \cite{abeille2021instance}. Their lower bound is based on
trajectory tracking: if a learner has small cumulative regret, then it must
play close to the optimal arm for most rounds. Consequently, some directions
remain under-explored, and perturbations in those directions generate hard
nearby alternatives. This mechanism captures the exploration--exploitation
tradeoff inherent to cumulative regret.

For simple regret, however, this argument does not directly apply. The
learner is evaluated only through its final recommendation, and therefore it
may spend many rounds on informative but instantaneously suboptimal actions.
The relevant obstruction is instead information-theoretic: the interaction can reveal only limited information about the unknown parameter, and residual ambiguity after $T$ rounds propagates into final recommendation error.
The construction below isolates this worst-case geometry at a fixed value of $\kappa_*$; it does not rule out easier instances where other actions provide more informative observations about the final recommendation.

\begin{theorem}[Informal lower bound]
    Fix \(d\ge 1\). There exists a normalized shifted hypercube action set
    \(\phi(\mathcal A)\subseteq \mathbb B_{d+1}(1)\), a family of instances
    \(\Theta\), and universal constants
    \(c,C>0\) such that, for all \(T\ge C\kappa_* d^2\),
    \[
        \inf_\pi \sup_{\theta\in\Theta}
        \mathbb E_{\theta,\pi}\!\left[\SR^{\Log}(\theta,\pi)\right]
        \ge
        c\,\frac{d}{\sqrt{\kappa_* T}},
    \]
    where \(\kappa_*(\theta)= \kappa_*\) uniformly over
    \(\theta\in\Theta\) and $\EE_{\theta_*,\pi}$ is the expectation under the probability measure induced by the interconnection between $\pi$ and the bandit.
\end{theorem}

The action set is a normalized hypercube:
\(
    \phi(\mathcal A)
    =
    \left\{
    \phi(a_u)
    :=
    \frac{1}{\sqrt 2}
    \left(
        e_0+\frac{1}{\sqrt d}\sum_{j=1}^d u_j e_j
    \right)
    :
    u\in\{\pm1\}^d
    \right\}.
\)
For \(m,\varepsilon>0\), consider the family
\(
    \Theta
    =
    \left\{
    \theta_v
    :=
    \sqrt 2
    \left(
        m e_0+\frac{\varepsilon}{\sqrt d}
        \sum_{j=1}^d v_j e_j
    \right)
    :
    v\in\{\pm1\}^d
    \right\}.
\)
Then, for every pair \(u,v\in\{\pm1\}^d\),
\(
    \phi(a_u)^\top\theta_v
    =
    m+\varepsilon\frac{u^\top v}{d}
    \in [m-\varepsilon,m+\varepsilon].
\)
The optimal arm under \(\theta_v\) is \(a_v\), and 
\(
    \phi(a_v)^\top\theta_v=m+\varepsilon.
\)
Hence
\(
    \kappa_*(\theta_v)
    =
    \frac{1}{\dot\mu(m+\varepsilon)}=:\kappa_*
\)
and 
\(
    \dot\mu(\phi(a_u)^\top\theta_v)
    \asymp
    \dot\mu(m+\varepsilon)
    =
    \frac{1}{\kappa_*(\theta_v)}
\)
for all \(u,v\) as long as \(\varepsilon=O(1)\).

For simplicity of exposition, suppose that the final recommendation is a
single action \(a_{\hat u}\), where
\(\hat u\in\{\pm1\}^d\). Randomized recommendations can be handled by
conditioning on the algorithm's final randomization. If the true instance is
\(\theta_v\), then
\(
    (\phi(a_v)-\phi(a_{\hat u}))^\top\theta_v
    =
    \varepsilon
    \left(
        1-\frac{\hat u^\top v}{d}
    \right)
    =
    \frac{2\varepsilon}{d}H(\hat u,v),
\)
where \(H(\hat u,v)\) denotes Hamming distance. By the mean value
theorem,
\begin{align}
    \mathbb E_{\theta_v,\pi}\!\left[\SR^{\Log}(\theta_v,\pi)\right]
    =
    \mathbb E_{\theta_v,\pi}
    \left[
        \mu(\phi(a_v)^\top\theta_v)
        -
        \mu(\phi(a_{\hat u})^\top\theta_v)
    \right] \notag&\gtrsim
    \frac{1}{\kappa_*}
    \mathbb E_{\theta_v,\pi}
    \left[
        \phi(a_v)^\top\theta_v
        -
        \phi(a_{\hat u})^\top\theta_v
    \right] \notag\\
    &=
    \frac{2\varepsilon}{\kappa_*d}
    \mathbb E_{\theta_v,\pi}\!\left[H(\hat u,v)\right].
    \label{eq:lower_hamming}
\end{align}
Thus, each wrong sign produces a loss of
order \(\varepsilon/(\kappa_* d)\).
We now control the amount of information available to distinguish neighboring
hypercube vertices. Let \(v^{(i)}\) denote the sign vector obtained from
\(v\) by flipping only the \(i\)-th coordinate. For any played arm
\(a_u\),
\(
    \phi(a_u)^\top(\theta_v-\theta_{v^{(i)}})
    =
    \frac{2\varepsilon}{d}u_i v_i.
\)
Consequently, the one-step KL divergence satisfies for $Z_t=\phi(a_u)^\top\theta_v$ and $Z_t'=\phi(a_u)^\top\theta_{v^{(i)}}$,
\[
    \KL\!\bigg(
        \Ber(\mu(Z_t)),\Ber(\mu(Z_t'))
    \bigg)
    \lesssim
    \frac{(\mu(Z_t)-\mu(Z_t'))^2}{\dot\mu(Z_t')}
    \lesssim
    \frac{1}{\kappa_*}
    (Z_t-Z_t')^2
    \lesssim
    \frac{\varepsilon^2}{\kappa_* d^2}.
\]
By the chain rule for KL over the adaptive interaction,
\(
    \KL\!\left(
        \mathbb P_{\theta_v,\pi}
        ,
        \mathbb P_{\theta_{v^{(i)}},\pi}
    \right)
    \lesssim
    \frac{T\varepsilon^2}{\kappa_* d^2}.
\)
Assouad's lemma now converts this pairwise indistinguishability into a
lower bound on Hamming error. If
\(
    \frac{T\varepsilon^2}{\kappa_*d^2}
    \lesssim 1,
\)
then no policy can reliably distinguish \(v\) from all of its single-bit
flips. Formally,
\(
    \frac{1}{2^d}\sum_{v\in\{\pm1\}^d}
    \mathbb E_{\theta_v,\pi}
    \left[
        H(\hat u,v)
    \right]
    \gtrsim d.
\)
Choosing
\(
    \varepsilon
    \asymp
    d\sqrt{\frac{\kappa_*}{T}}
\)
makes the neighboring instances statistically indistinguishable, provided
\(T\gtrsim \kappa_*d^2\) so that \(\varepsilon=O(1)\). Therefore there
exists some \(v\in\{\pm1\}^d\) such that
\(
    \mathbb E_{\theta_v,\pi}
    \left[
        H(\hat u,v)
    \right]
    \gtrsim d.
\)
Plugging this into \cref{eq:lower_hamming} gives
\[
    \sup_{v\in\{\pm1\}^d}
    \mathbb E_{\theta_v,\pi}
    \left[
        \SR^{\Log}(\theta_v,\pi)
    \right]
    \gtrsim
    \frac{\varepsilon}{\kappa_*}
    \asymp
    \frac{d}{\sqrt{\kappa_* T}}.
\]
Since this holds for every policy \(\pi\), the desired minimax lower bound
follows.

%!TEX root =  ../neurips_2026.tex
        \begin{algorithm}[t]
        \caption{Max-Uncertainty-Log (\MULog)}
        \label[algorithm]{alg:max-uncertainty-logSR-GOAT}
        % \footnotesize
        \begin{algorithmic}[1]
            \REQUIRE $\lambda > 0$, $S>0$
            \STATE $L_1=\lambda I,\cD = \{\}$, $\cW_0=\mathbb B_d(S)$ \COMMENT{$\mathcal D$ is a multiset}
            \FOR{$t: 1 \to T$}
                \STATE Solve for $\hat\theta_t=\argmin_{\theta\in \RR^d}\mcL_t^\lambda(\theta)$
                \STATE Build $\cW_t=\cW_{t-1}\cap \cC_t(\delta,\hat\theta_t)\cap \mB_d(S)$ (\cref{eq:wtconf})
                \STATE Select $(A_t,\theta_t) = \argmax_{a\in \cA,\theta\in \cW_t}U(a,\theta,L_t)$ (\cref{eq:action_selection_logMU}) and receive $X_t$
                \STATE Solve for $\theta_t'=\argmax_{\theta\in \cW_t}|\phi(A_t)^\top\theta|$ and update $L_{t+1}=L_t+\dot\mu(\phi(A_t)^\top\theta_t')\phi(A_t)\phi(A_t)^\top$ as in \cref{eq:proxy}
            \ENDFOR
            \STATE Solve for $\hat\theta_{T+1}=\argmin_{\theta\in \RR^d}\mcL_{T+1}^\lambda(\theta)$ and build $\cW_{T+1}$ (\cref{eq:wtconf})
            \STATE Pick any ${\theta}_{T+1}^{\Log}\in  \cW_{T+1}$.
%            ${\theta}_{T+1}^{\Log} = \argmin_{\theta\in \cW_{T+1}}\mcL_{T+1}^\lambda(\theta)$
            \STATE Return $\hat a:\argmax_{a\in \cA}\phi(a)^\top\theta_{T+1}^{\Log}$
        \end{algorithmic}
        \end{algorithm}

\subsection{Direct Uncertainty Maximization: Deterministic Algorithm \MULog}
\label[section]{sec:det_alg}
\MULog greedily selects the action with largest certified logistic uncertainty.
Max-uncertainty sampling is a standard pure-exploration idea \citep{kazerouni2021best,azizi2022fixed}; the new issue here is how to make the uncertainty score faithful to logistic curvature.
The two key steps are to construct a curvature-aware uncertainty score and to prove that this score decreases fast enough.
% Note that the challenge lies in the fact that, with finite data, the learner cannot reduce uncertainty for all possible future contexts—as some low probability contexts may never be observed. Nevertheless, as we will show, this simple strategy still ensures uniform progress over time.
After exploration, it returns the action that is greedy for a parameter in the final confidence set. 
\label{section:MULog}
% Let us now turn to the case of logistic bandits.
Define the uncertainty associated with an action
$a\in \cA$, a parameter vector $\theta\in \RR^d$ and a positive definite matrix $L$ using
\begin{align*}
    U(a,\theta,L):=\sqrt{\dot\mu(\phi(a)^\top\theta)}\|\phi(a)\|_{L^{-1}}.
\end{align*}
% Here, the term $\|\phi(a)\|_{L^{-1}}$ is similar to the one we have seen in the linear case. 
The factor $\|\phi(a)\|_{L^{-1}}$ measures directional parameter uncertainty, while $\sqrt{\dot\mu(\phi(a)^\top\theta)}$ accounts for the local sensitivity of the logistic mean.
The design matrix must therefore be curvature weighted: by \cref{eq:losshess}, each sample contributes $\dot\mu(\phi(A_i)^\top \theta_*)\phi(A_i)\phi(A_i)^\top$ to the local Hessian.
Curvature-weighted designs also appear in cumulative-regret analyses for logistic bandits \citep{faury2020improved,abeille2021instance}; here they are used to obtain a decreasing uncertainty certificate for the simple-regret objective.
Since $\theta_*$ is unknown, \MULog uses confidence sets to build a conservative lower bound.
In round $t$, \MULog uses the following matrix:
\begin{align}
    L_t = \lambda I + \sum_{i=1}^{t-1}\dot\mu(\phi(A_i)^\top\theta_i')\phi(A_i)\phi(A_i)^\top,\label{eq:proxy}
\end{align}
where $\theta_i'=\argmin_{\theta\in \cW_i} \dot\mu(\phi(A_i)^\top\theta)$
and $\cW_i$ is a confidence set available at the beginning of round $i$ that will be defined momentarily.
Since $\dot\mu(z)$ decreases as $|z|$ increases,
the minimization in \cref{eq:proxy} is equivalent to
\begin{align}
    \theta_i'=\argmax_{\theta\in\cW_i}|\phi(A_i)^\top\theta|\label{eq:min_variance}
\end{align}
For convex $\cW_i$, this reduces to two linear optimization problems over $\cW_i$.

The action-selection rule uses the largest plausible value of this uncertainty:
% Hence, the value of $\theta$ to be used here should be close to the true value. 
\begin{align}
    (A_t,\theta_t) = \argmax_{a\in \cA,\theta\in \cW_t} U(a,\theta,L_t)\,.
    \label{eq:action_selection_logMU}
\end{align}
For any fixed action, the optimization again reduces to a convex subproblem over $\cW_t$.

It remains to choose the confidence sets $(\cW_t)_{t\ge 1}$ and the policy returned at the end.
For the confidence set construction,  the algorithm first solves for the unconstrained MLE $\hat\theta_t$. 
Letting
\begin{align}
    \cC_t(\delta,\theta_{\circ}) = \left\{\theta:\mcL_t^\lambda(\theta) - \mcL_t^\lambda(\theta_{\circ})\le \beta_t(\delta)\right\},
    \label{eq:ctdef}
\end{align}
where for $\delta\in(0,1]$, $\beta_t(\delta):\RR^+\to\RR^+$ is an increasing function in $t$ whose value is introduced in \cref{appendix:confset} in detail, we choose 
\begin{align}
\cW_t = \cap_{i=1}^{t} \cC_i(\delta,\hat\theta_i) \cap \mB_d(S)\,, \label{eq:wtconf}
\end{align} 
so that $\{\cW_t\}_{t\ge 1}$ is decreasing---a key requirement for the uncertainty argument. Here $\cC_t(\delta,\hat\theta_t)$ is a confidence set containing $\theta_*$ with probability at least $1-\delta$ simultaneously for all $t\ge 1$.
% The confidence set $\cC_t$ is inspired by the work of \citet{abeille2021instance} and the change to the confidence set constructed there is insignificant. To be more specific, our confidence width $\beta_t(\delta)$ is a lower order polynomial of $S$ than the one in \citet{abeille2021instance}, which is a minor improvement. \todoc{are we actually improving things or we are also making things worse? in what way if the second?}

Hence $\theta_*$ lies in $\cW_t$ for all $t\ge 1$ with probability at least $1-\delta$. At the end, we construct $\cW_{T+1}$, pick any vector $\theta_{T+1}^\Log\in \cW_{T+1}$, and return an action greedy for this vector. 
% Note that any other vector choices in $\cW_{T+1}$ can do equally as good job as $\theta_{T+1}^\Log$. We employ one of the most obvious ways of picking out a vector in $\cW_{t+1}$. 
For finite action sets, \cref{alg:max-uncertainty-logSR-GOAT} can be implemented by solving a finite collection of convex subproblems: lines 3 and 5 are convex, and line 6 iterates over actions with a convex subproblem for each fixed action.

%
%
%It then jointly chooses the action and parameter in an admissible parameter set $\cW_t$ which gives the maximum uncertainty:
%\begin{align}
%\end{align}
%Note that with high probability $\cW_t$ is not empty thanks to \cref{lem:confidence_set}.
%We keep shrinking the admissible parameter set $\cW_t$ so that the uncertainty is decreasing in the logistic case as well. 
The following lemma is stated in a slightly more general form to cover the analysis of \THATS (\cref{sec:TS}) as well. The proof is deferred to \cref{appendix:proof-mulog}. When applying it to \MULog, we let $K=L_t$, $K' = L_{t+1}$, $\cY=\cW_t$ and $\cY' = \cW_{t+1}$. As the proof shows, the facts that $\cW_t$ is shrinking and $L_t$ is increasing are critical. One motivation for the specific sequence $L_t$ is precisely this increasing property. 
\begin{restatable}[Decreasing Uncertainty Lemma -- Logistic Bandits]{lemma}{decUncertaintiesGOAT}
\label[lemma]{lem:dec-uncertainties-GOAT}
Let $K'\succeq K$ be $d\times d$ positive definite matrices and 
$\cY'\subseteq \cY\subseteq \RR^d$ bounded closed sets.
Then, 
%    Let $\{K_t\}_{t=1}^\infty \subset \RR^{d\times d}$ be a sequence of random positive definite matrices such that $K_{t+1} \succeq K_t$ a.s., 
%    and $K_t$ is $\cF_{t-1}$-measurable for all $t \ge 1$. 
%    Let $\{\cY_t\}_{t=1}^\infty \subseteq \RR^d$ 
%    be a sequence of random sets such that $\cY_t \subseteq \cY_{t+1}$ a.s. 
%    and $\cY_t$ is $\cF_{t-1}$-measurable for all $t \ge 1$. 
%    Then it holds a.s. that
    \begin{align*}
     \max_{a\in \cA, \theta\in \cY'}\cU(a,\theta,K')
%        \mE\left[\max_{a\in \cA(S_{t+1}),\,\theta\in \cY_{t+1}} U(S_{t+1},a,\theta,K_{t+1}) \,\Big\vert\, \cF_{t}\right]
        \le 
        \max_{a\in \cA, \theta\in \cY}\cU(a,\theta,K)\,,
%        \mE\left[\max_{a\in \cA(S_{t}),\,\theta\in \cY_t} U(S_{t},a,\theta,K_{t}) \,\Big\vert\, \cF_{t-1}\right].
    \end{align*}
\end{restatable}

The regret bound follows from this lemma by a summing argument, though the nonlinearity of $\mu$ makes the proof significantly more technical.
\begin{theorem}[\MULog Simple Regret Bound]\label[theorem]{thm:mulog-regret}
    Under Assumptions \ref{ass:bounded_feature} and \ref{ass:bounded_para}, 
    there exists some universal constant $\mathfrak c>0$ such that the following holds:
    Let $\delta\in[0,1)$, $T\ge 1$ be arbitrary.
    Then, with probability at least $1-\delta$, it holds that
    the simple regret of the action $\hat a$ computed by 
    \MULog (\cref{alg:max-uncertainty-logSR-GOAT}) 
    with an appropriate choice of $\lambda$,
    after $T$ rounds 
    is upper bounded by
    \begin{align*}
        \SR^\Log(\hat a)&\le \mathfrak c\, d\sqrt{\frac{\log ({T}/{\delta})}{T\kappa_*}} + \frac{\mathrm{poly}\bigg(d,\kappa,\log(T),\log(1/\delta)\bigg)}{T^{3/4}\sqrt{\kappa_*}}\,.
    \end{align*}
\end{theorem}
The explicit choice of $\lambda$, which
depends only on $\delta,T$, and $S$, is given in \cref{appendix:proof-mulog}. The polynomial term is lower order in $T$; the leading term is the rate-relevant part and matches the lower bound up to logarithmic factors.
This is a first-order minimax guarantee; it does not preclude sharper instance-dependent rates on easier geometries where informative low-reward actions accelerate identification. 
Practitioners may prefer returning $\argmax_{a\in\cA}\phi(a)^\top\hat\theta_{T+1}$ (MLE-greedy), which carries no guarantee but tends to perform well empirically.
% \paragraph{Connection and difference to \citet{faury2020improved}}
% Regarding the use of $L_t$,
% and our difference from \citet{faury2020improved}
% the use of $L_t$, which is key in our analysis, 
% was inspired by 
% the Logistic-UCB-2 algorithm of 
% \citet{faury2020improved}. 
% However,  
% naively replacing $V_t$ with $H_t(\theta)$, which seems to be a natural choice inherited from the cumulative regret side, would not achieve decreasing uncertainty without introducing a factor of $\sqrt{\kappa}$, no matter what $\theta$ is. One way to deal with this is to employ a matrix quantifying uncertainty that can be built in an online fashion. \citet{faury2020improved} also considers such a matrix. \color{red}Nevertheless, our algorithm is completely different from Log-UCB-2 of \citet{faury2020improved} in spirit. \color{black} We leave detailed comparisons to \cref{appendix:comparison-faury}.

%!TEX root =  ../neurips_2026.tex
        \begin{algorithm}[t]
        \caption{Try Hard Thompson Sampling ($\THATS$)}
        % \footnotesize
        \label{alg:THATS}
        \begin{algorithmic}[1]
            \REQUIRE $\lambda > 0, S>0$
            \STATE $L_1 = \lambda I$, $\cV_0=\mathbb B_d(S)$
            \FOR{$t: 1 \to T$}
            \STATE Sample $\tilde\theta_t\sim \cN(0,L_t^{-1})$
            \STATE Solve for $\bar\theta_t=\argmin_{\theta\in \mathbb B_d(S)}\mcL_t^\lambda(\theta)$
            \STATE Build $\cV_t=\cV_{t-1}\cap \cE_t(\delta,\bar\theta_t)$ by \cref{eq:contrained_rmle}
            \STATE Select $A_t=\argmax_{a\in \cA}{\dot\mu(\phi(a)^\top\bar\theta_t)}^{\frac{1}{2}}|\phi(a)^\top\tilde\theta_t|$ (\cref{eq:THATS-action}) and receive $X_t$
            \STATE Solve $\theta_t'=\argmax_{\theta\in \cV_t}|\phi(A_t)^\top\theta|$ and update $L_{t+1}=L_t+\dot\mu(\phi(A_t)^\top\theta_t')\phi(A_t)\phi(A_t)^\top$ as in \cref{eq:proxy-TS}
            \ENDFOR
            \STATE Solve $\bar\theta_{T+1}=\argmin_{\theta\in \mathbb B_d(S)}\mcL_{T+1}^\lambda(\theta)$ and construct $\cV_{T+1}$ by (\ref{eq:TS-intersection})
            \STATE Pick any ${\theta}_{T+1}^{\Log}\in  \cV_{T+1}$\\
            \STATE Return $\tilde a = \argmax_{a\in \cA}\phi(a)^\top\theta_{T+1}^{\Log}$
        \end{algorithmic}
        \end{algorithm}
\subsection[Randomized Algorithm THaTS]{Randomized Algorithm $\THATS$}\label[subsection]{sec:TS}
\THATS is a randomized, computationally lighter version of the same curvature-aware uncertainty principle used by \MULog.
Its pseudocode is shown in \cref{alg:THATS}.
% In particular, for an action $a$ and parameter
% $\theta$, the relevant local uncertainty is governed by terms of the form
% \[
%     \sqrt{\dot\mu(\phi(S_t,a)^\top \theta)}
%     \|\phi(S_t,a)\|_{L_t^{-1}},
% \]
% where $L_t$ is the curvature-weighted design matrix built from past actions.

\MULog searches over actions and confidence-set parameters to find the largest certified logistic uncertainty.
\THATS instead uses a randomized approximation: at round $t$, it samples
\[
    \widetilde\theta_t \sim \mathcal N(0,L_t^{-1}),
\]
where $L_t$ is the analogous increasing curvature-weighted design matrix,
\begin{align}
    L_t &= \lambda I + \sum_{i=1}^{t-1}\dot\mu(\phi(A_i)^\top\theta_i')\phi(A_i)\phi(A_i)^\top \quad \text{where} \quad\theta_i'=\argmin_{\theta\in \cE_i(\delta,\bar\theta_i)} \dot\mu(\phi(A_i)^\top\theta)
    \label{eq:proxy-TS}
\end{align}
so that for any fixed feature vector $u$,
\[
    \mathbb E\!\left[(u^\top \widetilde\theta_t)^2
    \mid \mathcal F_{t-1}\right]
    =
    \|u\|_{L_t^{-1}}^2 .
\]
Thus $|u^\top\widetilde\theta_t|$ is a one-sample Monte Carlo proxy for $\|u\|_{L_t^{-1}}$.
Unlike \MULog, \THATS centers its confidence sets at $\bar \theta_t=\argmin_{\theta\in \mB_d(S)}\mcL_t^\lambda(\theta)$, the MLE over the $S$-ball, and uses the inflated sets
\begin{align}
    \cE_t(\delta,\theta_{\circ})&=\{\theta\in \mB_d(S)\,:\, 
    \mcL_t^\lambda(\theta)-\mcL_t^\lambda(\theta_{\circ})
    \le 2\beta_t(\delta)^2\}.
    \label{eq:contrained_rmle}
\end{align}
These sets are convex and avoid searching over an additional intersection with the $S$-ball during action selection.
\THATS weights the randomized directional score by the estimated local curvature:
\begin{align}
    A_t
    =
    \argmax_{a\in \cA}
    \sqrt{\dot\mu(\phi(a)^\top \bar\theta_t)}
    \left|
        \phi(a)^\top \widetilde\theta_t
    \right|\quad \text{where} \quad \bar\theta_t = \argmin_{\theta\in \mathbb B_d(S)}\mcL_t^\lambda(\theta)\,.
    \label{eq:THATS-action}
\end{align}
We also save computation in \cref{eq:proxy-TS} by optimizing over $\cE_i$ rather than intersections of confidence sets.
At the end, the algorithm picks any vector in
\begin{align}
    \cV_{T+1}=\cap_{i=1}^{T}\cE_i(\delta,\bar\theta_i)\,,\label{eq:TS-intersection}
\end{align}
and returns an action greedy with respect to that vector; the intersection is what makes the decreasing-uncertainty argument go through.

We will find it useful to relate the new confidence sets with the ones used previously (see \cref{eq:ctdef}):
\begin{restatable}{lemma}{EtRelateCt} \label{lem:TS_conf_set}
    Let $\delta\in [0,1)$.
    With probability at least $1-\delta$,
    $\bar\theta_t\in \cC_t(\delta,\hat\theta_t)\cap \mB_d(S)$. 
    Furthermore, with probability at least $1-\delta$, 
    $\cC_t(\delta,\hat\theta_t)\cap \mB_d(S)\subseteq \cE_t(\delta,\bar\theta_t)$.
\end{restatable}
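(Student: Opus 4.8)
The plan is to prove the two inclusions separately; both reduce to chaining the defining inequalities of the regularized losses with the optimality of the two estimators $\hat\theta_t$ (the unconstrained minimizer of $\mcL_t^\lambda$) and $\bar\theta_t$ (its minimizer over $\mB_d(S)$, see \cref{eq:contrained_rmle}). No concentration argument beyond \cref{lem:confidence_set} should be needed.

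For the first inclusion I would work on the event $\gE=\{\theta_*\in\cC_t(\delta,\hat\theta_t)\ \text{for all }t\ge1\}$, which has probability at least $1-\delta$ by \cref{lem:confidence_set}; I will in fact establish the claim for all $t\ge1$ simultaneously on $\gE$. By \cref{ass:bounded_para} we have $\theta_*\in\mB_d(S)$, so since $\bar\theta_t$ minimizes $\mcL_t^\lambda$ over $\mB_d(S)$ it holds that $\mcL_t^\lambda(\bar\theta_t)\le\mcL_t^\lambda(\theta_*)$. On $\gE$ this yields
\[
\mcL_t^\lambda(\bar\theta_t)-\mcL_t^\lambda(\hat\theta_t)\ \le\ \mcL_t^\lambda(\theta_*)-\mcL_t^\lambda(\hat\theta_t)\ \le\ \beta_t(\delta),
\]
i.e.\ $\bar\theta_t\in\cC_t(\delta,\hat\theta_t)$, while $\bar\theta_t\in\mB_d(S)$ holds by construction; hence $\bar\theta_t\in\cC_t(\delta,\hat\theta_t)\cap\mB_d(S)$.

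For the second inclusion I would argue deterministically (so it holds a fortiori with probability at least $1-\delta$). Fix any $\theta\in\cC_t(\delta,\hat\theta_t)\cap\mB_d(S)$. Then $\theta\in\mB_d(S)$, which is one of the two membership conditions for $\cE_t(\delta,\bar\theta_t)$; for the other, decompose
\[
\mcL_t^\lambda(\theta)-\mcL_t^\lambda(\bar\theta_t)=\bigl(\mcL_t^\lambda(\theta)-\mcL_t^\lambda(\hat\theta_t)\bigr)+\bigl(\mcL_t^\lambda(\hat\theta_t)-\mcL_t^\lambda(\bar\theta_t)\bigr).
\]
The first bracket is at most $\beta_t(\delta)$ because $\theta\in\cC_t(\delta,\hat\theta_t)$, and the second bracket is nonpositive because $\hat\theta_t$ is the global (unconstrained) minimizer of $\mcL_t^\lambda$; hence $\mcL_t^\lambda(\theta)-\mcL_t^\lambda(\bar\theta_t)\le\beta_t(\delta)$. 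Since the radius $2\beta_t(\delta)^2$ defining $\cE_t$ dominates the radius used for $\cC_t$ under the normalization of $\beta_t$ fixed in \cref{appendix:confset} (this factor-$2$ inflation is exactly what provides the needed slack), we conclude $\theta\in\cE_t(\delta,\bar\theta_t)$, proving the inclusion.

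The only thing requiring care — and the closest this lemma comes to an obstacle — is the bookkeeping with the two confidence radii: one must keep the threshold of $\cC_t$ no larger than the threshold $2\beta_t(\delta)^2$ of $\cE_t$, so that the final inequality closes. Beyond that, everything is immediate from the convex-optimality of $\hat\theta_t$ and $\bar\theta_t$ and from monotone chaining of loss values.
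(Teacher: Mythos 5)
Your proof is correct and follows essentially the same route as the paper: the first inclusion is obtained exactly as in the paper's proof (on the event of \cref{lem:confidence_set}, chain $\mcL_t^\lambda(\bar\theta_t)\le\mcL_t^\lambda(\theta_*)$ with $\theta_*\in\cC_t(\delta,\hat\theta_t)$ and use $\bar\theta_t\in\mB_d(S)$ by construction). For the ``furthermore'' part you differ slightly, and in fact slightly improve on the paper: the paper writes $\mcL_t^\lambda(\theta)-\mcL_t^\lambda(\bar\theta_t)\le|\mcL_t^\lambda(\theta)-\mcL_t^\lambda(\hat\theta_t)|+|\mcL_t^\lambda(\bar\theta_t)-\mcL_t^\lambda(\hat\theta_t)|$ and bounds the second term by invoking the first part of the lemma (hence the stated probability $1-\delta$), whereas you observe that the second bracket $\mcL_t^\lambda(\hat\theta_t)-\mcL_t^\lambda(\bar\theta_t)$ is nonpositive since $\hat\theta_t$ is the unconstrained minimizer, which makes your second inclusion deterministic. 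The only bookkeeping you defer --- that the $\cC_t$ threshold is dominated by the $\cE_t$ threshold $2\beta_t(\delta)^2$ --- does close: with the paper's choice $\beta_t(\delta)=\rho_t(\delta)+\rho_t(\delta)^2/\sqrt{\lambda_T}$ and $\rho_t(\delta)\ge(\tfrac12+S)\sqrt{\lambda_T}$, $\lambda_T\ge1$, one has $\beta_t(\delta)\ge\tfrac12$, so $\beta_t(\delta)\le2\beta_t(\delta)^2$; and under the reading used in the paper's appendix, where the $\cC_t$ radius is $\beta_t(\delta)^2$ rather than the $\beta_t(\delta)$ of \cref{eq:ctdef}, the domination is immediate, so your argument is robust to that inconsistency in the paper.
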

\THATS can be less statistically efficient than \MULog for general action sets, because a single Gaussian sample need not point toward the maximum-uncertainty direction $\phi(A_t^\MU)$. 
The key step is to lower bound its expected uncertainty through the normalized correlation between $M_t=L_t^{1/2}\tilde\theta_t$ and $L_t^{-1/2}\phi(A_t^\MU)$. 
Since $M_t/\norm{M_t}$ is uniform on the sphere, we get:
\begin{restatable}{lemma}{TsToMaxUnc}\label{lem:TS-to-max-uncertainty}
    Let $t\ge 1$ and
\begin{align*}
    A_t^{\MU} = \argmax_{a\in \cA}\max_{\theta\in \cV_t} \dot\mu(\phi(a)^\top\theta)\|\phi(a)\|_{L_t^{-1}}.
\end{align*}
Then, it holds almost surely that,
    \begin{align*}
       \min_{\theta\in \cE_t}
       \;U(A_t^\MU,\theta,L_t)
		\cdot 
		I(A_t^\MU,L_t)
        \le \mE\left[U(A_t,\bar\theta_t,L_t)\Big\vert\cF_{t-1}\right]\,,
    \end{align*}
where for $a\in \cA$ and $V\succ 0$, we define $I(a,V)=
    \int_{\mS^{d-1}}  
    \left|\left\langle x,\frac{V^{-1/2}\phi(a)}{\|V^{-1/2}\phi(a)\|}\right\rangle\right|
    dx$.
\end{restatable}
With standard tools in probability theory, $I(a,V)=\Omega(1/\sqrt{d})$ for all $a\in \cA$ and $V\succ 0$. Using this in the analysis gives the following result:
%The $\THATS$ algorithm has a theoretical guarantee described below.
\begin{theorem}\label[theorem]{thm:THATS-regret}
    Under Assumptions \ref{ass:bounded_feature} and \ref{ass:bounded_para}, 
    there exists some universal constant $\mathfrak C>0$ such that the following holds:
    Let $\delta\in[0,1)$, $T\ge 1$ be arbitrary.
    Then, with probability at least $1-\delta$, it holds that
    the simple regret of the action $\tilde a$ computed by 
    \THATS (\cref{alg:THATS})
    with an appropriate choice of $\lambda$
    after $T$ rounds 
    is upper bounded by
%    \begin{align*}
%        \SR(\hat\pi)&\le \mathfrak c \, d^{3/2}\sqrt{\frac{\log \frac{T}{\delta}}{T}}\,.
%    \end{align*}
%    Let $\delta\in[0,1)$. 
%    Under assumptions \ref{ass:bounded_feature}, \ref{ass:bounded_para}, it follows that with probability at least $1-3\delta$, there exists some universal constant $\mathfrak C$ such that the simple regret of the policy output by \cref{alg:THATS} is upper bounded by
    \begin{align*}
        \SR(\tilde a)&\le \mathfrak C d^{3/2}\sqrt{\frac{\log (T/\delta)}{T\kappa_*}} +
        \frac{\mathrm{poly}\bigg(\kappa,d,\log(T),\log(1/\delta)\bigg)}{T^{3/4}\sqrt{\kappa_*}}\,.
    \end{align*}
\end{theorem}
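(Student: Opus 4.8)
The plan is to follow the analysis of \MULog behind \cref{thm:mulog-regret}, replacing every step that uses ``\MULog plays the exact maximum-uncertainty action'' by an application of \cref{lem:TS-to-max-uncertainty}; this is why the bound picks up an extra $\sqrt d$, exactly as \LinTS does relative to \MULin. First I would fix the high-probability event: by \cref{lem:confidence_set} and \cref{lem:TS_conf_set}, with probability at least $1-O(\delta)$ we have, for all $t$, $\theta_*\in\cC_t(\delta,\hat\theta_t)\cap\mB_d(S)\subseteq\cE_t(\delta,\bar\theta_t)$, hence $\theta_*\in\cV_t$; moreover, since each $\theta_i'$ minimises $\dot\mu(\phi(S_i,A_i)^\top\theta)$ over $\theta$ in a set containing $\theta_*$, we get $L_t\preceq H_t(\theta_*)$ for all $t$. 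In particular $\theta_*,\theta_{T+1}^{\Log}\in\cV_{T+1}$, and by convexity of $\cV_{T+1}$ the segment joining them lies in $\cV_{T+1}$. Everything below is on this event.

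As for \MULog, the next step reduces the simple regret to an integrated uncertainty. For a fixed context $s$, write $a^*=\pi^*(s)$, $\hat a=\hat\pi(s)$. Greediness of $\hat a$ w.r.t.\ $\theta_{T+1}^{\Log}$ kills the cross term and gives $\mu(\phi(s,a^*)^\top\theta_*)-\mu(\phi(s,\hat a)^\top\theta_*)\le\sum_{a\in\{a^*,\hat a\}}|\mu(\phi(s,a)^\top\theta_*)-\mu(\phi(s,a)^\top\theta_{T+1}^{\Log})|$. Expanding each summand around $\theta_*$ (using $|\ddot\mu|\le\dot\mu\le\tfrac14$), controlling the linear part by the $\kappa$-free self-concordant confidence-set machinery of \citet{faury2020improved,abeille2021instance} — which, combined with $L_{T+1}\preceq H_{T+1}(\theta_*)$, bounds it by $\tilde\cO(\sqrt d)\cdot\dot\mu(\phi(s,a)^\top\theta_*)\|\phi(s,a)\|_{L_{T+1}^{-1}}\le\tilde\cO(\sqrt d)\cdot\max_{\theta\in\cV_{T+1}}U(s,a,\theta,L_{T+1})$ (the last step since $\theta_*\in\cV_{T+1}$) — and integrating over $\nu$, I would obtain
\begin{align*}
\SR(\hat\pi)\ \le\ \tilde\cO(\sqrt d)\,\int\max_{a\in\cA(s),\,\theta\in\cV_{T+1}}U(s,a,\theta,L_{T+1})\,\nu(ds)\ +\ (\text{second-order remainder}),
\end{align*}
where the remainder collects the quadratic terms, each of the form $\tfrac18(\phi(s,a)^\top(\theta_{T+1}^{\Log}-\theta_*))^2$.

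The heart of the proof is to show the integral above is $\tilde\cO(d/\sqrt T)$ with $\kappa$ absent. Applying \cref{lem:dec-uncertainties-GOAT} with $(K,K',\cY,\cY')=(L_t,L_{t+1},\cV_t,\cV_{t+1})$ — legitimate because $L_t\preceq L_{t+1}$ and $\cV_{t+1}\subseteq\cV_t$ — bounds it by $\tfrac1T\sum_{t=1}^T\mE\!\left[\max_{\theta\in\cV_t}U(S_t,A_t^{\MU},\theta,L_t)\mid\cF_{t-1}\right]$. Split the sum into the rounds in which $\|\phi(S_t,A_t)\|_{L_t^{-1}}$ is still large — there are at most $\mathrm{poly}(\kappa,d,\log T,\log(1/\delta))$ of them, and each contributes at most $\tfrac14\lambda^{-1/2}$, so their total contribution is $O(\mathrm{poly}(\kappa,d,\log T,\log(1/\delta))/T)$ — and the remaining rounds, on which all the self-concordance ratios comparing $\cV_t$ to $\cE_t$ and $\bar\theta_t$ to $\theta_t'$ are $\Theta(1)$. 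On the latter, \cref{lem:TS-to-max-uncertainty} together with $I(\cdot)=\Omega(1/\sqrt d)$ gives $\max_{\theta\in\cV_t}U(S_t,A_t^{\MU},\theta,L_t)\le O(\sqrt d)\,\mE[U(S_t,A_t,\bar\theta_t,L_t)\mid\cF_{t-1},S_t]$; then a martingale concentration step (from conditional expectations to realised values), Cauchy--Schwarz, the inequality $\dot\mu^2\le\tfrac14\dot\mu$, and the curvature-weighted elliptical potential lemma applied to $L_{t+1}=L_t+\dot\mu(\phi(S_t,A_t)^\top\theta_t')\phi(S_t,A_t)\phi(S_t,A_t)^\top$ yield $\sum_t U(S_t,A_t,\bar\theta_t,L_t)^2=O(d\log(T/\lambda))$, whence this part is $O\!\bigl(\sqrt d\cdot\sqrt{d\log(T/\lambda)/T}\bigr)=\tilde\cO(d/\sqrt T)$. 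Combining with the previous display gives the leading term $\tilde\cO(\sqrt d)\cdot\tilde\cO(d/\sqrt T)=\tilde\cO(d^{3/2}/\sqrt T)$; the second-order remainder, after bounding the $L_{T+1}$-diameter of $\cV_{T+1}$ crudely by $O(\kappa^{1/2}\beta_{T+1})$ and controlling $\int\max_a\|\phi(s,a)\|_{L_{T+1}^{-1}}^2\nu(ds)$ by the same decreasing-uncertainty/elliptical-potential argument for squared uncertainties (using $\dot\mu\ge1/\kappa$ once), is $O(\mathrm{poly}(\kappa,d,\log T,\log(1/\delta))/T)$. The value of $\lambda$ is chosen as in \cref{appendix:proof-mulog} to balance constants.

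I expect the main obstacle to be precisely this last, $\kappa$-aware bookkeeping: one must verify that every place where self-concordance is used — the Taylor expansion of the prediction error, the passage from the loss-sublevel sets $\cE_t$ to Hessian-norm balls, the comparisons among $\dot\mu$ at $\bar\theta_t$, $\theta_t'$ and $\theta_*$, and the weighted elliptical potential lemma — produces $e^{|\phi^\top(\theta-\theta')|}$ factors that are either $\Theta(1)$ once the confidence sets have shrunk or multiply a quantity whose sum over $t$ is only $\mathrm{polylog}$, so that $\kappa$ never reaches the leading order. This is the argument of \citet{faury2020improved,abeille2021instance}, but it is made harder here by the Thompson-sampling randomisation (which routes everything through \cref{lem:TS-to-max-uncertainty} and a martingale argument in place of a deterministic $\argmax$) and by \THATS's use of the constrained sets $\cE_t$ — rather than their running intersection — in the action rule \cref{eq:THATS-action} and the proxy update \cref{eq:proxy-TS}, which is exactly the mismatch that \cref{lem:TS_conf_set} is designed to absorb.
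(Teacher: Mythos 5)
Your overall skeleton matches the paper's (reduce $\SR$ to the expected prediction error, split into first- and second-order Taylor terms, use \cref{lem:dec-uncertainties-GOAT} with $(L_t,L_{t+1},\cV_t,\cV_{t+1})$, compare to the played action via \cref{lem:TS-to-max-uncertainty}, then reverse-Bernstein and the elliptical potential lemma, with $\kappa$ confined to $1/T$ terms). However, the step at the heart of your argument has a genuine gap: after the decreasing-uncertainty step you must control $\max_{\theta\in\cV_t}U(S_t,A_t^{\MU},\theta,L_t)=U(S_t,A_t^{\MU},\theta_t^{\MU},L_t)$, whereas \cref{lem:TS-to-max-uncertainty} only lower-bounds $\mE[U(S_t,A_t,\bar\theta_t,L_t)\mid\cF_{t-1},S_t]$ by $\min_{\theta\in\cE_t}U(S_t,A_t^{\MU},\theta,L_t)\cdot I(\cdot)$. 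The discrepancy is governed by $\exp\bigl(|\phi(S_t,A_t^{\MU})^\top(\theta_t^{\MU}-\omega_t^{\MU})|\bigr)$, i.e.\ by the uncertainty of the \emph{max-uncertainty} action $\|\phi(S_t,A_t^{\MU})\|_{L_t^{-1}}$ (times $\gamma_{T+1}=\tilde\cO(\sqrt d)$), not by $\|\phi(S_t,A_t)\|_{L_t^{-1}}$, which is what your round-splitting criterion uses. Smallness of the played action's uncertainty does not imply smallness of the un-played $A_t^{\MU}$'s, and no elliptical-potential count applies to the latter because $L_t$ only accumulates the played features $\phi(S_t,A_t)$: with stochastic contexts, rounds in which $A_t^{\MU}$ has non-negligible uncertainty can recur throughout the horizon, so they cannot be absorbed into a $\mathrm{poly}(\kappa,d,\log T)/T$ term, and your ``self-concordance ratios are $\Theta(1)$ on the remaining rounds'' claim is unsupported exactly where the leading term is decided. (There is also the smaller issue that the split is by a time-$t$ random event while the summands are $\cF_{t-1}$-conditional expectations.)

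The paper avoids this by never requiring the ratio to be $\Theta(1)$ per round: it Taylor-expands $\dot\mu(\phi(S_t,A_t^{\MU})^\top\theta_t^{\MU})$ at $\omega_t^{\MU}$, so the leading piece is $U(S_t,A_t^{\MU},\omega_t^{\MU},L_t)$, which is transferred to the played action \emph{in conditional expectation} via \cref{coro:TS-to-max-uncertainty} (cost $\sqrt d$), while the remainder is bounded by $\gamma_{T+1}\|\phi(S_t,A_t^{\MU})\|^2_{L_t^{-1}}$ via Cauchy--Schwarz and \cref{lem:TS-conf-width}, and this squared-uncertainty term is likewise transferred to $A_t$ using the second-order comparison (\cref{coro:TS-to-max-uncertainty-2nd-order}, preceded by \cref{lem:dec-uncertainties-order-2GOAT} where needed) at the price of $\kappa$ factors; only then are \cref{cor:Reversed-Bernstein-inequality-for-martingales} and \cref{lem:epl} applied to the played-action sums. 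This is how $\kappa$ ends up only in the $1/T$ term. Your treatment of $R_2$ and of the sum $\sum_t\dot\mu(\phi_t^\top\bar\theta_t)\|\phi_t\|_{L_t^{-1}}$ (where the EPL weights are $\dot\mu(\phi_t^\top\theta_t')$, not $\dot\mu(\phi_t^\top\bar\theta_t)$, so another Taylor remainder is needed) would have to be routed through the same expectation-level transfer; as written, the proposal does not establish the $\kappa$-free $\tilde\cO(d^{3/2}/\sqrt T)$ leading term.
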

The full expression for the lower-order terms and the choice of $\lambda$ are given in the appendix. The leading term is worse than the deterministic rate by a factor $\sqrt d$, reflecting the one-sample randomized approximation to the max-uncertainty direction. The gap traces to $I(a,V)=\Omega(1/\sqrt{d})$ being tight, so closing it would require structural assumptions on the action set or a multi-sample scheme that better approximates the deterministic maximizer; the analogous gap for randomized exploration in linear bandits with smooth convex action sets has recently been closed by \citet{abeille2025randomised}.

\section{Numerical Experiment}
\label{sec:experiment}
\Cref{fig:intro-probes} summarizes the empirical message: the shifted saturated family is a worst-case geometry, not the whole problem.
We use a finite-arm version of the shifted saturated hypercube ($d=8$, saturation $m=4$, perturbation $\varepsilon=1$, $100$ runs) and compare two action sets: the hard set $\cA_{\mathrm{hard}}$ containing only the $2^7 = 128$ candidate arms in the saturated region, and the easy set $\cA_{\mathrm{easy}}$ that adds $14$ probe arms whose logits are near the center of the sigmoid and whose rewards are therefore much lower than those of the candidate arms.
When probe arms are present, $\MULog$ and $\THATS$ allocate a large fraction of samples to them and reach the simple-regret threshold $10^{-2}$ in a few hundred rounds in most runs, compared to several thousand rounds on $\cA_{\mathrm{hard}}$; sign-recovery error falls to near zero at a matching pace. Cumulative Thompson sampling and uniform exploration also benefit from the easier geometry, but the improvement is weaker and more variable, because neither algorithm actively seeks out the low-reward probes. The probe-pull fraction diagnostic confirms that the gains of \MULog\ and \THATS\ come from exploiting the informative structure of the action set, not merely from the change in recommendation set.
Full setup details and per-algorithm diagnostic plots are in \cref{appendix:experiment}.

%!TEX root =  neurips_2026.tex
\section{Conclusions}\label{sec:conclusions}
% Simple regret for linear & logistic; we don't care about cumu-regret
% Application to RLHF - dueling bandits; cumulative regret briefly mention; the difference between us and RLHF via dueling

Pure exploration in logistic bandits is a natural model for high-impact decision problems where the goal is not to maximize reward during data collection, but to identify a strong final action. Our results show that this objective has its own geometry: local curvature at the optimal action, $\kappa_*$, controls the first-order minimax rate, and curvature-aware exploration matches that lower bound up to logarithmic factors in the deterministic case. Beyond the minimax construction, the experiments make clear that informative low-reward actions can dramatically improve identification, even when cumulative-regret methods avoid them. We view these results as a starting point for a sharper theory of decision-focused exploration in nonlinear bandits. The probe-arm phenomenon has direct practical implications: settings such as A/B testing and reward-model data collection may include actions that are poor to deploy yet highly informative, and a curvature-aware algorithm that actively seeks them out can substantially reduce the rounds needed to reach a reliable recommendation — a behavior cumulative-regret methods structurally cannot exhibit. Natural next questions include localized complexities that capture when such probes help, closing the $\sqrt d$ gap for randomized exploration, and extending the framework to contextual logistic bandits, where we expect the algorithms and analyses to generalize with appropriate adaptation.

%    Modification: make SR work
%    conjecture: kappa free linear-logistic
%    remove the confidence set from the THTS

\bibliographystyle{plainnat}
\bibliography{ref}

@article{Bach2010Self,
	author = {Francis Bach},
	journal = {Electronic Journal of Statistics},
	title = {{Self-concordant analysis for logistic regression}},
	year = {2010},
	}

@inproceedings{bubeck2009pure,
  title={Pure exploration in multi-armed bandits problems},
  author={Bubeck, S{\'e}bastien and Munos, R{\'e}mi and Stoltz, Gilles},
  booktitle={Conference on Algorithmic Learning Theory (ALT)},
  year={2009},
}

@inproceedings{janz2024exploration,
  title={Exploration via linearly perturbed loss minimisation},
  author={Janz, David and Liu, Shuai and Ayoub, Alex and Szepesv{\'a}ri, Csaba},
  booktitle={International Conference on Artificial Intelligence and Statistics (AISTATS)},
  year={2024}
}

@misc{abeille2025randomised,
  title={When and why randomised exploration works (in linear bandits)},
  author={Abeille, Marc and Janz, David and Pike-Burke, Ciara},
  year={2025},
  eprint={2502.08870},
  archivePrefix={arXiv}
}

@article{jun2017scalable,
  title={Scalable generalized linear bandits: Online computation and hashing},
  author={Jun, Kwang-Sung and Bhargava, Aniruddha and Nowak, Robert and Willett, Rebecca},
  journal={Advances in Neural Information Processing Systems (NeurIPS)},
  volume={30},
  year={2017}
}

@inproceedings{faury2022jointly,
  title={Jointly efficient and optimal algorithms for logistic bandits},
  author={Faury, Louis and Abeille, Marc and Jun, Kwang-Sung and Calauz{\`e}nes, Cl{\'e}ment},
  booktitle={International Conference on Artificial Intelligence and Statistics},
  year={2022},
}

@InProceedings{Dong2019Performance,
  title = 	 {On the Performance of Thompson Sampling on Logistic Bandits},
  author =       {Dong, Shi and Ma, Tengyu and Van Roy, Benjamin},
  booktitle = 	 {Conference on Learning Theory (COLT)},
  year = 	 {2019},}

@inproceedings{liu2024almost,
  title={Almost Free: Self-concordance in Natural Exponential Families and an Application to Bandits},
  author={Liu, Shuai and Ayoub, Alex and Sentenac, Flore and Tan, Xiaoqi and Szepesvari, Csaba},
  booktitle={Advances in Neural Information Processing Systems (NeurIPS)},
  year={2024},
}

@inproceedings{kveton2020randomized,
  title={Randomized exploration in generalized linear bandits},
  author={Kveton, Branislav and Zaheer, Manzil and Szepesvari, Csaba and Li, Lihong and Ghavamzadeh, Mohammad and Boutilier, Craig},
  booktitle={International Conference on Artificial Intelligence and Statistics (AISTATS)},
  year={2020},
}

@inproceedings{ding2021efficient,
  title={An efficient algorithm for generalized linear bandit: Online stochastic gradient descent and thompson sampling},
  author={Ding, Qin and Hsieh, Cho-Jui and Sharpnack, James},
  booktitle={International Conference on Artificial Intelligence and Statistics (AISTATS)},
  year={2021},
}

@inproceedings{lee2024unified,
  title={A Unified Confidence Sequence for Generalized Linear Models, with Applications to Bandits},
  author={Lee, Junghyun and Yun, Se-Young and Jun, Kwang-Sung},
  booktitle={Advances in Neural Information Processing Systems (NeurIPS)},
  year={2024},
}

@article{abeille2017linear,
	author = {Abeille, Marc and Lazaric, Alessandro},
	journal = {Electronic Journal of Statistics},
	title = {{Linear Thompson sampling revisited}},
	year = {2017}}

@article{abbasi2011improved,
  title={Improved algorithms for linear stochastic bandits},
  author={Abbasi-Yadkori, Yasin and P{\'a}l, D{\'a}vid and Szepesv{\'a}ri, Csaba},
  journal={Advances in neural information processing systems (NeurIPS)},
  year={2011}
}

@inproceedings{zanette2021design,
  title={Design of experiments for stochastic contextual linear bandits},
  author={Zanette, Andrea and Dong, Kefan and Lee, Jonathan N and Brunskill, Emma},
  booktitle={Advances in Neural Information Processing Systems (NeurIPS)},
  year={2021}
}

@book{Lattimore_Szepesvari_2020, place={Cambridge}, title={Bandit Algorithms}, publisher={Cambridge University Press}, author={Lattimore, Tor and Szepesvári, Csaba}, year={2020}}

@InProceedings{faury2020improved,
  title = 	 {Improved optimistic algorithms for logistic bandits},
  author =       {Faury, Louis and Abeille, Marc and Calauzenes, Clement and Fercoq, Olivier},
  booktitle = 	 {International Conference on Machine Learning (ICML)},
  year = 	 {2020},
}

@inproceedings{abeille2021instance,
  title        = {Instance-wise minimax-optimal algorithms for logistic bandits},
  author       = {Abeille, Marc and Faury, Louis and Calauz{\`e}nes, Cl{\'e}ment},
  booktitle    = {International Conference on Artificial Intelligence and Statistics (AISTATS)},
  year         = {2021},
}

@article{kazerouni2021best,
  title   = {Best arm identification in generalized linear bandits},
  author  = {Kazerouni, Abbas and Wein, Lawrence M.},
  journal = {Operations Research Letters},
  volume  = {49},
  number  = {3},
  pages   = {365--371},
  year    = {2021},
}

@inproceedings{azizi2022fixed,
  title     = {Fixed-Budget Best-Arm Identification in Structured Bandits},
  author    = {Azizi, Mohammad Javad and Kveton, Branislav and Ghavamzadeh, Mohammad},
  booktitle = {International Joint Conference on Artificial Intelligence (IJCAI)},
  pages     = {2798--2804},
  year      = {2022},
}

@book{doob2012measure,
  title={Measure theory},
  author={Doob, Joseph L},
  volume={143},
  year={2012},
  publisher={Springer Science \& Business Media}
}

@inproceedings{Filippi2010parametric,
 author = {Filippi, Sarah and Cappe, Olivier and Garivier, Aur\'{e}lien and Szepesv\'{a}ri, Csaba},
 booktitle = {Advances in Neural Information Processing Systems (NeurIPS)},
 title = {Parametric bandits: The generalized linear case},
 year = {2010}
}

\clearpage
\appendix
% \input{list_of_notations.tex}
% \section{More Related Work}\label{appendix:related_work}
\section{Regret Analysis of \MULin}
%     \begin{minipage}[t]{0.48\textwidth}
        \begin{algorithm}[t]
        \caption{Max-Uncertainty-Lin (\MULin)}
        \label[algorithm]{alg:max-uncertainty-Lin}
        \footnotesize
        \begin{algorithmic}[1]
            \REQUIRE $\lambda > 0$
            \STATE $V_1 = \lambda I$, $\cD = \{\}$ \COMMENT{$\cD$ is a multiset}
            \FOR{$t: 1 \to T$}
                \STATE Select $A_t=\argmax_{a\in \cA} \|\phi(a)\|_{V_{t}^{-1}}$
%                by \cref{eq:action_selection_MUlin}
                \STATE Receive reward $X_t$
            \ENDFOR
            \STATE $\hat{\theta}_T = V_{T+1}^{-1} \sum_{t=1}^T X_t \phi(A_t)$
            \STATE Return $\hat a=\argmax_{a \in \cA} \phi(a)^\top \hat{\theta}_T$
        \end{algorithmic}
        \end{algorithm}
%     \end{minipage}
%     \hfill
    % \begin{minipage}[t]{0.48\textwidth}
%         \begin{algorithm}[H]
%         \caption{Max-Uncertainty-Log (\MULog)}
%         \label[algorithm]{alg:max-uncertainty-logSR-GOAT}
%         \footnotesize
%         \begin{algorithmic}[1]
%             \REQUIRE $\lambda > 0$, $S>0$
%             \STATE $L_1=\lambda I,\cD = \{\}$, $W_0=\mathbb B_d(S)$ \COMMENT{$\mathcal D$ is a multiset}
%             \FOR{$t: 1 \to T$}
%                 \STATE Solve for $\hat\theta_t=\argmin_{\theta\in \RR^d} \sum_{i=1}^{t-1}\ell(\mu(\phi(A_i)^\top\theta),X_i)$ 
%                 \STATE build $\cW_t=\cW_{t-1}\cap \cC_t(\delta,\hat\theta_t)$ (\cref{eq:wtconf})
%                 \STATE Select $(A_t,\theta_t) = \argmax_{a\in \cA,\theta\in \cW_t}U(a,\theta,L_t)$ (\cref{eq:action_selection_logMU}) and receive $X_t$
%                 \STATE Solve for $\theta_t'=\argmin_{\theta\in \cW_t}|\phi(A_t)^\top\theta|$ and update $L_{t+1}=L_t+\dot\mu(\phi(A_t)^\top\theta_t')\phi(A_t)\phi(A_t)^\top$ as in \cref{eq:proxy}
%             \ENDFOR
%             \STATE Solve for $\hat\theta_{T+1}$ and build $\cW_{T+1}$ (\cref{eq:wtconf})
%             \STATE Pick any ${\theta}_{T+1}^{\Log}\in  \cW_{T+1}$.
% %            ${\theta}_{T+1}^{\Log} = \argmin_{\theta\in \cW_{T+1}}\mcL_{T+1}^\lambda(\theta)$
%             \STATE Return $\hat a:\argmax_{a\in \cA}\phi(a)^\top\theta_{T+1}^{\Log}$
%         \end{algorithmic}
%         \end{algorithm}
    % \end{minipage}

In this section, we will analyze the simple regret of \MULin (\cref{alg:max-uncertainty-Lin}).
\subsection{Confidence Set}
In the analysis, we will use the confidence set from \cite{abbasi2011improved}
\begin{lemma}[Theorem 2 of \cite{abbasi2011improved}]
    Let $\delta\in (0,1)$. Then with probability at least $1-\delta$, it holds that for all $t\ge 1$,\label{lem:linear-conf-width}
    \begin{align}
        \|\hat\theta_t - \theta_*\|_{V_t}\le \sqrt\lambda \|\theta_*\| + \sqrt{2\log(1/\delta) + d\log\left(1+\frac{t}{d\lambda}\right)}:=\tau_t(\delta).\label{eq:tau_t}
    \end{align}
\end{lemma}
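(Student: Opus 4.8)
Since the statement is Theorem~2 of \cite{abbasi2011improved}, the plan is to recall its proof. Write $\phi_i:=\phi(S_i,A_i)$ and $\xi_t:=\sum_{i=1}^{t-1}\epsilon_i\phi_i$, and recall that in the linear case $\epsilon_i=X_i-\phi_i^\top\theta_*$ is conditionally $1$-subgaussian given $\cF_{i-1}\ni\phi_i$. First I would isolate the estimation error by a purely algebraic identity: using $X_i=\phi_i^\top\theta_*+\epsilon_i$ and $V_t-\lambda I=\sum_{i=1}^{t-1}\phi_i\phi_i^\top$,
\[
\hat\theta_t=V_t^{-1}\sum_{i=1}^{t-1}X_i\phi_i=\theta_*+V_t^{-1}\big(\xi_t-\lambda\theta_*\big),
\]
so that $\|\hat\theta_t-\theta_*\|_{V_t}=\|\xi_t-\lambda\theta_*\|_{V_t^{-1}}\le\|\xi_t\|_{V_t^{-1}}+\lambda\|\theta_*\|_{V_t^{-1}}$; since $V_t\succeq\lambda I$ the last term is at most $\sqrt\lambda\,\|\theta_*\|$, exactly the first summand of $\tau_t(\delta)$. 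It then remains to bound the self-normalized quantity $\|\xi_t\|_{V_t^{-1}}$ uniformly over $t$.

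The main step is the self-normalized tail inequality. For each fixed $x\in\R^d$, conditional $1$-subgaussianity makes $M_t(x):=\exp\!\big(x^\top\xi_t-\tfrac12 x^\top(V_t-\lambda I)x\big)$ a nonnegative supermartingale with $M_1(x)=1$. I would then integrate (the ``method of mixtures''/pseudo-maximization) against $x\sim\cN(0,\lambda^{-1}I)$ and complete the square in the Gaussian integral to obtain
\[
\bar M_t:=\int M_t(x)\,d\cN(0,\lambda^{-1}I)(x)=\Big(\frac{\lambda^d}{\det V_t}\Big)^{1/2}\exp\!\Big(\tfrac12\|\xi_t\|_{V_t^{-1}}^2\Big),
\]
which is again a nonnegative supermartingale with $\bar M_1=1$. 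Ville's maximal inequality gives $\PP(\exists\, t\ge1:\bar M_t\ge1/\delta)\le\delta$, i.e., on an event of probability at least $1-\delta$, $\|\xi_t\|_{V_t^{-1}}^2\le 2\log\!\big(\det(V_t)^{1/2}\lambda^{-d/2}/\delta\big)$ for all $t\ge1$ simultaneously.

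Finally I would control the determinant: by AM--GM on the eigenvalues of $V_t$ together with $\|\phi_i\|\le1$ (\cref{ass:bounded_feature}), $\det V_t\le\big(\tfrac1d\Tr V_t\big)^d\le\big(\lambda+\tfrac{t-1}{d}\big)^d\le\big(\lambda+\tfrac{t}{d}\big)^d$, so $\det(V_t)^{1/2}\lambda^{-d/2}\le(1+t/(d\lambda))^{d/2}$ and the previous bound becomes $\|\xi_t\|_{V_t^{-1}}^2\le2\log(1/\delta)+d\log(1+t/(d\lambda))$. Substituting into the decomposition of the first paragraph and taking square roots yields $\|\hat\theta_t-\theta_*\|_{V_t}\le\sqrt\lambda\,\|\theta_*\|+\sqrt{2\log(1/\delta)+d\log(1+t/(d\lambda))}=\tau_t(\delta)$ on the same event, which is the claim. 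The one delicate point is justifying the maximal inequality for $\bar M_t$, an uncountable mixture of supermartingales: as in \cite{abbasi2011improved} one checks the supermartingale property via Tonelli's theorem and then applies the inequality to a stopped version before passing to the limit. Everything else is bookkeeping, so this is where any care is needed.
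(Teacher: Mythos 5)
Your proposal is correct and is exactly the standard method-of-mixtures argument of Theorem 2 in \cite{abbasi2011improved}, which the paper simply cites without reproving: the error decomposition $\hat\theta_t-\theta_* = V_t^{-1}(\xi_t-\lambda\theta_*)$, the Gaussian-mixture supermartingale with Ville's inequality, and the determinant--trace bound are all as in the original source. No gaps; the one point you flag (justifying the maximal inequality for the mixed supermartingale via Tonelli and stopping) is handled the same way there.
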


\subsection{Decreasing Uncertainty}
In this section we restate our result on the decreasing uncertainty mentioned in \cref{sec:det_alg}, specifically, \cref{lem:dec_unc_lin}. This lemma allows us to relate the analysis techniques in the cumulative regret setting \cite{abbasi2011improved} to the analysis in the simple regret setting.
% \DecUncLin*
\begin{lemma}[Decreasing Uncertainty Lemma, Lemma 6 of \cite{zanette2021design}]\label{lem:dec_unc_lin}
    For every $t\ge 1$, it holds that
    \begin{align*}
        \max_{a\in \cA}\|\phi(a)\|_{V_{t+1}^{-1}} \le 
        \max_{a\in \cA}\|\phi(a)\|_{V_{t}^{-1}}\,.
    \end{align*}
\end{lemma}

\subsection{Proof of the regret bound }\label{appendix:MULin-proof}
In this section, we first state the formal theorem statementand then proofs are provided. \label{sec:lin-regret}
\begin{theorem}[\MULin Simple Regret Bound]
    Under Assumptions \ref{ass:bounded_feature} and \ref{ass:bounded_para}, 
    there exists some universal constant $\mathfrak c>0$ such that the following holds:
    Let $\delta\in[0,1)$, $T\ge 1$ be arbitrary.
    Then, with probability at least $1-\delta$, it holds that
    the simple regret of the policy $\hat a$ computed by 
    \MULin (\cref{alg:max-uncertainty-Lin}) 
    with an appropriate choice of $\lambda$, 
    after $T$ rounds 
    is upper bounded by
    \begin{align*}
        \SR(\hat a)&\le \frac{\tau_{T+1}(\delta)}{\sqrt T}\sqrt{d\log((d\lambda_T +T)/(d\lambda_T))}\;,
    \end{align*}
    where $\tau_{T+1}(\delta)= \tilde \cO (\sqrt{d+ \log(1/\delta)}+\|\theta_*\|)$ whose full expression can be found in \cref{eq:tau_t}.
\end{theorem}

A related objective to $\SR$, which is explained in the lemma that follows, when $\hat\theta_{T+1}$ 
is used, is the expected maximum prediction error
\begin{align*}
    D_\Lin(\hat\theta_{T+1})=\max_{a \in \cA} |\phi(a)^\top
        (\theta^* - \hat{\theta}_{T+1})|.
\end{align*}
\begin{lemma}
    For a vector $\bar\theta\in \RR^d$, let $\bar a$ be greedy w.r.t $\bar\theta$, i.e., 
    \begin{align*}
        \bar a = \argmax_{a \in \cA} \phi(a)^\top\bar\theta,
    \end{align*}
    it then follows that 
    \label{lem:SR-bound-Lin}
    \begin{align*}
        \SR(\bar a)
        \leq 2 D_\Lin(\bar\theta).
    \end{align*}
\end{lemma}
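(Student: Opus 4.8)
The plan is to run the standard ``add and subtract'' decomposition of the suboptimality gap, context by context, and then integrate against $\nu$. Since we are in the linear case, $\mu$ is the identity, so $v(\pi) = \int \phi(s,\pi(s))^\top\theta_*\,\nu(ds)$, and because all action sets are finite the optimal policy $\pi^*$ with $\pi^*(s) = \argmax_{a\in\cA(s)}\phi(s,a)^\top\theta_*$ attains the supremum in the definition of $\SR$. Hence
\[
\SR(\bar\pi) = v(\pi^*) - v(\bar\pi) = \int \Big(\phi(s,\pi^*(s))^\top\theta_* - \phi(s,\bar\pi(s))^\top\theta_*\Big)\nu(ds),
\]
so it suffices to bound the integrand pointwise in $s$.

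First I would fix a context $s$ and insert $\pm\phi(s,\pi^*(s))^\top\bar\theta$ and $\pm\phi(s,\bar\pi(s))^\top\bar\theta$, writing the per-context gap as the sum of three terms: $\phi(s,\pi^*(s))^\top(\theta_*-\bar\theta)$, then $\phi(s,\pi^*(s))^\top\bar\theta - \phi(s,\bar\pi(s))^\top\bar\theta$, then $\phi(s,\bar\pi(s))^\top(\bar\theta-\theta_*)$. The middle term is $\le 0$ because $\bar\pi(s)$ is greedy with respect to $\bar\theta$ over $\cA(s)$ and $\pi^*(s)\in\cA(s)$ — this is the only place the definition of $\bar\pi$ is used. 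Each of the two remaining terms is bounded in absolute value by $\max_{a\in\cA(s)}|\phi(s,a)^\top(\theta_*-\bar\theta)|$, since $\pi^*(s),\bar\pi(s)\in\cA(s)$. Combining gives the per-context bound
\[
\phi(s,\pi^*(s))^\top\theta_* - \phi(s,\bar\pi(s))^\top\theta_* \le 2\max_{a\in\cA(s)}\big|\phi(s,a)^\top(\theta_*-\bar\theta)\big|.
\]

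Finally I would integrate this inequality against $\nu$ and recognize the right-hand side as $2D_\Lin(\bar\theta)$, which closes the argument. There is no real obstacle here; the only points requiring a word of care are making sure $\pi^*$ is well defined (finiteness of $\cA(s)$, already assumed in the setup) and that all the pointwise quantities are $\nu$-measurable so the integral manipulations are legitimate. I would also remark that the same decomposition goes through verbatim if one later wants an analogous statement for the logistic case after passing through a Lipschitz/self-concordance bound on $\mu$, but for this lemma the identity link makes everything exact.
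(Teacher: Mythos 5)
Your proposal is correct and follows essentially the same argument as the paper: the same add-and-subtract decomposition per context, using greediness of $\bar\pi$ to discard (or replace) the middle term, bounding the two remaining terms by $\max_{a\in\cA(s)}|\phi(s,a)^\top(\theta_*-\bar\theta)|$, and integrating against $\nu$. No substantive difference.
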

\begin{proof}
    The proof is a simple application of triangle inequality.
    We then add and subtract the same term below
    \begin{align*}
        &\; \phi(a_*)^\top\theta^* 
        -
        \phi(\bar a)^\top\theta^*\\
        =&\; \phi(a_*)^\top\theta^* 
        - 
        \phi(a_*)^\top\bar\theta 
        + 
        \phi(a_*)^\top\bar\theta 
        - 
        \phi(\bar a)^\top\theta^*\\
        \le&\; \phi(a_*)^\top\theta^* 
        - 
        \phi(a_*)^\top\bar\theta 
        + 
        \phi(\bar a)^\top\bar\theta 
        - 
        \phi(\bar a)^\top\theta^*\tag{Defn. of $\bar a$}\\
        \le&\; \left|\phi(a_*)^\top\theta^* 
        - \phi(a_*)^\top\bar\theta\right| 
        + 
        \left|\phi(\bar a)^\top\bar\theta 
        - 
        \phi(\bar a)^\top\theta^*\right|\\
        \le& \;2D_\Lin(\bar\theta).
    \end{align*}
\end{proof}
Hence we focus on bounding $D_\Lin(\hat\theta_{T+1})$. By Cauchy-Schwarz inequality, we have\todos{Change $s$ into $S_t$}
\begin{align*}
    D_\Lin(\hat\theta_{T+1})&= \max_{a \in \cA} |\phi(a)^\top
        (\theta^* - \hat{\theta}_{T+1})|\\
    &\le  \max_{a\in \cA}\|\phi(a)\|_{V_{T+1}^{-1}}\|\theta_* - \hat\theta_{T+1}\|_{V_{T+1}}\\
    &\le \tau_{T+1}(\delta)\max_{a\in \cA}\|\phi(a)\|_{V_{T+1}^{-1}},
\end{align*}
where in the last line we used \cref{lem:linear-conf-width}. Then by \cref{lem:dec_unc_lin} we have that for all $1\le t\le T$,
\begin{align*}
   \max_{a\in \cA}\|\phi(a)\|_{V_{T+1}^{-1}}\le \int \max_{a\in \cA}\|\phi(a)\|_{V_{t}^{-1}}.
\end{align*}
Hence,
\begin{align*}
    \max_{a\in \cA}\|\phi(a)\|_{V_{T+1}^{-1}}\le \frac{1}{T}\sum_{t=1}^T \max_{a\in \cA}\|\phi(a)\|_{V_{t}^{-1}}.
\end{align*}
We can then plug this into the bound for $D_\Lin(\hat\theta_{T+1})$ to get
\begin{align}
    \tau_{T+1}(\delta)\max_{a\in \cA}\|\phi(a)\|_{V_{T+1}^{-1}} &\le \tau_{T+1}(\delta)\frac{1}{T}\sum_{t=1}^T  \max_{a\in \cA}\|\phi(a)\|_{V_{t}^{-1}}\label{eq:first-reverse-bern}.
\end{align}
Then by \cref{lem:epl}
\begin{align}
    D_\Lin(\hat\theta_{T+1})\le \;&\frac{\tau_{T+1}(\delta)}{T}\sum_{t=1}^T {\|\phi(A_t)\|_{V_{t}^{-1}}} \\
    \le\; & \frac{\tau_{T+1}(\delta)}{T}\sqrt{T}\sqrt{\sum_{t=1}^T \|\phi(A_t)\|^2_{V_t^{-1}}},\\
    \le\; & \frac{\tau_{T+1}(\delta)}{T}\sqrt{T}\sqrt{d\log((d\lambda_T +T)/(d\lambda_T))},\\
    =\; & \frac{\tau_{T+1}(\delta)}{\sqrt T}\sqrt{d\log((d\lambda_T +T)/(d\lambda_T))},
\end{align}
where in the third line we used Cauchy-Schwarz inequality and the last line follows from elliptical potential lemma (\cref{lem:epl}).
Finally, chaining the above bound with \cref{lem:SR-bound-Lin} gives the desired result.

\section[Analysis of SimpleLinTS]{Analysis of \LinTS (\cref{alg:LinTS})}

    % \begin{minipage}[t]{0.48\textwidth}
        \begin{algorithm}[t]
        \caption{Simple Regret Linear Thompson Sampling ($\LinTS$)}
        \label{alg:LinTS}
        \footnotesize
        \begin{algorithmic}[1]
            \REQUIRE $\lambda > 0$
            \STATE $V_1 = \lambda I$
            \FOR{$t: 1 \to T$}
            \STATE Sample $\tilde{\theta}_t \sim \cN(0, V_{t}^{-1})$
            \STATE Select $A_t=\argmax_{a\in \cA}|\phi(a)^\top\tilde\theta_t|$ and receive $X_t$
            \STATE Update $V_{t+1}=V_t+\phi(A_t)\phi(A_t)^\top$
            \ENDFOR
            \STATE $\hat{\theta}_{T+1} = V_{T+1}^{-1} \sum_{t=1}^T X_t \phi(A_t)$
            \STATE Return $\hat a= \argmax_{a \in \cA} \phi(a)^\top \hat{\theta}_{T+1}$
        \end{algorithmic}
        \end{algorithm}
    % \end{minipage}
    % \hfill
    % % \begin{minipage}[t]{0.48\textwidth}
    %     \begin{algorithm}[H]
    %     \caption{Try Hard Thompson Sampling ($\THATS$)}
    %     \footnotesize
    %     \label{alg:THATS}
    %     \begin{algorithmic}[1]
    %         \REQUIRE $\lambda > 0, S>0$
    %         \STATE $L_1 = \lambda I$, $\cV_0=\mathbb B_d(S)$
    %         \FOR{$t: 1 \to T$}
    %         \STATE Sample $\tilde\theta_t\sim \cN(0,L_t^{-1})$
    %         \STATE {Solve for $\bar\theta_t=\argmax_{\theta\in \mathbb B_d(S)}\sum_{i=1}^{t-1}\ell(\mu(\phi(A_i)^\top\theta),X_i)$ 
    %         \STATE Build $\cV_t=\cV_{t-1}\cap \cE_t(\delta,\bar\theta_t)$ by \cref{eq:contrained_rmle}}
    %         \STATE {Select $A_t=\argmax_{a\in \cA}{\dot\mu(\phi(a)^\top\bar\theta_t)}^{\frac{1}{2}}|\phi(a)^\top\tilde\theta_t|$ (\cref{eq:THATS-action}}) and receive $X_t$
    %         \STATE Solve $\theta_t'=\argmin_{\theta\in \cV_t}|\phi(A_t)^\top\theta|$ and update $L_{t+1}=L_t+\dot\mu(\phi(A_t)^\top\theta_t')\phi(A_t)\phi(A_t)^\top$ as in \cref{eq:proxy-TS}
    %         \ENDFOR
    %         \STATE Solve $\bar\theta_{T+1}$ (\ref{eq:contrained_rmle}) and construct $\cV_{T+1}$ by (\ref{eq:TS-intersection})
    %         \STATE Pick any ${\theta}_{T+1}^{\Log}\in  \cV_{T+1}$\\
    %         \STATE Return $\tilde a = \argmax_{a\in \cA}\phi(a)^\top\theta_{T+1}^{\Log}$
    %     \end{algorithmic}
    %     \end{algorithm}
    % \end{minipage}

In this section we will analyze the simple regret of \LinTS (\cref{alg:LinTS}). 
\subsection{Analysis on the exploration done by \LinTS and \MULin}
The analysis of \LinTS is highly related to that of \MULin algorithm. Hence in order to identify the actions that gives the maximum uncertainty, we define it to be
\begin{align*}
    A_t^\MU=\argmax_{a \in \cA} \| \phi(a) \|_{V_t^{-1} }.
\end{align*}
We now present the lemma that allows us to relate the analysis of \MULin to \LinTS.
% \LinTSRelateMULin*
\begin{lemma}\label[lemma]{lem:TS-to-max-uncertainty-Lin}
Fix $t\ge 1$. Then, almost surely,
    \begin{align*}
       \|\phi(A_t^\MU)\|_{V_t^{-1}}\cdot I(A_t^\MU,V_t)\le \mE\left[ \|\phi(A_t)\|_{V_t^{-1}}\Big\vert\cF_{t-1}\right],
    \end{align*}
    where for $a\in \cA$ and $V\succeq 0$, we let $I(a,V)=
    \int_{\mS^{d-1}}  
    \left|\left\langle x,\frac{V^{-1/2}\phi(a)}{\|V^{-1/2}\phi(a)\|}\right\rangle\right|
    dx$.
    % \footnote{With standard tools in probability theory, $I(s,a,V)=\Omega(1/\sqrt{d})$ for all $(s,a)\in \cZ$ and $V\succ 0$. }
\end{lemma}
\begin{proof}
    We start by rewriting the right hand side of the inequality
    \begin{align*}
        &\mE\left[\|\phi(A_t)\|_{V_{t}^{-1}}\Big\vert \cF_{t-1}\right]\\
        \;=&\mE\left[\max_{x\in \mS^{d-1}}\left\langle V_t^{-1/2}x,\phi(A_t)\right\rangle \Big\vert \cF_{t-1}\right]\\
        \ge\;& \mE\left[\left|\left\langle\frac{V_t^{-1/2}\cdot V_t^{1/2}\tilde \theta_t}{\|V_t^{1/2}\tilde\theta_t\|_2},\phi(A_t)\right\rangle\right|\Big \vert \cF_{t-1}\right]\\
        \ge\;& \mE\left[\left|\left\langle\frac{\tilde \theta_t}{\|V_t^{1/2}\tilde\theta_t\|_2},\phi(A_t^\MU)\right\rangle\right|\Big \vert \cF_{t-1}\right],
    \end{align*}
    where in the last line we used the definition of $A_t$.
    Since $\tilde\theta_t\sim \cN(0,V_t^{-1})$, we can rewrite it as $\tilde\theta_t = V_t^{-1/2}M_t$ for $M_t\sim \cN(0,I)$ given the past. Then plug it back in the above expression,
    \begin{align*}
        &\mE\left[\left|\left\langle\frac{\tilde \theta_t}{\|V_t^{1/2}\tilde\theta_t\|_2},\phi(A_t^\MU)\right\rangle\right|\Big \vert \cF_{t-1}\right] \\
        = \;&\mE\left[\left|\left\langle\frac{V_t^{-1/2}M_t}{\|M_t\|_2},\phi(A_t^\MU)\right\rangle\right|\Big \vert \cF_{t-1}\right]\\
        = \;&\mE\left[\left|\left\langle\frac{M_t}{\|M_t\|_2},V_t^{-1/2}\phi(A_t^\MU)\right\rangle\right|\Big \vert \cF_{t-1}\right]\\
        = \;&\|\phi(A_t^\MU)\|_{V_t^{-1}}\mE\left[\left|\left\langle\frac{M_t}{\|M_t\|_2},\frac{V_t^{-1/2}\phi(A_t^\MU)}{\|\phi(A_t^\MU)\|_{V_t^{-1}}}\right\rangle\right|\Big \vert \cF_{t-1}\right]\\
        =\;&\|\phi(A_t^\MU)\|_{V_t^{-1}}I(A_t^\MU,L_t),
    \end{align*}
    where the third line used $V_t^{-1/2}$ is symmetric (as $V_t$ is positive definite); the fourth line follows by $A_t^\MU$ is $\cF_{t-1}$-measurable and \cref{prop:measurable-cond-exp} and the last line follows by the definition of $I(A_t^\MU,L_t)$.
\end{proof}
\begin{corollary}
    For all $t\ge 1$, it holds almost surely that
    \begin{align*}
        \|\phi(A_t^\MU)\|_{V_t^{-1}}\le \sqrt{\frac{\pi d}{2}}\|\phi(A_t)\|_{V_t^{-1}}.
    \end{align*}
\end{corollary}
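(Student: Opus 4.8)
The plan is to obtain this as an immediate corollary of \cref{lem:TS-to-max-uncertainty-Lin}: the only extra input is the ``standard probability'' fact quoted right after that lemma, namely the uniform lower bound $I(s,a,V)\ge\sqrt{2/(\pi d)}$ valid for all $(s,a)\in\cZ$ and all $V\succ 0$.

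First I would note that $I(s,a,V)$ is in fact a dimensional constant, independent of $(s,a)$ and $V$. Writing $u=V^{-1/2}\phi(s,a)/\|V^{-1/2}\phi(s,a)\|\in\mS^{d-1}$ and letting $\sigma$ be the uniform distribution on $\mS^{d-1}$, rotational invariance of $\sigma$ gives $I(s,a,V)=\int_{\mS^{d-1}}|\langle x,u\rangle|\,d\sigma(x)=\int_{\mS^{d-1}}|x_1|\,d\sigma(x)=:c_d$. Then I would estimate $c_d$ via the polar decomposition of an isotropic Gaussian: if $g\sim\cN(0,I_d)$ then $g/\|g\|\sim\sigma$ and is independent of $\|g\|$, so $\sqrt{2/\pi}=\mE|g_1|=\mE\big[\,\|g\|\cdot|g_1|/\|g\|\,\big]=\mE\|g\|\cdot c_d$; since $\mE\|g\|\le(\mE\|g\|^2)^{1/2}=\sqrt{d}$ by Jensen, we get $c_d\ge\sqrt{2/(\pi d)}$, i.e.\ $1/I(s,a,V)=1/c_d\le\sqrt{\pi d/2}$.

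With this in hand, \cref{lem:TS-to-max-uncertainty-Lin} gives, almost surely,
\[
\|\phi(S_t,A_t^\MU)\|_{V_t^{-1}}
\le\frac{1}{I(S_t,A_t^\MU,V_t)}\,\mE\big[\|\phi(S_t,A_t)\|_{V_t^{-1}}\mid\cF_{t-1},S_t\big]
\le\sqrt{\tfrac{\pi d}{2}}\;\mE\big[\|\phi(S_t,A_t)\|_{V_t^{-1}}\mid\cF_{t-1},S_t\big].
\]
Since $A_t^\MU=\argmax_{a\in\cA(S_t)}\|\phi(S_t,a)\|_{V_t^{-1}}$ is a deterministic function of $(\cF_{t-1},S_t)$, the left-hand side is measurable with respect to the $\sigma$-algebra generated by $\cF_{t-1}$ and $S_t$; taking $\mE[\,\cdot\mid\cF_{t-1}]$ of both sides and using the tower rule on the right then yields the claim.

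There is essentially no obstacle here: all the real work is in \cref{lem:TS-to-max-uncertainty-Lin}. The two places to be slightly careful are \emph{(i)} the reduction of $I$ to the dimension-only constant $c_d$, which uses rotational invariance of $\sigma$, and \emph{(ii)} the final application of the tower rule, which needs $A_t^\MU$ to be $(\cF_{t-1},S_t)$-measurable; the numeric bound $c_d\ge\sqrt{2/(\pi d)}$ itself is routine (half-normal mean together with Jensen's inequality).
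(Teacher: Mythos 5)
Your proposal is correct and follows essentially the same route as the paper: divide through by $I(S_t,A_t^\MU,V_t)$, lower bound it by $\sqrt{2/(\pi d)}$ via the Gaussian polar decomposition (the paper cites this as \cref{prop:gaussian-to-uniform} and \cref{prop:uniform-sphere-inner-product}, which you prove inline with the half-normal mean and Jensen), and finish with the tower rule using the $(\cF_{t-1},S_t)$-measurability of $A_t^\MU$. The only difference is that you supply the elementary proof of the sphere inner-product bound that the paper leaves as a cited proposition.
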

\begin{proof}
    The proof follows by dividing both sides of the inequality showed in \cref{lem:TS-to-max-uncertainty-Lin} by $I(A_t^\MU, V_t)$ and then showing that $I(A_t^\MU, V_t)$ is lower bounded by $\sqrt{\frac{2}{\pi d}}$. The latter follows using \cref{prop:gaussian-to-uniform} and \cref{prop:uniform-sphere-inner-product}. Together we get,
    \begin{align*}
        \|\phi(A_t^\MU)\|_{V_t^{-1}} \le \sqrt{\frac{\pi d}{2}}\mE\left[\|\phi(A_t)\|_{V_t^{-1}}\Big\vert\cF_{t-1}\right].
    \end{align*}
\end{proof}
\subsection{Analysis of \LinTS}\label{appendix:LinTS-proof}
In this section, we first state formally our regret bound on \LinTS where detailed constant and polynomial dependency is presented. After that we give proof on it.
\begin{theorem}[\LinTS Simple Regret Bound]
    Under Assumptions \ref{ass:bounded_feature} and \ref{ass:bounded_para}, 
    there exists some universal constant $\mathfrak c>0$ such that the following holds:
    Let $\delta\in[0,1)$, $T\ge 1$ be arbitrary.
    Then, with probability at least $1-\delta$, it holds that
    the simple regret of the policy $\hat a$ computed by 
    \LinTS (\cref{alg:LinTS}) 
    with an appropriate choice of $\lambda$,
    after $T$ rounds 
    is upper bounded by
    \begin{align*}
        \SR(\hat\pi)&\le \frac{4\tau_{T+1}(\delta)d\sqrt{\log((d\lambda_T +T)/(d\lambda_T))}}{\sqrt T} 
        + \frac{64\tau_{T+1}(\delta)\log(\log(2T/\delta))}{T} \;,
    \end{align*}
    where $\tau_{T+1}(\delta)= \tilde \cO (\sqrt{d+ \log(1/\delta)}+\|\theta_*\|)$ whose full expression can be found in \cref{eq:tau_t}.
\end{theorem}

All we need to do is to plug \cref{lem:TS-to-max-uncertainty-Lin} into the analysis of \MULin. \cref{lem:SR-bound-Lin} still holds so we start by doing the same analysis as in \cref{sec:lin-regret} and then plug in the results of \cref{lem:TS-to-max-uncertainty-Lin} to get the final bound.
For clarity we copy the analysis in \cref{sec:lin-regret}; for readers who are familiar with the analysis of \MULin, they can skip to the end of this section where we highlight the step that is unique to \LinTS to be red.
By Cauchy-Schwarz inequality, we have
\begin{align*}
    D_\Lin(\hat\theta_{T+1})&= \max_{a \in \cA} |\phi(a)^\top
        (\theta^* - \hat{\theta}_{T+1})|\\
    &\le  \max_{a\in \cA}\|\phi(a)\|_{V_{T+1}^{-1}}\|\theta_* - \hat\theta_{T+1}\|_{V_{T+1}}\\
    &\le \tau_{T+1}(\delta)\max_{a\in \cA}\|\phi(a)\|_{V_{T+1}^{-1}},
\end{align*}
where in the last line we used \cref{lem:linear-conf-width}. Then by \cref{lem:dec_unc_lin} we have that for all $1\le t\le T$,
\begin{align*}
    \max_{a\in \cA}\|\phi(a)\|_{V_{T+1}^{-1}}\le  \max_{a\in \cA}\|\phi(a)\|_{V_{t}^{-1}}.
\end{align*}
Hence,
\begin{align*}
    \max_{a\in \cA}\|\phi(a)\|_{V_{T+1}^{-1}}\le \frac{1}{T}\sum_{t=1}^T \max_{a\in \cA}\|\phi(a)\|_{V_{t}^{-1}}.
\end{align*}
We can then plug this into the bound for $D_\Lin(\hat\theta_{T+1})$ to get
\begin{align*}
    \tau_{T+1}(\delta)\max_{a\in \cA}\|\phi(a)\|_{V_{T+1}^{-1}} \le \tau_{T+1}(\delta)\frac{1}{T}\sum_{t=1}^T \max_{a\in \cA}\|\phi(a)\|_{V_{t}^{-1}}
\end{align*}
From \cref{cor:Reversed-Bernstein-inequality-for-martingales},
\begin{align*}
    \frac{1}{T}\sum_{t=1}^T \max_{a\in \cA}\|\phi(a)\|_{V_{t}^{-1}} =&\; \frac{1}{T}\sum_{t=1}^T {\|\phi(A_t)\|_{V_{t}^{-1}}}\\
    \le \;&\textcolor{red}{\tau_{T+1}(\delta)\frac{1}{T}\sqrt{d}\sum_{t=1}^T \mE\left[\|\phi(A_t)\|_{V_{t}^{-1}}\Big\vert \cF_{t-1}\right]} \\
    \le \;&\frac{2\tau_{T+1}(\delta)\sqrt d}{T}\left(16\log(\log(2T/\delta))+\sum_{t=1}^T {\|\phi(A_t)\|_{V_{t}^{-1}}} \right)\\
    \le\; & \frac{2\tau_{T+1}(\delta)\sqrt d}{T}\left(16\log(\log(2T/\delta))+\sqrt{T}\sqrt{\sum_{t=1}^T {\|\phi(A_t)\|^2_{V_{t}^{-1}}}} \right)\\
    \le\; & \frac{2\tau_{T+1}(\delta)\sqrt d}{T}\left(16\log(\log(2T/\delta))+\sqrt{T}\sqrt{d\log((d\lambda_T +T)/(d\lambda_T))} \right).
\end{align*}
Finally, chaining the above bound with \cref{lem:SR-bound-Lin} gives the desired result.

\section[Regret Analysis of MULog]{Regret Analysis of \MULog (\cref{thm:mulog-regret})}
The sigmoid function is known to be (generalized) self-concordant \citep{Bach2010Self,faury2020improved,liu2024almost}, to be more specific, 
\begin{align}
    |\ddot\mu(z)|\le \dot\mu(z) \quad \text{for all }z \in \RR.\label{eq:self-concordant}
\end{align}
The logistic function is also $1/4$-Lipschitz, i.e.,
\begin{align}
    \dot\mu\le 1/4\label{eq:logistic-lipschitz}
\end{align}
which follows from the decomposition of $\dot\mu=\mu(1-\mu)$ and $\mu\in [0,1]^{\RR}$.

% \begin{theorem}
%     Let $\delta\in[0,1)$. 
%     Under assumptions \ref{ass:bounded_feature},\ref{ass:bounded_inner_prod},\ref{ass:bounded_para}, it follows that with probability at least $1-2\delta$ the simple regret is upper bounded by
%     \begin{align*}
%         \SR_{\Log}(\hat\pi_{T+1}^{\Log})&\le 4e^4\gamma_{T+1}(\delta)\sqrt{\frac{2d\ln\left(\frac{d\lambda_T+T}{d\lambda_T}\right)}{T}} 
%         \\+& 2e^8\gamma_{T+1}(\delta)^2\left[\frac{16\ln(T\ln T)+d\lambda_T\log(\frac{d\lambda_T+T}{d\lambda_T})}{T}\right]
%     \end{align*}
% \end{theorem}
% Assume we start with a relatively good estimation $\hat\theta_0$ such that 
% $\|\hat\theta_0 - \theta_*\|_2\le \varepsilon$.
% Define a set $\mcW:=\|\theta-\hat \theta_0\|_2\le \varepsilon$.
% We define that given a set of pulled arms $\{A_s\}_{s=1}^{t-1}$ where $A_s^{\theta}=\arg\max_{a} \sqrt{\dot\mu(\phi(a)^\top\theta)}\|\phi(a)^\top\theta\|_{H_{t}^{-1}(\theta;\{A_s\}_{s=1}^{t-1})}$ where 
% \begin{align*}
%     H_{t}(\theta;\{A_s\}_{s=1}^{t-1})=\sum_{s=1}^{t-1} \dot\mu(\phi(A_s)^\top\theta)\phi(A_s)\phi(A_s)^\top + \lambda I
% \end{align*}
% We define $\mcH_t(\theta_{1:t})$ to be the set of all arms pulled w.r.t. $\theta_1,\theta_2,...,\theta_{t-1}$ in each round: $\mcH_t(\theta_{1:t}):=\{A_s^{\theta_s}\}_{s=1}^{t-1}$. 
We also consider the Hessian of the loss, which takes the form of 
\begin{align*}
    \nabla^2\mcL_t^\lambda(\theta)=:H_t(\theta)=\sum_{s=1}^{t-1}\dot\mu(\phi(A_s)^\top\theta)\phi(A_s)\phi(A_s)^\top + \lambda I
\end{align*}
In the analysis, we also consider the following matrices that are closely related to $H_{t}(\theta;\{A_s\}_{s=1}^{t-1})$.
\begin{align*}
    G_t(\theta_1,\theta_2)&=\lambda I+\sum_{s=1}^{t-1} \alpha(\phi(A_s),\theta_1,\theta_2)\phi(A_s)\phi(A_s)^\top\\
    \tilde G_t(\theta_1,\theta_2)&=\lambda I+\sum_{s=1}^{t-1}\tilde \alpha(\phi(A_s),\theta_1,\theta_2)\phi(A_s)\phi(A_s)^\top,
\end{align*}
where for $x,\theta_1,\theta_2\in \RR^d$,
\begin{align*}
    \alpha(x,\theta_1,\theta_2)&=\int_{v=0}^1 \dot\mu(x^\top\theta_1+vx^\top(\theta_2-\theta_1))dv\\
    \tilde \alpha(x,\theta_1,\theta_2)&=\int_{v=0}^1 (1-v)\dot\mu(x^\top\theta_1+vx^\top(\theta_2-\theta_1))dv.
\end{align*}
% Then the algorithm is modifies to be \cref{alg:max-uncertainty-log-smallball}.
\subsection{Self-concordance control}
Self-concordance is a property of the logistic function $\mu(\cdot)$ that allows us to utilize the curvature information.
Here are the lemmas borrowed from \cite{abeille2021instance,faury2020improved} that we will use in the analysis.
\begin{lemma}[Lemma 9 of \cite{abeille2021instance}]
    \label{lem:mu_dot_relate}
    For all $z_1, z_2\in \RR$, it follows that
    \begin{align*}
        \dot\mu(z_2)\exp(-|z_2-z_1|)\le \dot\mu(z_1)\le \dot\mu(z_2)\exp(|z_2-z_1|).
    \end{align*}
\end{lemma}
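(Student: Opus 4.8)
The statement is exactly the assertion that the map $z \mapsto \log \dot\mu(z)$ is $1$-Lipschitz on $\mathbb{R}$, so the plan is to reduce the claim to this Lipschitz bound and obtain the latter from the generalized self-concordance of the sigmoid already recorded in \cref{eq:self-concordant}. First I would observe that $\dot\mu(z)>0$ for every $z\in\mathbb{R}$ — indeed $\dot\mu=\mu(1-\mu)$ with $\mu(z)\in(0,1)$ — so that $g(z):=\log\dot\mu(z)$ is well defined and continuously differentiable, with $g'(z)=\ddot\mu(z)/\dot\mu(z)$. Applying the self-concordance inequality $|\ddot\mu(z)|\le\dot\mu(z)$ from \cref{eq:self-concordant} immediately gives $|g'(z)|\le 1$ for all $z$.

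Next, for arbitrary $z_1,z_2\in\mathbb{R}$ the fundamental theorem of calculus yields
\[
\bigl|\log\dot\mu(z_1)-\log\dot\mu(z_2)\bigr| \;=\; \Bigl|\int_{z_2}^{z_1} g'(u)\,du\Bigr| \;\le\; |z_1-z_2|,
\]
i.e. $-|z_1-z_2|\le \log\dot\mu(z_1)-\log\dot\mu(z_2)\le |z_1-z_2|$. Exponentiating the two outer inequalities and multiplying through by $\dot\mu(z_2)>0$ gives
\[
\dot\mu(z_2)\exp(-|z_2-z_1|)\;\le\;\dot\mu(z_1)\;\le\;\dot\mu(z_2)\exp(|z_2-z_1|),
\]
which is the desired conclusion. (Equivalently one may write $\dot\mu(z_1)=\dot\mu(z_2)\exp\!\bigl(\int_{z_2}^{z_1}\ddot\mu(u)/\dot\mu(u)\,du\bigr)$ and bound the exponent in absolute value by $|z_1-z_2|$; this is the same computation.)

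There is no genuine obstacle here: the single nontrivial input is the self-concordance bound \cref{eq:self-concordant}, which the paper already states. If one preferred a self-contained argument, the one extra step would be to derive $|\ddot\mu(z)|\le\dot\mu(z)$ directly from the identities $\dot\mu=\mu(1-\mu)$ and $\ddot\mu=\dot\mu\,(1-2\mu)$ together with $|1-2\mu(z)|\le 1$; the remainder of the proof is unchanged. This lemma is then exactly the tool used to compare the curvature weights $\dot\mu(\phi(S_s,A_s)^\top\theta)$ at different parameters $\theta$, and hence to sandwich the Hessian-type matrices $H_t(\theta)$, $G_t(\theta_1,\theta_2)$, $\tilde G_t(\theta_1,\theta_2)$ appearing later in the analysis.
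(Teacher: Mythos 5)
Your proof is correct: the paper itself does not prove this lemma (it is imported verbatim as Lemma~9 of \cite{abeille2021instance}), and your argument via the $1$-Lipschitzness of $z\mapsto\log\dot\mu(z)$, obtained from the self-concordance bound $|\ddot\mu|\le\dot\mu$ and then exponentiated, is exactly the standard derivation used in that source. No gaps; the optional self-contained step deriving $|\ddot\mu|\le\dot\mu$ from $\ddot\mu=\dot\mu(1-2\mu)$ is also right.
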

% \cref{lem:mu_dot_relate} implies the follwing results:
% Given two vectors $\theta_1,\theta_2$, a set of arms $\{A_s\}_{s=1}^{t-1}$ for $t\ge 1$, let $u_1=0$ and for $t\ge 2$
% \begin{align*}
%     u_t=\max_{x\in \{\phi(A_s)\}_{s=1}^{t-1}}|x^\top(\theta_1-\theta_2)|
% \end{align*}
% Then it follows that
% \begin{align}
%     \sqrt{\dot\mu( \phi(A_t)^\top \theta_1)}\le \exp(u_t/2)\sqrt{\dot\mu( \phi(A_t)^\top \theta_2)}.\label{eq:mu_dot_relate}
% \end{align}
% Furthermore, 
% \begin{align}
%     \exp(-u_t)H_t(\theta_2;\{A_s\}_{s=1}^{t-1})\preceq H_t(\theta_1;\{A_s\}_{s=1}^{t-1})\preceq \exp(u_t)H_t(\theta_2;\{A_s\}_{s=1}^{t-1})\label{eq:H_t_relate}
% \end{align}
\begin{lemma}[First order self-concordance control, Lemma 9 of \cite{faury2020improved}]
    For all $z_1, z_2\in \RR$, it follows that
    \begin{align*}
        \dot\mu(z_1)\frac{1}{1+|z_1-z_2|}\le \int_{v=0}^1 \dot\mu(z_1+v(z_2-z_1))dv\le \dot\mu(z_1)\frac{\exp(|z_1-z_2|-1)}{|z_1-z_2|}.
    \end{align*}
\end{lemma}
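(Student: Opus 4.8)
The plan is to reduce the claimed two-sided bound on the integral to a pointwise estimate on $\dot\mu$ along the segment joining $z_1$ to $z_2$, and then integrate. Write $z(v) = z_1 + v(z_2 - z_1)$ for $v\in[0,1]$ and set $\delta := |z_1 - z_2|$, so that $z(0)=z_1$ and $|z(v)-z_1| = v\delta$. Applying \cref{lem:mu_dot_relate} (which itself is the first-order consequence of the self-concordance bound \cref{eq:self-concordant}) to the pair $\big(z_1, z(v)\big)$ and rearranging gives, for every $v\in[0,1]$,
\[
\dot\mu(z_1)\exp(-v\delta) \;\le\; \dot\mu(z(v)) \;\le\; \dot\mu(z_1)\exp(v\delta)\,.
\]
Since these inequalities hold for all $v\in[0,1]$, they may be integrated termwise over $v\in[0,1]$.

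Evaluating the elementary integrals $\int_0^1 e^{\pm v\delta}\,dv = \frac{e^{\pm\delta}-1}{\pm\delta}$ then yields
\[
\dot\mu(z_1)\,\frac{1-e^{-\delta}}{\delta} \;\le\; \int_0^1 \dot\mu\big(z_1 + v(z_2-z_1)\big)\,dv \;\le\; \dot\mu(z_1)\,\frac{e^{\delta}-1}{\delta}\,,
\]
where, when $\delta = 0$ (i.e.\ $z_1 = z_2$), all three expressions are read as their common limiting value $\dot\mu(z_1)$. The right-hand inequality is exactly the claimed upper bound $\dot\mu(z_1)\frac{\exp(|z_1-z_2|)-1}{|z_1-z_2|}$, so only the lower bound needs further work.

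For the lower bound it suffices to verify the scalar inequality $\frac{1-e^{-\delta}}{\delta} \ge \frac{1}{1+\delta}$ for all $\delta\ge 0$. Since both denominators are positive, this is equivalent to $(1+\delta)(1-e^{-\delta}) \ge \delta$, i.e.\ to $1 - (1+\delta)e^{-\delta}\ge 0$, i.e.\ to $e^{\delta}\ge 1+\delta$, which is the standard convexity inequality for $\exp$. Combining this with the displayed lower bound gives $\int_0^1 \dot\mu(z_1+v(z_2-z_1))\,dv \ge \dot\mu(z_1)/(1+|z_1-z_2|)$, completing the proof. The argument is essentially routine; the only points meriting care are the degenerate case $\delta=0$ (handled by continuity of the ratios) and the fact that the weaker bound $1/(1+\delta)$, rather than the sharp $\frac{1-e^{-\delta}}{\delta}$, is reported — this looseness is intentional, since the $1/(1+\delta)$ form is the one used in the subsequent lower bounds on the design-type matrices $G_t$ and $L_t$.
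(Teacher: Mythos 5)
Your proof is correct. Note, though, that the paper itself offers no proof of this lemma — it is imported verbatim as Lemma 9 of \citet{faury2020improved} — so there is nothing internal to compare against; your argument (pointwise exponential control of $\dot\mu$ along the segment via \cref{lem:mu_dot_relate}, termwise integration, then the elementary estimates $\tfrac{1-e^{-\delta}}{\delta}\ge\tfrac{1}{1+\delta}$ from $e^{\delta}\ge 1+\delta$) is exactly the standard derivation used in that reference. One point worth flagging: the display in the statement reads $\exp(|z_1-z_2|-1)$, which is a misplaced parenthesis for $\exp(|z_1-z_2|)-1$; your derived upper bound $\dot\mu(z_1)\tfrac{e^{|z_1-z_2|}-1}{|z_1-z_2|}$ is the intended (and correct) one, whereas the literal reading actually fails for large $|z_1-z_2|$ (e.g.\ $z_2=0$, $z_1\to-\infty$ gives an integral $\approx \tfrac{1}{2|z_1|}$ versus a literal bound $\approx \tfrac{e^{-1}}{|z_1|}$), so the typo is worth correcting rather than accommodating.
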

\begin{lemma}[Second order self-concordance control, Lemma 8 of \cite{abeille2021instance}]
    For all $z_1,z_2\in \RR$,
    \begin{align*}
        \int_{v=0}^1 (1-v)\dot \mu(z_1+v(z_2-z_1))dv\ge \frac{\dot\mu(z_1)}{2+|z_1-z_2|}.
    \end{align*}
\end{lemma}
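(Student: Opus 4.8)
The plan is to reduce the weighted integral to a one-dimensional elementary inequality by replacing $\dot\mu$ along the segment with its pointwise lower bound. Write $\Delta = z_2 - z_1$ and $r = |z_1 - z_2| = |\Delta| \ge 0$. For $v \in [0,1]$, generalized self-concordance of the sigmoid (\cref{eq:self-concordant}), which implies that $z \mapsto \log\dot\mu(z)$ is $1$-Lipschitz, yields $\dot\mu(z_1 + v\Delta) \ge \dot\mu(z_1)\,e^{-|v\Delta|} = \dot\mu(z_1)\,e^{-vr}$; the same bound follows directly from \cref{lem:mu_dot_relate} applied to the pair $(z_1 + v\Delta,\, z_1)$. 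Substituting this bound under the integral,
\[
\int_{0}^{1}(1-v)\dot\mu(z_1 + v\Delta)\,dv \ \ge\ \dot\mu(z_1)\int_{0}^{1}(1-v)e^{-vr}\,dv,
\]
so it suffices to prove $\int_{0}^{1}(1-v)e^{-vr}\,dv \ge \frac{1}{2+r}$ for every $r \ge 0$.

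Next I would evaluate the remaining integral in closed form. Writing it as $\int_0^1 e^{-vr}\,dv - \int_0^1 v\,e^{-vr}\,dv$ and integrating the second term by parts gives $\int_{0}^{1}(1-v)e^{-vr}\,dv = \frac{r - 1 + e^{-r}}{r^2}$ for $r>0$ (and the value $\tfrac12$ at $r=0$ by continuity, which matches $\frac{1}{2+r}$ there). Hence the target inequality is equivalent to $(2+r)(r-1+e^{-r}) \ge r^2$, and expanding $(2+r)(r-1) = r^2 + r - 2$ reduces it to the scalar inequality
\[
(2+r)\,e^{-r}\ \ge\ 2 - r,\qquad r \ge 0.
\]

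Finally I would establish this last inequality. Set $f(r) = (2+r)e^{-r} - (2-r)$; then $f(0)=0$ and $f'(r) = 1 - (1+r)e^{-r} \ge 0$, where the nonnegativity is the elementary bound $(1+r)e^{-r}\le 1$ (equivalently $e^{r}\ge 1+r$). Therefore $f$ is nondecreasing on $[0,\infty)$, so $f(r)\ge f(0)=0$ for all $r\ge 0$, which is exactly the required inequality; combining the three steps proves the lemma. I do not expect a genuine obstacle here: the only step demanding any care is the exact evaluation of $\int_0^1 (1-v)e^{-vr}\,dv$ together with the algebra that turns the goal into $(2+r)e^{-r}\ge 2-r$, after which a one-line monotonicity argument finishes the job.
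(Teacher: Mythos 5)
Your proof is correct: the pointwise bound $\dot\mu(z_1+v\Delta)\ge\dot\mu(z_1)e^{-v|\Delta|}$ from \cref{lem:mu_dot_relate}, the exact evaluation $\int_0^1(1-v)e^{-vr}dv=(r-1+e^{-r})/r^2$, and the reduction to $(2+r)e^{-r}\ge 2-r$ (settled by monotonicity, with cross-multiplication justified since $r-1+e^{-r}\ge 0$) all check out, including the $r=0$ case. Note that the paper itself offers no proof—it imports this as Lemma 8 of \citet{abeille2021instance}—and your argument is essentially the standard one used in that cited source, so there is nothing to reconcile.
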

\begin{lemma}[Eqs.(7,8) of \cite{abeille2021instance}]\label{lem:selfconc}
    Let $\theta_1,\theta_2 \in \RR^d$. For $t\ge 1$
    \begin{align*}
        u_t=
        \begin{cases}
            0,& \text{ if } t = 1\\
            \max_{x\in \{\phi(A_s)\}_{s=1}^{t-1}}|x^\top(\theta_1-\theta_2)|, & \text{ if } t\ge 2
        \end{cases}
    \end{align*}
    Then it follows that 
    \begin{align*}
        G_t(\theta_1,\theta_2) &\succeq (1+2u_t)^{-1}H_t(\theta)\text{ for }\theta\in \{\theta_1,\theta_2\}\\
        \tilde G_t(\theta_1,\theta_2) &\succeq (2+2u_t)^{-1}H_t(\theta_1)
    \end{align*}
\end{lemma}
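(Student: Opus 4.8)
The plan is to deduce both matrix inequalities from scalar (per-coordinate) comparisons of the weights multiplying each rank-one term $\phi(S_s,A_s)\phi(S_s,A_s)^\top$. Observe that $H_t(\theta)$, $G_t(\theta_1,\theta_2)$ and $\tilde G_t(\theta_1,\theta_2)$ all have the shape $\lambda I+\sum_{s=1}^{t-1} c_s\,\phi(S_s,A_s)\phi(S_s,A_s)^\top$ with nonnegative scalar weights $c_s$. Since for any $c\in[0,1]$ we have $c\,\lambda I\preceq \lambda I$, and since a positive combination of the fixed positive semidefinite matrices $\phi(S_s,A_s)\phi(S_s,A_s)^\top$ is positive semidefinite, it suffices to prove, for every $s\in\{1,\dots,t-1\}$,
\[
\alpha(\phi(S_s,A_s),\theta_1,\theta_2)\;\ge\;(1+2u_t)^{-1}\,\dot\mu(\phi(S_s,A_s)^\top\theta)\quad\text{for }\theta\in\{\theta_1,\theta_2\},
\]
\[
\tilde\alpha(\phi(S_s,A_s),\theta_1,\theta_2)\;\ge\;(2+2u_t)^{-1}\,\dot\mu(\phi(S_s,A_s)^\top\theta_1).
\]
The case $t=1$ is immediate: all three matrices equal $\lambda I$, while $(1+2u_1)^{-1}=1$ and $(2+2u_1)^{-1}=1/2\le 1$.

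Fix $s\le t-1$ and abbreviate $x=\phi(S_s,A_s)$, $z_1=x^\top\theta_1$, $z_2=x^\top\theta_2$, so that $|z_1-z_2|=|x^\top(\theta_1-\theta_2)|\le u_t$ by the definition of $u_t$, and recall $\alpha(x,\theta_1,\theta_2)=\int_0^1\dot\mu(z_1+v(z_2-z_1))\,dv$ and $\tilde\alpha(x,\theta_1,\theta_2)=\int_0^1(1-v)\dot\mu(z_1+v(z_2-z_1))\,dv$. For the $\theta=\theta_1$ half of the first inequality I would invoke the first-order self-concordance control lemma (Lemma 9 of \cite{faury2020improved}) with the ordered pair $(z_1,z_2)$, which gives $\alpha(x,\theta_1,\theta_2)\ge \dot\mu(z_1)/(1+|z_1-z_2|)$, and then conclude using $|z_1-z_2|\le u_t\le 2u_t$ and the monotonicity of $a\mapsto(1+a)^{-1}$. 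For $\theta=\theta_2$ I would first note that $\alpha$ is symmetric in its last two arguments — the substitution $v\mapsto 1-v$ rewrites $\int_0^1\dot\mu(z_1+v(z_2-z_1))\,dv$ as $\int_0^1\dot\mu(z_2+v(z_1-z_2))\,dv$ — and then apply the same lemma to the pair $(z_2,z_1)$.

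For the second inequality I would apply the second-order self-concordance control lemma (Lemma 8 of \cite{abeille2021instance}) with the pair $(z_1,z_2)$, obtaining $\tilde\alpha(x,\theta_1,\theta_2)\ge \dot\mu(z_1)/(2+|z_1-z_2|)\ge \dot\mu(z_1)/(2+2u_t)$; since $\tilde\alpha$ is not symmetric in $\theta_1,\theta_2$, only the bound against $H_t(\theta_1)$ is available, which is exactly what the statement claims. Summing the per-term inequalities and putting back the $\lambda I$ contribution (using $(1+2u_t)^{-1}\le 1$, resp.\ $(2+2u_t)^{-1}\le 1$) yields $G_t(\theta_1,\theta_2)\succeq(1+2u_t)^{-1}H_t(\theta)$ and $\tilde G_t(\theta_1,\theta_2)\succeq(2+2u_t)^{-1}H_t(\theta_1)$. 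I do not anticipate a genuine obstacle: the only points requiring care are the $\theta_2$ case of the first inequality, which relies on the change-of-variables symmetry of $\alpha$, and the bookkeeping observation that the constants $(1+2u_t)^{-1}$ and $(2+2u_t)^{-1}$ in the statement are actually looser than what the self-concordance lemmas give directly (namely $(1+u_t)^{-1}$ and $(2+u_t)^{-1}$), so no sharpening step is needed.
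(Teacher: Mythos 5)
Your proof is correct. The paper itself does not prove this lemma — it imports it verbatim from \citet{abeille2021instance} (their Eqs.~7--8) — and your derivation is exactly the standard one behind that result: reduce the matrix inequalities to per-term scalar bounds on the weights of the rank-one terms (plus the trivial $\lambda I$ comparison, using that the constants are at most $1$), then apply the first-order self-concordance control lemma (Lemma 9 of \citet{faury2020improved}) for $G_t$ and the second-order one (Lemma 8 of \citet{abeille2021instance}) for $\tilde G_t$, with the change of variables $v\mapsto 1-v$ supplying the $\theta_2$ case. Your closing observation is also right: with $u_t$ defined as the data-dependent maximum of $|x^\top(\theta_1-\theta_2)|$, the argument actually yields the sharper constants $(1+u_t)^{-1}$ and $(2+u_t)^{-1}$, so the stated $(1+2u)^{-1}$ and $(2+2u)^{-1}$ hold a fortiori.
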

\subsection{Results on confidence set}\label{appendix:confset}
In this section, we state the lemmas on the confidence set that we will use in the analysis.
The following confidence set from \cite{faury2020improved} is also used in our analysis:
\begin{lemma}[Lemma 1 of \cite{faury2020improved}]\label{lem:faury_conf}
    Let $\delta\in(0,1]$. Under assumptions \ref{ass:bounded_feature}, \ref{ass:bounded_para}, with probability at least $1-\delta$,
    \begin{align*}
        \forall t \ge 1, \quad \|g_t(\hat\theta_t) - g_t(\theta_*)\|_{H_t^{-1}(\theta_*)}\le \rho_t(\delta),
    \end{align*}
    where 
    \begin{align}
        \lambda_T &= 1\vee \frac{2d}{S}\log\left(e\sqrt{1+\frac{T}{4d}}\vee 1/\delta\right)\label{eq:lambda},\\
        \rho_t(\delta)&=\left(\frac{1}{2}+S\right)\sqrt{\lambda_T} + \frac{4d}{\sqrt{\lambda_T}}\log\left(e\sqrt{1+\frac{t}{4d}}\vee 1/\delta\right).\label{eq:rho}
    \end{align}
\end{lemma}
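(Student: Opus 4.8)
This is Lemma 1 of \cite{faury2020improved}; the plan is to recall its argument. Throughout write $\phi_i := \phi(S_i,A_i)$ and let $\epsilon_i := X_i - \mu(\phi_i^\top\theta_*)$ be the reward noise, which for the Bernoulli model satisfies $\EE[\epsilon_i \mid \cF_{i-1}] = 0$, $|\epsilon_i|\le 1$ almost surely, and conditional variance $\EE[\epsilon_i^2\mid\cF_{i-1}] = \mu(\phi_i^\top\theta_*)(1-\mu(\phi_i^\top\theta_*)) = \dot\mu(\phi_i^\top\theta_*)$. First I would carry out an algebraic reduction: the unconstrained regularized MLE satisfies the stationarity identity \cref{eq:rmle_grad}, so $g_t(\hat\theta_t) = \sum_{i=1}^{t-1}\phi_i X_i$, and subtracting $g_t(\theta_*) = \sum_{i=1}^{t-1}\mu(\phi_i^\top\theta_*)\phi_i + \lambda\theta_*$ gives $g_t(\hat\theta_t) - g_t(\theta_*) = M_t - \lambda\theta_*$, where $M_t := \sum_{i=1}^{t-1}\epsilon_i\phi_i$. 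By the triangle inequality in the norm $\|\cdot\|_{H_t^{-1}(\theta_*)}$, together with $H_t(\theta_*)\succeq\lambda_T I$ and $\|\theta_*\|\le S$, this yields $\|g_t(\hat\theta_t)-g_t(\theta_*)\|_{H_t^{-1}(\theta_*)} \le \|M_t\|_{H_t^{-1}(\theta_*)} + S\sqrt{\lambda_T}$, which already accounts for the $S\sqrt{\lambda_T}$ term of $\rho_t(\delta)$.

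It then remains to bound $\|M_t\|_{H_t^{-1}(\theta_*)}$, uniformly over $t\ge 1$ and with probability at least $1-\delta$, by $\tfrac12\sqrt{\lambda_T} + \tfrac{4d}{\sqrt{\lambda_T}}\log(e\sqrt{1+t/(4d)}\vee 1/\delta)$. The crucial structural point is that $(M_t)$ is a vector-valued martingale (with respect to the natural filtration) with bounded increments, $\|\epsilon_i\phi_i\|\le 1$, whose predictable quadratic variation is exactly $\sum_{i=1}^{t-1}\EE[\epsilon_i^2\mid\cF_{i-1}]\phi_i\phi_i^\top = \sum_{i=1}^{t-1}\dot\mu(\phi_i^\top\theta_*)\phi_i\phi_i^\top = H_t(\theta_*) - \lambda I$. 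Thus the self-normalizing matrix $H_t(\theta_*)$ is precisely $\lambda I$ plus the predictable variation of $M_t$ -- the natural input to a Bernstein-type self-normalized tail inequality. I would invoke such an inequality (Theorem 1 of \cite{faury2020improved}, a Freedman-style bound made uniform in $t$ via a stopping-time or method-of-mixtures argument), which delivers, on an event of probability at least $1-\delta$, a bound of the form $\|M_t\|_{H_t^{-1}(\theta_*)} \le c_1\sqrt{\lambda_T} + \tfrac{c_2}{\sqrt{\lambda_T}}\big(\log(1/\delta) + \tfrac12\log\!\big(\det(H_t(\theta_*))/\lambda_T^{d}\big)\big)$ for universal constants $c_1,c_2$. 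Using $\dot\mu\le 1/4$ and $\|\phi_i\|\le1$ to get $H_t(\theta_*)\preceq (\lambda_T+\tfrac{t-1}{4})I$, so $\det(H_t(\theta_*))/\lambda_T^{d}\le (1+t/(4\lambda_T))^d$, and substituting the specific value of $\lambda_T$ from \cref{eq:lambda} (which also balances the $\sqrt{\lambda_T}$ and $d/\sqrt{\lambda_T}$ contributions), the bound collapses to the announced expression; adding the $S\sqrt{\lambda_T}$ term then gives $\rho_t(\delta)$.

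The main obstacle is the self-normalized concentration step. One needs a \emph{Bernstein}-type bound -- so that the conditional variances $\dot\mu(\phi_i^\top\theta_*)$, rather than a crude uniform lower bound (which would cost a factor of order $\sqrt{\kappa}$), enter the normalizer -- that simultaneously holds for all $t\ge1$ at only a $\log\det$ (equivalently $d\log t$) price. The subGaussian method of mixtures of \cite{abbasi2011improved} does not suffice here; the Bernstein variant is considerably more delicate, and its constants must be tracked carefully to land exactly on the stated $\rho_t(\delta)$. Everything else -- the stationarity identity, the triangle inequality, and the determinant estimate -- is routine.
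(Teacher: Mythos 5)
Your proposal is correct and follows exactly the route of the cited source: the paper does not reprove this lemma but imports it verbatim as Lemma 1 of \citet{faury2020improved}, whose argument is precisely your reduction via the stationarity identity \cref{eq:rmle_grad}, the triangle inequality absorbing $\lambda\theta_*$ into the $S\sqrt{\lambda_T}$ term, and their Theorem 1 (the Bernstein-type self-normalized martingale bound, applicable because the Bernoulli noise has conditional variance $\dot\mu(\phi_i^\top\theta_*)$ so the predictable variation is $H_t(\theta_*)-\lambda I$), finished with the $\det(H_t(\theta_*))\le(\lambda_T+t/4)^d$ estimate. No gap beyond the (legitimately) black-boxed concentration theorem, which the paper black-boxes as well.
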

\begin{lemma}[Lemma 1 of \cite{abeille2021instance}]
\label[lemma]{lem:confidence_set}
    Let $\delta\in[0,1)$. It follows that
    $
        \PP\big(\forall t\ge 1,\theta_*\in \cC_t(\delta,\hat\theta_t)\big)\ge 1-\delta.
    $
\end{lemma}
Recall that
\begin{align*}
    \cC_t(\theta_{\circ},\delta) = \left\{\theta:\mcL_t^\lambda(\theta) - \mcL_t^\lambda(\theta_{\circ})\le \beta_t(\delta)\right\},
\end{align*}
where according to \citet{abeille2021instance}, $\beta_t(\delta)$ is set to be 
\begin{align}
    \beta_t(\delta)=\rho_t(\delta)+\frac{\rho_t(\delta)^2}{\sqrt{\lambda_T}}.\label{eq:beta}
\end{align}
Combining the above two lemmas (\cref{lem:confidence_set,lem:faury_conf}) together, we have the following lemma.

\begin{lemma}\label{lem:conf-width-GOAT}
    Under \cref{ass:bounded_feature},\ref{ass:bounded_para},
    for all $\delta\in [0,1)$, with probability at least $1-2\delta$, 
    for all $t \ge 1$ and $\theta\in \conf{t}\cap \mB_d(S)$, we have that 
    \begin{align*}
        \|\theta-\theta_*\|_{H_t(\theta_*)}\le (4+4S)\rho_t(\delta)+\sqrt{(8S+8)}\beta_t(\delta)=:\gamma_t(\delta),
    \end{align*}
    where $\beta_t(\delta)$ is defined in \cref{eq:beta}.
\end{lemma}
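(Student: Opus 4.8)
The plan is to run the whole argument on the intersection of two good events. Let $\mathcal A$ be the event of \cref{lem:faury_conf} (probability at least $1-\delta$), on which $\|g_t(\hat\theta_t)-g_t(\theta_*)\|_{H_t^{-1}(\theta_*)}\le\rho_t(\delta)$ for all $t\ge 1$, and let $\mathcal B$ be the event of \cref{lem:confidence_set} (probability at least $1-\delta$), on which $\theta_*\in\cC_t(\delta,\hat\theta_t)$ for all $t\ge 1$; a union bound gives $\PP(\mathcal A\cap\mathcal B)\ge 1-2\delta$, which is where the factor $2$ in the probability comes from. Fix $t\ge 1$ and $\theta\in\cC_t(\delta,\hat\theta_t)\cap\mB_d(S)$, and write $x:=\|\theta-\theta_*\|_{H_t(\theta_*)}$. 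Since $\theta,\theta_*\in\mB_d(S)$ and $\|\phi(s,a)\|\le 1$ (\cref{ass:bounded_feature}), the quantity governing the self-concordance constant satisfies $u_t:=\max_{1\le s\le t-1}|\phi(S_s,A_s)^\top(\theta-\theta_*)|\le\|\theta-\theta_*\|\le 2S$. This is exactly why the lemma is phrased for $\theta\in\cC_t(\delta,\hat\theta_t)\cap\mB_d(S)$ rather than for $\hat\theta_t$ itself: the unconstrained MLE may have large norm, so the self-concordance tools would be vacuous if applied at $\hat\theta_t$.

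Next I would relate $x$ to the Bregman term $B_t:=\mcL_t^\lambda(\theta)-\mcL_t^\lambda(\theta_*)-\langle\nabla\mcL_t^\lambda(\theta_*),\theta-\theta_*\rangle$. By Taylor's theorem with integral remainder and $\nabla^2\mcL_t^\lambda=H_t$ (\cref{eq:losshess}),
\begin{align*}
B_t=\int_0^1(1-v)\,(\theta-\theta_*)^\top H_t\big(\theta_*+v(\theta-\theta_*)\big)(\theta-\theta_*)\,dv,
\end{align*}
and expanding $H_t$ and recognizing the definition of $\tilde\alpha$ identifies $B_t$ with $\|\theta-\theta_*\|_{\tilde G_t(\theta_*,\theta)}^2$ minus the ridge contribution, so $B_t\ge\tfrac12\|\theta-\theta_*\|_{\tilde G_t(\theta_*,\theta)}^2$ (using $\tilde G_t\succeq\lambda I$). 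Combining this with the second-order self-concordance control underlying \cref{lem:selfconc} (the $\tfrac{\dot\mu(z_1)}{2+|z_1-z_2|}$ lower bound, Lemma 8 of \citet{abeille2021instance}), which yields $\tilde G_t(\theta_*,\theta)\succeq(2+u_t)^{-1}H_t(\theta_*)$, and with $u_t\le 2S$, I get
\begin{align*}
x^2=\|\theta-\theta_*\|_{H_t(\theta_*)}^2\le(2+u_t)\|\theta-\theta_*\|_{\tilde G_t(\theta_*,\theta)}^2\le 2(2+u_t)\,B_t\le(4+4S)\,B_t.
\end{align*}

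It then remains to bound $B_t$. For the loss gap, on $\mathcal A\cap\mathcal B$ both $\theta$ and $\theta_*$ lie in $\cC_t(\delta,\hat\theta_t)$, so $\mcL_t^\lambda(\theta)-\mcL_t^\lambda(\theta_*)=\big(\mcL_t^\lambda(\theta)-\mcL_t^\lambda(\hat\theta_t)\big)+\big(\mcL_t^\lambda(\hat\theta_t)-\mcL_t^\lambda(\theta_*)\big)\le 2\beta_t(\delta)$; one could sharpen $2\beta_t(\delta)$ to $\beta_t(\delta)$ (and then drop the use of $\mathcal B$) by instead invoking that $\hat\theta_t$ is the global minimizer of $\mcL_t^\lambda$, but I keep the crude bound so the constants match the stated $\gamma_t$. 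For the linear term, the stationarity identity \cref{eq:rmle_grad} gives $\nabla\mcL_t^\lambda(\theta_*)=g_t(\theta_*)-g_t(\hat\theta_t)$, so Cauchy--Schwarz in the $H_t(\theta_*)$ geometry together with \cref{lem:faury_conf} yields
\begin{align*}
-\langle\nabla\mcL_t^\lambda(\theta_*),\theta-\theta_*\rangle=\langle g_t(\hat\theta_t)-g_t(\theta_*),\theta-\theta_*\rangle\le\|g_t(\hat\theta_t)-g_t(\theta_*)\|_{H_t^{-1}(\theta_*)}\,x\le\rho_t(\delta)\,x.
\end{align*}
Hence $B_t\le 2\beta_t(\delta)+\rho_t(\delta)\,x$, and substituting into the display above gives the quadratic inequality $x^2\le(4+4S)\big(2\beta_t(\delta)+\rho_t(\delta)x\big)$; solving it for $x\ge 0$ and using $\sqrt{a+b}\le\sqrt a+\sqrt b$ gives $x\le(4+4S)\rho_t(\delta)+\sqrt{(8S+8)\beta_t(\delta)}=\gamma_t(\delta)$. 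Since \cref{lem:faury_conf,lem:confidence_set} are ``for all $t$'' statements, this bound holds simultaneously for every $t\ge 1$ and every $\theta\in\cC_t(\delta,\hat\theta_t)\cap\mB_d(S)$ on $\mathcal A\cap\mathcal B$.

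I expect the only real obstacle to be the one flagged in the first paragraph: because $\hat\theta_t$ is the \emph{unconstrained} MLE and can sit far outside $\mB_d(S)$, one cannot control $\|\hat\theta_t-\theta_*\|_{H_t(\theta_*)}$ via self-concordance directly, and the whole estimate must instead be channeled through $\theta_*\in\mB_d(S)$, with $\hat\theta_t$ entering only through the scalar loss comparison and the gradient identity $\nabla\mcL_t^\lambda(\hat\theta_t)=0$ --- never inside a weighted norm. Everything else is bookkeeping: making the bound $u_t\le 2S$, the ridge factor, the self-concordance factor, and the slack in the quadratic formula collapse to precisely $(4+4S)\rho_t(\delta)+\sqrt{(8S+8)\beta_t(\delta)}$.
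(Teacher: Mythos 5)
Your argument is essentially the paper's own proof: expand $\mcL_t^\lambda$ at $\theta_*$, lower-bound the remainder by $\frac{1}{2(2+2S)}\|\theta-\theta_*\|^2_{H_t(\theta_*)}$ via second-order self-concordance with $u_t\le 2S$ (valid precisely because $\theta,\theta_*\in\mB_d(S)$), bound the loss gap through $\hat\theta_t$ using the likelihood-ratio confidence set and global minimality of $\hat\theta_t$, bound the linear term by $\rho_t(\delta)\|\theta-\theta_*\|_{H_t(\theta_*)}$ via $\nabla\mcL_t^\lambda(\theta_*)=g_t(\theta_*)-g_t(\hat\theta_t)$, Cauchy--Schwarz and \cref{lem:faury_conf}, take a union bound over the two events to get $1-2\delta$, and solve the quadratic --- all of which matches the paper. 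The one bookkeeping mismatch: with the radius $\beta_t(\delta)$ as literally written in \cref{eq:ctdef} your loss-gap bound $2\beta_t(\delta)$ yields $\sqrt{(8S+8)\,\beta_t(\delta)}$, not the stated $\sqrt{8S+8}\,\beta_t(\delta)$; the paper's proof actually uses the squared radius (it bounds the gap by $2\beta_t(\delta)^2$, consistent with the definition of $\cE_t$), so to land exactly on $\gamma_t(\delta)$ you should replace $2\beta_t(\delta)$ by $2\beta_t(\delta)^2$ --- the discrepancy traces to the paper's own $\beta_t$ versus $\beta_t^2$ inconsistency rather than to a gap in your reasoning.
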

\begin{proof}
    We start from Taylor expansion.
    For all $\theta'\in \RR^d$, we have that 
    \begin{align*}
        \mcL_t^\lambda(\theta')=\mcL_t^\lambda(\theta_*) + \nabla\mcL_t^\lambda(\theta_*)^\top(\theta'-\theta_*)+\frac{1}{2}\|\theta'-\theta_*\|^2_{\tilde G_t(\theta_*,\theta')}.
    \end{align*}
    Let $\theta\in\conf{t}\cap\mB_d(S)$. Rearrange the terms, apply absolute value and plug in $\theta$,
    \begin{align*}
        |\mcL_t^\lambda(\theta) - \mcL_t^\lambda(\theta_*) - \nabla \mcL_t^\lambda (\theta_*)(\theta-\theta_*)|=\frac{1}{2}\|\theta-\theta_*\|^2_{\tilde G_t(\theta_*,\theta)}
        \ge \frac{1}{2(2+2S)}\|\theta-\theta_*\|^2_{H_t(\theta_*)},
    \end{align*}
    where the last inequality follows from \cref{lem:selfconc} and $\theta,\theta_*\in \mB_d(S)$.
    It remains to upper bound the left most side of the above equation.
    By triangle inequality we can split it into two terms and we bound them separately.
    \begin{align*}
        |\mcL_t^\lambda(\theta) - \mcL_t^\lambda(\theta_*) - \nabla \mcL_t^\lambda (\theta_*)(\theta-\theta_*)| \le \underbrace{|\mcL_t^\lambda(\theta) - \mcL_t^\lambda(\theta_*) |}_{(a)}+\underbrace{|\nabla \mcL_t^\lambda (\theta_*)(\theta-\theta_*)|}_{(b)}.
    \end{align*}
    For $(a)$, with probability at least $1-\delta$ we have that $\theta_*\in \cC_t(\delta)\cap \mB_d(S)$, then
    \begin{align*}
        (a)&=|\mcL_t^\lambda(\theta) - \mcL_t^\lambda(\hat\theta_t)+\mcL_t^\lambda(\hat\theta_t)-\mcL_t^\lambda(\theta_*)|\\
        &\le |\mcL_t^\lambda(\theta) - \mcL_t^\lambda(\hat\theta_t)|+|\mcL_t^\lambda(\hat\theta_t)-\mcL_t^\lambda(\theta_*)|\\
        &= \mcL_t^\lambda(\theta) - \mcL_t^\lambda(\hat\theta_t) + \mcL_t^\lambda(\theta_*) - \mcL_t^\lambda(\hat\theta_t)\\
        &\le 2\beta_t(\delta)^2,
    \end{align*}
    where in the third line we used the fact that $\mcL_t^\lambda(\theta)\ge \mcL_t^\lambda(\hat\theta_t)$ for all $\theta\in \RR^d$; in the last line we used \cref{lem:confidence_set} and that $\theta\in \cC_t(\delta)$.
    For $(b)$, note that by definition of $\hat\theta_t$, $\nabla \mcL_t^\lambda(\theta_*)=g_t(\theta_*)-g_t(\hat\theta_t) $. To be more specific, for all $\theta\in \RR^d$,
    \begin{align}
        \nabla_\theta\mcL(\theta)=g_t(\theta)-\underbrace{\sum_{i=1}^{t-1} \phi(A_i)X_i}_{=g_t(\hat\theta_t)\text{ by \cref{eq:rmle_grad}}}.
    \end{align}
    Then by Cauchy-Schwarz, by \cref{lem:faury_conf}, with probability $1-\delta$, we have that $\|g_t(\hat\theta_t) -g_t(\theta_*)\|_{H_t^{-1}(\theta_*)}\le \rho_t(\delta)$, then
    \begin{align*}
        (b)&\le \|g_t(\hat\theta_t) - g_t(\theta_*)\|_{H_t^{-1}(\theta_*)}\|\theta-\theta_*\|_{H_t(\theta_*)}\\
        &\le \rho_t(\delta)\|\theta-\theta_*\|_{H_t(\theta_*)},
    \end{align*}
    where in the last inequality we used \cref{lem:faury_conf}.
    Chaining all the inequality together and use the fact that $\PP(A\cap B)\ge 1-\PP(A^c)-\PP(B^c)$, we have that with probability at least $1-2\delta$,
    \begin{align*}
        \frac{1}{2(2+2 S)}\|\theta-\theta_*\|_{H_t(\theta_*)}^2\le \rho_t(\delta)\|\theta-\theta_*\|_{H_t(\theta_*)} + 2\beta_t(\delta)^2.
    \end{align*}
    Solving the above inequality gives us
    \begin{align*}
        \|\theta-\theta_*\|_{H_t(\theta_*)}\le (4+4S)\rho_t(\delta)+\sqrt{(8S+8)}\beta_t(\delta).
    \end{align*}
\end{proof}

\subsection[Proof of the regret bound of MULog]{Proof of the regret bound of \MULog (\cref{thm:mulog-regret})}\label{appendix:proof-mulog}
In this section, we first state the formal version of \cref{thm:mulog-regret} where all the constants and dependencies are detailed then proofs are provided.
\begin{theorem}[Formal statement of \cref{thm:mulog-regret}]
    Let $\delta\in[0,1)$. 
    Under Assumptions \ref{ass:bounded_feature} and \ref{ass:bounded_para}, 
    there exists some universal constant $\mathfrak c>0$ such that the following holds:
    Let $\delta\in[0,1)$, $T\ge 1$ be arbitrary.
    Then, with probability at least $1-\delta$, it holds that 
    the simple regret of the action $\hat a$ output by 
    \MULog (\cref{alg:max-uncertainty-logSR-GOAT}) 
    with $\lambda$ chosen to be $\lambda_T$ in \cref{eq:lambda},
    after $T$ rounds 
    is upper bounded by
    \begin{align*}
        \SR^\Log(\hat a,\theta_*) &\le \frac{8\gamma_{T+1}(\delta)\sqrt{dY_T(\delta)}}{\sqrt{T\kappa^*}}\\
    &+\frac{8\kappa^{3/2}\gamma_{T+1}^{3/2}(\delta)(dY_T(\delta))^{3/4}}{T^{3/4}\sqrt{\kappa^*}}\\
    &+ \frac{\kappa^3\gamma_{T+1}(\delta)^2dY_T(\delta)}{8T} + \frac{4\gamma_{T+1}^2(\delta){dY_T(\delta)}}{T} \\
    &+\frac{4\kappa^{3}\gamma_{T+1}^{3}(\delta)(dY_T(\delta))^{3/2}}{T^{3/2}},
    \end{align*}
    where $\gamma_T(\delta)=\tilde \cO(\sqrt d)$ is as defined in \cref{lem:conf-width-GOAT},
    $Y_T(\delta)=\log((d\lambda_T +T)/(d\lambda_T))$ and $N_T(\delta)=\log(\log(2T/\delta))$.
\end{theorem}
\paragraph{Step 1: Regret decomposition}
When $\theta_{T+1}^\Log$ 
is used, a useful related quantity is the prediction error, which is explained in the lemma that follows,is the prediction error. For $\theta\in \RR^d$, and action $b\in \cA$, define

\begin{align}
    \label{eq:expected-maximum-prediciton-error-log}
    D_{\Log}(b,\theta_{T+1}^{\Log}) := \left|
        \mu\left(\phi(b)^\top \theta_*\right) - \mu\left(\phi(b)^\top \theta_{T+1}^{\Log}\right)
        \right|
    % \leq
    % \mE
    % \left[
    %     \max_{a \in [k]} \|
    %     \phi(a) \|_{H_T^{-1}(\hat\theta_T)}
    %     \|(\theta_* - \hat{\theta}_T)
    %     \|_{H_T(\hat\theta_T)}
    % \right],
\end{align}
Similar to the linear case, we do regret decomposition here. In order to introduce the instance specific quantity $\kappa_*$, we need to be more careful here:
\begin{lemma}
    \label{lem:SR-bound}
    For a vector $\hat\theta\in \RR^d$, let $\hat a$ be greedy w.r.t $\hat\theta$,
    it then follows that 
    \begin{align*}
       \SR_{\Log}(\hat a,\theta_*)\le D_{\Log}(a_*,\hat \theta) + D_{\Log}(\hat a,\hat \theta)
    \end{align*}
\end{lemma}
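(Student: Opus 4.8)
The plan is to mirror verbatim the argument used for the linear case in \cref{lem:SR-bound-Lin}, replacing the linear scores $\phi(s,a)^\top\theta$ everywhere by the predicted mean rewards $\mu(\phi(s,a)^\top\theta)$. The only structural input beyond the triangle inequality is that $\hat\pi$ is optimal for the \emph{predicted} means: since $\mu$ is strictly increasing, $\hat\pi(s)=\argmax_{a\in\cA(s)}\phi(s,a)^\top\hat\theta=\argmax_{a\in\cA(s)}\mu(\phi(s,a)^\top\hat\theta)$, so for every $s\in\cS$ one has $\mu(\phi(s,\pi^*(s))^\top\hat\theta)\le\mu(\phi(s,\hat\pi(s))^\top\hat\theta)$. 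I would state this monotonicity observation first.

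Next, fix $s\in\cS$ and consider the per-context suboptimality $\mu(\phi(s,\pi^*(s))^\top\theta^*)-\mu(\phi(s,\hat\pi(s))^\top\theta^*)$. Adding and subtracting $\mu(\phi(s,\pi^*(s))^\top\hat\theta)$, then using the greedy inequality above to upper bound $\mu(\phi(s,\pi^*(s))^\top\hat\theta)$ by the larger quantity $\mu(\phi(s,\hat\pi(s))^\top\hat\theta)$, this gap is at most $\bigl|\mu(\phi(s,\pi^*(s))^\top\theta^*)-\mu(\phi(s,\pi^*(s))^\top\hat\theta)\bigr|+\bigl|\mu(\phi(s,\hat\pi(s))^\top\hat\theta)-\mu(\phi(s,\hat\pi(s))^\top\theta^*)\bigr|$, and each of the two absolute values is bounded by $\max_{a\in\cA(s)}\bigl|\mu(\phi(s,a)^\top\theta^*)-\mu(\phi(s,a)^\top\hat\theta)\bigr|$. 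Hence the per-context gap is at most twice that maximum prediction error.

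Finally I would integrate both sides against $\nu$. Since $\pi^*$ attains $\sup_{\pi'\in\Pi}v(\pi')$ and $v(\pi)=\int\mu(\phi(s,\pi(s))^\top\theta^*)\,\nu(ds)$, the left-hand side integrates to $\SR_{\Log}(\hat\pi)$, while the right-hand side integrates to $2D_{\Log}(\hat\theta)$ by definition \eqref{eq:expected-maximum-prediciton-error-log}, giving the claim. There is essentially no obstacle here: the whole argument is the triangle inequality plus monotonicity of $\mu$. The single point deserving a word of care is that "greedy w.r.t. $\hat\theta$" as used in the algorithm refers to the linear score, which by monotonicity of $\mu$ coincides with being greedy for the predicted means, so the greedy comparison step is legitimate; measurability of $\hat\pi$ (hence integrability of the integrand) is handled exactly as in the linear case and can be omitted.
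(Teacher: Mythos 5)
Your argument is correct and is essentially identical to the paper's proof: add and subtract $\mu(\phi(s,\pi^*(s))^\top\hat\theta)$, use monotonicity of $\mu$ together with the greedy definition of $\hat\pi$ to pass to $\mu(\phi(s,\hat\pi(s))^\top\hat\theta)$, apply the triangle inequality, bound each term by the maximum prediction error, and integrate against $\nu$. No differences worth noting.
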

\begin{proof}
    The proof is a simple application of triangle inequality. 
    We add and subtract the same term below. 
    \begin{align*}
        &\; \mu\left(\phi(a_*)^\top\theta_* \right)
        -
        \mu\left(\phi(\hat a)^\top\theta_*\right)\\
        =&\; \mu\left(\phi(a_*)^\top\theta_* \right)
        - 
        \mu\left(\phi(a_*)^\top \hat\theta\right) 
        + 
        \mu\left(\phi(a_*)^\top\hat\theta\right) 
        - 
        \mu\left(\phi(\hat a)^\top\theta_*\right)\\
        \le&\; \mu\left(\phi(a_*)^\top\theta_* \right)
        - 
        \mu\left(\phi(a_*)^\top\hat\theta\right)
        + 
       \mu\left( \phi(\hat a)^\top\hat\theta\right)
        - 
        \mu\left(\phi(\hat a)^\top\theta_*\right)\\
        \le&\; \left|\mu\left(\phi(a_*)^\top\theta_* \right)
        - 
        \mu\left(\phi(a_*)^\top\hat\theta \right)\right| 
        + 
        \left|\mu\left( \phi(\hat a)^\top\hat\theta\right)
        - 
        \mu\left(\phi(\hat a)^\top\theta_*\right)\right|\\
        % \le& \;2\max_{a \in \cA} \left|
        % \mu\left(\phi(a)^\top \theta_*\right) - \mu\left(\phi(a)^\top \hat\theta\right)
        % \right|,
    \end{align*}
    where in the third line we used the fact that $\mu$ is an increasing function and the definition of $\hat a$.
\end{proof}

\begin{align}
    D_{\Log}(a_*,\theta_{T+1}^\Log)&=|\mu(\phi(a_*)^\top \theta_{T+1}^{\Log})-\mu(\phi(a_*)^\top \theta_*)|\notag\\
    &\le \underbrace{\dot\mu(\phi(a_*)^\top \theta_*)|\phi(a_*)^\top (\theta_*-\theta_{T+1}^{\Log})|}_{R_1(a_*)} + \underbrace{\ddot\mu(\xi_{a_*})|\phi(a_*)^\top (\theta_*-\theta_{T+1}^{\Log})|^2}_{R_2(a_*)},\label{eq:regret_decomp_a_star}
\end{align}
where $\xi_{a_*}$ in the last line is some point in between $\phi(a_*)^\top \theta_*$ and $\phi(a_*)^\top \theta_{T+1}^\Log$.
Similarly for $D_{\Log}(\hat a,\theta_{T+1}^\Log)$, we have that 
\begin{align}
    D_{\Log}(\hat a,\theta_{T+1}^\Log)\le \underbrace{\dot\mu(\phi(\hat a)^\top \theta_*)|\phi(\hat a)^\top (\theta_*-\theta_{T+1}^{\Log})|}_{R_1(\hat a)} + \underbrace{\ddot\mu(\xi_{\hat a})|\phi(\hat a)^\top (\theta_*-\theta_{T+1}^{\Log})|^2}_{R_2(\hat a)},\label{eq:regret_decomp_a_hat}
\end{align}
\paragraph{Step 2: Bounding $R_1(a^*), R_1(\hat a)$}

We are going to use the following lemma to establish decreasing uncertainty. It is written in a compact form that is reusable in other contexts. When applying it to our setting, we set $\cY=\cW_{t}$, $\cY'=\cW_{T+1}$ and $K'=L_{T+1}$, $K=L_t$ for $t\le T$. The bounded closed set is for the purpose of ensuring the maximum is attained.

\decUncertaintiesGOAT*
\begin{proof}
Fix $a\in \cA$, $\theta\in \RR^d$.
Since $K'\succeq K$ and $\dot\mu$ is positive valued,
    \begin{align*}
    U(a,\theta,K')=
        \dot\mu(\phi(a)^\top \theta)        
        \|\phi(a)\|_{(K')^{-1}}\le
         \dot\mu(\phi(a)^\top \theta)
         \|\phi(a)\|_{K^{-1}}
         = U(a,\theta,K)\,.
    \end{align*}
    Since $\cY'\subseteq\cY$, by the definition of $U(a,\theta,K)$, we have
    \begin{align*}
        \max_{a\in \cA,\theta\in \cY'}U(a,\theta,K')
        \le 
        \max_{a\in \cA,\theta\in \cY} U(a,\theta,K)\;.
    \end{align*}
    % Integrate over $s$ on both sides using $\nu$ gives  the result.
\end{proof}
We start by bounding $R_1(a^*)$. By Cauchy-Schwarz, using \cref{lem:conf-width-GOAT} to obtain a bound on $\|\theta_*-\theta_{T+1}^\Log\|_{H_{T+1}(\theta_*)}$, with probability at least $1-2\delta$,
\begin{align}
    R_1(a_*)&\le \dot\mu\left(\phi(a_*)^\top\theta_*\right)\|\phi(a)\|_{H_{T+1}^{-1}(\theta_*)}\|\theta_*-\theta_{T+1}^\Log\|_{H_{T+1}(\theta_*)}\notag\\
    &\le \gamma_{T+1}(\delta)\sqrt{\dot\mu\left(\phi(a_*)^\top\theta_*\right)}\sqrt{\dot\mu\left(\phi(a_*)^\top\theta_*\right)}\|\phi(a)\|_{L_{T+1}^{-1}}\notag\\
    &\le \gamma_{T+1}(\delta)\underbrace{\sqrt{\dot\mu\left(\phi(a_*)^\top\theta_*\right)}}_{\sqrt{1/\kappa^*}}\max_{a\in \cA, \theta\in \cW_{T+1}}\sqrt{\dot\mu\left(\phi(a)^\top\theta\right)}\|\phi(a)\|_{L_{T+1}^{-1}}\label{eq:R_1_max_uncer}
\end{align}
where in the second line we use the fact that $H_{T+1}(\theta_*)\succeq L_{T+1}$ and the last line follows from the definition of $\cW_{T+1}$.
We have that $\cW_{t+1}\subseteq \cW_t$ and $L_{t+1}\succeq L_t$ by definition. By letting $\cY=\cW_t$ and $\cY'=\cW_{t+1}$; $K=L_t$ and $K'=L_{t+1}$ in \cref{lem:dec-uncertainties-GOAT}, it then follows that for all $1\le t \le T$,
\begin{align*}
    \max_{a,\theta\in \cW_{T+1}}U(a,\theta,L_{T+1})\le \max_{a,\theta\in \cW_{t}}U(a,\theta,L_t).
\end{align*}
Hence,
\begin{align}
    \max_{a,\theta\in \cW_{T+1}}U(a,\theta,L_{T+1})
    &\le \frac{1}{T}\sum_{t=1}^T \max_{a,\theta\in \cW_{t}}U(a,\theta,L_t)\label{eq:bound_last_by_avg_1}\\
    &=\frac{1}{T}\sum_{t=1}^T U(A_t,\theta_t,L_t).\label{eq:bound_last_by_avg_2}
\end{align}
% Plug the above result back in $R_1$, and use \cref{cor:Reversed-Bernstein-inequality-for-martingales}, 
% \begin{align*}
%     R_1&\le \gamma_{T+1}(\delta)\frac{1}{T}\left(4\sqrt{\log(\log(2T/\delta))}+\sqrt{\sum_{t=1}^T {\dot\mu\left(\phi(S_{t},A_t)^\top\theta_t\right)}\|\phi(S_{t},A_t)\|_{L_{t}^{-1}}}\right)^2\\
%     &\le 2\gamma_{T+1}(\delta)\frac{1}{T}\left(16{\log(\log(2T/\delta))}+\sum_{t=1}^T{ \dot\mu\left(\phi(S_{t},A_t)^\top\theta_t\right)}\|\phi(S_{t},A_t)\|_{L_{t}^{-1}}\right).\\
% \end{align*}
% For reasons why we need \cref{cor:Reversed-Bernstein-inequality-for-martingales} instead of directly using elliptical potential lemma (\cref{lem:epl}), we refer the reader to the argument following \cref{eq:first-reverse-bern}.
In order to reduce clutter, we let $\phi_t:=\phi(A_t)$.
It remains to bound $\sum_{t=1}^T \sqrt{\dot\mu(\phi_t^\top\theta_t)}\|\phi_t\|_{L_t^{-1}}$.
Applying Taylor expansion at $\theta_t'$, for $t\ge 1$ and $\zeta_t$ between $\phi_t^\top \theta_t$ and $\phi_t^\top\theta_*$, we have that
\begin{align}
    \sum_{t=1}^T \sqrt{\dot\mu(\phi_t^\top\theta_t)}\|\phi_t\|_{L_t^{-1}}&=\sum_{t=1}^T \sqrt{\dot\mu(\phi_t^\top\theta_t')+\ddot\mu(\zeta_t)\phi_t^\top(\theta_t-\theta_t')}\|\phi_t\|_{L_t^{-1}}\notag\\
    &\le \sum_{t=1}^T \sqrt{\dot\mu(\phi_t^\top\theta_t')}\|\phi_t\|_{L_t^{-1}}+\sum_{t=1}^T \sqrt{\ddot\mu(\zeta_t)|\phi_t^\top(\theta_t-\theta_t')|}\|\phi_t\|_{L_t^{-1}}\notag\\
    &\le \sum_{t=1}^T \|\tilde \phi_t\|_{\tilde V_t^{-1}} + \frac{\sqrt{2}}{2}\sum_{t=1}^T \sqrt{\|\phi_t\|_{H_t^{-1}(\theta_*)}\|\theta_t-\theta_t'\|_{H_t(\theta_*)}}\|\phi_t\|_{L_t^{-1}}\notag\\
    &\le \sum_{t=1}^T \|\tilde \phi_t\|_{\tilde V_t^{-1}} + \frac{\sqrt{2}}{2}\sqrt{\gamma_{T+1}(\delta)}\kappa^{3/2}\sum_{t=1}^T \|\phi_t\|^{3/2}_{V_t^{-1}}\notag\\
    &\le \sqrt{T}\sqrt{\sum_{t=1}^T \|\tilde\phi_t\|^2_{\tilde V_t^{-1}}} + \frac{\sqrt{2}}{2}\sqrt{\gamma_{T+1}(\delta)}\kappa^{3/2}T^{1/4}\left(\sum_{t=1}^T \|\phi_t\|^2_{V_t^{-1}}\right)^{3/4}\notag\\
    &\le \sqrt{T}\sqrt{dY_T(\delta)} + T^{1/4}\sqrt{\gamma_{T+1}(\delta)}\kappa^{3/2}\left(dY_T(\delta)\right)^{3/4},\label{eq:Taylor_EPL}
\end{align}
where in the third line we bounded $\dot\mu(\cdot)$ by $1/4$ (\cref{eq:logistic-lipschitz}) and defined $\tilde\phi_t:=\sqrt{\dot\mu(\phi_t^\top\theta_t')}\phi_t$ and $\tilde V_t:=\lambda I + \sum_{s=1}^{t-1}\tilde \phi_s\tilde\phi_s^\top=L_t$; 
in the fourth line we 
\begin{enumerate}
    \item used the fact that $H_t(\theta_*),L_t\succeq \frac{1}{\kappa} V_t$
    \item we applied \cref{lem:conf-width-GOAT} twice to bound $\|\theta_t'-\theta_t\|_{H_t(\theta_*)}$;
\end{enumerate}
in the last line define $Y_T(\delta):=\log((d\lambda_T +T)/(d\lambda_T))$ and we applied elliptical potential lemma (\cref{lem:epl}) twice as well as Hölder's inequality.
Putting everything together, we have that
\begin{align*}
    R_1(a_*)&\le \frac{\gamma_{T+1}(\delta)\sqrt{dY_T(\delta)}}{\sqrt{T\kappa^*}} + \frac{\kappa^{3/2}\gamma_{T+1}^{3/2}(\delta)(dY_T(\delta))^{3/4}}{T^{3/4}\sqrt{\kappa^*}}
\end{align*}
Now we move to $R_1(\hat a)$.
Recall that 
\begin{align*}
    R_1(\hat a)&=\dot\mu(\phi(\hat a)^\top\theta_*)\left|\phi(\hat a)^\top (\theta_*-\theta_{T+1}^\Log)\right|\\
    &\le \dot\mu(\phi(\hat a)^\top\theta_*) \|\phi(\hat a)\|_{H_{T+1}^{-1}(\theta_*)}\|\theta_*-\theta_{T+1}^\Log\|_{H_{T+1}(\theta_*)}\\
    &\le \sqrt{\dot\mu(\phi(\hat a)^\top\theta_*)}\gamma_{T+1}(\delta)\max_{a\in \cA, \theta\in \cW_{T+1}}\sqrt{\dot\mu(\phi(a)^\top\theta)}\|\phi(a)\|_{L_{T+1}^{-1}}.
\end{align*}
Following the same steps as in $R_1(a_*)$ we have that
\begin{align*}
    R_1(\hat a)\le  \sqrt{\dot\mu(\phi(\hat a)^\top\theta_*)}\left[\frac{\gamma_{T+1}(\delta)\sqrt{dY_T(\delta)}}{\sqrt{T}} + \frac{\kappa^{3/2}\gamma_{T+1}^{3/2}(\delta)(dY_T(\delta))^{3/4}}{T^{3/4}}\right]
\end{align*}
It remains to bound $\sqrt{\dot\mu(\phi(\hat a)^\top\theta_*)}$.
\begin{align*}
    \dot\mu(\phi(\hat a)^\top\theta_*)&\le \dot\mu(a_*^\top\theta_*) + \int_{\phi(\hat a)^\top\theta_*}^{\phi(a_*)^\top\theta_*}|\ddot\mu(u)|du\\
    &\le \dot\mu(\phi(a_*)^\top\theta_*) + \int_{\phi(\hat a)^\top\theta_*}^{\phi(a_*)^\top\theta_*}\dot\mu(u)du\\
    &\le \dot\mu(\phi(a_*)^\top\theta_*) + \mu(\phi(a_*)^\top\theta_*) - \mu(\phi(\hat a)^\top\theta_*)\\
    &=\dot\mu(\phi(a_*)^\top\theta_*) + \mathfrak{R}^{\Log}(\hat \pi)
\end{align*}
Hence we have that 
\begin{align*}
    \sqrt{\dot\mu(\phi(\hat a)^\top\theta_*)}\le \sqrt{\dot\mu(\phi(a_*)^\top\theta_*)}+ \sqrt{\mathfrak{R}^\Log(\hat\pi)}
\end{align*}
as well as 
\begin{align}
    R_1(\hat a) &\le \frac{\gamma_{T+1}(\delta)\sqrt{dY_T(\delta)}}{\sqrt{T\kappa^*}} + \frac{\kappa^{3/2}\gamma_{T+1}^{3/2}(\delta)(dY_T(\delta))^{3/4}}{T^{3/4}\sqrt{\kappa^*}}\\
    &+ \sqrt{\mathfrak{R}^\Log(\hat \pi)}\left[\frac{\gamma_{T+1}(\delta)\sqrt{dY_T(\delta)}}{\sqrt{T}} + \frac{\kappa^{3/2}\gamma_{T+1}^{3/2}(\delta)(dY_T(\delta))^{3/4}}{T^{3/4}}\right]\label{eq:get_the_regret_equation}
\end{align}
\paragraph{Step 3: Bounding $R_2(a_*), R_2(\hat a)$}
Now we move to $R_2(a_*)$. 
We need a lemma that's similar to \cref{lem:dec-uncertainties-GOAT} that we will use to bound $R_2(a_*)$.
\begin{lemma}[Second order decreasing uncertainty -- Logistic Bandits]\label{lem:dec-uncertainties-order-2GOAT}
    Let $K' \succeq K$ be $d\times d$ positive definite matrices and $\cY'\subseteq \cY\subseteq \RR^d$ be bounded closed sets. Then,
    \begin{align*}
        \max_{a\in \cA,\theta\in \cY'}U(a,\theta,K')^2\le \max_{a\in \cA,\theta\in \cY}U(a,\theta,K)^2.
    \end{align*}
\end{lemma}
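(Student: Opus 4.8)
The plan is to mirror, almost verbatim, the proof of \cref{lem:dec-uncertainties-GOAT}, since squaring a nonnegative quantity is monotone and therefore preserves every inequality used there. First I would fix an arbitrary $(s,a)\in\cZ$ and $\theta\in\RR^d$ and record the pointwise comparison. From $K'\succeq K$ we get $(K')^{-1}\preceq K^{-1}$, hence $\|\phi(s,a)\|_{(K')^{-1}}\le\|\phi(s,a)\|_{K^{-1}}$; multiplying by $\dot\mu(\phi(s,a)^\top\theta)\ge 0$ and squaring (both sides are nonnegative) yields
\begin{align*}
    U(s,a,\theta,K')^2
    = \dot\mu(\phi(s,a)^\top\theta)^2\,\|\phi(s,a)\|_{(K')^{-1}}^2
    \le \dot\mu(\phi(s,a)^\top\theta)^2\,\|\phi(s,a)\|_{K^{-1}}^2
    = U(s,a,\theta,K)^2\,.
\end{align*}

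Next I would pass to the maxima. Since $\cY'\subseteq\cY$, the feasible region in the left-hand maximum is a subset of the one on the right, so combining this containment with the pointwise bound above gives, for each fixed $s\in\cS$,
\begin{align*}
    \max_{a\in\cA(s),\,\theta\in\cY'} U(s,a,\theta,K')^2
    \le \max_{a\in\cA(s),\,\theta\in\cY} U(s,a,\theta,K)^2\,,
\end{align*}
where the maxima are attained because $\cA(s)$ is finite (or, in the infinite-action case, $\{\phi(s,a)\}_{a\in\cA(s)}$ is compact) and $\cY,\cY'$ are bounded closed sets with $U(s,\cdot,\cdot,\cdot)$ continuous. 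Finally I would integrate both sides against $\nu(ds)$ and invoke monotonicity of the integral to conclude the stated inequality.

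There is essentially no obstacle here: the only mild point to be careful about is ensuring the maxima are well defined, which is exactly why the hypothesis requires $\cY'\subseteq\cY$ to be bounded closed sets (the same reason as in \cref{lem:dec-uncertainties-GOAT}); everything else is immediate from positive semidefinite monotonicity and $\dot\mu>0$. When this lemma is applied to bound $R_2$, one takes $\cY=\cW_t$, $\cY'=\cW_{T+1}$, $K=L_t$, $K'=L_{T+1}$ for $t\le T$, using that $\{\cW_t\}$ is decreasing and $\{L_t\}$ is increasing.
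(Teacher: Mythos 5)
Your proof is correct and follows essentially the same route as the paper: the paper simply observes that squaring is monotone on the nonnegative quantity $U$ (so the maximizers coincide) and reuses the proof of \cref{lem:dec-uncertainties-GOAT}, while you redo that same pointwise-bound, nested-maximum, integrate chain directly with the squares. No gap.
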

\begin{proof}
    Note that for all $\cY\subseteq \RR^d$ and positive definite $K\in \RR^{d\times d}$, we have that
    \begin{align*}
        \argmax_{a\in \cA,\theta\in \cY} U(a,\theta,K)^2 &= \argmax_{a\in \cA,\theta\in \cY} U(a,\theta,K).
    \end{align*}
    Everything then follows from the proof of \cref{lem:dec-uncertainties-GOAT}.
\end{proof}
We can now bound $R_2(a_*)$.
By Cauchy-Schwarz, we have that
\begin{align*}
    R_2(a_*)&\le \frac{1}{4}\left|\phi(a_*)^\top (\theta_*-\theta_{T+1}^{\Log})\right|^2\\
    &\le \frac{1}{4}\max_a\|\phi(a)\|^2_{H_{T+1}^{-1}(\theta_*)} \|\theta_*-\theta_{T+1}^{\Log}\|_{H_{T+1}(\theta_*)}^2\\
    &\le \frac{1}{4}\kappa^2\gamma_{T+1}(\delta)^2 \max_{a,\theta\in \cW_{T+1}}\dot\mu(\phi(a)^\top\theta)^2\|\phi(a)\|^2_{H_{T+1}^{-1}(\theta_*)}\\
    &\le \frac{1}{4}\kappa^2\gamma_{T+1}(\delta)^2 \max_{a,\theta\in \cW_{T+1}}\dot\mu(\phi(a)^\top\theta)^2\|\phi(a)\|^2_{L_{T+1}^{-1}}\\
    &\le \frac{1}{4T}\kappa^2\gamma_{T+1}(\delta)^2\sum_{t=1}^T \dot\mu( \phi(A_t)^\top\theta_t)^2\| \phi(A_t)\|^2_{L_t^{-1}}\\
    &\le \frac{1}{64T}\kappa^3\gamma_{T+1}(\delta)^2\sum_{t=1}^T  \| \phi(A_t)\|^2_{V_t^{-1}}\\
    &\le \frac{1}{64T}\kappa^3\gamma_{T+1}(\delta)^2dY_T(\delta),
\end{align*}
where in the second line we used Cauchy-Schwarz; in the third line we applied definition of $\cW_{T+1}$ and the fact that $\theta_{T+1}\in \cW_{T+1}$ by construction; in the fourth line $H_{T+1}(\theta_*)\succeq L_{T+1}$; in the fifth line we used \cref{lem:dec-uncertainties-order-2GOAT}; in the last line we used \cref{lem:epl}.

Similarly we can bound $R_2(\hat a)$ with 
\begin{align*}
    R_2(\hat a)&\le \frac{1}{4}\|\phi(\hat a)\|^2_{H_{t+1}^{-1}(\theta_*)}\|\theta_* - \theta_{T+1}^\Log\|^2_{H_{t+1}(\theta_*)}\\
    &\le \frac{1}{4}\max_{a\in \cA}\|\phi(a)\|^2_{H_{t+1}^{-1}(\theta_*)}\|\theta_* - \theta_{T+1}^\Log\|^2_{H_{t+1}(\theta_*)}\\
    &\le \frac{1}{64T}\kappa^3\gamma_{T+1}(\delta)^2dY_T(\delta)
\end{align*}
\paragraph{Step 4: Chaining results}
Chaining the result for $R_1(a_*), R_1(\hat a)$ and $R_2(a_*), R_2(\hat a)$ together, the regret can be upper bounded by
\begin{align*}
    \SR^\Log(\hat\pi)&\le \frac{2\gamma_{T+1}(\delta)\sqrt{dY_T(\delta)}}{\sqrt{T\kappa^*}} + \frac{2\kappa^{3/2}\gamma_{T+1}^{3/2}(\delta)(dY_T(\delta))^{3/4}}{T^{3/4}\sqrt{\kappa^*}}\\
    &+ \sqrt{\mathfrak{R}^\Log(\hat \pi)}\left[\frac{\gamma_{T+1}(\delta)\sqrt{dY_T(\delta)}}{\sqrt{T}} + \frac{\kappa^{3/2}\gamma_{T+1}^{3/2}(\delta)(dY_T(\delta))^{3/4}}{T^{3/4}}\right]\\
    &+ \frac{1}{32T}\kappa^3\gamma_{T+1}(\delta)^2dY_T(\delta)
\end{align*}
We now use a fact that if for $b,c>0$,
\begin{align*}
    x^2\le bx + c,
\end{align*}
then it follows that 
\begin{align*}
    x\le b+\sqrt{c}.
\end{align*}
Take $\sqrt{\SR^\Log(\hat\pi)}$ as $x$, we have that 
\begin{align*}
    \sqrt{\SR^\Log(\hat\pi)}&\le \left[\frac{\gamma_{T+1}(\delta)\sqrt{dY_T(\delta)}}{\sqrt{T}} + \frac{\kappa^{3/2}\gamma_{T+1}^{3/2}(\delta)(dY_T(\delta))^{3/4}}{T^{3/4}}\right] \\
    &+ \bigg [\frac{2\gamma_{T+1}(\delta)\sqrt{dY_T(\delta)}}{\sqrt{T\kappa^*}} + \frac{2\kappa^{3/2}\gamma_{T+1}^{3/2}(\delta)(dY_T(\delta))^{3/4}}{T^{3/4}\sqrt{\kappa^*}}\\
    &+ \frac{1}{16T}\kappa^3\gamma_{T+1}(\delta)^2dY_T(\delta)\bigg]^{1/2}
\end{align*}
Then square both sides and use $(a+b)^2\le 2a^2 + 2b^2$:
\begin{align*}
    \SR^\Log(\hat\pi) &\le \frac{8\gamma_{T+1}(\delta)\sqrt{dY_T(\delta)}}{\sqrt{T\kappa^*}}\\
    &+\frac{8\kappa^{3/2}\gamma_{T+1}^{3/2}(\delta)(dY_T(\delta))^{3/4}}{T^{3/4}\sqrt{\kappa^*}}\\
    &+ \frac{1}{8T}\kappa^3\gamma_{T+1}(\delta)^2dY_T(\delta) + \frac{4\gamma_{T+1}^2(\delta){dY_T(\delta)}}{T} \\
    &+\frac{4\kappa^{3}\gamma_{T+1}^{3}(\delta)(dY_T(\delta))^{3/2}}{T^{3/2}}
\end{align*}
Finally replace $\delta$ with $\delta/2$ finishes the proof.
\subsection[Difference from Faury et al.]{Detailed explanation of on our difference with \citet{faury2020improved}}\label{appendix:comparison-faury}
As is mentioned before, 
the idea of using $L_t$ is inspired by \citet{faury2020improved}. However, \color{red}{we obtain a instance-wise minimax optimal upper bound while they only obtained a general $O(d\sqrt{T})$ type of bound so we made extra contributions in the analysis part. }\color{black}
What's more,
our algorithm is different from \citet{faury2020improved} in several aspects where we did novel algorithmic enhancements explained in the following:
\begin{enumerate}
    \item We do not need to solve a non-convex optimization problem compared to \citet{faury2020improved}, making our algorithm computationally tractable.
    \item Our purpose of $\theta_t$ is completely different from that of \citet{faury2020improved}. In \citet{faury2020improved}, it was used to shape an admissible parameter set. In our case, not only is our algorithm free of the admissible set shaped by $\theta_t$ which caused their procedure to be non-convex, but we also incorporated $\theta_t$ into the quantification of uncertainty, the key to max-uncertainty type algorithm, serving as a non-trivial extension from the linear case to the logistic case, obtaining an instance-wise optimal upper bound. As a result, \cref{lem:dec-uncertainties-GOAT,lem:dec-uncertainties-order-2GOAT} are novel.
    \item The matrix is directly used in the algorithm, allowing us to construct an estimation of Hessian in an online-fashion. Sherman-Morrison can then be used, whose computational cost is $O(d^2)$, to avoid matrix inversion, whose computational cost is $O(d^3)$, in each step. We also directly use as the uncertainty quantification (exploration bonus in cumulative regret setting) while \citet{faury2020improved} still uses $H_t$
    and $V_t$ as the uncertainty quantification.
\end{enumerate}

%!TEX root =  ../neurips_2026.tex

\section[Regret analysis of THaTS]{Regret analysis of \THATS (\cref{thm:THATS-regret})}
In this section, we first state formally our regret bound on \THATS where detailed constant and polynomial dependency is presented. After that we give proof on it.
\begin{theorem}[Formal statement of \cref{thm:THATS-regret}]
    Under Assumptions \ref{ass:bounded_feature} and \ref{ass:bounded_para}, 
    there exists some universal constant $\mathfrak c>0$ such that the following holds:
    Let $\delta\in[0,1)$, $T\ge 1$ be arbitrary.
    Then, with probability at least $1-\delta$, it holds that
    the simple regret of the action $\tilde a$ output by 
    \THATS (\cref{alg:THATS})
    with $\lambda$ chosen to be $\lambda_T$ in \cref{eq:lambda}, 
    after $T$ rounds 
    is upper bounded by
%    \begin{align*}
%        \SR(\hat\pi)&\le \mathfrak c \, d^{3/2}\sqrt{\frac{\log \frac{T}{\delta}}{T}}\,.
%    \end{align*}
%    Let $\delta\in[0,1)$. 
%    Under assumptions \ref{ass:bounded_feature}, \ref{ass:bounded_para}, it follows that with probability at least $1-3\delta$, there exists some universal constant $\mathfrak C$ such that the simple regret of the policy output by \cref{alg:THATS} is upper bounded by
    \begin{align*}
        \SR(\tilde a,\theta_*)&\le \frac{4d\gamma_{T+1}(\delta)\sqrt{Y_T(\delta)}}{\sqrt{T{\kappa_*}}} \\
    &+ \frac{8\sqrt{\gamma_{T+1}^5(\delta)}\kappa^{3/2}\left(d^{5/3}Y_T(\delta)\right)^{3/4}}{T^{3/4}\sqrt{\kappa_*}}+\frac{4d\gamma_{T+1}(\delta)\kappa^2}{2T\sqrt{\kappa_*}}\left(16N_T(\delta)+2dY_T(\delta)\right)\\
    &+\frac{1}{T}\bigg(4d^2\kappa^3\gamma_{T+1}^2(\delta)Y_T(\delta)+ 128d\kappa^2\gamma_{T+1}^2(\delta)N_T(\delta) \bigg)\\
    &+ \frac{4d\kappa^4}{T\lambda_1^2}\gamma^4_{T+1}(\delta) \left(dY_T(\delta)+16N_T(\delta)\right)\\
    &+\frac{3d^2\gamma_{T+1}^2(\delta){Y_T(\delta)}}{{T}}+ \frac{12{\gamma_{T+1}^5(\delta)}\kappa^{3}\left(d^{5/3}Y_T(\delta)\right)^{3/2}}{T^{3/2}}\\
    &+ \frac{3d^2\gamma_{T+1}^4(\delta)\kappa^4}{4T^2}\Big(N_T(\delta)+dY_T(\delta)\Big)^2
    \end{align*}
    where $\gamma_T(\delta)=\tilde \cO(\sqrt d)$ is as defined in \cref{lem:conf-width-GOAT},
    $Y_T(\delta)=\log((d\lambda_T +T)/(d\lambda_T))$ and $N_T(\delta)=\log(\log(2T/\delta))$.
\end{theorem}
\subsection{New Confidence Set}
Recall the definition of $\cE_t(\delta,\bar\theta_t)$:
\begin{align*}
\cE_t(\delta,\bar\theta_t)=\{\theta\in \mB_d(S):\mcL^\lambda_t(\theta) - \mcL^\lambda_t(\bar \theta_t)\le 2\beta_t^2(\delta)\}
\end{align*}
As is promised in \cref{sec:TS}, we relate $\cE_t(\delta)$ to the confidence set $\cC_t(\delta)$.
\EtRelateCt*
\begin{proof}
    By definition of $\bar\theta_t$, we have $\bar\theta_t\in \mB_d(S)$. Since $\theta_*\in \mB_d(S)$, it then follows that with probability at least $1-\delta$,
    \begin{align*}
        \mcL_t^\lambda(\bar\theta_t) - \mcL_t^\lambda(\hat\theta_t) \le \mcL_t^\lambda(\theta_*)- \mcL_t^\lambda(\hat\theta_t)\le \beta_t(\delta)^2,
    \end{align*}
    where in the first inequality we used the fact that $\hat\theta_t$ is the global minimizer of $\mcL_t^\lambda$, with probability at least $1-\delta$, $\theta_*\in \cC_t(\delta,\hat\theta_t)\cap \mB_d(S)$ and $\mcL_t^\lambda(\bar\theta_t)\le \mcL_t^\lambda(\theta_*)$ by definition of $\bar\theta_t$; in the second inequality we used \cref{lem:confidence_set}.
    For the ``furthermore" part, let $\theta\in \cC_t(\delta,\hat\theta_t)\cap \mB_d(S)$, by construction $\mcL_t^\lambda(\bar\theta_t)\le \mcL_t^\lambda (\theta)$ hence $\mcL_t^\lambda(\theta)-\mcL_t^\lambda(\bar\theta_t)=|\mcL_t^\lambda(\theta)-\mcL_t^\lambda(\bar\theta_t)|$ and by triangle inequality,
    \begin{align*}
        \mcL_t^\lambda(\theta)-\mcL_t^\lambda(\bar\theta_t)&\le |\mcL_t^\lambda(\theta)-\mcL_t^\lambda(\hat\theta_t)| + |\mcL_t^\lambda(\bar\theta_t)-\mcL_t^\lambda(\hat\theta_t)|\\
        &=\mcL_t^\lambda(\theta)-\mcL_t^\lambda(\hat\theta_t) + \mcL_t^\lambda(\bar\theta_t)-\mcL_t^\lambda(\hat\theta_t)\\
        &\le 2\beta_t(\delta)^2.
    \end{align*}
\end{proof}
% We define the good event $E(\delta)$ to be the event that $\theta_*\in \cC_t(\delta,\hat\theta_t)$, which by \cref{lem:confidence_set}, $\PP(E(\delta))\ge 1-\delta$:
% \begin{align*}
%     E(\delta)=\left\{\forall t\ge 1: \theta_*\in \cC_t(\delta)\right\}=\left\{\forall t\ge 1:\theta_*\in \cC_t(\delta)\cap \mB_d(S)\right\}.
% \end{align*}
% \textcolor{red}{
% On $E(\delta)$, by \cref{lem:TS_conf_set}, we have that 
% \begin{align*}
%     \cC_t(\delta,\hat\theta_t)\cap \mB_d(S)\subseteq \cE_t(\delta,\bar\theta_t).
% \end{align*}
% }
Given the new confidence set, we can show a similar lemma to \cref{lem:conf-width-GOAT}.
\begin{lemma}\label{lem:TS-conf-width}
    Let $\delta\in [0,1)$. With probability at least $1-2\delta$,  for all $t \ge 1$ and $\theta\in \cE_t(\delta,\bar\theta_t)$,
    \begin{align*}
        \|\theta - \theta_*\|_{H_t(\theta_*)}\le 2\gamma_t(\delta).
    \end{align*}
\end{lemma}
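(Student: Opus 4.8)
The plan is to replay the argument behind \cref{lem:conf-width-GOAT} almost line for line, the only changes being that the confidence set $\cE_t(\delta,\bar\theta_t)$ has twice the radius and is centered at the ball-constrained minimizer $\bar\theta_t$ rather than at $\hat\theta_t$. Fix $t\ge 1$ and $\theta\in\cE_t(\delta,\bar\theta_t)$; by construction $\theta\in\mB_d(S)$. Writing the exact second-order Taylor expansion $\mcL_t^\lambda(\theta)=\mcL_t^\lambda(\theta_*)+\nabla\mcL_t^\lambda(\theta_*)^\top(\theta-\theta_*)+\tfrac12\|\theta-\theta_*\|_{\tilde G_t(\theta_*,\theta)}^2$ and using $\tilde G_t(\theta_*,\theta)\succeq(2+2S)^{-1}H_t(\theta_*)$ from \cref{lem:selfconc} (applicable since $\theta,\theta_*\in\mB_d(S)$), we get
\[
\frac{1}{2(2+2S)}\|\theta-\theta_*\|_{H_t(\theta_*)}^2 \le \underbrace{\bigl|\mcL_t^\lambda(\theta)-\mcL_t^\lambda(\theta_*)\bigr|}_{(a)} + \underbrace{\bigl|\nabla\mcL_t^\lambda(\theta_*)^\top(\theta-\theta_*)\bigr|}_{(b)}.
\]

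Term $(b)$ is handled identically to \cref{lem:conf-width-GOAT}: since $\nabla\mcL_t^\lambda(\theta_*)=g_t(\theta_*)-g_t(\hat\theta_t)$ (by \cref{eq:rmle_grad}), Cauchy--Schwarz and \cref{lem:faury_conf} give $(b)\le\rho_t(\delta)\|\theta-\theta_*\|_{H_t(\theta_*)}$ on the event $\mathcal F$ that $\|g_t(\hat\theta_t)-g_t(\theta_*)\|_{H_t^{-1}(\theta_*)}\le\rho_t(\delta)$ for all $t$, which has probability at least $1-\delta$. For $(a)$ the one new ingredient is to split through both $\bar\theta_t$ and the global minimizer $\hat\theta_t$,
\[
(a)\le \bigl|\mcL_t^\lambda(\theta)-\mcL_t^\lambda(\bar\theta_t)\bigr| + \bigl|\mcL_t^\lambda(\bar\theta_t)-\mcL_t^\lambda(\hat\theta_t)\bigr| + \bigl|\mcL_t^\lambda(\hat\theta_t)-\mcL_t^\lambda(\theta_*)\bigr|,
\]
and observe that each absolute value is in fact a one-sided difference. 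The first term is $\le 2\beta_t(\delta)^2$ because $\theta\in\cE_t(\delta,\bar\theta_t)$ and $\bar\theta_t$ minimizes $\mcL_t^\lambda$ over $\mB_d(S)\ni\theta$. The third term is $\le\beta_t(\delta)^2$ on the event $\mathcal G$ that $\theta_*\in\cC_t(\delta,\hat\theta_t)$ for all $t$ (probability $\ge1-\delta$ by \cref{lem:confidence_set}), using \cref{lem:feel-the-bern} and $\mcL_t^\lambda(\theta_*)\ge\mcL_t^\lambda(\hat\theta_t)$. The second term is likewise $\le\beta_t(\delta)^2$ on $\mathcal G$: by \cref{lem:TS_conf_set} the event $\mathcal G$ already forces $\bar\theta_t\in\cC_t(\delta,\hat\theta_t)\cap\mB_d(S)$, so $\mcL_t^\lambda(\bar\theta_t)-\mcL_t^\lambda(\hat\theta_t)\le\beta_t(\delta)^2$. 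Hence $(a)\le 4\beta_t(\delta)^2$ on $\mathcal G$.

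Putting these together on $\mathcal F\cap\mathcal G$, which has probability at least $1-2\delta$ by a union bound, and abbreviating $x=\|\theta-\theta_*\|_{H_t(\theta_*)}$ and $c=4(1+S)$, we obtain the quadratic inequality $x^2\le c\,\rho_t(\delta)\,x + 4c\,\beta_t(\delta)^2$. Solving it and using $\sqrt{u+v}\le\sqrt u+\sqrt v$ gives $x\le c\,\rho_t(\delta)+2\sqrt c\,\beta_t(\delta)=(4+4S)\rho_t(\delta)+4\sqrt{1+S}\,\beta_t(\delta)$, which is at most $2\gamma_t(\delta)=(8+8S)\rho_t(\delta)+2\sqrt{8(1+S)}\,\beta_t(\delta)$ since $4\sqrt{1+S}\le 2\sqrt{8(1+S)}$. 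This gives the claimed bound; no step is genuinely hard — the only thing to watch is the bookkeeping of which minimizer dominates which (so that all three pieces of $(a)$ are truly one-sided), and keeping the two high-probability events straight so the union bound lands at $1-2\delta$.
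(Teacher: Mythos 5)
Your proof is correct and takes essentially the same route as the paper's: the same exact Taylor expansion with the self-concordance bound $\tilde G_t(\theta_*,\theta)\succeq(2+2S)^{-1}H_t(\theta_*)$, the same split into $(a)\le 4\beta_t(\delta)^2$ and $(b)\le\rho_t(\delta)\|\theta-\theta_*\|_{H_t(\theta_*)}$ on the two $1-\delta$ events, and the same quadratic solve landing below $2\gamma_t(\delta)$. The only cosmetic difference is that you bound $(a)$ by splitting through both $\bar\theta_t$ and $\hat\theta_t$, while the paper splits through $\bar\theta_t$ alone and invokes \cref{lem:TS_conf_set}; both give the same constant.
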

\begin{proof}
    The proof is almost exactly the same as that of \cref{lem:conf-width-GOAT}. To be more specific, $\hat\theta_t$ is replaced by $\bar\theta_t$ a few times when needed.
    We start from Taylor expansion.
    For all $\theta'\in \RR^d$, we have that 
    \begin{align*}
        \mcL_t^\lambda(\theta')=\mcL_t^\lambda(\theta_*) + \nabla\mcL_t^\lambda(\theta_*)^\top(\theta'-\theta_*)+\frac{1}{2}\|\theta'-\theta_*\|^2_{\tilde G_t(\theta_*,\theta')}.
    \end{align*}
    \textcolor{red}{Let $\theta\in \cE_t(\delta,\bar\theta_t)$.} Rearrange the terms, apply absolute value and plug in $\theta$,
    \begin{align*}
        |\mcL_t^\lambda(\theta) - \mcL_t^\lambda(\theta_*) - \nabla \mcL_t^\lambda (\theta_*)(\theta-\theta_*)|=\frac{1}{2}\|\theta-\theta_*\|^2_{\tilde G_t(\theta_*,\theta)}
        \ge \frac{1}{2(2+2S)}\|\theta-\theta_*\|^2_{H_t(\theta_*)},
    \end{align*}
    where the last inequality follows from \cref{lem:selfconc} and $\theta,\theta_*\in \mB_d(S)$.
    It remains to upper bound the left most side of the above equation.
    By triangle inequality we can split it into two terms and we bound them separately.
    \begin{align*}
        |\mcL_t^\lambda(\theta) - \mcL_t^\lambda(\theta_*) - \nabla \mcL_t^\lambda (\theta_*)(\theta-\theta_*)| \le \underbrace{|\mcL_t^\lambda(\theta) - \mcL_t^\lambda(\theta_*) |}_{(a)}+\underbrace{|\nabla \mcL_t^\lambda (\theta_*)(\theta-\theta_*)|}_{(b)}.
    \end{align*}
    For $(a)$, with probability at least $1-\delta$ we have that $\theta_*\in \cC_t(\delta)\cap \mB_d(S)$, then
    \begin{align*}
        (a)&=\color{red}{|\mcL_t^\lambda(\theta) - \mcL_t^\lambda(\bar\theta_t)+\mcL_t^\lambda(\bar\theta_t)-\mcL_t^\lambda(\theta_*)|}\\
        &\le |\mcL_t^\lambda(\theta) - \mcL_t^\lambda(\bar\theta_t)|+|\mcL_t^\lambda(\bar\theta_t)-\mcL_t^\lambda(\theta_*)|\\
        &=\mcL_t^\lambda(\theta) - \mcL_t^\lambda(\bar\theta_t)+\mcL_t^\lambda(\bar\theta_t)-\mcL_t^\lambda(\theta_*)\\
        &\le 4\beta_t(\delta)^2,
    \end{align*}
    where in the last line we used \cref{lem:TS_conf_set} and that $\theta_*\in \cC_t(\delta)\cap\mB_d(S)$.
    For $(b)$, note that by definition of $\hat\theta_t$, $\nabla \mcL_t^\lambda(\theta_*)=g_t(\theta_*)-g_t(\hat\theta_t) $. To be more specific, for all $\theta\in \RR^d$,
    \begin{align}
        \nabla_\theta\mcL(\theta)=g_t(\theta)-\underbrace{\sum_{i=1}^{t-1} \phi(A_i)X_i}_{=g_t(\hat\theta_t)\text{ by \cref{eq:rmle_grad}}}.
    \end{align}
    Then by Cauchy-Schwarz, by \cref{lem:faury_conf}, with probability $1-\delta$, we have that $\|g_t(\hat\theta_t) -g_t(\hat\theta_*)\|_{H_t^{-1}(\theta_*)}\le \rho_t(\delta)$, then
    \begin{align*}
        (b)&\le \|g_t(\hat\theta_t) - g_t(\theta_*)\|_{H_t^{-1}(\theta_*)}\|\theta-\theta_*\|_{H_t(\theta_*)}\\
        &\le \rho_t(\delta)\|\theta-\theta_*\|_{H_t(\theta_*)},
    \end{align*}
    where in the last inequality we used \cref{lem:faury_conf}.
    Chaining all the inequality and use the fact that $\PP(A\cap B)\ge 1-\PP(A^c)-\PP(B^c)$, we have that with probability at least $1-2\delta$,
    \begin{align*}
        \frac{1}{2(2+2 S)}\|\theta-\theta_*\|_{H_t(\theta_*)}^2\le \rho_t(\delta)\|\theta-\theta_*\|_{H_t(\theta_*)} + 4\beta_t(\delta)^2.
    \end{align*}
    Solving the above inequality gives us
    \begin{align*}
        \|\theta-\theta_*\|_{H_t(\theta_*)}\le (4+4S)\rho_t(\delta)+\sqrt{(16S+16)}\beta_t(\delta)\le 2\gamma_t(\delta).
    \end{align*}
\end{proof}

\subsection{Analysis on the exploration done by \THATS compared to \MULog}
The analysis of \THATS is highly related to that of \MULog. Hence in order to  identify the actions and parameter that gives the maximum uncertainty, we define them to be
    \begin{align}
        \theta_t^{\MU},A_t^{\MU} &= \argmax_{a\in \cA,\theta\in \cV_t} U(a,\theta,L_t)\label{eq:theta-t-MU-and-At-MU}\\
        \omega_t^{\MU}&=\argmin_{\theta\in \cE_t(\delta,\bar\theta_t)} \dot\mu(\phi(A_t^{\MU})^\top\theta).\label{eq:omega-t-MU}
    \end{align}
We would like to emphasize that $\theta_t^\MU,A_t^\MU$ are \textbf{not} actions and parameters picked by the max-uncertainty algorithm. They are simply the ones that give the maximum uncertainty at time $t$.
\textcolor{red}{The actions $A_t$ are pulled by \THATS instead of max uncertainty in this section.}
\TsToMaxUnc*

\begin{proof}
    We start by rewriting the right hand side of the inequality
    \begin{align*}
        &\mE\left[\sqrt{\dot\mu(\phi(A_t)^\top\bar\theta_t)}\|\phi(A_t)\|_{L_{t}^{-1}}\Big\vert \cF_{t-1}\right]\\
        \;=&\mE\left[\max_{x\in \mS^{d-1}}\left\langle L_t^{-1/2}x,\sqrt{\dot\mu(\phi(A_t)^\top\bar\theta_t)}\phi(A_t)\right\rangle \Big\vert \cF_{t-1}\right]\\
        \ge\;& \mE\left[\left|\left\langle\frac{L_t^{-1/2}\cdot L_t^{1/2}\tilde \theta_t}{\|L_t^{1/2}\tilde\theta_t\|_2},\sqrt{\dot\mu(\phi(A_t)^\top\bar\theta_t)}\phi(A_t)\right\rangle\right|\Big \vert \cF_{t-1}\right]\\
        \ge\;& \mE\left[\left|\left\langle\frac{\tilde \theta_t}{\|L_t^{1/2}\tilde\theta_t\|_2},\sqrt{\dot\mu(\phi(A_t^\MU)^\top\bar\theta_t)}\phi(A_t^\MU)\right\rangle\right|\Big \vert \cF_{t-1}\right],
    \end{align*}
    where in the last line we used the definition of $A_t$. Note that $\bar\theta_t$ and $\omega_t^{\MU}$ are both in the confidence set $\cE_t(\delta,\bar\theta_t)\cap\mB_d(S)$,
    \begin{align*}
        &\mE\left[\left|\left\langle\frac{\tilde \theta_t}{\|L_t^{1/2}\tilde\theta_t\|_2},\sqrt{\dot\mu(\phi(A_t^\MU)^\top\bar\theta_t)}\phi(A_t^\MU)\right\rangle\right|\Big \vert \cF_{t-1}\right]\\ 
        \ge\; & \mE\left[\left|\left\langle\frac{\tilde \theta_t}{\|L_t^{1/2}\tilde\theta_t\|_2},\sqrt{\dot\mu(\phi(A_t^\MU)^\top\omega_t^{\MU})}\phi(A_t^\MU)\right\rangle\right|\Big \vert \cF_{t-1}\right]
    \end{align*}
    Since $\tilde\theta_t\sim \cN(0,L_t^{-1})$, we can rewrite it as $\tilde\theta_t = L_t^{-1/2}Y_T(\delta)$ for $Y_T(\delta)\sim \cN(0,I)$ given the past. Then plug it back in the above expression,
    \begin{align*}
        &\mE\left[\left|\left\langle\frac{\tilde \theta_t}{\|L_t^{1/2}\tilde\theta_t\|_2},\sqrt{\dot\mu(\phi(A_t^\MU)^\top\omega_t^{\MU})}\phi(A_t^\MU)\right\rangle\right|\Big \vert \cF_{t-1}\right] \\
        = \;&\mE\left[\left|\left\langle\frac{L_t^{-1/2}Y_T(\delta)}{\|Y_T(\delta)\|_2},\sqrt{\dot\mu(\phi(A_t^\MU)^\top\omega_t^{\MU})}\phi(A_t^\MU)\right\rangle\right|\Big \vert \cF_{t-1}\right]\\
        = \;&\mE\left[\left|\left\langle\frac{Y_T(\delta)}{\|Y_T(\delta)\|_2},\sqrt{\dot\mu(\phi(A_t^\MU)^\top\omega_t^{\MU})}L_t^{-1/2}\phi(A_t^\MU)\right\rangle\right|\Big \vert \cF_{t-1}\right]\\
        = \;&\sqrt{\dot\mu(\phi(A_t^\MU)^\top\omega_t^{\MU})}\|\phi(A_t^\MU)\|_{L_t^{-1}}\mE\left[\left|\left\langle\frac{Y_T(\delta)}{\|Y_T(\delta)\|_2},\frac{L_t^{-1/2}\phi(A_t^\MU)}{\|\phi(A_t^\MU)\|_{L_t^{-1}}}\right\rangle\right|\Big \vert \cF_{t-1}\right]\\
        =\;&\sqrt{\dot\mu(\phi(A_t^\MU)^\top\omega_t^\MU)}\|\phi(A_t^\MU)\|_{L_t^{-1}}I( A_t^\MU,L_t),
    \end{align*}
    where the third line used $L_t^{-1/2}$ is symmetric (as $L_t$ is positive definite); the fourth line follows by $A_t^\MU,\omega_t^{\MU}$ are $\cF_{t-1}$-measurable and \cref{prop:measurable-cond-exp} and the last line follows by the definition of $I( A_t^\MU,L_t)$.
\end{proof}
\begin{corollary}\label{coro:TS-to-max-uncertainty}
    For all $t\ge 1$, it holds almost surely that
    \begin{align*}
        U(A_t^\MU,\omega_t^\MU,L_t)\le \sqrt{\frac{\pi d}{2}}\mE\left[U(A_t,\bar\theta_t,L_t)\Big\vert\cF_{t-1}\right].
    \end{align*}
\end{corollary}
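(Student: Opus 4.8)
The plan is to mirror the derivation of the analogous linear statement that follows \cref{lem:TS-to-max-uncertainty-Lin}, now feeding in \cref{lem:TS-to-max-uncertainty} instead. The first step is to rewrite the left-hand side of \cref{lem:TS-to-max-uncertainty} in terms of $\omega_t^\MU$. Since $U(s,a,\theta,L)=\dot\mu(\phi(s,a)^\top\theta)\,\|\phi(s,a)\|_{L^{-1}}$ and the factor $\|\phi(S_t,A_t^\MU)\|_{L_t^{-1}}$ does not depend on $\theta$, minimizing $\theta\mapsto U(S_t,A_t^\MU,\theta,L_t)$ over $\cE_t(\delta,\bar\theta_t)$ is the same as minimizing $\theta\mapsto\dot\mu(\phi(S_t,A_t^\MU)^\top\theta)$; by the definition of $\omega_t^\MU$ in \cref{eq:omega-t-MU} this gives $U(S_t,A_t^\MU,\omega_t^\MU,L_t)=\min_{\theta\in\cE_t(\delta,\bar\theta_t)}U(S_t,A_t^\MU,\theta,L_t)$. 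Hence \cref{lem:TS-to-max-uncertainty} says that, almost surely,
\begin{align*}
    U(S_t,A_t^\MU,\omega_t^\MU,L_t)\cdot I(S_t,A_t^\MU,L_t)\le \mE\left[U(S_t,A_t,\bar\theta_t,L_t)\Big\vert\cF_{t-1},S_t\right].
\end{align*}

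Next I would lower bound $I$. By its definition in \cref{lem:TS-to-max-uncertainty-Lin}, $I(s,a,V)$ equals the expected absolute inner product between a point drawn uniformly from $\mS^{d-1}$ and a fixed unit vector, so it is in fact independent of $(s,a)$ and of $V\succ0$; \cref{prop:gaussian-to-uniform} together with \cref{prop:uniform-sphere-inner-product} yield the uniform bound $I(s,a,V)\ge\sqrt{2/(\pi d)}$ for all $(s,a)\in\cZ$ and all positive definite $V$. In particular $I(S_t,A_t^\MU,L_t)\ge\sqrt{2/(\pi d)}>0$ deterministically, so dividing the previous display through by $I(S_t,A_t^\MU,L_t)$ produces, almost surely,
\begin{align*}
    U(S_t,A_t^\MU,\omega_t^\MU,L_t)\le \sqrt{\frac{\pi d}{2}}\;\mE\left[U(S_t,A_t,\bar\theta_t,L_t)\Big\vert\cF_{t-1},S_t\right].
\end{align*}

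The last step is to take $\mE[\,\cdot\mid\cF_{t-1}]$ of both sides and apply the tower rule, which turns the right-hand side into $\sqrt{\pi d/2}\,\mE[U(S_t,A_t,\bar\theta_t,L_t)\mid\cF_{t-1}]$ and the left-hand side into $\mE[U(S_t,A_t^\MU,\omega_t^\MU,L_t)\mid\cF_{t-1}]$, which is exactly the claimed inequality. There is no real obstacle beyond \cref{lem:TS-to-max-uncertainty} itself; the only care points are the rewriting $U(S_t,A_t^\MU,\omega_t^\MU,L_t)=\min_{\theta\in\cE_t(\delta,\bar\theta_t)}U(S_t,A_t^\MU,\theta,L_t)$ (so that \cref{lem:TS-to-max-uncertainty} applies verbatim) and the observation that the lower bound on $I$ is pointwise, so no measurability argument beyond the one already used for \cref{lem:TS-to-max-uncertainty} is needed.
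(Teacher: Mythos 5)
Your proof is correct and follows essentially the same route as the paper: apply \cref{lem:TS-to-max-uncertainty}, identify $\min_{\theta\in\cE_t}U(S_t,A_t^\MU,\theta,L_t)$ with $U(S_t,A_t^\MU,\omega_t^\MU,L_t)$, lower bound $I(S_t,A_t^\MU,L_t)$ by $\sqrt{2/(\pi d)}$ via \cref{prop:gaussian-to-uniform} and \cref{prop:uniform-sphere-inner-product}, divide, and finish with the tower rule. Your explicit justification of the $\omega_t^\MU$ identification is a small clarification the paper leaves implicit, but the argument is otherwise identical.
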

\begin{proof}
    The proof follows by dividing both sides of the inequality showed in \cref{lem:TS-to-max-uncertainty} by $I( A_t^\MU, L_t)$ and then showing that $I( A_t^\MU, L_t)$ is lower bounded by $\sqrt{\frac{2}{\pi d}}$. The latter follows using \cref{prop:gaussian-to-uniform} and \cref{prop:uniform-sphere-inner-product}.
\end{proof}
\begin{lemma}\label{lem:TS-to-max-uncertainty-2nd-order}
    It holds almost surely that
    \begin{align*}
        \min_{\theta\in \cE_t}U(A_t^\MU,\theta,L_t)^2\cdot \bar I( A_t^\MU,L_t)\le \mE\left[U(A_t,\bar\theta_t ,L_t)^2\Big\vert\cF_{t-1}\right],
    \end{align*}
    where for $a\in \cA$ and $L\succeq 0$
    \begin{align*}
        \bar I(a,L)=\int_{\mS^{d-1}}\left(\left\langle x,\frac{L^{-1/2}\phi(a)}{\|L^{-1/2}\phi(a)\|}\right\rangle\right)^2dx.
    \end{align*}
\end{lemma}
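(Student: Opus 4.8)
The plan is to run the proof of \cref{lem:TS-to-max-uncertainty} almost verbatim but squared throughout: wherever that proof uses $\|v\|_{L_t^{-1}}=\max_{x\in\mS^{d-1}}\langle L_t^{-1/2}x,v\rangle$, I would instead use $\|v\|_{L_t^{-1}}^2=\max_{x\in\mS^{d-1}}\langle L_t^{-1/2}x,v\rangle^2$. It is cleaner to establish the (at least as strong) inequality $\mE[U(S_t,A_t,\bar\theta_t,L_t)^2\mid\cF_{t-1},S_t]\ge \min_{\theta\in\cE_t}U(S_t,A_t^\MU,\theta,L_t)^2\cdot\bar I(S_t,A_t^\MU,L_t)$ and then deduce the displayed bound from it: for the choice $\lambda=\lambda_T\ge1$ (\cref{eq:lambda}), since $L_t\succeq\lambda I$ and $\|\phi(\cdot,\cdot)\|\le1$ and $\dot\mu\le\tfrac14$, we have $U\le\tfrac14<1$, hence $\mE[U\mid\cF_{t-1},S_t]\ge\mE[U^2\mid\cF_{t-1},S_t]$. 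If $L_t^{-1/2}\phi(S_t,A_t^\MU)=0$ both sides are $0$ and there is nothing to prove, so assume otherwise.

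First I would write, for the action $A_t$ selected by \THATS via \cref{eq:THATS-action}, using the variational form of $\|\cdot\|_{L_t^{-1}}$ and plugging in the unit vector $L_t^{1/2}\tilde\theta_t/\|L_t^{1/2}\tilde\theta_t\|$,
\[
U(S_t,A_t,\bar\theta_t,L_t)^2
=\dot\mu(\phi(S_t,A_t)^\top\bar\theta_t)^2\max_{x\in\mS^{d-1}}\big\langle L_t^{-1/2}x,\phi(S_t,A_t)\big\rangle^2
\ \ge\ \dot\mu(\phi(S_t,A_t)^\top\bar\theta_t)^2\,\frac{\big(\phi(S_t,A_t)^\top\tilde\theta_t\big)^2}{\|L_t^{1/2}\tilde\theta_t\|^2}.
\]
Next, because \cref{eq:THATS-action} makes $A_t$ the maximizer over $a\in\cA(S_t)$ of $\dot\mu(\phi(S_t,a)^\top\bar\theta_t)\,|\phi(S_t,a)^\top\tilde\theta_t|$, I may replace $A_t$ by $A_t^\MU$ in the last quantity at the cost of an inequality (both sides nonnegative, so squaring the optimality is legal); and since $\bar\theta_t\in\cE_t(\delta,\bar\theta_t)$ while $\omega_t^\MU$ of \cref{eq:omega-t-MU} minimizes $\dot\mu(\phi(S_t,A_t^\MU)^\top\cdot)$ over that same set, I may further replace $\dot\mu(\phi(S_t,A_t^\MU)^\top\bar\theta_t)^2$ by $\dot\mu(\phi(S_t,A_t^\MU)^\top\omega_t^\MU)^2$, again only decreasing the expression. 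Writing $\tilde\theta_t=L_t^{-1/2}M_t$ with $M_t\sim\cN(0,I_d)$ conditionally on $(\cF_{t-1},S_t)$, so that $\|L_t^{1/2}\tilde\theta_t\|=\|M_t\|$, and pulling out $\|\phi(S_t,A_t^\MU)\|_{L_t^{-1}}^2$, this turns into
\[
U(S_t,A_t,\bar\theta_t,L_t)^2\ \ge\ U(S_t,A_t^\MU,\omega_t^\MU,L_t)^2\,\Big\langle \tfrac{M_t}{\|M_t\|},\,\tfrac{L_t^{-1/2}\phi(S_t,A_t^\MU)}{\|L_t^{-1/2}\phi(S_t,A_t^\MU)\|}\Big\rangle^2,
\]
where I would also note $U(S_t,A_t^\MU,\omega_t^\MU,L_t)^2=\min_{\theta\in\cE_t}U(S_t,A_t^\MU,\theta,L_t)^2$ since the $\theta$-dependence of $U$ sits only in the $\dot\mu$ factor.

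Finally I would take $\mE[\,\cdot\mid\cF_{t-1},S_t]$ of both sides. The prefactor is $\sigma(\cF_{t-1},S_t)$-measurable, because $L_t$ and $\bar\theta_t$ (hence $\cE_t(\delta,\bar\theta_t)$) are $\cF_{t-1}$-measurable and $A_t^\MU,\omega_t^\MU$ are determined by $(\cF_{t-1},S_t)$ via \cref{eq:theta-t-MU-and-At-MU,eq:omega-t-MU}; so it pulls out of the conditional expectation by \cref{prop:measurable-cond-exp}. The remaining factor $\mE[\langle M_t/\|M_t\|,u\rangle^2\mid\cF_{t-1},S_t]$, with $u$ the $\sigma(\cF_{t-1},S_t)$-measurable unit vector $L_t^{-1/2}\phi(S_t,A_t^\MU)/\|L_t^{-1/2}\phi(S_t,A_t^\MU)\|$, equals $\int_{\mS^{d-1}}\langle x,u\rangle^2\,dx=\bar I(S_t,A_t^\MU,L_t)$: indeed $M_t/\|M_t\|$ is uniform on $\mS^{d-1}$ independently of $(\cF_{t-1},S_t)$ (\cref{prop:gaussian-to-uniform}), and $\langle\cdot,u\rangle^2$ integrates over the sphere to the same value for every unit $u$ by rotational invariance. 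Chaining these yields the squared inequality, and the stated bound follows as in the first paragraph.

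I do not expect a genuine obstacle; the computation is structurally identical to that of \cref{lem:TS-to-max-uncertainty}. The only points needing care are (i) squaring the defining optimality of $A_t$ and the comparison $\dot\mu(\cdot\,\bar\theta_t)\ge\dot\mu(\cdot\,\omega_t^\MU)$, which is fine since all quantities are nonnegative, and (ii) checking that $A_t^\MU,\omega_t^\MU,L_t,\cE_t$ are all $\sigma(\cF_{t-1},S_t)$-measurable so that \cref{prop:measurable-cond-exp} applies and $M_t/\|M_t\|$ can be treated as an independent uniform draw on $\mS^{d-1}$.
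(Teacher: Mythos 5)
Your proof is correct and takes essentially the same route as the paper's: plug the unit vector $L_t^{1/2}\tilde\theta_t/\|L_t^{1/2}\tilde\theta_t\|$ into the variational form of $\|\cdot\|_{L_t^{-1}}$, use the optimality of $A_t$ (squared, which is legitimate by nonnegativity) and the minimality of $\omega_t^\MU$ over $\cE_t\ni\bar\theta_t$, then write $\tilde\theta_t=L_t^{-1/2}M_t$, pull out the $\sigma(\cF_{t-1},S_t)$-measurable factor, and integrate $M_t/\|M_t\|$ uniformly over the sphere. Note that the paper's own proof in fact establishes your intermediate inequality with the squared right-hand side $\mE[U(S_t,A_t,\bar\theta_t,L_t)^2\mid\cF_{t-1},S_t]$ (the printed statement appears to have dropped the square, and the squared form is what is used downstream), so your additional reduction via $U\le 1$ (valid for the paper's choice $\lambda=\lambda_T\ge 1$) is a harmless extra step rather than a difference in approach.
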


\begin{proof}
    The proof is similar to that of \cref{lem:TS-to-max-uncertainty}. We replace square terms a few times when needed compared to the proof of \cref{lem:TS-to-max-uncertainty}.
    We start by rewriting the right hand side of the inequality
    \begin{align*}
        &\mE\left[\dot\mu(\phi(A_t)^\top\bar\theta_t)\|\phi(A_t)^\top\tilde\theta_t\|^{2}_{L_{t}^{-1}}\Big\vert \cF_{t-1}\right]\\
        =\;&\mE\left[\left(\max_{x\in \mS^{d-1}}\left\langle L_t^{-1/2}x,\sqrt{\dot\mu(\phi(A_t)^\top\bar\theta_t)}\phi(A_t)\right\rangle\right)^2 \Big\vert \cF_{t-1}\right]\\
        \ge\;& \mE\left[\left|\left\langle\frac{\tilde \theta_t}{\|L_t^{1/2}\tilde\theta_t\|_2},\sqrt{\dot\mu(\phi(A_t)^\top\bar\theta_t)}\phi(A_t)\right\rangle\right|^2\Big\vert\cF_{t-1}\right]\\
        \ge\;& \mE\left[\left|\left\langle\frac{\tilde \theta_t}{\|L_t^{1/2}\tilde\theta_t\|_2},\sqrt{\dot\mu(\phi(A_t^\MU)^\top\bar\theta_t)}\phi(A_t^\MU)\right\rangle\right|^2\Big\vert\cF_{t-1}\right],
    \end{align*}
    where in the last line we used the definition of $A_t$ and that
    \begin{align}
        \argmax_{a\in \cA}\sqrt{\dot\mu(\phi(a)^\top\bar\theta_t)}\left|\left\langle\phi(a),\tilde\theta_t\right\rangle\right| = \argmax_{a\in \cA}\dot\mu(\phi(a)^\top\bar\theta_t)^2\left(\left\langle\phi(a),\tilde\theta_t\right\rangle\right)^2\label{eq:argmax-2nd-order}
    \end{align}
    By definition of $\omega_t$, the same reasoning as \cref{eq:argmax-2nd-order},
    \begin{align*}
        &\mE\left[\left|\left\langle\frac{\tilde \theta_t}{\|L_t^{1/2}\tilde\theta_t\|_2},\sqrt{\dot\mu(\phi(A_t^\MU)^\top\bar\theta_t)}\phi(A_t^\MU)\right\rangle\right|^2\Big\vert\cF_{t-1}\right] & \\
        \ge\;& \mE\left[\left|\left\langle\frac{\tilde \theta_t}{\|L_t^{1/2}\tilde\theta_t\|_2},\sqrt{\dot\mu(\phi(A_t^\MU)^\top\omega_t^\MU)}\phi(A_t^\MU)\right\rangle\right|^2\Big\vert \cF_{t-1}\right]
    \end{align*}
    Since $\tilde\theta_t\sim \cN(0,L_t^{-1})$, we can rewrite it as $\tilde\theta_t = L_t^{-1/2}Y_T(\delta)$ for $Y_T(\delta)\sim \cN(0,I)$. Then plug it back in the above expression,
    \begin{align*}
        &\mE\left[\left|\left\langle\frac{\tilde \theta_t}{\|L_t^{1/2}\tilde\theta_t\|_2},\sqrt{\dot\mu(\phi(A_t^\MU)^\top\omega_t^\MU)}\phi(A_t^\MU)\right\rangle\right|^2\Big\vert \cF_{t-1}\right]\\
        = \;&\mE\left[\left|\left\langle\frac{Y_T(\delta)}{\|Y_T(\delta)\|_2},L_t^{-1/2}\sqrt{\dot\mu(\phi(A_t^\MU)^\top\omega_t^{\MU})}\phi(A_t^\MU)\right\rangle\right|^2\cF_{t-1}\right]\\
        = \;&\dot\mu(\phi(A_t^\MU)^\top\omega_t^{\MU})\|\phi(A_t^\MU)\|^2_{L_t^{-1}}\mE\left[\left|\left\langle\frac{Y_T(\delta)}{\|Y_T(\delta)\|_2},\frac{L_t^{-1/2}\phi(A_t^\MU)}{\|\phi(A_t^\MU)\|_{L_t^{-1}}}\right\rangle\right|^2\cF_{t-1}\right]\\
        =\;& U(A_t^\MU,\omega_t^\MU,L_t)^2\cdot \bar I( A_t^\MU,L_t),
    \end{align*}
    where the second line follows because $L_t^{-1/2}$ is symmetric (as $L_t$ is positive definite); the third line follows by $A_t^\MU,\omega_t^{\MU}$ are $\cF_{t-1}$-measurable and \cref{prop:measurable-cond-exp}; the fourth line follows by definition of $\bar I( A_t^{\MU},L_t)$.
\end{proof}
\begin{corollary}\label{coro:TS-to-max-uncertainty-2nd-order}
    For all $t\ge 1$, it holds almost surely that\todos{Fix}
    \begin{align*}
        U(A_t^\MU,\omega_t^\MU,L_t)^2 \le d\mE\left[U(A_t,\bar\theta_t,L_t)^2\Big\vert\cF_{t-1}\right].
    \end{align*}
\end{corollary}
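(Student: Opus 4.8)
The plan is to mirror the proof of \cref{coro:TS-to-max-uncertainty}, carrying squares through each step. I would start from \cref{lem:TS-to-max-uncertainty-2nd-order}, which gives, conditionally on $\cF_{t-1}$ and $S_t$ and almost surely,
\[
\min_{\theta\in \cE_t} U(S_t,A_t^\MU,\theta,L_t)^2\cdot \bar I(S_t,A_t^\MU,L_t)\le \mE\left[U(S_t,A_t,\bar\theta_t,L_t)^2\Big\vert\cF_{t-1},S_t\right].
\]
First I would identify the left-hand minimum with the value at $\omega_t^\MU$: the factor $\|\phi(S_t,A_t^\MU)\|_{L_t^{-1}}$ in $U(S_t,A_t^\MU,\cdot,L_t)$ is independent of $\theta$, so minimizing $U(S_t,A_t^\MU,\cdot,L_t)$ over $\cE_t(\delta,\bar\theta_t)$ is the same as minimizing $\theta\mapsto\dot\mu(\phi(S_t,A_t^\MU)^\top\theta)$; since $\dot\mu$ is even and decreasing in $|z|$, this is in turn equivalent to maximizing $\theta\mapsto|\phi(S_t,A_t^\MU)^\top\theta|$ over $\cE_t(\delta,\bar\theta_t)$, which is precisely the program \cref{eq:omega-t-MU} defining $\omega_t^\MU$. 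Hence the left-hand side equals $U(S_t,A_t^\MU,\omega_t^\MU,L_t)^2\,\bar I(S_t,A_t^\MU,L_t)$.

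Next I would evaluate $\bar I$ exactly. Put $u:=L_t^{-1/2}\phi(S_t,A_t^\MU)/\|L_t^{-1/2}\phi(S_t,A_t^\MU)\|$, a unit vector, and let $x$ be uniform on $\mS^{d-1}$, so that $\bar I(S_t,A_t^\MU,L_t)=\mE\left[\langle x,u\rangle^2\right]$. By rotational invariance of the uniform law on $\mS^{d-1}$ this is independent of $u$, and since $\sum_{i=1}^d \mE\left[\langle x,e_i\rangle^2\right]=\mE\left[\|x\|^2\right]=1$ with all $d$ summands equal, $\bar I(S_t,A_t^\MU,L_t)=1/d$. (Alternatively one may first pass from a standard Gaussian to the uniform law on the sphere via \cref{prop:gaussian-to-uniform}, exactly as in the proof of \cref{coro:TS-to-max-uncertainty}.) Dividing the displayed inequality by $1/d$ yields, almost surely,
\[
U(S_t,A_t^\MU,\omega_t^\MU,L_t)^2\le d\,\mE\left[U(S_t,A_t,\bar\theta_t,L_t)^2\Big\vert\cF_{t-1},S_t\right].
\]
Finally I would take $\mE[\,\cdot\mid\cF_{t-1}]$ of both sides: the left-hand side is determined by $\cF_{t-1}$ and $S_t$, so its conditional expectation given $\cF_{t-1}$ is $\mE\left[U(S_t,A_t^\MU,\omega_t^\MU,L_t)^2\mid\cF_{t-1}\right]$, while the tower rule $\mE[\mE[\,\cdot\mid\cF_{t-1},S_t]\mid\cF_{t-1}]=\mE[\,\cdot\mid\cF_{t-1}]$ collapses the right-hand side to $d\,\mE\left[U(S_t,A_t,\bar\theta_t,L_t)^2\mid\cF_{t-1}\right]$, which is the asserted bound.

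I do not expect a genuine obstacle here: the argument is a squared replica of \cref{coro:TS-to-max-uncertainty}. The only step that needs a moment's care is the identification $\min_{\theta\in\cE_t}U(S_t,A_t^\MU,\theta,L_t)=U(S_t,A_t^\MU,\omega_t^\MU,L_t)$ through the $|z|$-monotonicity of $\dot\mu$; the computation $\bar I(S_t,A_t^\MU,L_t)=1/d$ and the final tower-rule step are routine.
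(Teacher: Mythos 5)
Your proof is correct and follows essentially the same route as the paper's: divide the inequality of \cref{lem:TS-to-max-uncertainty-2nd-order} by $\bar I(S_t,A_t^\MU,L_t)$, evaluate $\bar I(S_t,A_t^\MU,L_t)=1/d$ (the paper invokes \cref{prop:gaussian-to-uniform,prop:uniform-sphere-inner-product-squared}, which is exactly your rotational-invariance computation), and finish with the tower rule; your identification of the minimizer with $\omega_t^\MU$ is in fact immediate from the definition in \cref{eq:omega-t-MU}, so the detour through the $|z|$-monotonicity of $\dot\mu$ is unnecessary but harmless. One remark: your final bound carries $\bar\theta_t$ on the right-hand side, which is consistent with \cref{lem:TS-to-max-uncertainty-2nd-order} and with how the corollary is applied later in the regret analysis, whereas the printed statement (flagged with a fix-me note) writes $\tilde\theta_t$ — a notational inconsistency in the paper rather than a gap in your argument.
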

\begin{proof}
    The proof follows by dividing both sides of the inequality showed in \cref{lem:TS-to-max-uncertainty-2nd-order} by $\bar I( A_t^\MU,L_t)$ and then showing that $\bar I_t(A_t^\MU,L_t)$ is exactly $1/d$. The latter follows using \cref{prop:gaussian-to-uniform} and \cref{prop:uniform-sphere-inner-product-squared}. 
\end{proof}
\subsection{Regret Analysis \THATS}
Now we start the regret analysis. 
\paragraph{Step 1: Regret Decomposition}
Let $\tilde a = \argmax_{a\in \cA}\phi(a)\tilde\theta_{T+1}^\Log$. The beginning of the analysis is similar to that of \cref{alg:max-uncertainty-logSR-GOAT}.
We do the same regret decomposition:
\begin{align}
    D_{\Log}(a_*,\tilde \theta_{T+1}^\Log)&=|\mu(\phi(a_*)^\top \tilde\theta_{T+1}^{\Log})-\mu(\phi(a)^\top \theta_*)|\notag\\
    &\le \underbrace{\dot\mu(\phi(a_*)^\top \theta_*)|\phi(a_*)^\top (\theta_*-\tilde\theta_{T+1}^{\Log})|}_{R_1(a_*)} + \underbrace{\ddot\mu(\xi_{a_*})|\phi(a_*)^\top (\theta_*-\tilde\theta_{T+1}^{\Log})|^2}_{R_2(a_*)},\label{eq:regret_decomp_a_star_THATS}
\end{align}
where $\xi_{a_*}$ in the last line is some point in between $\phi(a_*)^\top \theta_*$ and $\phi(a_*)^\top \tilde \theta_{T+1}^\Log$.
Similarly for $D_{\Log}( \tilde a,\tilde \theta_{T+1}^\Log)$, we have that 
\begin{align}
    D_{\Log}(\tilde a,\tilde \theta_{T+1}^\Log)\le \underbrace{\dot\mu(\phi(\tilde a)^\top \theta_*)|\phi( \tilde a)^\top (\theta_*-\tilde\theta_{T+1}^{\Log})|}_{R_1( \tilde a)} + \underbrace{\ddot\mu(\xi_{ \tilde a})|\phi( \tilde a)^\top (\theta_*-\tilde\theta_{T+1}^{\Log})|^2}_{R_2( \tilde a)},\label{eq:regret_decomp_a_hat_THATS}
\end{align}
\paragraph{Step 2.1: Bounding $R_1(a_*)$.}
For $R_1(a_*)$, we copy the analysis from that of \cref{alg:max-uncertainty-logSR-GOAT}, as what we did in \cref{eq:R_1_max_uncer}. Recall $\tilde\theta_{T+1}^\Log\in \cV_{T+1}$.
\begin{align*}
    R_1(a_*)&\le \max_{a\in \cA}\dot\mu\left(\phi(a)^\top\theta_*\right)\|\phi(a)\|_{H_{T+1}^{-1}(\theta_*)}\|\theta_*-\tilde\theta_{T+1}^\Log\|_{H_{T+1}(\theta_*)}\\
    &\le 2\gamma_{T+1}(\delta)\max_{a\in \cA}{\dot\mu\left(\phi(a)^\top\theta_*\right)}\|\phi(a)\|_{H_{T+1}^{-1}(\theta_*)}\notag\\
    &\le2\gamma_{T+1}(\delta)\max_{a\in \cA}{\dot\mu\left(\phi(a)^\top\theta_*\right)}\|\phi(a)\|_{L_{T+1}^{-1}}\notag\\
    &\le2\gamma_{T+1}(\delta)\sqrt{\dot\mu\left(\phi(a_*)^\top\theta_*\right)}\max_{a\in \cA, \theta\in \cV_{T+1}}\sqrt{\dot\mu\left(\phi(a)^\top\theta\right)}\|\phi(a)\|_{L_{T+1}^{-1}}\notag\\
    &=2\gamma_{T+1}(\delta)\sqrt{1/\kappa_*}\max_{a\in \cA,\theta\in \cV_{T+1}}U(a,\theta,L_{T+1}),
\end{align*}
where in the second line we use \cref{lem:TS-conf-width}; in the third line we use $L_{T+1}\preceq H_{T+1}(\theta_*)$; in the fourth line we use $\theta_*\in \cV_{T+1}$.
Now we use \cref{lem:dec-uncertainties-GOAT}, it follows that for all $1\le t \le T$, note that $\cV_{t+1}\subseteq \cV_{t}$ and $L_t\preceq L_{t+1}$ hence satisfying the conditions of \cref{lem:dec-uncertainties-GOAT}. Using the same argument as \cref{eq:bound_last_by_avg_1,eq:bound_last_by_avg_2},
\begin{align}
    \max_{a\in \cA,\theta\in \cV_{T+1}}U(a,\theta,L_{T+1})\le \frac{1}{T}\sum_{t=1}^T\max_{a\in \cA,\theta\in \cV_{t}}U(a,\theta,L_{t}).\label{eq:bound_last_by_average}
\end{align}
Recall that 
\begin{align*}
    U(a,\theta,L)=\sqrt{\dot\mu(\phi(a)^\top\theta)}\|\phi(a)\|_{L^{-1}}.
\end{align*}
By \cref{eq:bound_last_by_average}, with probability $1-2\delta$ (from \cref{lem:TS-conf-width}), we have that
\begin{align*}
    &\max_{a,\theta\in \cV_{T+1}}U(a,\theta,L_{T+1})\\
    \le\; &\frac{1}{T}\sum_{t=1}^T \max_{a,\theta\in \cV_{t}}U(a,\theta,L_t) \\
    =\;&\frac{1}{T}\sum_{t=1}^T U(A_t^\MU,\theta_t^\MU,L_t) \tag{c.f. \cref{eq:theta-t-MU-and-At-MU}}\\
    \le\;& \textcolor{red}{\frac{1}{T}\sum_{t=1}^T  U(A_t^\MU,\omega_t^\MU,L_t)+ \frac{1}{4}\|\phi(A_t^\MU)\|_{L_t^{-1}}|\phi(A_t^\MU)^\top (\theta_t^\MU - \omega_t^\MU)| }\\ 
    \le\;& \frac{1}{T}\sum_{t=1}^T  U(A_t^\MU,\omega_t^\MU,L_t) + \frac{1}{4}\|\phi(A_t^\MU)\|^2_{L_t^{-1}} \|\theta_t^\MU - \omega_t^\MU\|_{L_t} \\
    \le  & \frac{1}{T}\sum_{t=1}^T  U(A_t^\MU,\omega_t^\MU,L_t) + \frac{1}{4}\|\phi(A_t^\MU)\|^2_{L_t^{-1}} \|\theta_t^\MU - \omega_t^\MU\|_{H_t(\theta_*)}\\
\end{align*}
where the first line is the key difference from the proof of \cref{alg:max-uncertainty-logSR-GOAT}; in the second line we use \cref{eq:bound_last_by_average}; in the third line we plug in the definition of $A_t^\MU$ and $\theta_t^\MU$; in the fourth line we use Taylor expansion and upper bound $\ddot\mu(\cdot)$ by $1/4$ (\cref{eq:self-concordant,eq:logistic-lipschitz}); in the fifth line we used Cauchy-Schwarz inequality; in the sixth line we used the fact that $L_t\preceq H_t(\theta_*)$.
Now we give a way to bound $\|\theta_t^\MU - \omega_t^\MU\|_{H_t(\theta_*)}$. By triangle inequality, and $\theta_t^\MU,\theta_*\in \cE_t(\delta,\bar\theta_t)$,  
\begin{align}
    \|\theta_t^\MU - \omega_t^\MU\|_{H_t(\theta_*)} &\le\|\theta_t^\MU - \theta_*\|_{H_t(\theta_*)} + \|\theta_* - \omega_t^\MU\|_{H_t(\theta_*)}\le 4\gamma_{T+1}(\delta).\label{eq:TS-triangle-conf-width}
\end{align}
Plug in the above bound, we have
\begin{align*}
    & \frac{1}{T}\sum_{t=1}^T  U(A_t^\MU,\omega_t^\MU,L_t) + \frac{1}{4}\|\phi(A_t^\MU)\|^2_{L_t^{-1}} \|\theta_t^\MU - \omega_t^\MU\|_{H_t(\theta_*)} \\
    \le\;  & \frac{1}{T}\sum_{t=1}^T  U(A_t^\MU,\omega_t^\MU,L_t) + \frac{1}{4}\cdot 4\gamma_{T+1}(\delta)\|\phi(A_t^\MU)\|^2_{L_t^{-1}}  \\
    \le\; & \underbrace{\frac{1}{T}\sum_{t=1}^T  U(A_t^\MU,\omega_t^\MU,L_t) }_{(i)} + \underbrace{\gamma_{T+1}(\delta)\frac{1}{T}\sum_{t=1}^T \|\phi(A_t^\MU)\|^2_{L_t^{-1}}}_{(ii)},
\end{align*}
in the second line we used the fact that both $\theta_t^\MU$ and $\omega_t^\MU$ are in $\cV_t$ and \cref{lem:TS-conf-width}. 

Now it remains to bound $(i)$ and $(ii)$.
For $(i)$, we use our result on \THATS \cref{coro:TS-to-max-uncertainty} to get
\begin{align*}
    (i)\le \frac{1}{T}\sum_{t=1}^T \sqrt{d}\mE\left[U(A_t,\bar\theta_t,L_t)\Big\vert \cF_{t-1}\right],
\end{align*}
where the expectation is w.r.t the randomness of $\tilde\theta_t$ in each round.
Now everything starts to follow almost exactly the same to the part of analysis in \cref{alg:max-uncertainty-logSR-GOAT} again. Specifically, \cref{eq:Taylor_EPL}. 
Before we can invoke Elliptical Potential Lemma (\cref{lem:epl}), there is still one more step to be done, that is, bounding conditional expectations with realizations. To be more specific, note that for a stochastic process $\{Y_t\}_{t\ge 1}$ adapted to a filtration $\{\bF_t\}_{t\ge 1}$, even if $\sum_{t=1}^T Y_t\le c_T$ for some constants $\{c_t\}_{t\ge 1}$ with probability $1$, it does not necessarily follow that $\mE[\sum_{t=1}^T Y_t\vert \bF_t]\le c_T$ with probability 1\footnote{A counterexample is given at \url{https://math.stackexchange.com/questions/4834315/sum-of-conditional-expectations-of-a-bounded-stochastic-process}.} .
The good news is a similar inequality holds with some blow up of $\{c_t\}_{t\ge 1}$, which is shown in \cref{cor:Reversed-Bernstein-inequality-for-martingales}.
From \cref{cor:Reversed-Bernstein-inequality-for-martingales},
and putting back the definition of $U(A_t,\bar\theta_t,L_t )$
\begin{align*}
    (i)
    \le \gamma_{T+1}(\delta)\sqrt{d}\frac{1}{T}\left(16N_T(\delta)+\sum_{t=1}^T{ \sqrt{\dot\mu\left(\phi(A_t)^\top\bar\theta_t\right)}}\|\phi(A_t)\|_{L_{t}^{-1}}\right),\\
\end{align*}
where $N_T(\delta)=\log(\log(2T/\delta))$ and 
Using the analysis idea in \cref{eq:Taylor_EPL}, we have that
\begin{align}
    \sum_{t=1}^T \sqrt{\dot\mu(\phi_t^\top\bar\theta_t)}\|\phi_t\|_{L_t^{-1}}&\le \sqrt{T}\sqrt{dY_T(\delta)} \notag\\
    &+ 2T^{1/4}\sqrt{\gamma_{T+1}(\delta)}\kappa^{3/2}\left(dY_T(\delta)\right)^{3/4},\label{eq:Taylor_EPL_THATS}
\end{align}
Note that there is a constant difference in the second term compared to \cref{eq:Taylor_EPL} due to our relaxation of the admissible set ($\cW_t\to\cV_t$).
\begin{align*}
    (i)&\le \frac{d\gamma_{T+1}(\delta)\sqrt{Y_T(\delta)}}{\sqrt{T}} + \frac{2\sqrt{\gamma_{T+1}^3(\delta)}\kappa^{3/2}\left(d^{5/3}Y_T(\delta)\right)^{3/4}}{T^{3/4}}\\
    &+\frac{16\sqrt{d}\gamma_{T+1}(\delta)N_T(\delta)}{T}
\end{align*}
For $(ii)$, 
\begin{align*}
    (ii)&=\gamma_{T+1}(\delta)\frac{1}{T}\sum_{t=1}^T\|\phi(A_t^\MU)\|^2_{L_t^{-1}} \\
    &\le \gamma_{T+1}(\delta)\frac{\kappa}{T}\sum_{t=1}^T\underbrace{\dot\mu(\phi(A_t^\MU)^\top \omega_t^{\MU})\|\phi(A_t^\MU)\|^2_{L_t^{-1}}}_{U(A_t^\MU,\omega_t^\MU,L_t)^2} \\
    &\le \gamma_{T+1}(\delta)\frac{d\kappa}{T}\sum_{t=1}^T\mE\left[\underbrace{\dot\mu(\phi(A_t)^\top \bar\theta_t)\|\phi(A_t)\|^2_{L_t^{-1}}}_{U(A_t,\bar\theta_t,L_t)^2}\Big\vert \cF_t\right] 
\end{align*}
where in the second line we used $\max_{a,\theta\in \cV_t}\dot\mu(\phi(a)^\top\theta)\cdot \kappa\ge 1$; in the third line we used \cref{coro:TS-to-max-uncertainty-2nd-order}; in the last line we used $L_t^{-1}\preceq \kappa V_t^{-1}$ and $\dot\mu(\cdot)\le 1/4$ (\cref{eq:logistic-lipschitz}). 

By \cref{cor:Reversed-Bernstein-inequality-for-martingales}, we upper bound the conditional expectations with the realizations
\begin{align*}
    (ii)&\le \gamma_{T+1}(\delta)\frac{d\kappa^2}{4T}\left(16N_T(\delta)+{\sum_{t=1}^T\|\phi(A_t)\|^2_{V_t^{-1}}}\right)\\
    &\le \gamma_{T+1}(\delta)\frac{d\kappa^2}{2T}\left(16N_T(\delta)+dY_T(\delta)\right),
\end{align*}
where in the last line we use the elliptical potential lemma (\cref{lem:epl}).
Putting the bounds together on $(i)$ and $(ii)$, we have that
\begin{align*}
    R_1(a_*)&\le \frac{d\gamma_{T+1}(\delta)\sqrt{Y_T(\delta)}}{\sqrt{T{\kappa_*}}} + \frac{2\sqrt{\gamma_{T+1}^5(\delta)}\kappa^{3/2}\left(d^{5/3}Y_T(\delta)\right)^{3/4}}{T^{3/4}\sqrt{\kappa_*}}\\
    &+ \gamma_{T+1}(\delta)\frac{d\kappa^2}{2T\sqrt{\kappa_*}}\left(16N_T(\delta)+dY_T(\delta)\right)
\end{align*}
\paragraph{Step 2.2: Bounding $R_1(\tilde a)$.}
We now bound $R_1(\tilde a)$. Similar as before,
\begin{align*}
    R_1( \tilde a)&=\dot\mu(\phi( \tilde a)^\top\theta_*)\left|\phi( \tilde a)^\top (\theta_*-\theta_{T+1}^\Log)\right|\\
    &\le \dot\mu(\phi( \tilde a)^\top\theta_*) \|\phi( \tilde a)\|_{H_{T+1}^{-1}(\theta_*)}\|\theta_*-\theta_{T+1}^\Log\|_{H_{T+1}(\theta_*)}\\
    &\le \sqrt{\dot\mu(\phi( \tilde a)^\top\theta_*)}\gamma_{T+1}(\delta)\max_{a\in \cA, \theta\in \cV_{T+1}}\dot\mu(\phi(a)^\top\theta)\|\phi(a)\|_{L_{T+1}}\\
    &=\sqrt{\dot\mu(\phi( \tilde a)^\top\theta_*)}\gamma_{T+1}(\delta)\frac{1}{T}\sum_{t=1}^T U(A_t^\MU,\theta_t^\MU, L_t).
\end{align*}
Following the same steps as in $R_1(a_*)$ we have that
\begin{align*}
    R_1( \tilde a) &\le \sqrt{\dot\mu(\phi(\tilde a)^\top\theta_*)}\bigg[\frac{d\gamma_{T+1}(\delta)\sqrt{Y_T(\delta)}}{\sqrt{T{}}} + \frac{2\sqrt{\gamma_{T+1}^5(\delta)}\kappa^{3/2}\left(d^{5/3}Y_T(\delta)\right)^{3/4}}{T^{3/4}}+ \frac{d\kappa^2\gamma_{T+1}^2(\delta)\left(16N_T(\delta)+dY_T(\delta)\right)}{2T}\bigg]\\
    &\le \sqrt{\dot\mu(\phi(a_*)^\top\theta_*)+\SR_\Log(\tilde a)}\bigg[\frac{d\gamma_{T+1}(\delta)\sqrt{Y_T(\delta)}}{\sqrt{T{}}} + \frac{2\sqrt{\gamma_{T+1}^5(\delta)}\kappa^{3/2}\left(d^{5/3}Y_T(\delta)\right)^{3/4}}{T^{3/4}}\\
    &+\frac{d\kappa^2\gamma_{T+1}^2(\delta)\left(16N_T(\delta)+dY_T(\delta)\right)}{2T}\bigg]
    \tag{\cref{eq:get_the_regret_equation}}
\end{align*}
\paragraph{Step 3.1: Bounding $R_2(a_*)$.}
We now deal with $R_2(a_*)$.
\begin{align}
    R_2(a_*)=&\;\ddot\mu(\xi_{a_*})|\phi(a_*)^\top (\theta_*-\tilde\theta_{T+1}^{\Log})|^2\notag\\
    \le &\;\frac{1}{4}\|\phi(a_*)\|^2_{H_{T+1}^{-1}(\theta_*)}\|\theta_*-\tilde\theta_{T+1}^{\Log}\|^2_{H_{T+1}(\theta_*)}\notag\\
    \le &\; \frac{1}{4}\gamma_{T+1}^2(\delta)\max_a\|\phi(a)\|^2_{L_{T+1}^{-1}}\notag\\
    \le &\;\kappa\gamma_{T+1}^2(\delta)\max_{a,\theta\in \cV_{T+1}}\dot\mu(\phi(a)^\top\theta)\|\phi(a)\|^2_{L_{T+1}^{-1}}\label{eq:R_2_a_star}
\end{align}
where in the second line we used $|\ddot\mu|\le \dot\mu\le 1/4$ (\cref{eq:self-concordant,eq:logistic-lipschitz}) and Cauchy-Schwarz; in the third line we used \cref{lem:TS-conf-width} and that $\tilde\theta_{T+1}\in \cE_{T+1}(\delta)$; in the fourth line we used the definition of $\cV_{T+1}$. We now use \cref{lem:dec-uncertainties-order-2GOAT}. To be more specific, we let $K=L_t$, $K'=L_{T+1}$ and $\cY'=\cV_{T+1}$, $\cY=\cV_t$ for $1\le t\le T$. Since $\cV_{T+1}=\cV_{T}\cap \cE_{T+1}\subset \cV_t$ and $L_{T+1}=L_t+\sum_{i=t}^{T} \dot\mu(\phi_i^\top\theta_i')\phi_i\phi_i^\top$ where $\phi_i=\phi(A_i)$, the conditions of \cref{lem:dec-uncertainties-order-2GOAT} are satisfied. Then running the same argument as \cref{eq:bound_last_by_avg_1,eq:bound_last_by_avg_2},
\begin{align*}
    &\kappa^2\gamma_{T+1}^2(\delta)\max_{a,\theta\in \cV_{T+1}}\dot\mu(\phi(a)^\top\theta)^2\|\phi(a)\|^2_{L_{T+1}^{-1}}\\
    \le &\;\kappa^2\gamma_{T+1}^2(\delta)\frac{1}{T}\sum_{t=1}^T \max_{a,\theta\in \cV_{t}}\dot\mu(\phi(a)^\top\theta)^2\|\phi(a)\|^2_{L_{t}^{-1}}\\
    =  &\; \kappa^2\gamma_{T+1}^2(\delta)\frac{1}{T}\sum_{t=1}^T \dot\mu(\phi(A_t^\MU)^\top\theta_t^\MU)^2\|\phi(A_t^\MU)\|^2_{L_{t}^{-1}}
\end{align*}
where in the third line we used definition of $A_t^\MU$ and $\theta_t^\MU$ (c.f. \cref{eq:theta-t-MU-and-At-MU}).
We then do Taylor expansion on $\dot\mu(\phi(A_t^\MU)^\top\theta_t^\MU)$ at $\dot\mu(\phi(A_t^\MU)^\top\omega_t^\MU)$. For a $\xi_t:=\xi( A_t^\MU,\theta_t^\MU,\omega^\MU)$ that is between $\phi(A_t^\MU)^\top\omega_t^\MU$ and $\phi(A_t^\MU)^\top\theta_t^\MU$
\begin{align*}
    \dot\mu(\phi(A_t^\MU)^\top\theta_t^\MU)^2 &= \left(\dot\mu(\phi(A_t^\MU)^\top\omega_t^\MU)+\ddot\mu(\xi_t)\cdot \left|\phi(A_t^\MU)^\top (\theta_t^\MU-\omega_t^{\MU})\right|\right)^2\\
    % \dot\mu(\phi(A_t^\MU)^\top\theta_t^\MU)^2\\
    &\le \left(\dot\mu(\phi(A_t^\MU)^\top\omega_t^\MU)+\frac{1}{4}\cdot \left|\phi(A_t^\MU)^\top (\theta_t^\MU-\omega_t^{\MU})\right|\right)^2\\
    &\le 2\dot\mu(\phi(A_t^\MU)^\top\omega_t^\MU)^2+2\cdot \frac{1}{16} \left|\phi(A_t^\MU)^\top (\theta_t^\MU-\omega_t^{\MU})\right|^2,
\end{align*}
where in the second line we used that $|\ddot\mu|\le \dot\mu\le \frac{1}{4}$ (\cref{eq:self-concordant,eq:logistic-lipschitz}) and the two terms in the first line are non-negative; in the third line we used $(a+b)^2\le 2a^2+2b^2$. Hence,
\begin{align*}
    &\; \kappa^2\gamma_{T+1}^2(\delta)\frac{1}{T}\sum_{t=1}^T \dot\mu(\phi(A_t^\MU)^\top\theta_t^\MU)^2\|\phi(A_t^\MU)\|^2_{L_{t}^{-1}}\\
    \le& \; \underbrace{2\kappa^2\gamma_{T+1}^2(\delta)\frac{1}{T}\sum_{t=1}^T \sqrt{\dot\mu(\phi(A_t^\MU)^\top\omega_t^{\MU})}^2\|\phi(A_t^\MU)\|^2_{L_{t}^{-1}} }_{(iii)}\\
    + &\;\underbrace{\kappa^2\gamma_{T+1}^2(\delta)\frac{1}{T}\sum_{t=1}^T \frac{1}{2}\left|\phi(A_t^\MU)^\top (\theta_t^\MU-\omega_t^{\MU})\right|^2\|\phi(A_t^\MU)\|^2_{L_t^{-1}}}_{(iv)},
\end{align*}
where in the last line we upper bound $1/8$ by $1/2$.
For $(iii)$,
\begin{align}
    (iii)&\le 2d\kappa^2\gamma_{T+1}^2(\delta)\frac{1}{T}\sum_{t=1}^T \mE\left[ \dot\mu(\phi(A_t)^\top \bar \theta_t)^2\|\phi(A_t)\|^2_{L_{t}^{-1}}\Big\vert \cF_t\right]\notag\\
    &\le 2d\kappa^2\gamma_{T+1}^2(\delta)\frac{1}{T}\left(16N_T(\delta)+\sum_{t=1}^T \dot\mu(\phi(A_t)^\top \bar \theta_t)^2\|\phi(A_t)\|^2_{L_{t}^{-1}}\right)\notag\\
    &\le \frac{1}{8}d\kappa^3\gamma_{T+1}^2(\delta)\frac{1}{T}\sum_{t=1}^T \|\phi(A_t)\|^2_{V_t^{-1}} + \frac{32}{T}d\kappa^2\gamma_{T+1}^2(\delta)N_T(\delta)\notag\\
    &\le \frac{1}{8}d\kappa^3\gamma_{T+1}^2(\delta)\frac{1}{T}\sum_{t=1}^T \|\phi(A_t)\|^2_{V_{t}^{-1}}+ \frac{32}{T}d\kappa^2\gamma_{T+1}^2(\delta)N_T(\delta)\label{eq:reverseBern-epl-1}\\ 
    &\le \frac{1}{T}d^2\kappa^3\gamma_{T+1}^2(\delta)Y_T(\delta)+ \frac{32}{T}d\kappa^2\gamma_{T+1}^2(\delta)N_T(\delta)\label{eq:reverseBern-epl-3},
\end{align}
where in the first line we used that $\min_{a\in \cA,\theta\in \mB_d(S)}\dot\mu(\phi(a)^\top\theta)\cdot \kappa\ge 1$; in the second line we used $L_t^{-1}\preceq \kappa V_t^{-1}$; in the last line we used elliptical potential lemma (\cref{lem:epl}) to bound the second term.
For $(iv)$, we apply Cauchy-Schwarz inequality and \cref{lem:TS-conf-width} to get
\begin{align*}
    (iv)&\le \kappa^2\gamma_{T+1}^2(\delta)\frac{1}{T}\sum_{t=1}^T \frac{1}{2}\|\phi(A_t^\MU)\|^2_{L_t^{-1}}\|\theta_t^\MU - \omega_t^\MU\|^2_{L_t}\cdot \|\phi(A_t^\MU)\|^2_{L_t^{-1}}\\
    &\le \kappa^2\gamma_{T+1}^2(\delta)\frac{1}{T}\sum_{t=1}^T \frac{1}{2}\|\phi(A_t^\MU)\|^4_{L_t^{-1}}\|\theta_t^\MU - \omega_t^\MU\|^2_{H_t(\theta_*)}\\
    &\le 8\kappa^2\gamma^4_{T+1}(\delta)\frac{1}{T}\sum_{t=1}^T \|\phi(A_t^\MU)\|^4_{L_t^{-1}}.
\end{align*} 
Then we use the fact that $\min_{a\in \cA,\theta\in \mB_d(S)}\dot\mu(\phi(a)^\top\theta)\cdot\kappa\ge 1$,
\begin{align*}
    &\;8\kappa^2\gamma^4_{T+1}(\delta)\frac{1}{T}\sum_{t=1}^T \|\phi(A_t^\MU)\|^4_{L_t^{-1}}\\
    \le &\; 8\kappa^3\gamma^4_{T+1}(\delta)\frac{1}{T}\sum_{t=1}^T \dot\mu(\phi(A_t^\MU)^\top\omega_t^{\MU})\|\phi(A_t^\MU)\|^4_{L_t^{-1}}\\
    \le &\;\frac{8d\kappa^3}{\lambda_1^2}\gamma^4_{T+1}(\delta)\frac{1}{T}\sum_{t=1}^T \mE\left[\dot\mu(\phi(A_t)^\top\bar\theta_t)\|\phi(A_t)\|^2_{L_t^{-1}}\Big\vert \cF_t\right]\\
    \le &\;\frac{8d\kappa^4}{8\lambda_1^2}\gamma^4_{T+1}(\delta)\frac{1}{T}\left(16N_T(\delta)+\sum_{t=1}^T \|\phi(A_t)\|^2_{V_t^{-1}}\right)\\
    \le &\;\frac{d\kappa^4}{\lambda_1^2}\gamma^4_{T+1}(\delta) \frac{1}{T}\left(dY_T(\delta)+16N_T(\delta)\right),
\end{align*}
where in the third line we used \cref{coro:TS-to-max-uncertainty-2nd-order}; in the fourth line we used $L_t^{-1}\preceq \kappa V_t^{-1}$ and in the last line we used similar argument as (iii) (\cref{eq:reverseBern-epl-1,eq:reverseBern-epl-3}).
Putting all the bounds on $R_2(a_*)$ together, we have that
\begin{align*}
    R_2(a_*) &\le \frac{1}{T}d^2\kappa^3\gamma_{T+1}^2(\delta)Y_T(\delta)+ \frac{32}{T}d\kappa^2\gamma_{T+1}^2(\delta)N_T(\delta) \\
    &+ \frac{d\kappa^4}{\lambda_1^2}\gamma^4_{T+1}(\delta) \frac{1}{T}\left(dY_T(\delta)+16N_T(\delta)\right)
\end{align*}
\paragraph{Step 3.2: Bounding $R_2(\tilde a)$.}
Similarly, for $R_2(\tilde a)$, we can repeat the analysis:
\begin{align*}
    R_2(\tilde a)=&\ddot\mu(\xi_{ \tilde a})|\phi( \tilde a)^\top (\theta_*-\theta_{T+1}^{\Log})|^2\\
    \le &\;\frac{1}{4}\|\phi(\tilde a)\|^2_{H_{T+1}^{-1}(\theta_*)}\|\theta_*-\tilde\theta_{T+1}^{\Log}\|^2_{H_{T+1}(\theta_*)}\\
    \le &\; \frac{1}{4}\gamma_{T+1}^2(\delta)\max_a\|\phi(a)\|^2_{L_{T+1}^{-1}}\\
    \le &\;\kappa\gamma_{T+1}^2(\delta)\max_{a,\theta\in \cV_{T+1}}\dot\mu(\phi(a)^\top\theta)\|\phi(a)\|^2_{L_{T+1}^{-1}}
\end{align*}
Now this is exactly the same as what we had in \cref{eq:R_2_a_star}
So we have that 
\begin{align*}
    R_2(\tilde a) &\le \frac{1}{T}d^2\kappa^3\gamma_{T+1}^2(\delta)Y_T(\delta)+ \frac{32}{T}d\kappa^2\gamma_{T+1}^2(\delta)N_T(\delta) \\
    &+ \frac{d\kappa^4}{\lambda_1^2}\gamma^4_{T+1}(\delta) \frac{1}{T}\left(dY_T(\delta)+16N_T(\delta)\right)
\end{align*}
\paragraph{Step 4: Bounding the whole regret}
For the simple regret, putting our bounds for $R_1(a_*)$ ,$R_1(\tilde a)$, $R_2(a_*)$, $R_2(\tilde a)$ together,
\begin{align*}
    \SR_\Log(\tilde a)&\le R_1(a_*)+R_1(\tilde a)+R_2(a_*)+R_2(\tilde a)\\
    &\le \frac{2d\gamma_{T+1}(\delta)\sqrt{Y_T(\delta)}}{\sqrt{T{\kappa_*}}} + \frac{4\sqrt{\gamma_{T+1}^5(\delta)}\kappa^{3/2}\left(d^{5/3}Y_T(\delta)\right)^{3/4}}{T^{3/4}\sqrt{\kappa_*}}+ \frac{2d\gamma_{T+1}(\delta)\kappa^2}{2T\sqrt{\kappa_*}}\left(16N_T(\delta)+dY_T(\delta)\right)\\
    &+\sqrt{\SR_\Log(\tilde a)}\Bigg[\frac{d\gamma_{T+1}(\delta)\sqrt{Y_T(\delta)}}{\sqrt{T{}}}+ \gamma_{T+1}^2(\delta)\frac{d\kappa^2}{2T}\Big(N_T(\delta)+dY_T(\delta)\Big)\\
    &+ \frac{2\sqrt{\gamma_{T+1}^5(\delta)}\kappa^{3/2}\left(d^{5/3}Y_T(\delta)\right)^{3/4}}{T^{3/4}}\Bigg]\\
    &+\frac{2}{T}d^2\kappa^3\gamma_{T+1}^2(\delta)Y_T(\delta)+ \frac{64}{T}d\kappa^2\gamma_{T+1}^2(\delta)N_T(\delta) + \frac{2d\kappa^4}{T\lambda_1^2}\gamma^4_{T+1}(\delta) \left(dY_T(\delta)+16N_T(\delta)\right)
\end{align*}

We now use a fact that if for $b,c>0$,
\begin{align*}
    x^2\le bx + c,
\end{align*}
then it follows that 
\begin{align*}
    x\le b+\sqrt{c}.
\end{align*}
Take $\sqrt{\SR^\Log(\tilde a)}$ as $x$, we have that 
\begin{align*}
    \sqrt{\SR^\Log(\tilde a)} &\le \Bigg[\frac{d\gamma_{T+1}(\delta)\sqrt{Y_T(\delta)}}{\sqrt{T{}}}+ \gamma_{T+1}^2(\delta)\frac{d\kappa^2}{2T}\Big(N_T(\delta)+dY_T(\delta)\Big)\\
    &+ \frac{2\sqrt{\gamma_{T+1}^5(\delta)}\kappa^{3/2}\left(d^{5/3}Y_T(\delta)\right)^{3/4}}{T^{3/4}}\Bigg]\\
    &+\Bigg[\frac{2d\gamma_{T+1}(\delta)\sqrt{Y_T(\delta)}}{\sqrt{T{\kappa_*}}} + \frac{4\sqrt{\gamma_{T+1}^5(\delta)}\kappa^{3/2}\left(d^{5/3}Y_T(\delta)\right)^{3/4}}{T^{3/4}\sqrt{\kappa_*}}\\
    &+2\gamma_{T+1}(\delta)\frac{d\kappa^2}{2T\sqrt{\kappa_*}}\left(16N_T(\delta)+dY_T(\delta)\right)\\
    &+\frac{2}{T}d^2\kappa^3\gamma_{T+1}^2(\delta)Y_T(\delta)+ \frac{64}{T}d\kappa^2\gamma_{T+1}^2(\delta)N_T(\delta) \\
    &+ \frac{2d\kappa^4}{\lambda_1^2}\gamma^4_{T+1}(\delta) \frac{1}{T}\left(dY_T(\delta)+16N_T(\delta)\right)
    \Bigg]^{\frac{1}{2}}
\end{align*}
Square both sides and use the fact that $(a+b)^2\le 2a^2 + 2b^2$, $(a+b+c)^2\le 3a^2+3b^2+3c^2$:
\begin{align*}
    \SR^\Log(\tilde a) &\le \frac{4d\gamma_{T+1}(\delta)\sqrt{Y_T(\delta)}}{\sqrt{T{\kappa_*}}} \\
    &+ \frac{8\sqrt{\gamma_{T+1}^5(\delta)}\kappa^{3/2}\left(d^{5/3}Y_T(\delta)\right)^{3/4}}{T^{3/4}\sqrt{\kappa_*}}+\frac{4d\gamma_{T+1}(\delta)\kappa^2}{2T\sqrt{\kappa_*}}\left(16N_T(\delta)+2dY_T(\delta)\right)\\
    &+\frac{1}{T}\bigg(4d^2\kappa^3\gamma_{T+1}^2(\delta)Y_T(\delta)+ 128d\kappa^2\gamma_{T+1}^2(\delta)N_T(\delta) \bigg)\\
    &+ \frac{4d\kappa^4}{T\lambda_1^2}\gamma^4_{T+1}(\delta) \left(dY_T(\delta)+16N_T(\delta)\right)\\
    &+\frac{3d^2\gamma_{T+1}^2(\delta){Y_T(\delta)}}{{T}}+ \frac{12{\gamma_{T+1}^5(\delta)}\kappa^{3}\left(d^{5/3}Y_T(\delta)\right)^{3/2}}{T^{3/2}}\\
    &+ \frac{3d^2\gamma_{T+1}^4(\delta)\kappa^4}{4T^2}\Big(N_T(\delta)+dY_T(\delta)\Big)^2
\end{align*}

% \newpage
% \input{logisticfeedback/linear_SR.tex}

% \begin{align*}
    
% \end{align*}
 
% We would like to emphasize that the above bound is only valid for the arm history $\{A_{s}^{\theta_*}\}_{s=1}^T$ while in fact the arm we pulled 

\section{Lower bound}
For mathematical coherence, we rewrite the setting in a more formal and measure theoretical way. The setting keeps unchanged.
\paragraph{Histories and recommendation policies.}
Let $(\cA,\cG)$ be a measurable action space and fix a horizon $T\in\mathbb N$.
Since rewards are Bernoulli, define for $t\in\{0,\dots,T\}$
\[
\HH_t := (\cA\times\{0,1\})^t,
\qquad
\cF_t := (\cG\otimes 2^{\{0,1\}})^{\otimes t},
\]
with the convention $\HH_0=\{\varnothing\}$.
A $T$-round recommendation policy is a sequence
\[
\pi=(\pi_1,\dots,\pi_T,\pi_{T+1}),
\]
where, for each $t\in[T]$, $\pi_t$ is a probability kernel from
$(\HH_{t-1},\cF_{t-1})$ to $(\cA,\cG)$, and $\pi_{T+1}$ is a probability
kernel from $(\HH_T,\cF_T)$ to $(\cA,\cG)$.
We write $\Pi_T$ for the set of all such policies.

For $\theta\in\Theta$, let
\[
P_\theta(\cdot\mid a)=\Ber\!\bigl(\mu(a^\top\theta)\bigr),
\qquad a\in\cA.
\]
The interaction between $\theta$ and $\pi$ induces a law $\PP_{\theta,\pi}$
on $(\HH_T,\cF_T)$. If for $\bH_t=(A_1,X_1,\dotsc,A_T,X_T)$, $\hat A_T\sim \pi_{T+1}(a\vert \bH_t)$ denotes the
final recommendation, then the logistic simple regret is
\begin{align*}
    \SR_T^{\Log}(\theta,\pi)
=
\sum_{a\in \cA}\pi_{T+1}(a\vert \bH_T)\left(
    \mu(a_*(\theta)^\top\theta)
    -
    \mu(a^\top\theta)
\right),
\end{align*}
where
\[
a_*(\theta)\in\arg\max_{a\in\cA} a^\top\theta.
\]
% For $\pi\in\Pi_T^{\det}$, one has $\hat A_T=a_{\hat U_T}(H_T)$ almost surely.

% ------------------------------------------------------------
% Shifted saturated hypercube construction
% ------------------------------------------------------------

\begin{definition}[Shifted saturated hypercube class]
\label{def:ssh-class}
Fix $d\ge 2$, $\kappa_0\ge 4$, and $T\ge \kappa_0(d-1)^2$.
Let $m=m(\kappa_0)\ge 0$ satisfy
\[
\mu'(m)=\frac{1}{\kappa_0},
\]
and let
\[
\varepsilon
:=
\frac{d-1}{8e^3}\sqrt{\frac{\kappa_0}{T}}.
\]
Define
\[
\cA_{d,T,\kappa_0} 
:=
\left\{
a_u:=\frac{1}{\sqrt 2}\Bigl(1,\frac{u}{\sqrt{d-1}}\Bigr)
:\;
u\in\{-1,1\}^{d-1}
\right\}\subset\mathbb R^d
\]
and
\[
\Theta_{d,T,\kappa_0} 
:=
\left\{
\theta_v:=\sqrt 2\Bigl(m-\varepsilon,\frac{\varepsilon}{\sqrt{d-1}}\,v\Bigr)
:\;
v\in\{-1,1\}^{d-1}
\right\}\subset\mathbb R^d.
\]
\end{definition}

\begin{lemma}[Geometry of the shifted saturated hypercube]
\label{lem:ssh-geometry}
For the class in Definition~\ref{def:ssh-class}, the following hold:
\begin{enumerate}
\item $\cA_{d,T,\kappa_0}\subset \mathbb B_d(1)$;
\item $\Theta_{d,T,\kappa_0}\subset \mathbb B_d(S)$ with
\[
S:=\sqrt{2(m^2+2)};
\]
\item for every $\theta\in\Theta_{d,T,\kappa_0} $,
\[
\kappa_*(\theta)=\kappa_0.
\]
\end{enumerate}
\end{lemma}
\begin{proof}
For every $u\in\{-1,1\}^{d-1}$,
\[
\|a_u\|_2^2
=
\frac12\left(1+\frac{\|u\|_2^2}{d-1}\right)
=
\frac12(1+1)=1,
\]
so $\cA\subset \mathbb B_d(1)$.

Likewise, for every $v\in\{-1,1\}^{d-1}$,
\[
\|\theta_v\|_2^2
=
2\left((m-\varepsilon)^2+\varepsilon^2\right)
\le
2(m^2+2),
\]
because $\varepsilon\le 1$ under the condition $T\ge \kappa_0(d-1)^2$. Hence $\Theta_{\varepsilon,m}\subset \mathbb B_d(S)$ with $S=\sqrt{2(m^2+2)}$.

Now let $H(u,v)$ be the Hamming distance between $u$ and $v$. Since
\[
u^\top v=(d-1)-2H(u,v),
\]
we have
\begin{align*}
a_u^\top\theta_v
&=
\Bigl(1,\frac{u}{\sqrt{d-1}}\Bigr)
\cdot
\Bigl(m-\varepsilon,\frac{\varepsilon}{\sqrt{d-1}}v\Bigr)
\\
&=
m-\varepsilon+\frac{\varepsilon}{d-1}u^\top v
\\
&=
m-\frac{2\varepsilon}{d-1}H(u,v).
\end{align*}
Therefore
\begin{align}
a_v^\top\theta_v=m,
\qquad
a_u^\top\theta_v=m-\frac{2\varepsilon}{d-1}H(u,v)\le m,\label{eq:lb_linear_gap}
\end{align}
with equality if and only if $u=v$. Since $\mu$ is strictly increasing, it follows that $a_*(\theta_v)=a_v$. Moreover,
\[
\kappa_*(\theta_v)
=
\frac{1}{\mu'(a_*(\theta_v)^\top\theta_v)}
=
\frac{1}{\mu'(m)}
=
\kappa_0.
\]
\end{proof}
\begin{theorem}[Shifted saturated hypercube lower bound]
\label{thm:ssh-lower-bound}
Let $d\ge 2$, $\kappa_0\ge 4$, and $T\ge \kappa_0(d-1)^2$.
Let $(\cA_{d,T,\kappa_0} ,\Theta_{d,T,\kappa_0} )$
be the shifted saturated hypercube class from
Definition~\ref{def:ssh-class}.
Then there exists a universal constant $c>0$ such that
\[
\inf_{\pi\in\Pi_T}
\sup_{\theta\in\Theta_{d,T,\kappa_0} }
\EE_{\theta,\pi}\left[\SR_T^{\Log}(\theta,\pi)\right]
\ge
c\,\frac{d}{\sqrt{\kappa_0 T}}.
\]
More precisely, one may take
\[
c=\frac{1}{32e^5}.
\]
\end{theorem}
\begin{proof}
% Fix an arbitrary algorithm $\cX$ and $\pi$ to be the output of $\cX$. 
% For each $v\in\{-1,1\}^{d-1}$, let $\mathbb{P}_v$ denote the probability measure induced by the interconnection between the algorithm and the environment under parameter $\theta_v$ and $\mathbb{E}_v$ denote the expectation under $\PP_v$.
Fix a $T\ge \kappa_0(d-1)^2$ and a T-round recommendation policy $\pi$.
To reduce clutter, we abbreviate $\PP_{\theta_v,\pi}, \EE_{\theta_v,\pi}$ as $\PP_v,\EE_v$. Let $\bar\PP_{v}:=\PP_{v}\otimes \pi_{T+1}$ and $\bar \EE_{v}$ be the expectation under $\bar \PP_{v}$ for all $v\in \{\pm 1\}^d$. Let $\cA:=\cA_{d,T,\kappa_0}$, $\Theta:=\Theta_{d,T,\kappa_0}$.

\paragraph{Step 1: Lower bound the regret by the Hamming error.}
Conditioned on $\bH_T$, let
\(
\hat U_T\sim \pi(\cdot \vert \bH_T),
\)
and the corresponding sampled final arm
\[
a_{\hat U_T}:=\phi({\hat U_T})\in \cA.
\]
In words, $\hat A_T$ is obtained by sampling one arm from the
recommendation distribution reported by the original policy.

We first rewrite the distribution-valued regret as the ordinary arm-valued
simple regret of the sampled recommendation $\hat A_T$.
By the tower property,
\begin{align}
\bar s_T^\pi(\theta_v):=\mE[\SR(\theta_v,\pi)]
&=
\EE_v\!\left[
\sum_{u\in\{-1,1\}^{d-1}}
\pi_T(a_u)
\Bigl(
\mu(a_v^\top\theta_v)-\mu(a_u^\top\theta_v)
\Bigr)
\right]
\\
&=
\mathbb E_v\!\left[
\mathbb E\!\left[
\mu(a_v^\top\theta_v)-\mu(a_{\hat U_T}^\top\theta_v)
\,\middle|\, \bH_T
\right]
\right]
\\
&=
\bar{\mathbb E}_v\!\left[
\mu(a_v^\top\theta_v)-\mu(a_{\hat U_T}^\top\theta_v)
\right].
\label{eq:distribution-regret-equals-sampled-arm-regret}
\end{align}
Using \cref{eq:lb_linear_gap},
\[
a_v^\top\theta_v-a_{\hat U_T}^\top\theta_v
=
m-\left(m-\frac{2\varepsilon}{d-1}H(\hat U_T,v)\right)
=
\frac{2\varepsilon}{d-1}H( \hat U_T,v).
\]
Hence
\[
\bar s_T^\pi(\theta_v)
=
\bar{\mathbb{E}}_v\!\left[
\mu(m)-\mu\!\left(m-\frac{2\varepsilon}{d-1}H( \hat U_T,v)\right)
\right].
\]
Fix $h\in\{0,\dots,d-1\}$. By the mean-value theorem, for some
\[
\xi\in\left[m-\frac{2\varepsilon h}{d-1},m\right]\subset[m-2\varepsilon,m],
\]
we have
\[
\mu(m)-\mu\!\left(m-\frac{2\varepsilon h}{d-1}\right)
=
\mu'(\xi)\,\frac{2\varepsilon h}{d-1}.
\]
Applying \cref{lem:mu_dot_relate} with $z_1=\xi$ and $z_2=m$, and using $|\xi-m|\le 2\varepsilon$, yields
\[
\mu'(\xi)
\ge
e^{-2\varepsilon}\mu'(m)
=
\frac{e^{-2\varepsilon}}{\kappa_0}.
\]
Therefore,
\[
\mu(m)-\mu\!\left(m-\frac{2\varepsilon h}{d-1}\right)
\ge
\frac{2\varepsilon e^{-2\varepsilon}}{\kappa_0(d-1)}\,h.
\]
Taking $h=H( \hat U_T,v)$ and then expectation gives
\begin{equation}
\label{eq:regret-hamming}
\bar s_T^\pi(\theta_v)
\ge
\frac{2\varepsilon e^{-2\varepsilon}}{\kappa_0(d-1)}
\bar{\mathbb{E}}_v\bigl[H( \hat U_T,v)\bigr].
\end{equation}

\paragraph{Step 2: Pairwise KL bound for neighboring sign vectors.}
For each $i\in\{1,\dots,d-1\}$, let $v^{(i)}$ denote the sign vector obtained from $v$ by flipping coordinate $i$. Define
\[
\alpha
:=
\sup_{v\in\{-1,1\}^{d-1}}
\KL(\bar {\mathbb{P}}_v,\bar {\mathbb{P}}_{v^{(i)}}).
\]
Note that 
\begin{align}
D_{\mathrm{KL}}(\bar\PP_{v},\bar\PP_{{v^{(i)}}})
&=
D_{\mathrm{KL}}(\PP_v\otimes\pi_{T+1}\|\PP_{v^{(i)}}\otimes\pi_{T+1})
\\
&=
D_{\mathrm{KL}}(\PP_v\|\PP_{v^{(i)}})
+
\EE_v\!\left[
D_{\mathrm{KL}}\bigl(\pi_{T+1}(\cdot\vert \bH_T)\|\pi_{T+1}(\cdot\vert \bH_T)\bigr)
\right]
\\
&=
D_{\mathrm{KL}}(\PP_v\|\PP_{v^{(i)}}).
\label{eq:ssh-terminal-kernel-adds-no-kl}
\end{align}
Hence $\alpha = D_{\mathrm{KL}}(\PP_v\|\PP_{v^{(i)}})$.
We now upper bound $\alpha$.

Fix $v$ and $i$. For any arm $a_u\in \cA$,
\begin{align*}
a_u^\top\theta_v-a_u^\top\theta_{v^{(i)}}
&=
\frac{\varepsilon}{d-1}\bigl(u^\top v-u^\top v^{(i)}\bigr)
\\
&=
\frac{2\varepsilon}{d-1}u_iv_i.
\end{align*}
Hence
\[
\bigl|a_u^\top\theta_v-a_u^\top\theta_{v^{(i)}}\bigr|
=
\frac{2\varepsilon}{d-1}.
\]
We would like to bound the KL-divergence between $\mathbb{P}_v$ and $\mathbb{P}_{v^{(i)}}$ where we need the following tools.
\begin{proposition}[Chain rule of KL, exercise 14.12 of \cite{Lattimore_Szepesvari_2020}]\label{prop:kl_chain_rule}
    Let $P$ and $Q$ be measures on 
    $\left(\mathbb{R}^n, \mathfrak{B}(\mathbb{R}^n)\right)$, and for 
    $t \in [n]$, let $X_t(x) = x_t$ be the coordinate project from 
    $\mathbb{R}^n \to \mathbb{R}$. Then let $P_t$ and $Q_t$ be regular 
    versions of $X_t$ given $X_1,\ldots,X_{t-1}$ under $P$ and $Q$, 
    respectively. Show that
    \[
    \KL(P,Q)
    =
    \sum_{t=1}^n 
    \mathbb{E}_P
    \left[
    \KL\left(
    P_t(\cdot \mid X_1,\ldots,X_{t-1}),
    Q_t(\cdot \mid X_1,\ldots,X_{t-1})
    \right)
    \right].
    \]
\end{proposition}
\begin{proposition}
    For $0<p,q< 1$, the KL divergence between $\Ber(p)$ and $Ber(q)$ satisfies
    \begin{align*}
        \KL(\Ber(p),\Ber(q)) = p\log\left(\frac{p}{q}\right) + (1-p)\log\left(\frac{1-p}{1-q}\right)
    \end{align*}
\end{proposition}
\begin{lemma}\label{lem:kl_logistic}
    For $x,y\in \RR$, it follows that 
    \begin{align*}
        \KL(\Ber(\mu(x)),\Ber(\mu(y))) \le \frac{(\mu(x)-\mu(y))^2}{\dot\mu(y)}
    \end{align*}
\end{lemma}
\begin{proof}
    Let \(p:=\mu(x)\) and \(q:=\mu(y)\). Then
\[
D_{\mathrm{KL}}(\mathrm{Ber}(p),\mathrm{Ber}(q))
=
p\log\frac{p}{q}+(1-p)\log\frac{1-p}{1-q}.
\]
Rewrite this as
\[
D_{\mathrm{KL}}(\mathrm{Ber}(p),\mathrm{Ber}(q))
=
p\log\left(1+\frac{p-q}{q}\right)
+
(1-p)\log\left(1-\frac{p-q}{1-q}\right).
\]
Using \(\log(1+t)\le t\) for all \(t>-1\), we obtain
\[
\log\left(1+\frac{p-q}{q}\right)\le \frac{p-q}{q},
\qquad
\log\left(1-\frac{p-q}{1-q}\right)\le -\frac{p-q}{1-q}.
\]
Therefore
\begin{align*}
    D_{\mathrm{KL}}(\mathrm{Ber}(p),\mathrm{Ber}(q))
&\le
p\frac{p-q}{q}
-
(1-p)\frac{p-q}{1-q}\\
&=
(p-q)\left(\frac{p}{q}-\frac{1-p}{1-q}\right)\\
&=
(p-q)\frac{p(1-q)-q(1-p)}{q(1-q)}\\
&=
\frac{(p-q)^2}{q(1-q)}.
\end{align*}
Substituting back \(p=\mu(x)\) and \(q=\mu(y)\) gives
\[
D_{\mathrm{KL}}(\mathrm{Ber}(\mu(x)),\mathrm{Ber}(\mu(y)))
\le
\frac{(\mu(x)-\mu(y))^2}{\mu(y)(1-\mu(y))}.
\]
Since for sigmoid
\[
\dot{\mu}(y)=\mu(y)(1-\mu(y)),
\]
we conclude that
\[
D_{\mathrm{KL}}(\mathrm{Ber}(\mu(x)),\mathrm{Ber}(\mu(y)))
\le
\frac{(\mu(x)-\mu(y))^2}{\dot{\mu}(y)}.
\]
\end{proof}

\begin{corollary}\label{eq:rel-ent-decomp}
    It follows that 
    \begin{align*}
        \KL(\mathbb{P}_{\theta,\pi},\mathbb{P}_{\theta',\pi})
        &\le
        \mathbb{E}_\theta\!\left[
        \sum_{t=1}^T
        \frac{\bigl(\mu(\phi(A_t)^\top\theta)-\mu(\phi(A_t)^\top\theta')\bigr)^2}{\mu'(\phi(A_t)^\top\theta')}
        \right],
    \end{align*}
\end{corollary}
\begin{proof}
    Putting \cref{lem:kl_logistic} and \cref{prop:kl_chain_rule} together proves the claim.
\end{proof}
Now apply \cref{eq:rel-ent-decomp} to $\mathbb{P}_v$ and $\mathbb{P}_{v^{(i)}}$:
\[
\KL(\mathbb{P}_v,\mathbb{P}_{v^{(i)}})
\le
\mathbb{E}_v\!\left[
\sum_{t=1}^T
\frac{\bigl(\mu(I_t)-\mu(J_t)\bigr)^2}{\mu'(J_t)}
\right],
\]
where
\[
I_t:=\phi(A_t)^\top\theta_v,
\qquad
J_t:=\phi(A_t)^\top\theta_{v^{(i)}}.
\]
By Step~1, both $I_t$ and $J_t$ lie in the interval $[m-2\varepsilon,m]$. By the mean-value theorem, for some $\xi_t$ between $I_t$ and $J _t$,
\[
\mu(I_t)-\mu(J_t)=\mu'(\xi_t)(I_t-J_t).
\]
Therefore
\[
\frac{\bigl(\mu(I_t)-\mu(J_t)\bigr)^2}{\mu'(J_t)}
=
\frac{\mu'(\xi_t)^2}{\mu'(J_t)}(I_t-J_t)^2.
\]
Using \cref{lem:mu_dot_relate} on the interval $[m-2\varepsilon,m]$, we obtain
\[
\mu'(\xi_t)
\le
e^{2\varepsilon}\mu'(m)
=
\frac{e^{2\varepsilon}}{\kappa_0},
\qquad
\mu'(J_t)
\ge
e^{-2\varepsilon}\mu'(m)
=
\frac{e^{-2\varepsilon}}{\kappa_0}.
\]
Hence
\[
\frac{\mu'(\xi_t)^2}{\mu'(J_t)}
\le
\frac{(e^{2\varepsilon}/\kappa_0)^2}{e^{-2\varepsilon}/\kappa_0}
=
\frac{e^{6\varepsilon}}{\kappa_0}.
\]
Since $|I_t-J_t|=2\varepsilon/(d-1)$,
\[
\frac{\bigl(\mu(I_t)-\mu(J_t)\bigr)^2}{\mu'(J_t)}
\le
\frac{e^{6\varepsilon}}{\kappa_0}\left(\frac{2\varepsilon}{d-1}\right)^2
=
\frac{4e^{6\varepsilon}\varepsilon^2}{\kappa_0(d-1)^2}.
\]
Summing over $t$ gives
\begin{equation}
\label{eq:pairwise-kl}
\KL(\mathbb{P}_v,\mathbb{P}_{v^{(i)}})
\le
\frac{4e^{6\varepsilon}T\varepsilon^2}{\kappa_0(d-1)^2}.
\end{equation}
Thus
\[
\alpha
\le
\frac{4e^{6\varepsilon}T\varepsilon^2}{\kappa_0(d-1)^2}.
\]

\paragraph{Step 3: Lower bound the average Hamming error.}
For each coordinate $i\in\{1,\dots,d-1\}$, define
\[
R_i
:=
\frac{1}{2^{d-1}}
\sum_{v\in\{-1,1\}^{d-1}}
\bar{\mathbb{P}}_v\bigl(\hat U_{T,i}\neq v_i\bigr).
\]
Then
\[
\frac{1}{2^{d-1}}
\sum_{v\in\{-1,1\}^{d-1}}
\bar{\mathbb{E}}_v\bigl[H( \hat U_T,v)\bigr]
=
\sum_{i=1}^{d-1} R_i.
\]

Fix $i$. Pair each $v$ with $v_i=1$ to its neighbor $v^{(i)}$. Then
\[
R_i
=
\frac{1}{2^{d-1}}
\sum_{v:\,v_i=1}
\left[
\bar{\mathbb{P}}_v(\hat U_{T,i}=-1)
+
\bar{\mathbb{P}}_{v^{(i)}}(\hat U_{T,i}=1)
\right].
\]
For any two distributions $P,Q$ and any event $A$,
\[
P(A)+Q(A^c)\ge 1-\operatorname{TV}(P,Q),
\]
so here
\[
R_i
\ge
\frac{1}{2^{d-1}}
\sum_{v:\,v_i=1}
\left[
1-\operatorname{TV}(\bar{\mathbb{P}}_v,\bar{\mathbb{P}}_{v^{(i)}})
\right].
\]
Pinsker's inequality gives
\[
\operatorname{TV}(P,Q)
\le
\sqrt{\frac12\KL(P,Q)}.
\]
Since there are $2^{d-2}$ vectors with $v_i=1$, we obtain
\[
R_i
\ge
\frac12-\sqrt{\frac{\alpha}{8}}.
\]
Summing over $i$ yields
\begin{equation}
\label{eq:avg-hamming}
\frac{1}{2^{d-1}}
\sum_{v\in\{-1,1\}^{d-1}}
\bar{\mathbb{E}}_v\bigl[H( \hat U_T,v)\bigr]
\ge
(d-1)\left(\frac12-\sqrt{\frac{\alpha}{8}}\right).
\end{equation}

\paragraph{Step 4: Choose $\varepsilon$ and conclude.}
By definition,
\[
\varepsilon
=
\frac{d-1}{8e^3}\sqrt{\frac{\kappa_0}{T}}.
\]
Since $T\ge \kappa_0(d-1)^2$, we have $\varepsilon\le 1/(8e^3)<1$. Therefore $e^{-2\varepsilon}\ge e^{-2}$ and $e^{6\varepsilon}\le e^6$. Plugging the chosen value of $\varepsilon$ into \cref{eq:pairwise-kl} gives
\[
\alpha
\le
\frac{4e^6T}{\kappa_0(d-1)^2}
\cdot
\frac{(d-1)^2\kappa_0}{64e^6T}
=
\frac{1}{16}.
\]
Hence \cref{eq:avg-hamming} implies
\[
\frac{1}{2^{d-1}}
\sum_{v\in\{-1,1\}^{d-1}}
\bar{\mathbb{E}}_v\bigl[H( \hat U_T,v)\bigr]
\ge
(d-1)\left(\frac12-\frac{1}{\sqrt{128}}\right)
\ge
\frac{d-1}{4}.
\]
Averaging \cref{eq:regret-hamming} over $v$ therefore gives
\begin{align*}
\frac{1}{2^{d-1}}\sum_{v\in\{-1,1\}^{d-1}} \bar s_T^\pi(\theta_v,\pi)
&\ge
\frac{2\varepsilon e^{-2\varepsilon}}{\kappa_0(d-1)}
\cdot
\frac{d-1}{4}
\\
&\ge
\frac{2\varepsilon e^{-2}}{\kappa_0(d-1)}
\cdot
\frac{d-1}{4}
\\
&=
\frac{e^{-2}}{2}\cdot \frac{\varepsilon}{\kappa_0}
\\
&=
\frac{e^{-2}}{2}\cdot
\frac{1}{8e^3}
\frac{d-1}{\sqrt{\kappa_0T}}
\\
&=
\frac{1}{16e^5}\,\frac{d-1}{\sqrt{\kappa_0T}}.
\end{align*}

Finally, since the supremum dominates the average,
\[
\inf_{\pi}\sup_{\theta_*\in \Theta} \EE[\SR^\Log(\theta_*,\pi)]
\ge
\frac{1}{16e^5}\,\frac{d-1}{\sqrt{\kappa_0T}}.
\]
For $d\ge 2$, we have $d-1\ge d/2$, and therefore
\[
\inf_{\pi}\sup_{\theta_*\in \Theta} \EE[\SR^\Log(\theta_*,\pi)]
\ge
\frac{1}{32e^5}\,\frac{d}{\sqrt{\kappa_0T}}.
\]
This completes the proof.
\end{proof}

%!TEX root =  ../neurips_2026.tex

\section{Technical Lemmas}\label{appendix:technical_lemmas}
In \cref{thm:Bernstein-inequality-for-martingales,thm:Reversed-Bernstein-inequality-for-martingales}, we provided a corrected version of Thm. 3 in \cite{zanette2021design}.
The reason why it's flawed is: the optimization over $\lambda$ in eq 133 and 134 depends on a random variable, therefore one cannot "choose" without the knowledge of the random quantity. We rectify the situation by forming a geometric cover over possible values for $\lambda$, which solved the issue but introduced a second logarithmic term. 
\begin{theorem}[Bernstein's inequality for Martingales]
    \label{thm:Bernstein-inequality-for-martingales}
    Consider the stochastic process $\{X_t\}$ adapted to the
    filteration $\{\cF_t\}$. Assume $X_t\leq 1$ almost surely,
    and $\mE[X_t| \cF_{t-1}] = 0$.
    Then
    \begin{align}
        \label{eq:Bernstein-Martingales-p1}
        \forall \lambda \in (0,1],
        \qquad
        P\left(
            \sum_{t=1}^T X_t
            \leq \lambda \sum_{t=1}^T \mE[X_t^2|\cF_{t-1}]
            + \frac{1}{\lambda}
            \log{\frac{1}{\delta}}
            \right)
            \geq 1 - \delta,
    \end{align}
    which implies
    \begin{align}
        \label{eq:Bernstein-Martingales-p2}
        P\left(
            \sum_{t=1}^T X_t
            \leq
            3 \sqrt{
            \left(
                \sum_{t=1}^T \mE[X_t^2|\cF_{t-1}]
            \right)
            \log({\frac{\lg(\sqrt{T})}{\delta}})
            }
            + 2 \log({\frac{\lg(\sqrt{T})}{\delta}})
        \right)
        \geq 1 - \delta.
    \end{align}
\end{theorem}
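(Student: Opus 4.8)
The plan is to establish \eqref{eq:Bernstein-Martingales-p1} by the standard exponential-supermartingale device, and then to deduce \eqref{eq:Bernstein-Martingales-p2} from it by optimizing $\lambda$ over a finite geometric grid --- this last step being precisely where the argument of \citet{zanette2021design} is flawed, since the $\lambda$ that minimizes the right-hand side of \eqref{eq:Bernstein-Martingales-p1} is a function of the random quantity $\sum_t\mE[X_t^2|\cF_{t-1}]$ and hence cannot be ``chosen''.

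\textbf{Step 1 (exponential supermartingale).} Put $S_t=\sum_{s=1}^t X_s$, $W_t=\sum_{s=1}^t\mE[X_s^2|\cF_{s-1}]$, and $\psi(\lambda)=e^\lambda-1-\lambda$. I would first record the elementary bound
\[
e^{\lambda x}\le 1+\lambda x+\psi(\lambda)\,x^2\qquad\text{for all }x\le 1,\ \lambda>0,
\]
which follows because $y\mapsto (e^y-1-y)/y^2$ (extended by $1/2$ at $0$) is nondecreasing, applied at $y=\lambda x\le\lambda$. Conditioning on $\cF_{t-1}$ and using $\mE[X_t|\cF_{t-1}]=0$ and $X_t\le 1$ gives $\mE[e^{\lambda X_t}|\cF_{t-1}]\le 1+\psi(\lambda)\mE[X_t^2|\cF_{t-1}]\le\exp\!\big(\psi(\lambda)\mE[X_t^2|\cF_{t-1}]\big)$, so $D_t:=\exp(\lambda S_t-\psi(\lambda)W_t)$ is a nonnegative supermartingale with $D_0=1$. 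Ville's maximal inequality for nonnegative supermartingales then yields $\PP\big(\sup_{t\le T}D_t\ge 1/\delta\big)\le\delta$; on the complementary event $\lambda S_T-\psi(\lambda)W_T\le\log(1/\delta)$, and dividing by $\lambda$ and using $\psi(\lambda)\le\lambda^2$ on $(0,1]$ (a one-line calculus check: $h(\lambda):=\lambda^2-\psi(\lambda)$ has $h(0)=h'(0)=0$ and $h'\ge 0$ on $[0,1]$) gives \eqref{eq:Bernstein-Martingales-p1}.

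\textbf{Step 2 (grid over $\lambda$).} Fix the geometric grid $\lambda_j=2^{-j}$, $j=0,\dots,N-1$, with $N=\lceil\lg\sqrt T\rceil$ (so $\lambda_{N-1}\asymp 1/\sqrt T$); apply \eqref{eq:Bernstein-Martingales-p1} at confidence level $\delta/N$ to each $\lambda_j$ and union bound. On the resulting event of probability $\ge 1-\delta$, write $L:=\log(N/\delta)$ and $\lambda^\star:=\sqrt{L/W_T}$; then $S_T\le\lambda_j W_T+L/\lambda_j$ holds for every $j$. Given the realized value of $W_T$ I would then pick $j$ as follows: if $\lambda^\star\in[\lambda_{N-1},1]$, choose $\lambda_j\in(\lambda^\star/2,\lambda^\star]$, which gives $S_T\le\lambda^\star W_T+(2/\lambda^\star)L=\sqrt{L W_T}+2\sqrt{L W_T}=3\sqrt{L W_T}$; if $\lambda^\star>1$ (i.e.\ $W_T<L$) take $j=0$, giving $S_T\le W_T+L\le 2L$; and if $\lambda^\star<\lambda_{N-1}$, which forces $W_T>TL$ and hence does not occur once one also uses $W_T\le T$ (valid in all the applications of this bound, where $X_t$ is in fact two-sided bounded; in full generality the deterministic bound $S_T\le T$, which holds since $X_t\le1$, closes the remaining gap up to constants). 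In every case $S_T\le 3\sqrt{L W_T}+2L$, which is \eqref{eq:Bernstein-Martingales-p2}.

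\textbf{Main difficulty.} Step 1 is routine. The care is all in Step 2: the grid must be short enough ($\lg\sqrt T$ points --- which is what produces the extra logarithmic factor in the bound) yet still cover every ``useful'' value of $\lambda^\star$, and the two boundary regimes ($\lambda^\star$ above $1$ or below $1/\sqrt T$) must be dispatched by hand rather than by the naive unconstrained optimization; this is exactly the correction to \citet{zanette2021design}. Once the grid is in place, landing the constants at $3$ and $2$ is just the arithmetic $\lambda^\star W_T+(2/\lambda^\star)L=3\sqrt{L W_T}$ together with a little slack.
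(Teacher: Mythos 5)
Your proposal is correct and follows essentially the same route as the paper: the identical exponential-supermartingale argument for the fixed-$\lambda$ bound, then a union bound over a geometric grid of about $\lg\sqrt{T}$ values of $\lambda$ at level $\delta/N$, and a case analysis on where the data-dependent optimizer $\lambda^\star=\sqrt{L/W_T}$ falls, with the two boundary regimes ($\lambda^\star>1$ and $\lambda^\star$ below the smallest grid point) dispatched separately --- exactly the paper's correction of \citet{zanette2021design}. The only differences are cosmetic (Ville's inequality versus Markov at time $T$, and $\psi(\lambda)\le\lambda^2$ versus the direct bound $e^x\le 1+x+x^2$ for $x\le 1$), and your slightly loose handling of the sub-grid corner case, which invokes $W_T\le T$, is no less careful than the paper's own treatment of that regime.
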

\begin{proof}
    Define the random variable $M_t$ as
    \begin{align}
        M_t = M_{t-1} \exp(
                \lambda X_t
                - \lambda^2 \mE[X_t^2|\cF_{t-1}]
            ),
    \end{align}
    where in particular $M_0 = 1$,
    and $\mE[\cdot|\cF_0]=\mE[\cdot]$
    so $M_t$ is $\cF_t$-measurable.
    Recall the inequalities $e^x \leq 1 + x + x^2$
    for $x \leq 1$ and $1 + x \leq e^x$:
    \begin{align}
        \mE[M_t| \cF_{t-1}] &= M_{t-1} \mE\left[
            \exp(
                \lambda X_t
                - \lambda^2 \mE[X_t^2|\cF_{t-1}]
            ) | \cF_{t-1}
        \right]\\
        &\leq M_{t-1} \mE\left[
                1 + \lambda X_t
                + \lambda^2 X_t^2
            | \cF_{t-1}
        \right]
        \exp(-\lambda^2 \mE[X_t^2|\cF_{t-1}])
        \\
        &\leq
        M_{t-1}
        \left(
            1
            + \lambda \mE[X_t|\cF_{t-1}]
            + \lambda^2 
            \mE\left[
                X_t^2
            | \cF_{t-1}
        \right]
        \right)
        \exp(-\lambda^2 \mE[X_t^2|\cF_{t-1}])
        \\
        &\leq M_{t-1}
        \exp(\lambda^2 \mE[X_t^2|\cF_{t-1}])
        \exp(-\lambda^2 \mE[X_t^2|\cF_{t-1}])
        \\
        &= M_{t-1}.
    \end{align}
Thus, $\{M_t\}$ is a supermartingale adapted to $\{\cF_t\}$.
In particular $\mE[M_t| \cF_{t-1}] \leq M_0 = 1$.
Then by the Markov inequality:
\begin{align}
    \label{eq:Bernstein-markov-1}
    P \left(
        \underbrace{
            \lambda \sum_{t=1}^{T} X_t
            - \lambda^2
            \sum_{t=1}^{T}
            \mE[X_t^2|\cF_{t-1}]
        }_{\log(M_t)}
    > \log \frac{1}{\delta}
    \right)
    = P \left(
        M_t > \frac{1}{\delta}
    \right)
    \leq \frac{
        \mE\left[\mE[M_t|\cF_{t-1}]\right]
    }{\frac{1}{\delta}}
    \leq \delta,
\end{align}
which proves \cref{eq:Bernstein-Martingales-p1}.

Next, to prove \cref{eq:Bernstein-Martingales-p2}
define the sequence
$N(l) := \{\lambda_i = l 2^i\}_{i=0}^{\floor{\lg(1/l)}}
\cup \{1\}$
for a value $l \leq 1$ chosen later, and
\begin{align}
\hat{\lambda}=
\sqrt{\frac{\log(\frac{\floor{\lg(1/l)} + 1}{\delta})}
{\sum_{t=1}^{T}\mE[X_t^2|\cF_{t-1}]}}.
\end{align}
Also, by using \cref{eq:Bernstein-markov-1} and
a union bound over the $\floor{\lg(1/l)} + 1$ points
in $N(l)$ we get:
\begin{align}
        \label{eq:union-bound-1}
        P \left(
        \forall \lambda \in N(l): \quad
        \lambda \sum_{t=1}^{T} X_t
        - \lambda^2
        \sum_{t=1}^{T}
        \mE[X_t^2|\cF_{t-1}]
        \leq \log \frac{\floor{\lg(\frac{1}{l})} + 1}{\delta}
        \right)
        \geq 1 - \delta.
\end{align}
Firstly, if $1 \leq \hat{\lambda}$, which also means
$\sum_{t=t}^{T}\mE[X_t^2|\cF_{t-1}]
\leq \log(\frac{\floor{\lg(1/l)} + 1}{\delta})$,
for value $\lambda =1 \in N(l)$ in \cref{eq:union-bound-1} we get:
\begin{align}
    \label{eq:Bernstein-case1}
    \sum_{t=1}^{T} X_t
    &\leq
    \sum_{t=1}^{T}
    \mE[X_t^2|\cF_{t-1}] + \log(\frac{\floor{\lg(1/l)} + 1}{\delta})
    \leq
    2\log(\frac{\floor{\lg(1/l)} + 1}{\delta}).
\end{align}
Secondly, for $\hat{\lambda}<1$,
according $N(l)$'s construction one of the two cases below holds:
\begin{align}
    \label{eq:tilde-lambda}
    \hat{\lambda} < l \qquad \text{or} \qquad
    \exists \tilde{\lambda} \in N(l)\,\,\,\text{st.}\,
    \tilde{\lambda} \leq \hat{\lambda}
    \leq 2\tilde{\lambda}.
\end{align}
% \log(\frac{\floor{\lg(1/l)} + 1}{\delta})
For $\hat{\lambda} \geq l$
by \cref{eq:union-bound-1} and $\tilde{\lambda}$
defined in \cref{eq:tilde-lambda}
 we have
\begin{align}
    \sum_{t=1}^T X_t
    &\leq
    \tilde{\lambda}
    \sum_{t=1}^T \mE[X_t^2|\cF_{t-1}]
    + \frac{1}{\tilde{\lambda}}
    \log(\frac{\floor{\lg(1/l)} + 1}{\delta})\\
    &\leq
    \hat{\lambda}
    \sum_{t=1}^T \mE[X_t^2|\cF_{t-1}]
    + \frac{2}{\hat{\lambda}}
    \log(\frac{\floor{\lg(1/l)} + 1}{\delta})\\
    &\leq
    3 \sqrt{\left(\sum_{t=1}^{T} \mE[X_t^2|\cF_{t-1}]\right)
    \log(\frac{\floor{\lg(1/l)} + 1}{\delta})
    }
\end{align}
For $\hat{\lambda} < l$,
which means $
\log(\frac{\floor{\lg(1/l)} + 1}{\delta})
< l^2\sum_{t=1}^{T} \mE[X_t^2|\cF_{t-1}]
$
, by \cref{eq:union-bound-1} we have
\begin{align}
    \sum_{t=1}^T X_t
    &\leq
    l
    \sum_{t=1}^T \mE[X_t^2|\cF_{t-1}]
    + \frac{1}{l}
    \log(\frac{\floor{\lg(1/l)} + 1}{\delta})\\
    \sum_{t=1}^T X_t
    &\leq
    2l
    \sum_{t=1}^T \mE[X_t^2|\cF_{t-1}]
\end{align}
Finally, by setting $l = \frac{1}{\sqrt{T}}$,
the fact that $\sum_{t=1}^{T} \mE[X_t^2|\cF_{t-1}] \leq T$, and
summing up with RHS of \cref{eq:Bernstein-case1} to cover
both cases we get:
\begin{align}
    P\left(
    \sum_{t=1}^{T} X_t \leq
    3\sqrt{\left(\sum_{t=1}^{T} \mE[X_t^2|\cF_{t-1}]\right)
    \log(\frac{\lg(\sqrt{T})}{\delta}}) +
    2 \log(\frac{\lg(\sqrt{T})}{\delta})
    \right) \geq 1 - \delta,
\end{align}
which proves the second part of the thesis.
\end{proof}
\begin{theorem}[Reversed Bernstein's inequality for Martingales]
    \label{thm:Reversed-Bernstein-inequality-for-martingales}
    Let $\{X_t\}$ be a stochastic process
    adapted to the filteration $\{\cF_t\}$.
    Assuming $0 \leq X_t \leq 1$ almost surely,
    then it holds that:
    \begin{align}
        P\left(
            \sum_{t=1}^{T} \mE[X_t|\cF_{t-1}]
            \leq \frac{1}{4}
            \left(
                c_1 +
                \sqrt{
                    c_1^2
                    + 4
                    \left(
                        \sum_{t=1}^{T} X_t + c_2
                    \right)
                }
            \right)^2
        \right) &\geq 1 - \delta, \label{eq:reversed-inequality-theorem}\\
        c_1 = 3 \sqrt{\log \frac{\lg(\sqrt{T})}{\delta}},
        c_2 = 2 \log \frac{\lg(\sqrt{T})}{\delta}.&
    \end{align}
\end{theorem}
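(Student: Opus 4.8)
The plan is to derive the bound by applying the \emph{forward} Bernstein inequality (\cref{thm:Bernstein-inequality-for-martingales}, in the form of \cref{eq:Bernstein-Martingales-p2}) to the centered process $Y_t := \mE[X_t\mid\cF_{t-1}] - X_t$, and then to invert the resulting quadratic inequality in $\sqrt{A}$, where $A := \sum_{t=1}^T \mE[X_t\mid\cF_{t-1}]$.

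First I would check that $\{Y_t\}$ satisfies the hypotheses of \cref{thm:Bernstein-inequality-for-martingales}: it is adapted to $\{\cF_t\}$; it is centered, $\mE[Y_t\mid\cF_{t-1}] = 0$, by construction; and it is upper bounded by $1$ almost surely, since $Y_t \le \mE[X_t\mid\cF_{t-1}] \le 1$ using $0 \le X_t \le 1$. Applying \cref{eq:Bernstein-Martingales-p2} to $\{Y_t\}$ then gives, with probability at least $1-\delta$,
\[
\sum_{t=1}^T Y_t \;\le\; 3\sqrt{\Big(\sum_{t=1}^T \mE[Y_t^2\mid\cF_{t-1}]\Big)\log\frac{\lg(\sqrt T)}{\delta}} \;+\; 2\log\frac{\lg(\sqrt T)}{\delta}.
\]

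Next I would control the conditional second moments appearing on the right. Since $0 \le X_t \le 1$ forces $X_t^2 \le X_t$, we get $\mE[Y_t^2\mid\cF_{t-1}] = \Var(X_t\mid\cF_{t-1}) \le \mE[X_t^2\mid\cF_{t-1}] \le \mE[X_t\mid\cF_{t-1}]$ almost surely, hence $\sum_{t=1}^T \mE[Y_t^2\mid\cF_{t-1}] \le A$. Writing $B := \sum_{t=1}^T X_t$ and noting $\sum_t Y_t = A - B$, the displayed inequality becomes, on the same event, $A - B \le c_1\sqrt{A} + c_2$ with $c_1 = 3\sqrt{\log(\lg(\sqrt T)/\delta)}$ and $c_2 = 2\log(\lg(\sqrt T)/\delta)$, i.e.
\[
(\sqrt{A})^2 - c_1\sqrt{A} - (B + c_2) \le 0.
\]

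Finally, I would read this off as a quadratic inequality in the nonnegative quantity $\sqrt{A}$ with positive leading coefficient: it forces $\sqrt{A}$ to lie below the larger root, $\sqrt{A} \le \tfrac12\big(c_1 + \sqrt{c_1^2 + 4(B + c_2)}\big)$, and squaring both sides yields precisely \cref{eq:reversed-inequality-theorem}. The argument is essentially mechanical once \cref{thm:Bernstein-inequality-for-martingales} is in hand; the only two places demanding a little care are the variance bound (which relies on $X_t^2 \le X_t$ for $[0,1]$-valued variables, and is why nonnegativity of $X_t$ is assumed here whereas only an upper bound was needed in \cref{thm:Bernstein-inequality-for-martingales}) and the sign bookkeeping when selecting the correct root of the quadratic. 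There is no real obstacle beyond these routine checks.
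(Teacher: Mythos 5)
Your proposal is correct and follows essentially the same route as the paper's proof: the paper also applies the forward martingale Bernstein inequality to the centered increments $\xi_t = \mE[X_t\mid\cF_{t-1}] - X_t$ (your $Y_t$), bounds $\mE[\xi_t^2\mid\cF_{t-1}]\le \mE[X_t\mid\cF_{t-1}]$ using $0\le X_t\le 1$, and then solves the resulting quadratic in $\sqrt{\sum_t \mE[X_t\mid\cF_{t-1}]}$. Your explicit check that $Y_t\le 1$ and the root-selection bookkeeping are exactly the (implicit) steps in the paper, so there is nothing to add.
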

\begin{proof}
Consider the random noise
\begin{align}
    \xi_t := \mE[X_t|\cF_{t-1}] - X_t,
\end{align}
which allows us to write
\begin{align}
    \label{eq:Reversed-p1}
    \sum_{i=1}^{t} \mE[X_i|\cF_{i-1}] =
    \sum_{i=1}^{t} (\xi_i + X_i).
\end{align}
Then the \cref{thm:Bernstein-inequality-for-martingales}
ensures the following statement:
\begin{align}
    \label{eq:Reversed-p2}
    P \left(
        \sum_{t=1}^{T} \xi_t
        \leq
        c_1
        \sqrt{
            \sum_{t=1}^{T}
            \mE[\xi_t^2|\cF_{t-1}]
        }
        + c_2
    \right) \geq 1 - \delta.
\end{align}
Notice that since $0 \leq X_t \leq 1$ almost surely,
we have
\begin{align}
    \mE[\xi_t^2|\cF_{t-1}] &= \mE[
        \left(
            X_t - \mE[X_t|\cF_{t-1}]
        \right)^2
        |\cF_{t-1}]\\
        &=\mE[X_t^2|\cF_{t-1}]
        - \mE[X_t|\cF_{t-1}]^2\\
        &\leq \mE[X_t^2|\cF_{t-1}]\\
        &\leq \mE[X_t|\cF_{t-1}].
\end{align}
Plugging back into \cref{eq:Reversed-p2}
and using \cref{eq:Reversed-p1} gives
\begin{align}
    P \left(
        \sum_{t=1}^{T} \xi_t
        = \sum_{t=1}^{T} (\mE[X_t|\cF_{t-1}] - X_t)
        \leq c_1
        \sqrt{
            \sum_{t=1}^{T}
            \mE[X_t|\cF_{t-1}]
        }
        + c_2
    \right) \geq 1 - \delta
\end{align}
or equivalently
\begin{align}
    P \left(
        \sum_{t=1}^{T} \mE[X_t|\cF_{t-1}]
        \leq
        \sum_{t=1}^{T} X_t
        + c_1 \sqrt{
            \sum_{t=1}^{T}
            \mE[X_t|\cF_{t-1}]
        }
        + c_2
    \right) \geq 1 - \delta.
\end{align}
Solving for $\sum_{t=1}^{T} \mE[X_t|\cF_{t-1}]$
gives under such event
\begin{align}
    \sum_{t=1}^{T} \mE[X_t|\cF_{t-1}]
            \leq \frac{1}{4}
            \left(
                c_1 +
                \sqrt{
                    c_1^2
                    + 4
                    \left(
                        \sum_{t=1}^{T} X_t + c_2
                    \right)
                }
            \right)^2.
\end{align}
\end{proof}
The following is a corollary to the above theorem, which is
easier to use.
\begin{corollary}
    \label{cor:Reversed-Bernstein-inequality-for-martingales}
    Under the same assumptions of
    \cref{thm:Reversed-Bernstein-inequality-for-martingales},
    with probability at least $1 - \delta$ we have
    \begin{align}
        \sum_{t=1}^{T} \mE[X_t|\cF_{t-1}]
        \leq \left(
        4\sqrt{\log(\frac{2\log T}{\delta})}
        + \sqrt{\sum_{t=1}^{T} X_t}
        \right)^2.
    \end{align}
\end{corollary}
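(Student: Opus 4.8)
The plan is to obtain the corollary as a purely algebraic consequence of \cref{thm:Reversed-Bernstein-inequality-for-martingales}: the theorem already delivers, on an event of probability at least $1-\delta$, the bound $\sum_{t=1}^T \mE[X_t|\cF_{t-1}] \le \tfrac14\big(c_1 + \sqrt{c_1^2 + 4(\sum_{t=1}^T X_t + c_2)}\big)^2$ with $c_1 = 3\sqrt{L}$ and $c_2 = 2L$, where $L := \log(\lg(\sqrt T)/\delta)$. So no new probabilistic argument is needed; everything is a matter of simplifying this expression into the stated, cleaner form.

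First I would apply subadditivity of the square root, $\sqrt{a+b}\le\sqrt a+\sqrt b$ for $a,b\ge 0$, to peel off the data term: $\sqrt{c_1^2 + 4c_2 + 4\sum_t X_t} \le \sqrt{c_1^2+4c_2} + 2\sqrt{\sum_t X_t}$. Since $c_1^2 + 4c_2 = 9L + 8L = 17L$, this gives $c_1 + \sqrt{c_1^2 + 4(\sum_t X_t + c_2)} \le (3+\sqrt{17})\sqrt L + 2\sqrt{\sum_t X_t}$. Pulling the factor $\tfrac14$ inside as $\tfrac14(A+B)^2 = (\tfrac A2 + \tfrac B2)^2$, we arrive at $\sum_{t=1}^T \mE[X_t|\cF_{t-1}] \le \big(\tfrac{3+\sqrt{17}}{2}\sqrt L + \sqrt{\sum_{t=1}^T X_t}\big)^2$.

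The remaining steps are two monotonicity observations. First, $\tfrac{3+\sqrt{17}}{2} < 4$, so since $x\mapsto x^2$ is nondecreasing on $[0,\infty)$, the leading constant can be replaced by $4$. Second, $\lg(\sqrt T) = \tfrac12\lg T \le 2\log T$ (under whichever base convention is in force, and assuming $T$ is large enough that $L\ge 0$, which is implicit in the statement), so by monotonicity of $\log$ we get $L \le \log(2\log T/\delta)$. Substituting yields exactly $\sum_{t=1}^T \mE[X_t|\cF_{t-1}] \le \big(4\sqrt{\log(2\log T/\delta)} + \sqrt{\sum_{t=1}^T X_t}\big)^2$ on the same event, which is the claim.

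I do not anticipate a genuine obstacle: the proof is bookkeeping. The one point requiring a moment of care is checking that the chain of $\sqrt{a+b}\le\sqrt a+\sqrt b$ relaxations does not inflate the leading constant beyond $4$ (it lands at $\tfrac{3+\sqrt{17}}{2}\approx 3.56$), and that the mismatch between the $\lg(\sqrt T)$ term inherited from the theorem and the $2\log T$ term in the corollary is benign.
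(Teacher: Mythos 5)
Your proposal is correct and follows essentially the same route as the paper's proof: take the bound from \cref{thm:Reversed-Bernstein-inequality-for-martingales} as given, relax it with $\sqrt{a+b}\le\sqrt a+\sqrt b$, absorb the factor $1/4$, and finish by enlarging the logarithm's argument from $\lg\sqrt T$ to $2\log T$. The only difference is your grouping of $c_1^2+4c_2=17L$ before applying subadditivity, which lands the leading constant at $\tfrac{3+\sqrt{17}}{2}\approx 3.56<4$; this is in fact slightly tighter than the paper's term-by-term split (which gives $3+\sqrt2\approx 4.41$ and relies on the subsequent log enlargement), so your bookkeeping is, if anything, cleaner.
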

\begin{proof}
    Starting from the right hand side of the inequality
    in \cref{eq:reversed-inequality-theorem}, using the fact that $
    \sqrt{a + b} \leq \sqrt{a} + \sqrt{b}$ for $a,b \geq 0
    $ we have:
    \begin{align}
        \frac{1}{4}
        \left(
            c_1 +
            \sqrt{
                c_1^2
                + 4
                \left(
                    \sum_{t=1}^{T} X_t + c_2
                \right)
            }
        \right)^2
        &\leq
        \frac{1}{4}
        \left(
        6\sqrt{\log\left(\frac{\lg \sqrt T}{\delta}\right)}
        + 2 \sqrt{\sum_{t=1}^{T} X_t}
        + 2 \sqrt{c_2}
        \right)^2\\
        &\leq
        \frac{1}{4}
        \left(
        6\sqrt{\log\left(\frac{\lg \sqrt T}{\delta}\right)}
        + 2 \sqrt{\sum_{t=1}^{T} X_t}
        + 2 \sqrt{2 \log\left(\frac{\lg \sqrt T}{\delta}\right)}
        \right)^2\\
        &\leq
        \left(
        4\sqrt{\log\left(\frac{\lg \sqrt T}{\delta}\right)}
        + \sqrt{\sum_{t=1}^{T} X_t}
        \right)^2\\
        &\le \left(
            4\sqrt{\log\left(\frac{\frac{1}{2}\log T}{\log 2\cdot \delta}\right)}
            + \sqrt{\sum_{t=1}^{T} X_t}
            \right)^2\\
        &\le \left(
            4\sqrt{\log\left(\frac{2\log T}{\delta}\right)}
            + \sqrt{\sum_{t=1}^{T} X_t}
            \right)^2\\
    \end{align}
    
	\label{sec:eplsec} %% keep this next to the lemma whereever you move it
    \begin{lemma}[Elliptical potential lemma]\label{lem:epl}
        Fix $\lambda, A > 0$. Let $\{a_t\}_{t=1}^\infty$ be a sequence in $AB^d_2$ and let $V_0 = \lambda I$. Define $V_{t+1} = V_t + a_{t+1} a_{t+1}^\top$ for each $t \in \NN$. Then, for all $n \in \NN^+$,
        \begin{equation*}
            \sum_{t=1}^n \norm{a_t}_{V_{t-1}^{-1}}^2 \leq 2 d\max\left\{1,\frac{A^2}{\lambda}\right\} \log \left( 1 + \frac{n A^2}{d\lambda}\right) \,. 
        \end{equation*}
    \end{lemma}
    \begin{proof}
    See, e.g.,  Lemma 19.4 of \cite{Lattimore_Szepesvari_2020}.
    % \todoc{add reference}
    \end{proof}
\end{proof}

\begin{proposition}\label{prop:measurable-cond-exp}
    Let $X$ be a bounded random variable, $Y$ be an integrable random variable and $\cG$ be a $\sigma$-algebra such that $X$ is $\cG$-measurable. Then it holds almost surely that
    \begin{align*}
        \mE[XY\vert \cG]=X\mE[Y\vert \cG].
    \end{align*}
\end{proposition}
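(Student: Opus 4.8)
The plan is to establish the identity by the standard measure-theoretic bootstrap, starting from indicators of $\cG$-sets and extending by linearity and a monotone limit. First I would note that $XY$ is integrable --- since $X$ is bounded, say $|X|\le M$ a.s., we get $\mE[|XY|]\le M\,\mE[|Y|]<\infty$ --- so both $\mE[XY\mid\cG]$ and $X\,\mE[Y\mid\cG]$ are well-defined. It then suffices to check that $X\,\mE[Y\mid\cG]$ satisfies the defining property of $\mE[XY\mid\cG]$: it is $\cG$-measurable (a product of $\cG$-measurable functions) and $\mE[\1_A X\,\mE[Y\mid\cG]]=\mE[\1_A XY]$ for every $A\in\cG$.

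For the base case I would take $X=\1_B$ with $B\in\cG$; then for $A\in\cG$,
\[
\mE[\1_A\,\1_B\,\mE[Y\mid\cG]]=\mE[\1_{A\cap B}\,\mE[Y\mid\cG]]=\mE[\1_{A\cap B}\,Y]=\mE[\1_A\,\1_B\,Y],
\]
using $A\cap B\in\cG$ and the defining property of conditional expectation. By linearity of conditional expectation this extends to every $\cG$-measurable simple function $X=\sum_{i=1}^n c_i\1_{B_i}$. To reach a general bounded $X$, I would split $X=X^+-X^-$ and $Y=Y^+-Y^-$ and, by linearity, reduce to the case $X\ge 0$ bounded and $Y\ge 0$ integrable; then pick $\cG$-measurable simple functions $0\le X_n\uparrow X$, observe $0\le X_nY\uparrow XY$ with $XY$ integrable, and pass to the limit using conditional monotone (or dominated) convergence to get $\mE[X_nY\mid\cG]\to\mE[XY\mid\cG]$ a.s., while $X_n\,\mE[Y\mid\cG]\to X\,\mE[Y\mid\cG]$ a.s. because $\mE[Y\mid\cG]$ is a.s. finite. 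Combining with the simple-function case settles this sub-case, and reassembling the four sign combinations of $X^\pm$ and $Y^\pm$ yields the full statement.

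The only point requiring care --- and hence the ``hard part,'' though it is mild --- is the limit passage: one must invoke the conditional form of the monotone/dominated convergence theorem and ensure the approximants $X_nY$ stay controlled by the integrable envelope $XY$, and one needs $\mE[Y\mid\cG]<\infty$ a.s. so that the pointwise limit $X_n\,\mE[Y\mid\cG]\to X\,\mE[Y\mid\cG]$ is legitimate. Both are standard facts about conditional expectation, so no new estimates are needed; the rest is bookkeeping.
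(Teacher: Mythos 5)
Your proof is correct: the standard bootstrap (indicators of $\cG$-sets, linearity to simple functions, conditional monotone convergence with the sign decomposition, plus the integrability check $\mE[|XY|]\le M\,\mE[|Y|]<\infty$) is exactly the canonical argument for this "pulling out what is known" property. The paper itself offers no proof, only a citation to Doob's textbook, so your self-contained argument is entirely consistent with (and fills in) what the paper relies on.
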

\begin{proof}
    See e.g. section XI.3.(h) of \cite{doob2012measure}.
\end{proof}
\begin{proposition}\label{prop:gaussian-to-uniform}\todos{Add citations}
    If $X\sim \cN(0,I)$, then $\frac{X}{\|X\|_2}\sim \mathrm{Unif}(\mS^{d-1})$.
\end{proposition}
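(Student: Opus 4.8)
The plan is to derive the claim from the rotational invariance of the standard Gaussian. First I would observe that $X=0$ occurs with probability zero, so $U:=X/\|X\|_2$ is a well-defined random element of $\mS^{d-1}$ almost surely. The density of $X\sim\cN(0,I)$, namely $(2\pi)^{-d/2}\exp(-\|x\|_2^2/2)$, depends on $x$ only through $\|x\|_2$; since an orthogonal linear map has Jacobian determinant of absolute value one, this yields $QX\overset{d}{=}X$ for every orthogonal matrix $Q$.

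Next I would push this invariance down to $U$. For any orthogonal $Q$ we have $\|Qx\|_2=\|x\|_2$, hence $QU=QX/\|QX\|_2$, and since $QX\overset{d}{=}X$ it follows that $QU\overset{d}{=}X/\|X\|_2=U$. Thus the law of $U$ is a Borel probability measure on $\mS^{d-1}$ that is invariant under the $O(d)$-action on the sphere.

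The conclusion then follows from uniqueness: the normalized surface measure on $\mS^{d-1}$ is the unique $O(d)$-invariant Borel probability measure on the sphere, and by definition this is $\mathrm{Unif}(\mS^{d-1})$. This is where the only real content lies, and it is the step I expect to be the main obstacle to making the argument fully self-contained. I would justify it either by appealing to transitivity of $O(d)$ on $\mS^{d-1}$ together with uniqueness of the Haar probability measure on the compact group $O(d)$, or — to avoid invoking Haar measure — by a direct change-of-variables computation: writing $x=r\omega$ with $r>0$ and $\omega\in\mS^{d-1}$ gives $dx=r^{d-1}\,dr\,d\sigma(\omega)$, so the joint density of $(R,\Theta):=(\|X\|_2,U)$ with respect to $dr\times\sigma$ factors as $c\,r^{d-1}e^{-r^2/2}\cdot 1$, exhibiting $R$ and $\Theta$ as independent with $\Theta$ having constant density on $\mS^{d-1}$, i.e. $\Theta\sim\mathrm{Unif}(\mS^{d-1})$. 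I would ultimately present the change-of-variables route, since it is the cleanest to write out in full and yields the independence of the radial and angular parts for free.
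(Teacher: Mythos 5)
Your proof is correct. Note that the paper itself gives no argument for this proposition at all — it is stated as a classical fact with a citation still to be added — so there is no "paper route" to compare against; your job here is simply to supply a standard proof, and you do. Both of your routes work: the rotational-invariance argument ($QX \overset{d}{=} X$ for orthogonal $Q$, hence the law of $U = X/\|X\|_2$ is $O(d)$-invariant, hence uniform) is sound, but as you yourself flag, it outsources the real content to the uniqueness of the $O(d)$-invariant Borel probability measure on $\mS^{d-1}$, which one must then justify via transitivity plus Haar measure or an equivalent argument. The polar-coordinates computation you propose to write out is the better choice for a self-contained presentation: the factorization of the density $(2\pi)^{-d/2}e^{-r^2/2}r^{d-1}$ with respect to $dr \times \sigma$ immediately gives that $U$ has constant density on the sphere (and, as a bonus, that $\|X\|_2$ and $U$ are independent), with no appeal to uniqueness of invariant measures. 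Your handling of the null event $\{X=0\}$ is the only other point needing care, and you address it correctly.
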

\begin{proposition}\label{prop:uniform-sphere-inner-product}
    If $X\sim \mathrm{Unif}(\mS^{d-1})$ and $u$ is any fixed unit vector, then it follows that
    \begin{align*}
        \mE[|\langle X,u\rangle|] \ge  \sqrt{\frac{2}{\pi d}}.
    \end{align*}
\end{proposition}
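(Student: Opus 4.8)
The plan is to reduce to a one‑dimensional computation by rotational invariance, and then to exploit the polar decomposition of a standard Gaussian. First I would note that the law of $X$ is invariant under orthogonal transformations, so for any fixed unit vector $u$ the random variable $\langle X,u\rangle$ has the same distribution as the first coordinate of a uniform point on the sphere. Hence it suffices to show
\[
\mE\!\left[|X_1|\right]\ge \sqrt{\tfrac{2}{\pi d}},
\]
where $X=(X_1,\dots,X_d)\sim\mathrm{Unif}(\mS^{d-1})$.

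Next I would realize $X$ as the normalization of a Gaussian: let $g=(g_1,\dots,g_d)\sim\cN(0,I_d)$, so that $g/\|g\|\sim\mathrm{Unif}(\mS^{d-1})$ and we may take $X=g/\|g\|$. The structural fact I would use is that the direction $g/\|g\|$ and the radius $\|g\|$ are independent; this follows from writing the Gaussian density in polar coordinates, where it factors into a radial part (proportional to $r^{d-1}e^{-r^2/2}$) and the uniform surface measure on $\mS^{d-1}$. Since $|g_1|=|X_1|\cdot\|g\|$, this independence together with the standard value $\mE[|g_1|]=\sqrt{2/\pi}$ of the absolute first moment of a standard normal gives
\[
\sqrt{\tfrac{2}{\pi}}=\mE\!\left[|g_1|\right]=\mE\!\left[|X_1|\cdot\|g\|\right]=\mE\!\left[|X_1|\right]\cdot\mE\!\left[\|g\|\right].
\]

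Finally I would bound $\mE[\|g\|]\le\sqrt{\mE[\|g\|^2]}=\sqrt{d}$ by Jensen's inequality (using $\mE[\|g\|^2]=d$), and rearrange the displayed identity to obtain $\mE[|X_1|]=\sqrt{2/\pi}\,/\,\mE[\|g\|]\ge\sqrt{2/(\pi d)}$, which is the claim. There is no serious obstacle: the only step meriting a line of justification is the independence of $g/\|g\|$ from $\|g\|$, which is classical and can either be cited or derived directly from the factorization of the Gaussian density in polar form. (As a cross‑check, this argument in fact yields the exact value $\mE[|X_1|]=\Gamma(d/2)/(\sqrt{\pi}\,\Gamma((d+1)/2))$, and the inequality is equivalent to $\Gamma(d/2)/\Gamma((d+1)/2)\ge\sqrt{2/d}$, which also follows from log‑convexity of $\Gamma$ — a useful fallback if one prefers to avoid the probabilistic argument.)
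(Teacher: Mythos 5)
Your argument is correct. Note that the paper itself states this proposition as a standard fact without giving any proof (it is listed alongside \cref{prop:gaussian-to-uniform} and \cref{prop:uniform-sphere-inner-product-squared} as a tool to be cited), so your write-up actually supplies the missing justification rather than paralleling an existing one. The chain you use is sound: rotational invariance reduces the claim to $\mE[|X_1|]$; the polar factorization of the standard Gaussian gives independence of $g/\|g\|$ and $\|g\|$, hence $\sqrt{2/\pi}=\mE[|g_1|]=\mE[|X_1|]\,\mE[\|g\|]$; and Jensen's inequality gives $\mE[\|g\|]\le\sqrt{\mE[\|g\|^2]}=\sqrt{d}$, which rearranges to the stated bound. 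This is also consistent with the paper's \cref{prop:gaussian-to-uniform}, which records exactly the representation $g/\|g\|\sim\mathrm{Unif}(\mS^{d-1})$ that you invoke. Your fallback via the exact value $\mE[|X_1|]=\Gamma(d/2)/(\sqrt{\pi}\,\Gamma((d+1)/2))$ and the estimate $\Gamma((d+1)/2)\le\Gamma(d/2)\sqrt{d/2}$ from log-convexity is likewise valid, and in fact shows the Jensen step is the only source of slack in the bound.
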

\begin{proposition}\label{prop:uniform-sphere-inner-product-squared}
    If $X\sim \mathrm{Unif}(\mS^{d-1})$ and $u$ is any fixed unit vector, then it follows that
    \begin{align*}
        \mE\left[(\langle X,u\rangle)^2\right] =\frac{1}{d}.
    \end{align*}
\end{proposition}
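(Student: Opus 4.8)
The plan is to combine the rotational invariance of the uniform law on the sphere with the elementary identity $\|X\|^2 = 1$.

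First I would show that the map $u \mapsto \mE[\langle X, u\rangle^2]$ is constant on $\mS^{d-1}$. Given two unit vectors $u, v$, choose an orthogonal matrix $Q$ with $Qu = v$. Since $X \sim \mathrm{Unif}(\mS^{d-1})$ is invariant under orthogonal transformations, $Q^\top X$ has the same law as $X$, and therefore $\langle X, v\rangle = \langle X, Qu\rangle = \langle Q^\top X, u\rangle$ has the same distribution as $\langle X, u\rangle$; in particular $\mE[\langle X, v\rangle^2] = \mE[\langle X, u\rangle^2]$. Denote this common value by $c$; it is finite since $\langle X, u\rangle^2 \le \|X\|^2 \|u\|^2 = 1$ almost surely.

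Next, applying this to the standard basis $e_1, \dots, e_d$ and using $\sum_{i=1}^d \langle X, e_i\rangle^2 = \|X\|^2 = 1$ almost surely (because $X$ lies on the unit sphere), I would take expectations and use linearity of expectation to get $\sum_{i=1}^d \mE[\langle X, e_i\rangle^2] = 1$, i.e. $d\,c = 1$, hence $c = 1/d$. Since $\mE[\langle X, u\rangle^2] = c = 1/d$ for every unit vector $u$, this is exactly the claim.

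There is no genuine obstacle here: boundedness of $\langle X, u\rangle^2$ renders every integrability concern vacuous, and the only ingredient beyond linearity of expectation is orthogonal invariance of $\mathrm{Unif}(\mS^{d-1})$, which itself follows from the Gaussian representation $X = G/\|G\|$, $G \sim \cN(0, I)$, recorded in \cref{prop:gaussian-to-uniform}. As an alternative one could argue directly from that representation, reducing the statement to the fact that $G_1^2 / \sum_{i=1}^d G_i^2 \sim \mathrm{Beta}(1/2, (d-1)/2)$ has mean $1/d$, but the symmetrization argument above is shorter and self-contained.
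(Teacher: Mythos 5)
Your argument is correct: orthogonal invariance of $\mathrm{Unif}(\mS^{d-1})$ makes $c=\mE[\langle X,u\rangle^2]$ independent of the unit vector $u$, and summing over an orthonormal basis with $\|X\|^2=1$ almost surely gives $dc=1$, hence $c=1/d$; boundedness disposes of all integrability issues. The paper itself states this proposition without proof, treating it as a standard fact (to be used alongside \cref{prop:gaussian-to-uniform} in \cref{coro:TS-to-max-uncertainty-2nd-order}), so your symmetrization argument simply supplies, correctly and self-containedly, the proof the paper omits; the alternative Beta-distribution route you mention would work too but is unnecessary.
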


\section{Details and Results of experiments}\label{appendix:experiment}

\subsection{Curvature-Aware vs.\ Na\"ive Logistic Exploration}

The purpose of this experiment is to demonstrate that the trivial extension of $\LinTS$ to the logistic case leads to poor behavior relative to $\THATS$. The extension mentioned simply replaces the least-squares method with MLE. This method will essentially focus on growing the “magnitude” of $V_t$. Hence we design an arm set $\{-e_i\}_{i=1}^{d-1}\cup \{0.3\cdot e_d,-0.3\cdot e_d\}$ (the arm set does not change across different rounds) and set $\theta_*=[M,M,\dotsc,1]$. The optimal and second optimal arm are $\pm 0.3 \cdot e_d$ respectively ($\approx 0.57, 0.43$).
Similar to the linear case, we first run $100$ runs on $M=2$ (where the suboptimal means $\approx 0.12$) for $T=1500$ rounds and report the average simple regret and the error of estimating each component of $\theta_*$. Then we vary $M$ in $\{1+0.5\cdot m\}_{m=1}^{18}$ with $d=50$ to see how many rounds are needed to make the simple regret fall below $10^{-4}$. The reason why we choose $10^{-4}$ is that in practice, for example, online advertisement clicking, the clicking probability (recall for Bernoulli it is the mean of the distribution) is usually very small: $\sim 10^{-3}$ (\citet{faury2020improved}) and a gap of $10^{-4}$ is a relative error of $0.1$\;.
\paragraph{Observations and Interpretations} Recall that $\kappa\approx \exp(M)$ as $M$ gets large. 
The optimal arm (after applied an inner product with $\theta_*$) lies in the near-linear, central part of the sigmoid function, i.e., having less curvature, where $\dot\mu(\cdot)$ is close to its max, while the suboptimal arms lie in the flat part ($\dot\mu$ close to zero). 
The reward received for $\pm 0.3 \cdot e_d$ will be quite noisy, because the mean (after taking a dot product with $\theta_*$) is close to $1/2$, and not noisy for the other arms. A clever method should thus pull $\pm 0.3 \cdot e_d$ (who also happen to be the best two arms by design) more often than the others to get sufficiently good estimates for separating the arm values. $\THATS$ indeed does this, while the extended $\LinTS$ method will fail to take this into account due to its nature of growing the magnitude of $V_t$ mentioned before. 

We also plotted the error of estimating the last component of $\theta_*$ for  
$M=2$ and observed that, as expected, the error of $\THATS$ vanishes much faster than that of the adapted $\LinTS$, while the variance across the runs is also much smaller as demonstrated in \cref{fig:M2_last_estimate}. What's more, the estimation of other ``unimportant'' components of $\theta_*$ (the first $d-1$ components) by $\THATS$ does not converge at all while that of $\LinTS$ does. This also provides evidence that $\THATS$ is focusing its exploration on the important directions only, which is exactly what we want, while $\LinTS$ wastes its time on these ``unimportant'' components until it fully rules them out.

As $M$ is varied in $\{1+0.5\cdot m\}_{m=1}^{18}$, for $d=50$, we observe that for a fixed level of suboptimality across $100$ independent runs, the average number of rounds needed for $\THATS$ is much less than that of $\LinTS$ with significantly lower variance as is demonstrated in \cref{fig:logcase}.
\paragraph{Implementation Details} For $\THATS$, we set $\lambda_{\text{log}}=1$ and for $\LinTS$ we set $\lambda_{\text{lin}}=1$, and $S=\|\theta_*\|+1$ for both. In order to reduce computational complexity, we set $\bar\theta_t$ (the constrained MLE) to be the global MLE $\hat\theta_t$ and set $\theta_t'$ to be the projection of $\tilde\theta_t$ to the $\ell_2$-ball of radius $S$.

% -------------------------------------------------------------------
% TODO for [COLLEAGUE]: The following items need to be verified or
% resolved before submission.
%
% (1) LinTS EXTENSION NOT DEFINED. The text says it ``simply replaces
%     least-squares with MLE'' but does not specify: (a) what design
%     matrix is used (the logistic Hessian? the identity?), (b) what
%     covariance is sampled from (inverse Hessian? V_t^{-1}?), and (c)
%     whether the action-selection rule is otherwise unchanged. A reader
%     cannot re-implement this baseline without these details.
%
% (2) d FOR THE M-SWEEP. Setup says d=50 for the sweep and the
%     figure caption says d=50 -- please confirm this is correct and
%     consistent with what was actually run.
%
% (3) RADIUS S IN THE SWEEP. S=||theta_*||+1 depends on M, which
%     varies across the sweep. Please confirm S was recomputed per
%     instance, or state the fixed value used.
% -------------------------------------------------------------------
\begin{figure*}[t]
    \centering % Center the subfigures
    % First subfigure
    \begin{subfigure}[b]{0.48\textwidth}
        \centering
        \includegraphics[width=\linewidth]{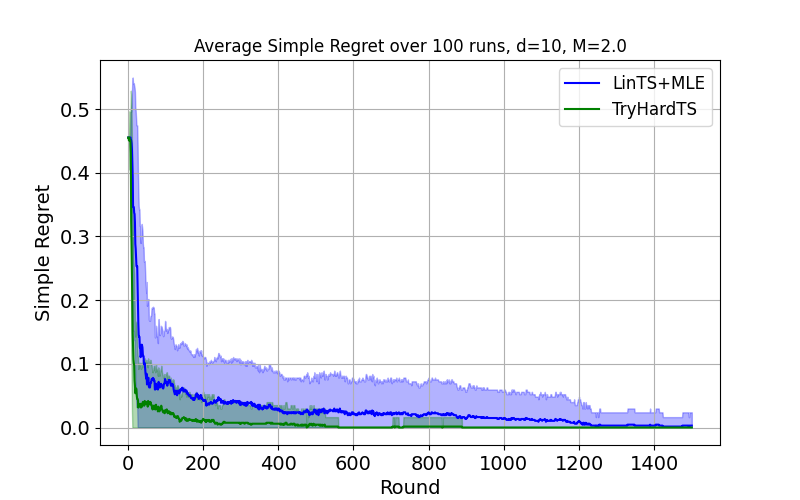}
        % \caption{Average expected regret $d=8,K=256$.}
        % \label{fig:sub1}
    \end{subfigure}
    \hfill % This command adds horizontal space between the images
    % Second subfigure
    \begin{subfigure}[b]{0.48\textwidth}
        \centering
        \includegraphics[width=\linewidth]{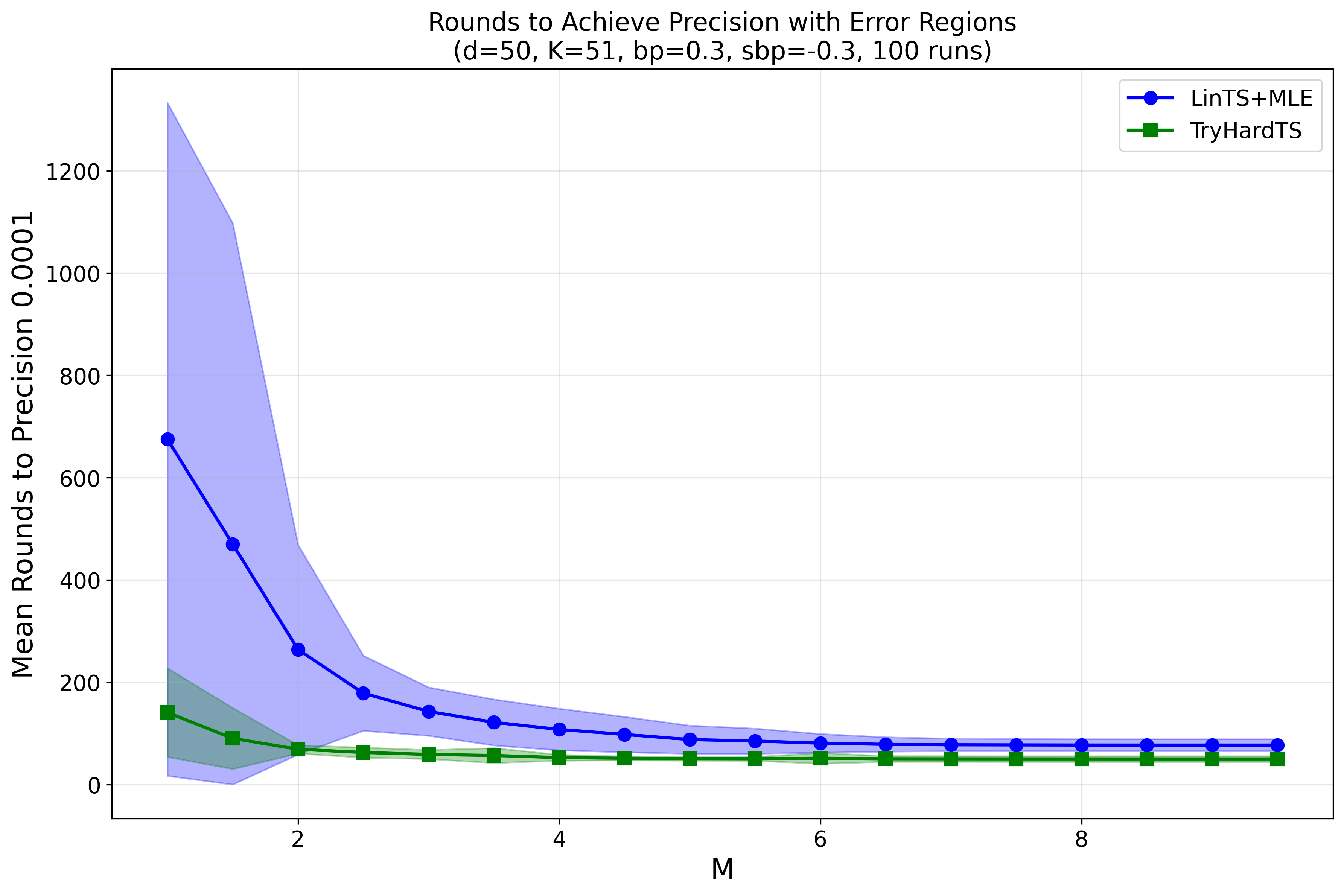}
        % \caption{This is the second sub-caption.}
        % \label{fig:sub2}
    \end{subfigure}
    % The main caption for the whole figure
    \caption{Left: Logistic case where $M=2$. Average simple regret vs. number of rounds for $d=10, T=1500, M=2$. Right: Number of rounds needed to make the simple regret fall below $10^{-4}$ vs. $M$ for $d=50$.}
    \label{fig:logcase}
\end{figure*}
\begin{figure*}[t]
    \centering % Center the subfigures
    % First subfigure
    \begin{subfigure}[b]{0.48\textwidth}
        \centering
        \includegraphics[width=\linewidth]{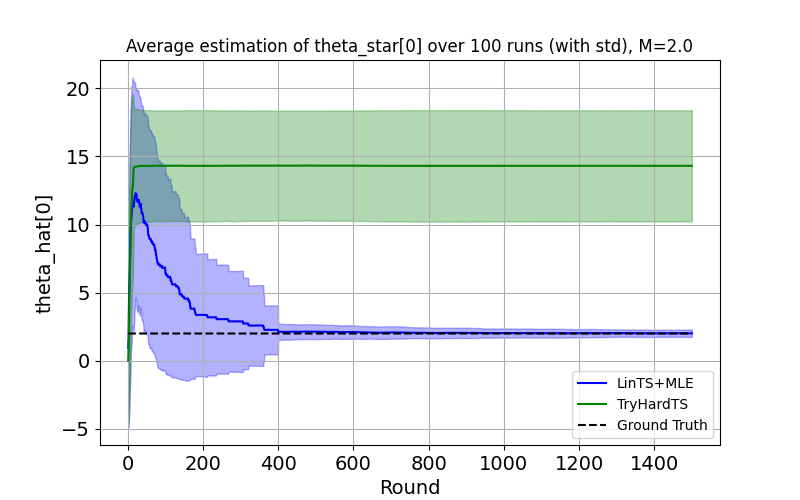}
        % \caption{Average expected regret $d=8,K=256$.}
        % \label{fig:sub1}
    \end{subfigure}
    \hfill % This command adds horizontal space between the images
    % Second subfigure
    \begin{subfigure}[b]{0.48\textwidth}
        \centering
        \includegraphics[width=\linewidth]{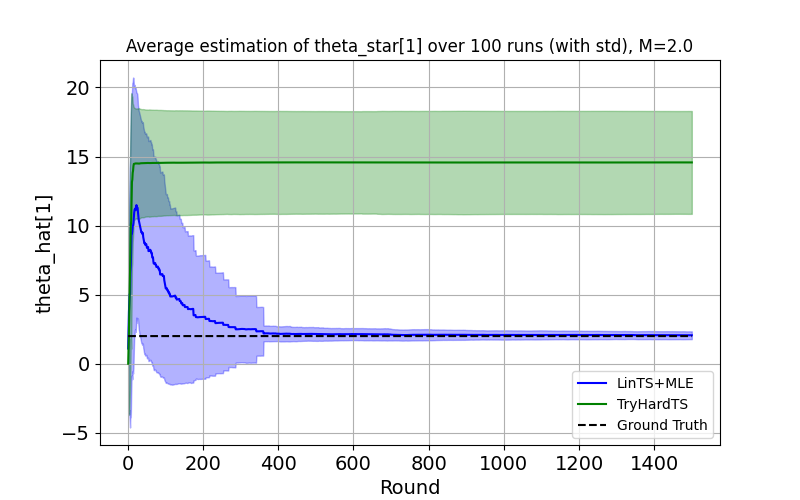}
        % \caption{This is the second sub-caption.}
        % \label{fig:sub2}
    \end{subfigure}
    \caption{The estimation of "unimportant" components of $\theta_*$ for $M=2$. Left: $\theta_*[0]$, Right: $\theta_*[1]$. }
    % The main caption for the whole figure
    % \caption{The noise is $0$-mean Gaussian with std. dev. $\|\theta_*\|$. Left: Average simple regret vs. number of rounds $T$ for $d=8,K=128$. Right: Number of rounds needed to make the simple regret fall below $0.1$ vs. $K$.}
    % \label{fig:M2}
\end{figure*}
\begin{figure*}[t]
    \centering % Center the subfigures
    % First subfigure
    \begin{subfigure}[b]{0.48\textwidth}
        \centering
        \includegraphics[width=\linewidth]{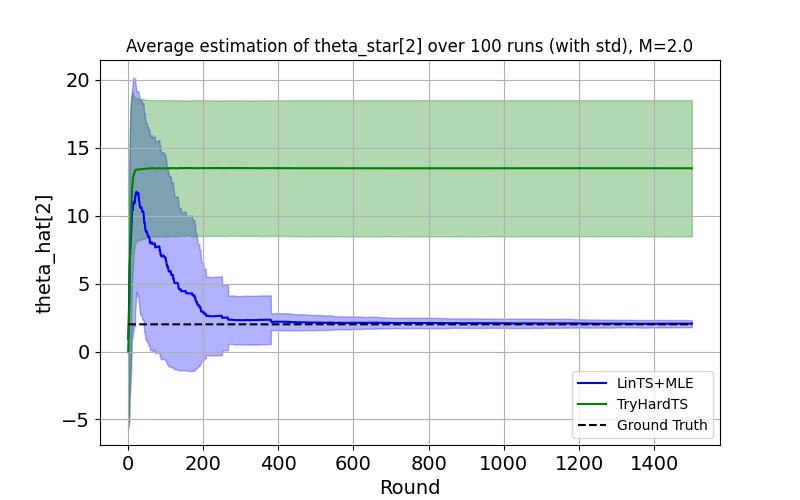}
        % \caption{Average expected regret $d=8,K=256$.}
        % \label{fig:sub1}
    \end{subfigure}
    \hfill % This command adds horizontal space between the images
    % Second subfigure
    \begin{subfigure}[b]{0.48\textwidth}
        \centering
        \includegraphics[width=\linewidth]{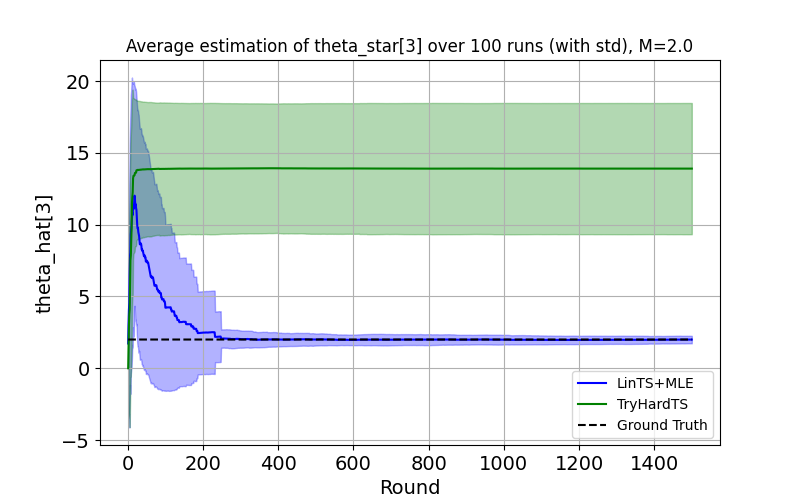}
        % \caption{This is the second sub-caption.}
        % \label{fig:sub2}
    \end{subfigure}
    \caption{The estimation of "unimportant" components of $\theta_*$ for $M=2$. Left: $\theta_*[2]$, Right: $\theta_*[3]$. }
    % The main caption for the whole figure
    % \caption{The noise is $0$-mean Gaussian with std. dev. $\|\theta_*\|$. Left: Average simple regret vs. number of rounds $T$ for $d=8,K=128$. Right: Number of rounds needed to make the simple regret fall below $0.1$ vs. $K$.}
    % \label{fig:M2}
\end{figure*}
\begin{figure*}[t]
    \centering % Center the subfigures
    % First subfigure
    \begin{subfigure}[b]{0.48\textwidth}
        \centering
        \includegraphics[width=\linewidth]{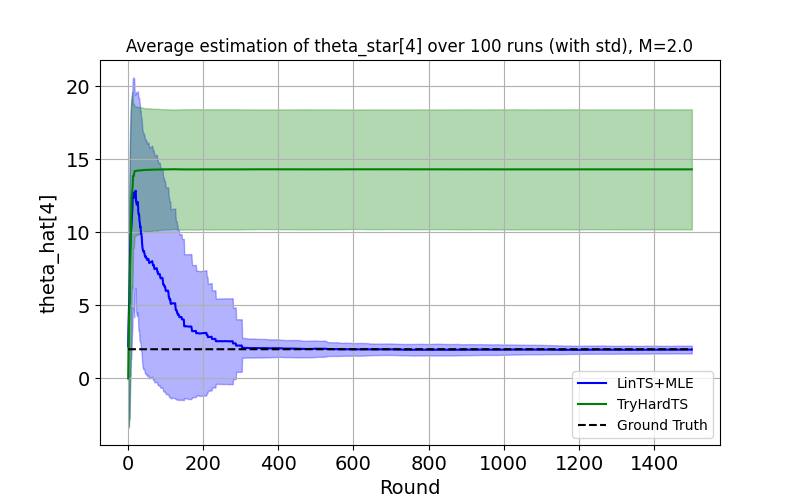}
        % \caption{Average expected regret $d=8,K=256$.}
        % \label{fig:sub1}
    \end{subfigure}
    \hfill % This command adds horizontal space between the images
    % Second subfigure
    \begin{subfigure}[b]{0.48\textwidth}
        \centering
        \includegraphics[width=\linewidth]{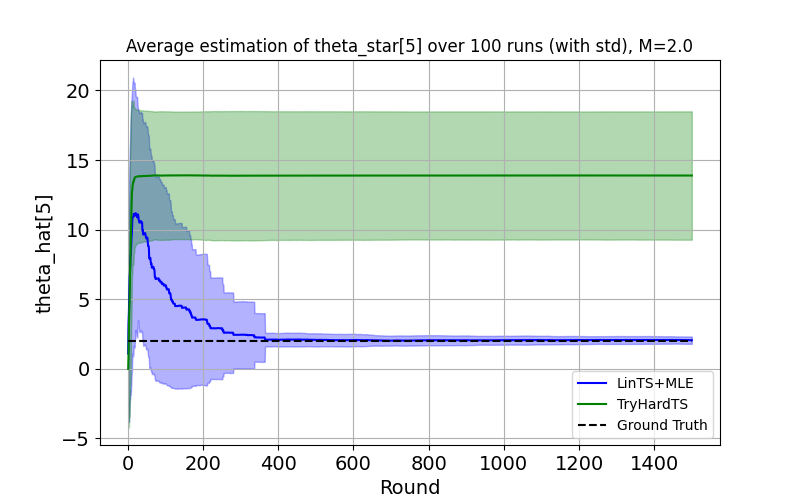}
        % \caption{This is the second sub-caption.}
        % \label{fig:sub2}
    \end{subfigure}
    \caption{The estimation of "unimportant" components of $\theta_*$ for $M=2$. Left: $\theta_*[4]$, Right: $\theta_*[5]$. }
    % The main caption for the whole figure
    % \caption{The noise is $0$-mean Gaussian with std. dev. $\|\theta_*\|$. Left: Average simple regret vs. number of rounds $T$ for $d=8,K=128$. Right: Number of rounds needed to make the simple regret fall below $0.1$ vs. $K$.}
    % \label{fig:M2}
\end{figure*}
\begin{figure*}[t]
    \centering % Center the subfigures
    % First subfigure
    \begin{subfigure}[b]{0.48\textwidth}
        \centering
        \includegraphics[width=\linewidth]{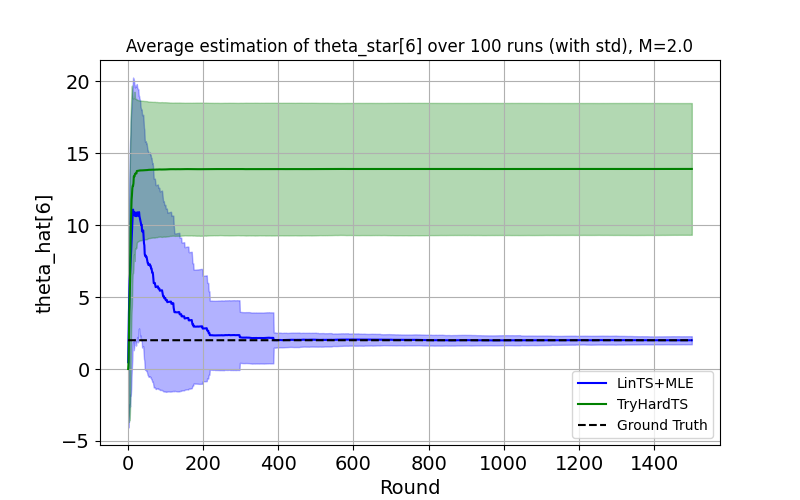}
        % \caption{Average expected regret $d=8,K=256$.}
        % \label{fig:sub1}
    \end{subfigure}
    \hfill % This command adds horizontal space between the images
    % Second subfigure
    \begin{subfigure}[b]{0.48\textwidth}
        \centering
        \includegraphics[width=\linewidth]{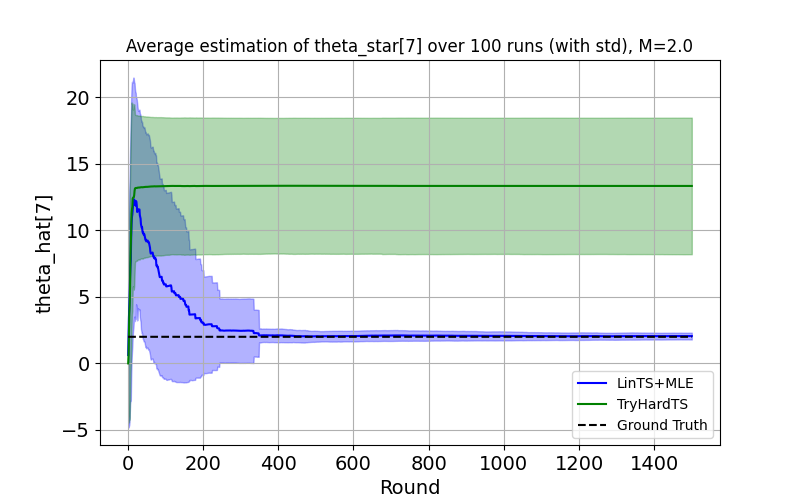}
        % \caption{This is the second sub-caption.}
        % \label{fig:sub2}
    \end{subfigure}
    % The main caption for the whole figure
    % \caption{The noise is $0$-mean Gaussian with std. dev. $\|\theta_*\|$. Left: Average simple regret vs. number of rounds $T$ for $d=8,K=128$. Right: Number of rounds needed to make the simple regret fall below $0.1$ vs. $K$.}
    % \label{fig:M2}
    \caption{The estimation of "unimportant" components of $\theta_*$ for $M=2$. Left: $\theta_*[6]$, Right: $\theta_*[7]$. }
\end{figure*}
\begin{figure*}[t]
    \centering % Center the subfigures
    % First subfigure
    \begin{subfigure}[b]{0.48\textwidth}
        \centering
        \includegraphics[width=\linewidth]{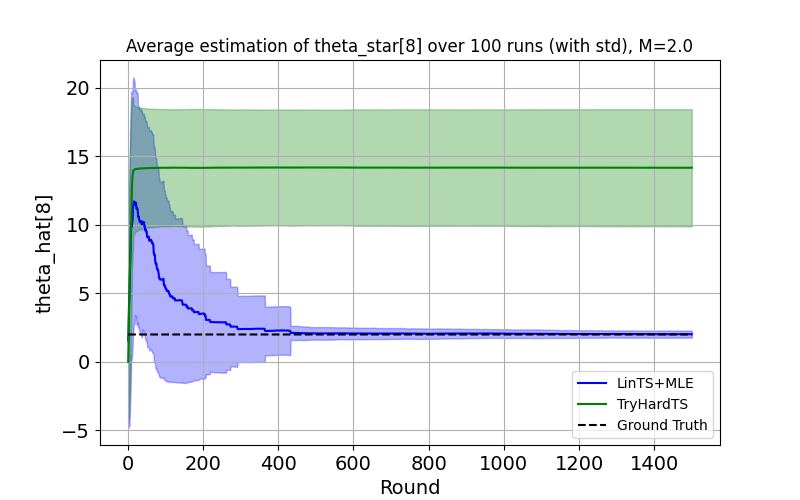}
        % \caption{Average expected regret $d=8,K=256$.}
        % \label{fig:sub1}
    \end{subfigure}
    \hfill % This command adds horizontal space between the images
    % Second subfigure
    \begin{subfigure}[b]{0.48\textwidth}
        \centering
        \includegraphics[width=\linewidth]{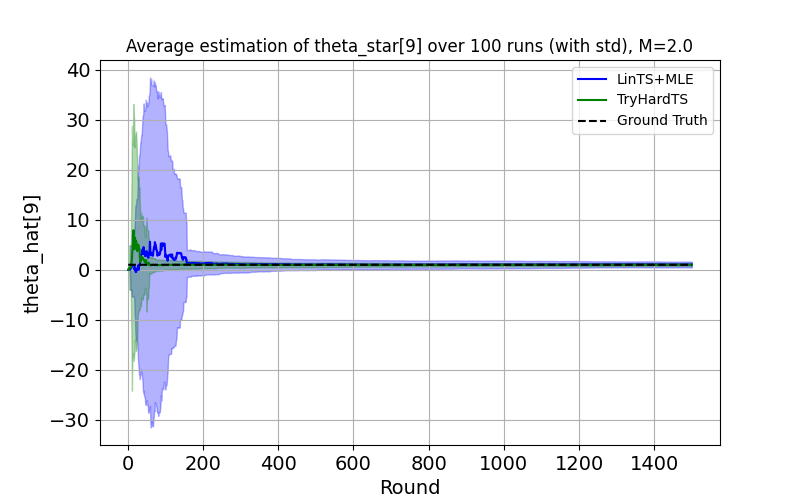}
        % \caption{This is the second sub-caption.}
        % \label{fig:sub2}
        
    \end{subfigure}
    % The main caption for the whole figure
    \caption{The estimation of components of $\theta_*$ for $M=2$. Left: $\theta_*[8]$, Right: $\theta_*[9]$ (the "important" component). }
    \label{fig:M2_last_estimate}
\end{figure*}
\newpage

\subsection{Easy-Geometry Experiment}\label{appendix:easy-geometry-experiment}
\paragraph{Algorithms and hypotheses}
We run $\MULog$, $\THATS$, cumulative Thompson sampling (with online-to-batch conversion), adaECOLog-OFU, adaECOLog-TS \citep{faury2022jointly}, and uniform exploration on both $\cA_{\mathrm{hard}}$ and $\cA_{\mathrm{easy}}$. Three hypotheses are tested. First, adding informative low-reward probe arms should make the instance easier for pure-exploration algorithms. Second, this improvement should come from sampling the probe arms, not merely from changing the recommendation set — we track the fraction of pulls allocated to $\{\pm e_2,\ldots,\pm e_d\}$. Third, the improvement should be stronger for curvature-aware methods than for cumulative-regret baselines. We also note that cumulative-regret baselines can appear worse on $\cA_{\mathrm{easy}}$ than on $\cA_{\mathrm{hard}}$: these methods face a no-win situation with probe arms — using them provides useful information early but penalizes the uniform online-to-batch average over all $T$ past actions, while avoiding them forgoes the information gain entirely; pure-exploration methods do not face this dilemma because they track probe-arm sampling separately from the final recommendation.

\paragraph{Instance construction}
We use a finite-arm version of the shifted saturated hypercube construction. Fix $d\ge 2$, a saturation level $m>0$, and a small perturbation parameter $\varepsilon>0$. For a sign vector $v\in\{-1,1\}^{d-1}$, define
\[
    \theta_v
    =
    \sqrt 2
    \left(
        m-\varepsilon,
        \frac{\varepsilon}{\sqrt{d-1}}v
    \right).
\]
The candidate recommendation arms are
\[
    a_u
    =
    \frac{1}{\sqrt 2}
    \left(
        1,
        \frac{u}{\sqrt{d-1}}
    \right),
    \qquad
    u\in\{-1,1\}^{d-1}.
\]
For the instance indexed by $v$, the unique optimal candidate arm is $a_v$. Moreover, for every candidate arm $a_u$, the logit $a_u^\top\theta_v$ lies near $m$. Thus, when $m$ is large, all candidate arms are in a saturated region of the sigmoid and their rewards carry relatively little information.

We compare two action sets. The hard action set is
\[
    \cA_{\mathrm{hard}}
    =
    \{a_u:u\in\{-1,1\}^{d-1}\}.
\]
The easy-probe action set adds statistically informative but low-reward probe arms:
\[
    \cA_{\mathrm{easy}}
    =
    \cA_{\mathrm{hard}}
    \cup
    \{\pm e_2,\ldots,\pm e_d\}.
\]
The probe arms have zero first coordinate, and hence their logits under $\theta_v$ are approximately
\[
    \pm \frac{\sqrt 2\varepsilon}{\sqrt{d-1}}v_i.
\]
These logits are close to the center of the sigmoid, so the corresponding Bernoulli rewards have high variance and provide useful information about the signs of $v$. At the same time, these arms have much smaller expected reward than the candidate arms when $m$ is large, so they are unattractive from the cumulative-regret perspective.

\paragraph{Measurements}
For each algorithm and action set, we measure the average simple regret as a function of the exploration budget $T$. We also record the number of rounds required to reach a fixed simple-regret threshold. To make the comparison across saturation levels meaningful, we report both the raw simple regret and the normalized simple regret divided by $\sigma_*=\dot\mu(a_v^\top\theta_v)$, the reward variance at the optimal arm.

In the easy-probe instance, we additionally report the fraction of samples allocated to probe arms,
\[
    \frac{1}{T}\sum_{t=1}^T \II\{A_t\in\{\pm e_2,\ldots,\pm e_d\}\}.
\]
This directly tests whether an algorithm is using the informative low-reward actions. We report this quantity separately for $\MULog$, $\THATS$, cumulative Thompson sampling, and uniform exploration. We also track the sign-recovery error
\[
    \frac{1}{d-1}
    \sum_{i=1}^{d-1}
    \II\{\hat v_i\neq v_i\},
\]
where $\hat v$ is the sign vector associated with the final recommended candidate arm. This diagnostic separates the statistical task of learning the correct direction from the final simple-regret value.

\paragraph{Experimental parameters}
We use a single environment rather than a sweep. We set $d=8$, $m=4$, $\varepsilon=1$, and $v=(1,\ldots,1)$. Fixing $v$ loses no generality for this construction, since all sign vectors are equivalent up to coordinate sign flips of the action set. It also removes unnecessary environment randomness and makes the hard/easy comparisons cleaner.

These choices give $|\cA_{\mathrm{hard}}|=2^{7}=128$ and $|\cA_{\mathrm{easy}}|=142$, so the finite-arm optimizations remain manageable and the probe arms are only a small fraction of the easy action set. Candidate arms have logits in $[m-2\varepsilon,m]=[2,4]$, while the optimal-arm variance is $\sigma_*=\dot\mu(4)\approx 1.77\cdot 10^{-2}$. With these values, a candidate arm with one sign mistake has logit $4-2/7$ and simple regret approximately $5.8\cdot 10^{-3}$, while a candidate arm with two sign mistakes has logit $4-4/7$ and simple regret approximately $1.36\cdot 10^{-2}$. Thus the proposed thresholds below test whether the final recommendation is nearly optimal, roughly allowing one sign mistake but not two, rather than requiring exact best-arm identification. Probe-arm logits have magnitude $\sqrt{2/7}\approx 0.535$, so they lie near the high-curvature part of the sigmoid. At the same time, their mean rewards are about $0.63$ and $0.37$, much smaller than the candidate-arm rewards, which is what should make them unattractive to cumulative-regret algorithms.

The raw simple-regret threshold is $10^{-2}$ and the normalized threshold is $0.5$ (giving $0.5\sigma_*\approx 8.8\cdot 10^{-3}$); both thresholds lie between the regret incurred by one and by two sign mistakes, so they test whether the final recommendation has at most one wrong sign rather than exact best-arm identification. We ran $100$ independent runs with $T_{\max}=10000$ rounds. All algorithms use $\lambda=1$ and $S=\|\theta_v\|+1$. Both $\THATS$ and $\MULog$ use the MLE-centered curvature proxy: $\bar\theta_t$ is set to the global regularized MLE $\hat\theta_t$, sampled parameters are projected to the $\ell_2$-ball of radius $S$, and $L_t$ is built from this MLE-centered approximation rather than from exact confidence-set intersections (so neither algorithm depends on a $\delta$ parameter in practice). Cumulative Thompson sampling samples $\tilde\theta_t$ from the Gaussian approximation $\mathcal{N}(\hat\theta_t, H_t^{-1}(\hat\theta_t))$, plays the arm with largest sampled logit, and returns a recommendation by sampling uniformly from the $T$ past actions and taking the one with highest logit under the final MLE. The adaECOLog-OFU and adaECOLog-TS baselines use the implementation of \citet{faury2022jointly} with $\delta=0.05$ and also return a recommendation by sampling uniformly from the $T$ past actions (online-to-batch conversion).
\begin{figure}[t]
    \centering

    \begin{subfigure}[t]{0.48\textwidth}
        \centering
        \includegraphics[width=\linewidth]{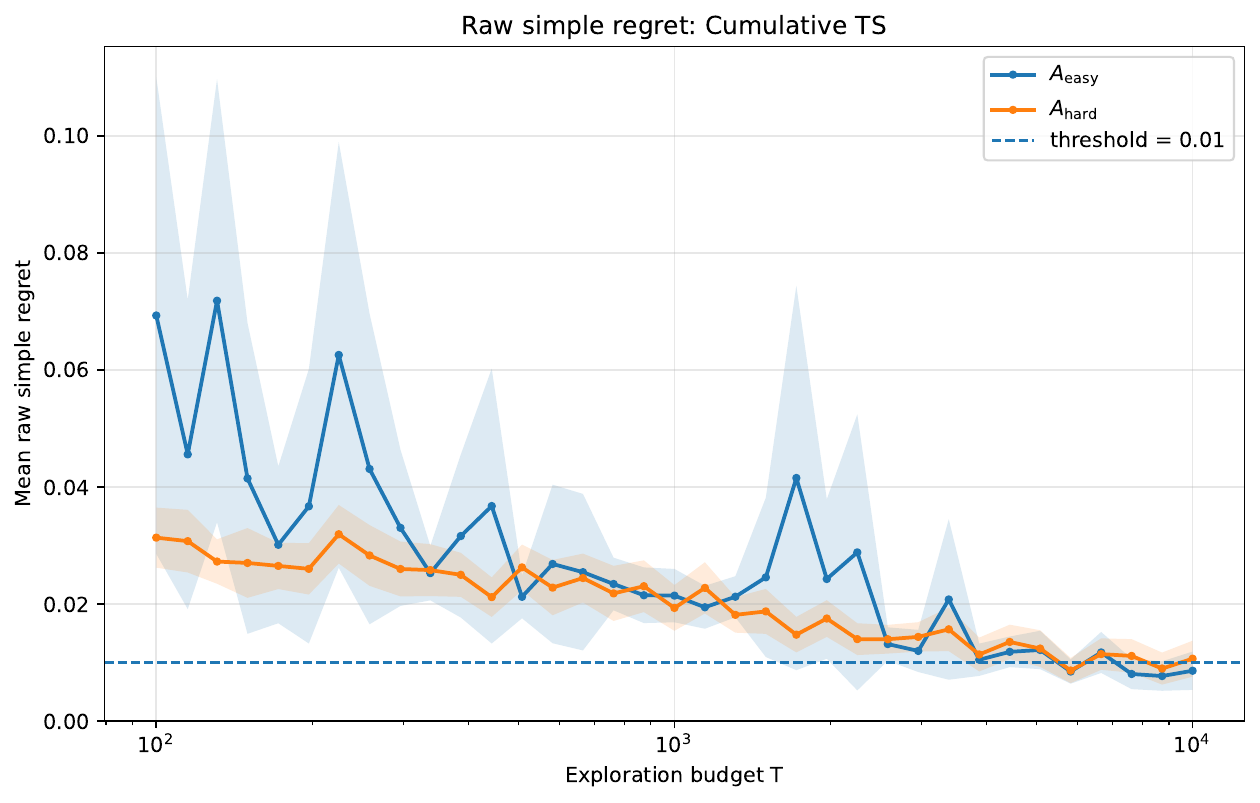}
        \caption{Raw regret}
        \label{fig:cum_ts_raw_regret}
    \end{subfigure}
    \hfill
    \begin{subfigure}[t]{0.48\textwidth}
        \centering
        \includegraphics[width=\linewidth]{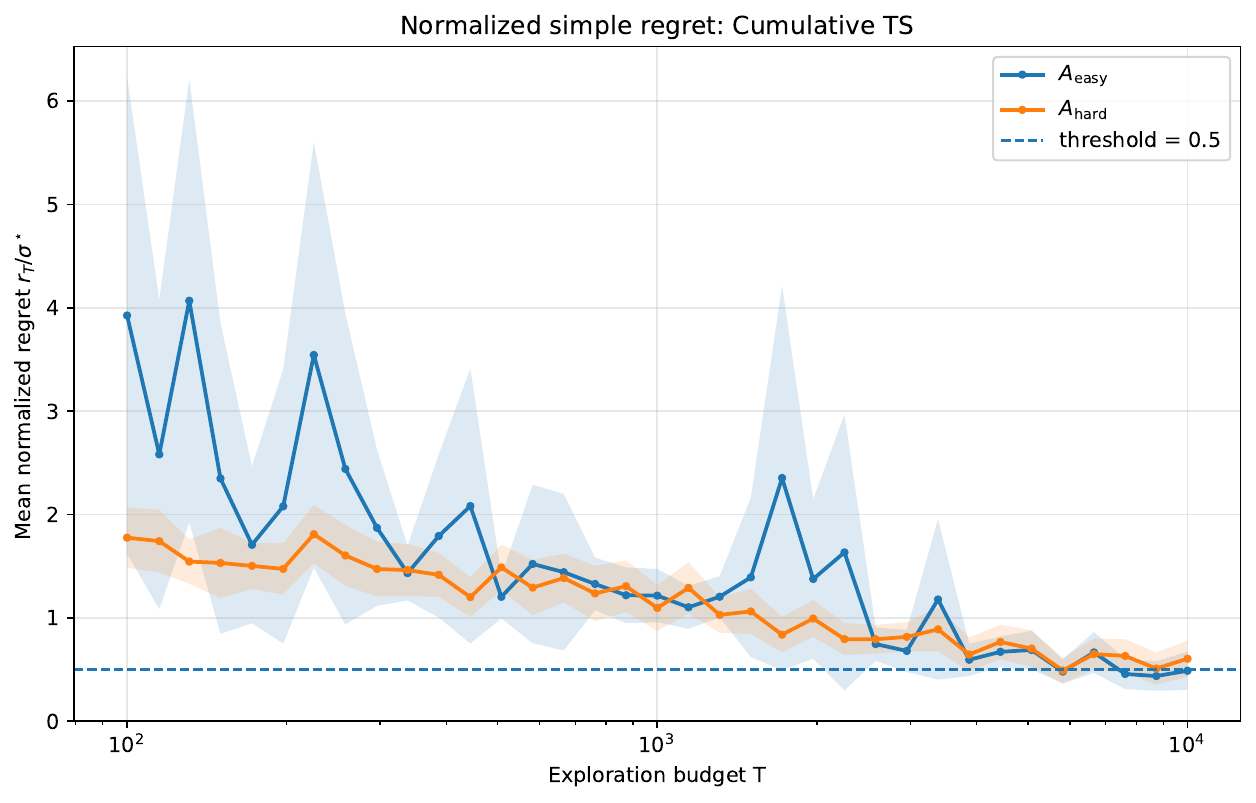}
        \caption{Normalized regret}
        \label{fig:cum_ts_normalized_regret}
    \end{subfigure}

    \vspace{0.75em}

    \begin{subfigure}[t]{0.48\textwidth}
        \centering
        \includegraphics[width=\linewidth]{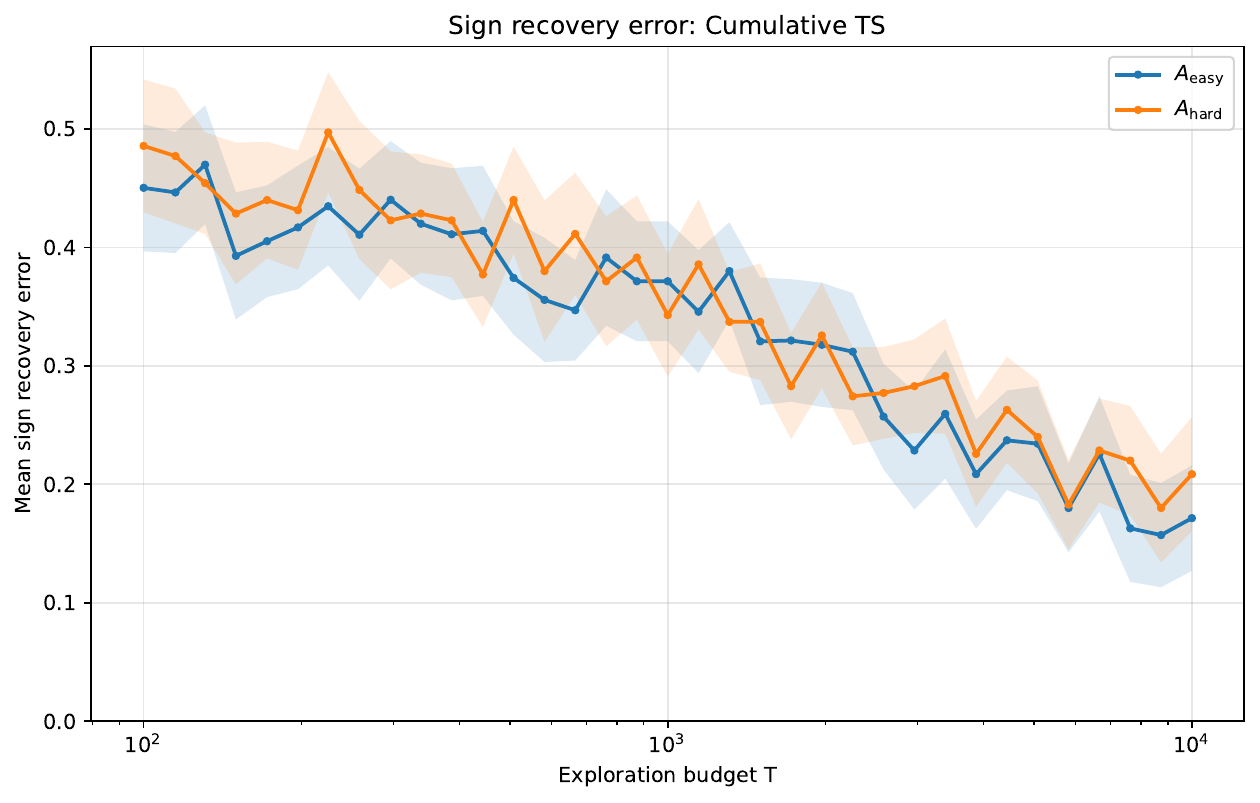}
        \caption{Sign recovery error}
        \label{fig:cum_ts_sign_recovery_error}
    \end{subfigure}
    \hfill
    \begin{subfigure}[t]{0.48\textwidth}
        \centering
        \includegraphics[width=\linewidth]{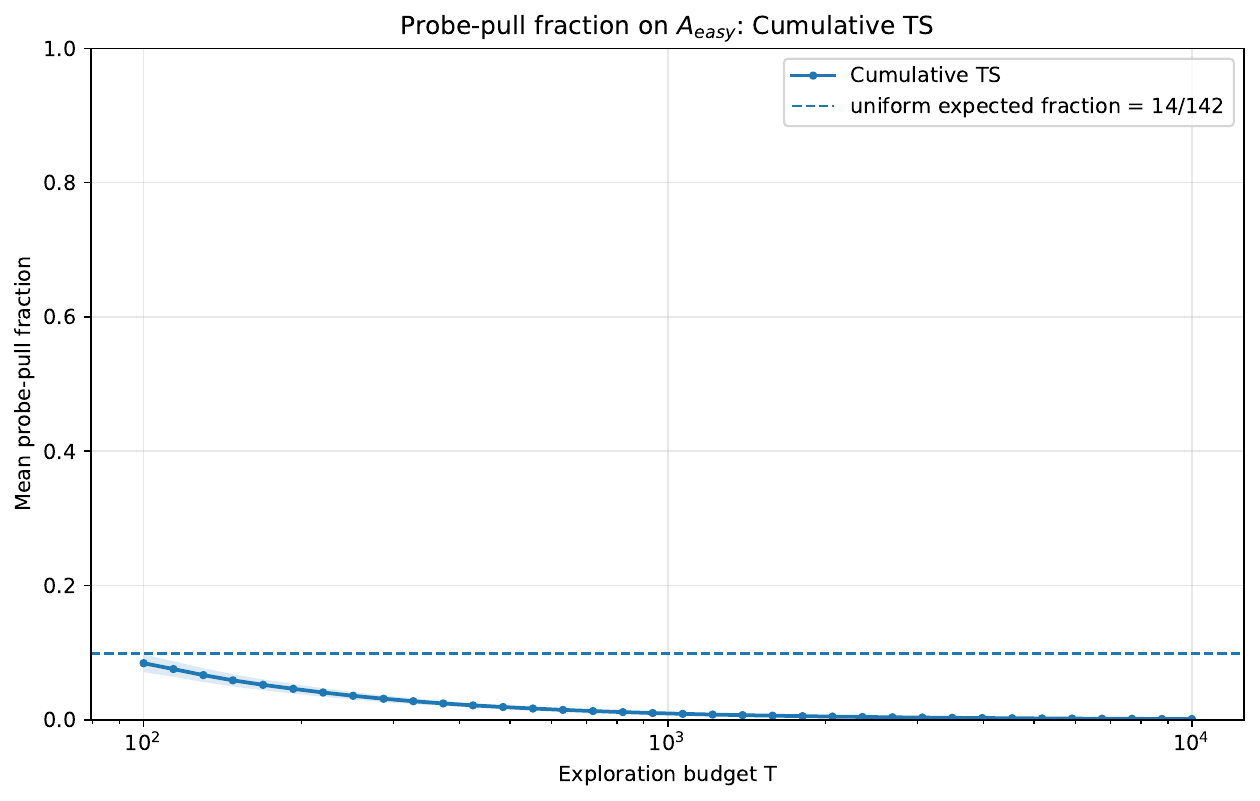}
        \caption{Probe fraction}
        \label{fig:cum_ts_probe_fraction}
    \end{subfigure}

    \vspace{0.75em}

    \begin{subfigure}[t]{0.48\textwidth}
        \centering
        \includegraphics[width=\linewidth]{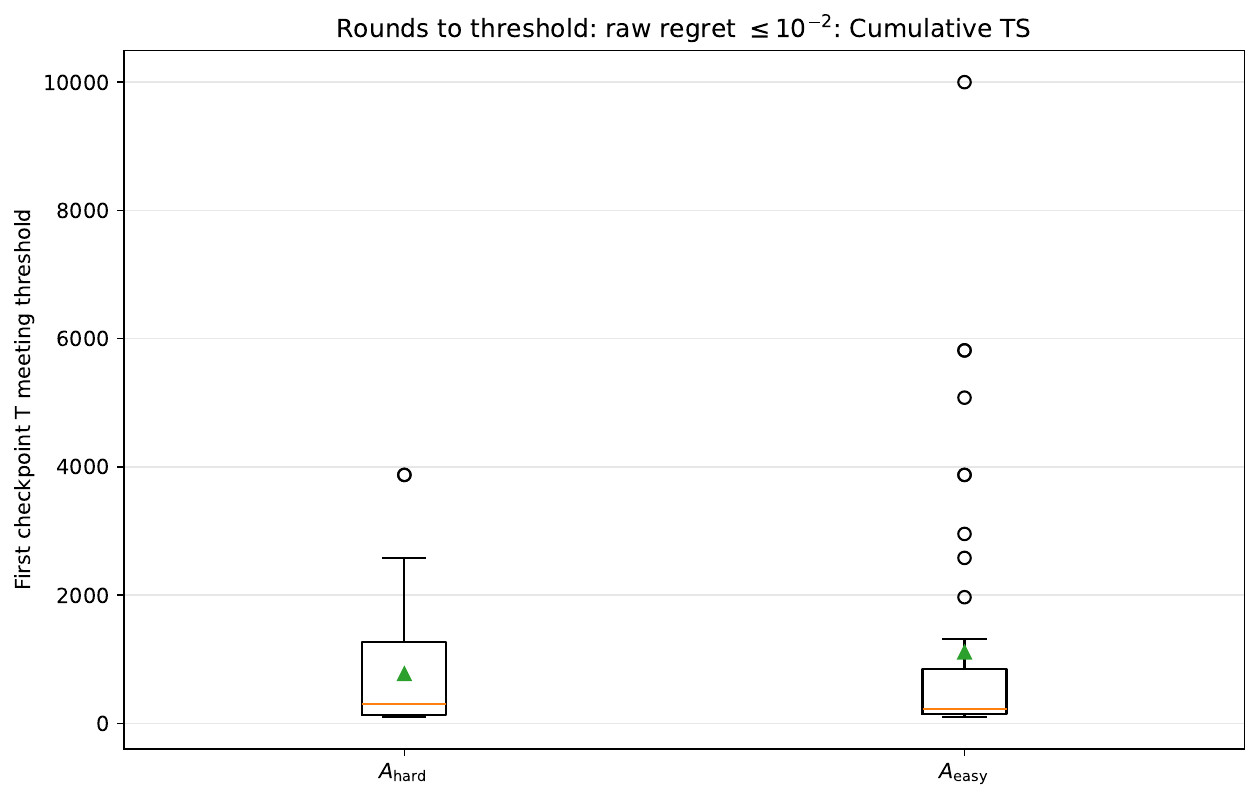}
        \caption{Rounds to raw threshold}
        \label{fig:cum_ts_rounds_to_raw_threshold}
    \end{subfigure}
    \hfill
    \begin{subfigure}[t]{0.48\textwidth}
        \centering
        \includegraphics[width=\linewidth]{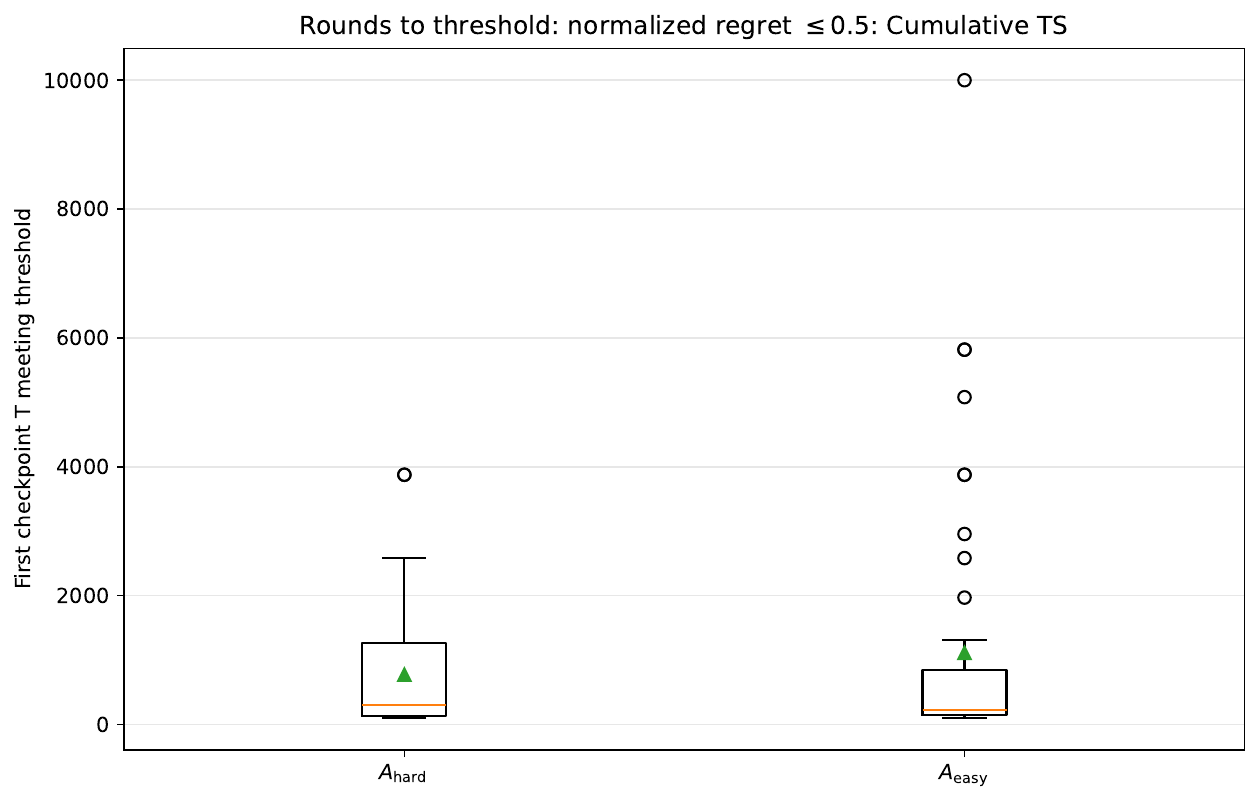}
        \caption{Rounds to normalized threshold}
        \label{fig:cum_ts_rounds_to_normalized_threshold}
    \end{subfigure}

    \caption{Metrics for cumulative-regret Thompson sampling.}
    \label{fig:cum-ts-metrics}
\end{figure}

\begin{figure}[t]
    \centering

    \begin{subfigure}[t]{0.48\textwidth}
        \centering
        \includegraphics[width=\linewidth]{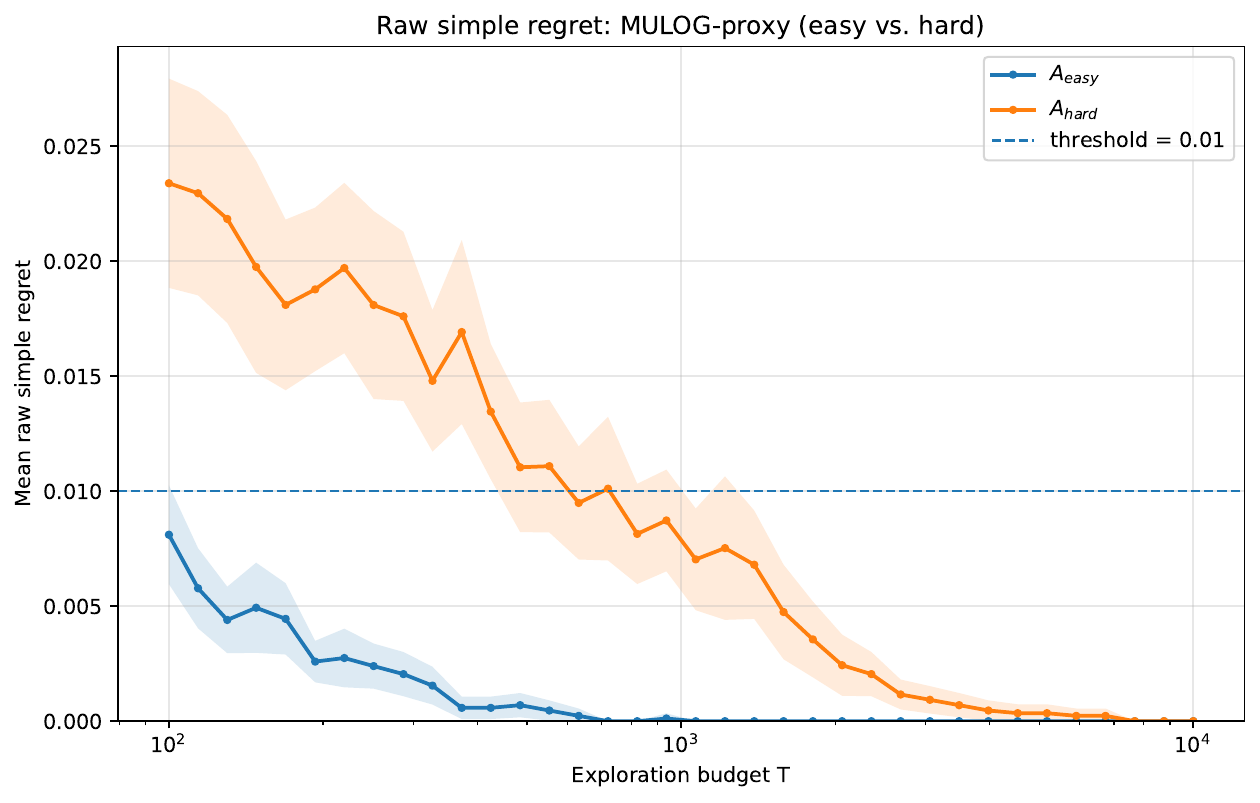}
        \caption{Raw regret}
        \label{fig:mulog_proxy_raw_regret}
    \end{subfigure}
    \hfill
    \begin{subfigure}[t]{0.48\textwidth}
        \centering
        \includegraphics[width=\linewidth]{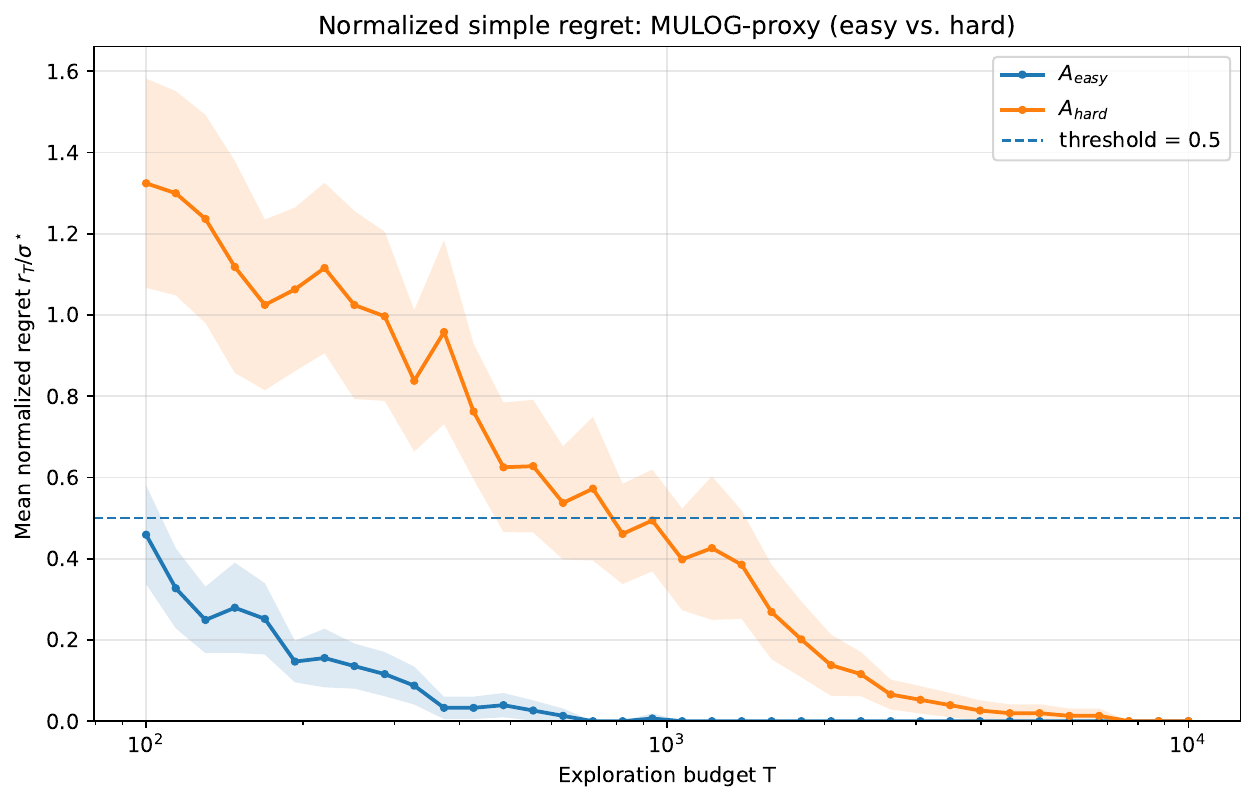}
        \caption{Normalized regret}
        \label{fig:mulog_proxy_normalized_regret}
    \end{subfigure}

    \vspace{0.75em}

    \begin{subfigure}[t]{0.48\textwidth}
        \centering
        \includegraphics[width=\linewidth]{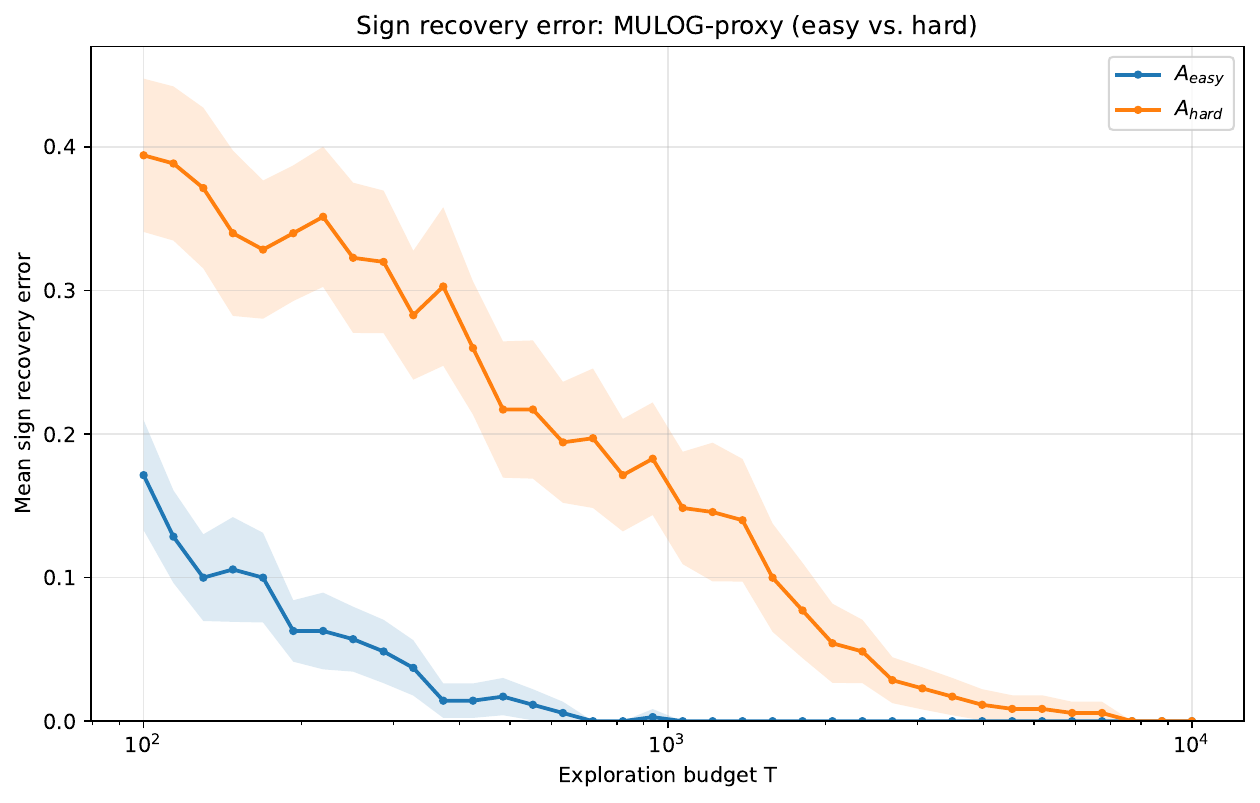}
        \caption{Sign recovery error}
        \label{fig:mulog_proxy_sign_recovery_error}
    \end{subfigure}
    \hfill
    \begin{subfigure}[t]{0.48\textwidth}
        \centering
        \includegraphics[width=\linewidth]{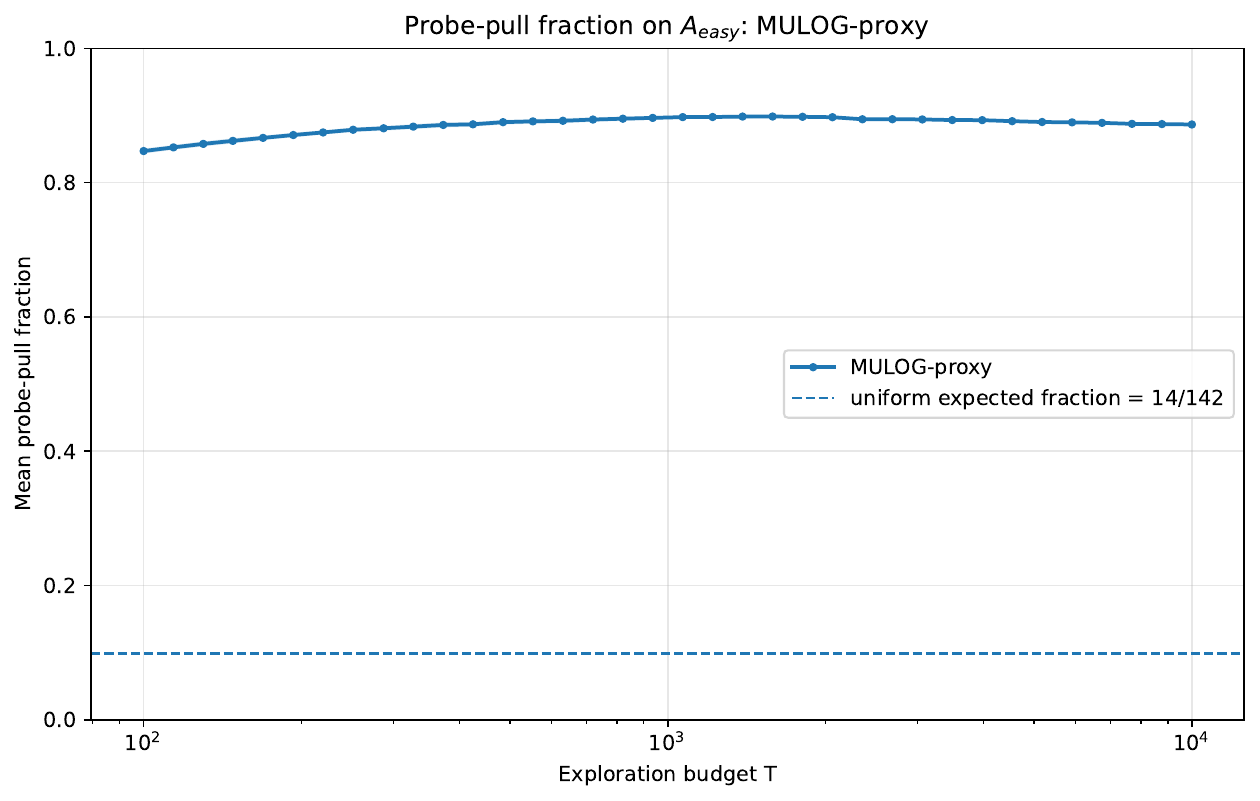}
        \caption{Probe fraction}
        \label{fig:mulog_proxy_probe_fraction}
    \end{subfigure}

    \vspace{0.75em}

    \begin{subfigure}[t]{0.48\textwidth}
        \centering
        \includegraphics[width=\linewidth]{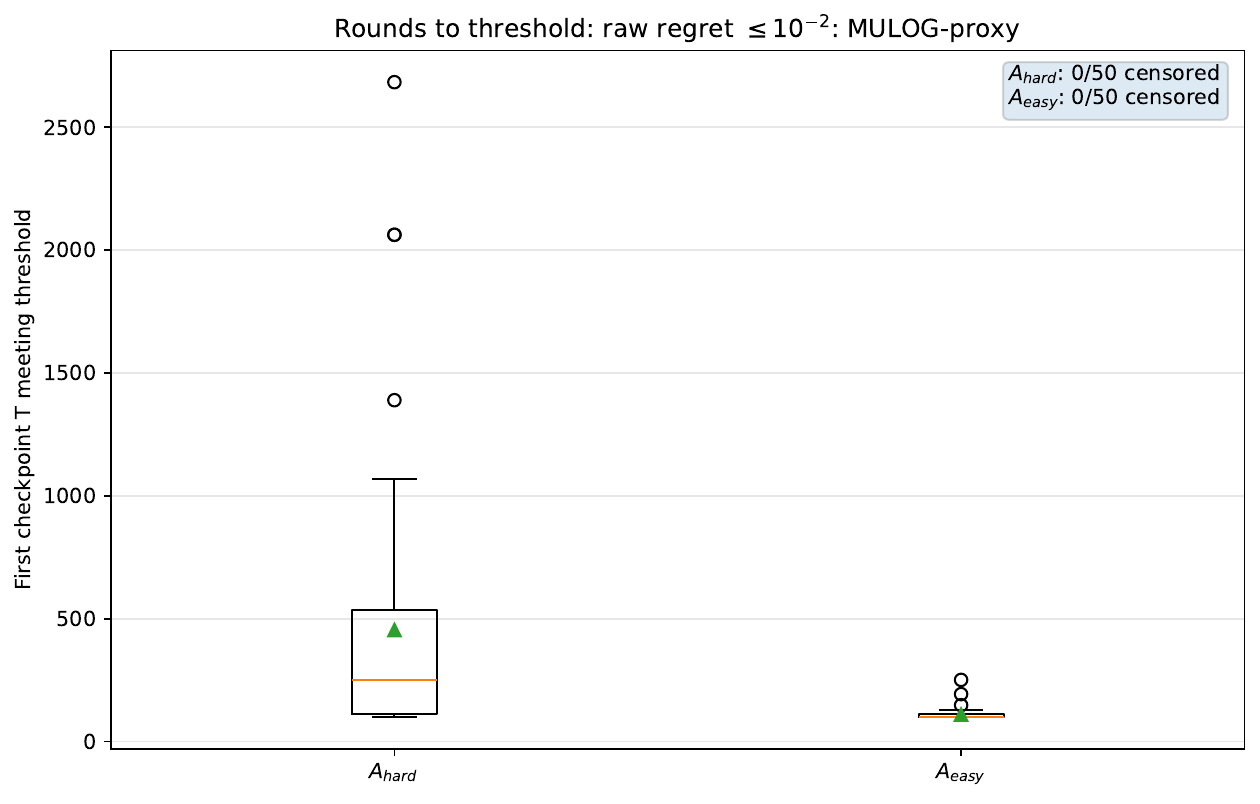}
        \caption{Rounds to raw threshold}
        \label{fig:mulog_proxy_rounds_to_raw_threshold}
    \end{subfigure}
    \hfill
    \begin{subfigure}[t]{0.48\textwidth}
        \centering
        \includegraphics[width=\linewidth]{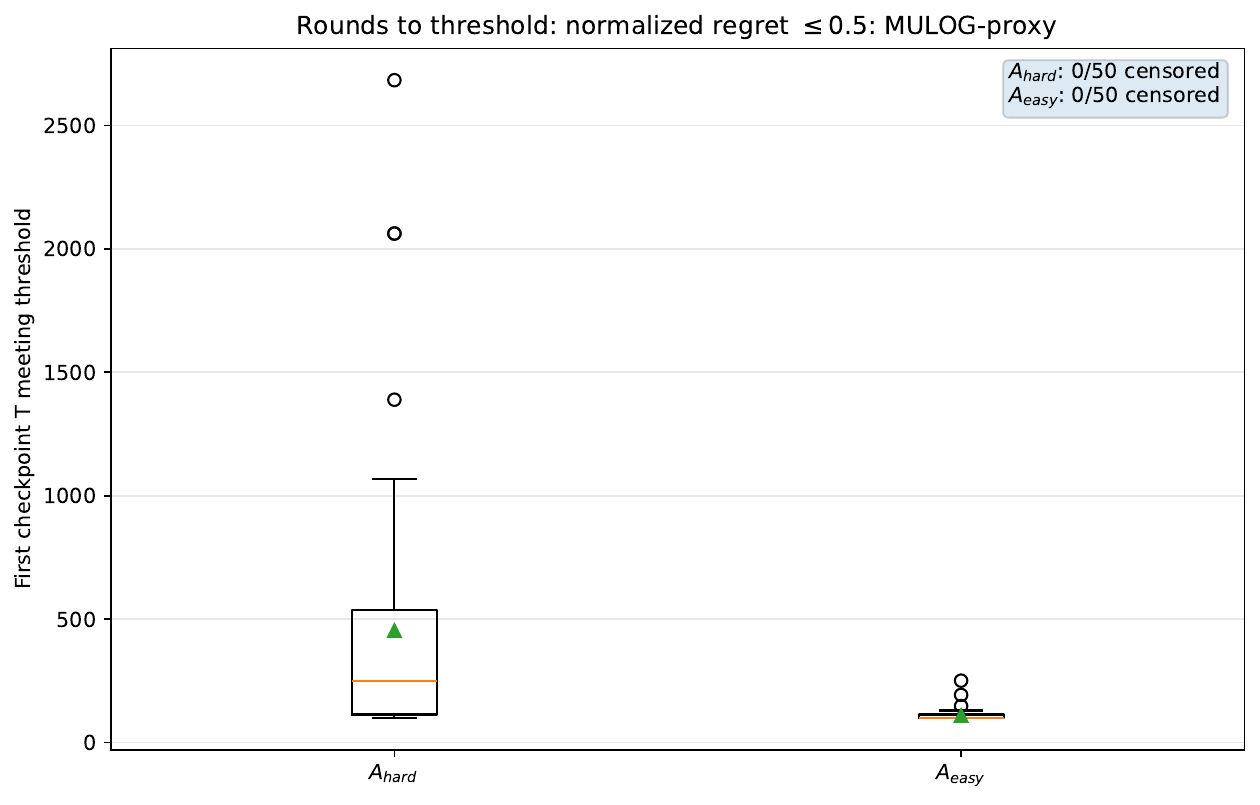}
        \caption{Rounds to normalized threshold}
        \label{fig:mulog_proxy_rounds_to_normalized_threshold}
    \end{subfigure}

    \caption{Metrics for the MULog proxy algorithm.}
    \label{fig:mulog-proxy-metrics}
\end{figure}

\begin{figure}[t]
    \centering

    \begin{subfigure}[t]{0.48\textwidth}
        \centering
        \includegraphics[width=\linewidth]{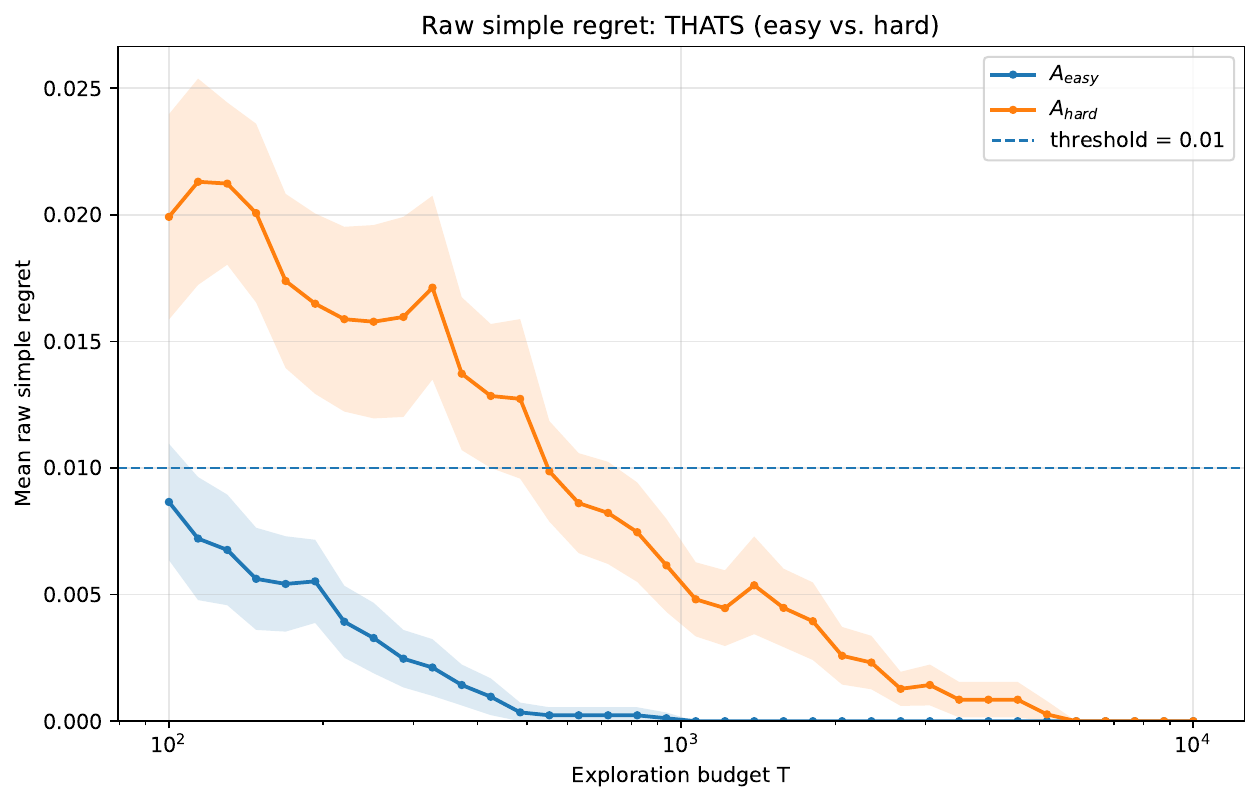}
        \caption{Raw regret}
        \label{fig:thats_raw_regret}
    \end{subfigure}
    \hfill
    \begin{subfigure}[t]{0.48\textwidth}
        \centering
        \includegraphics[width=\linewidth]{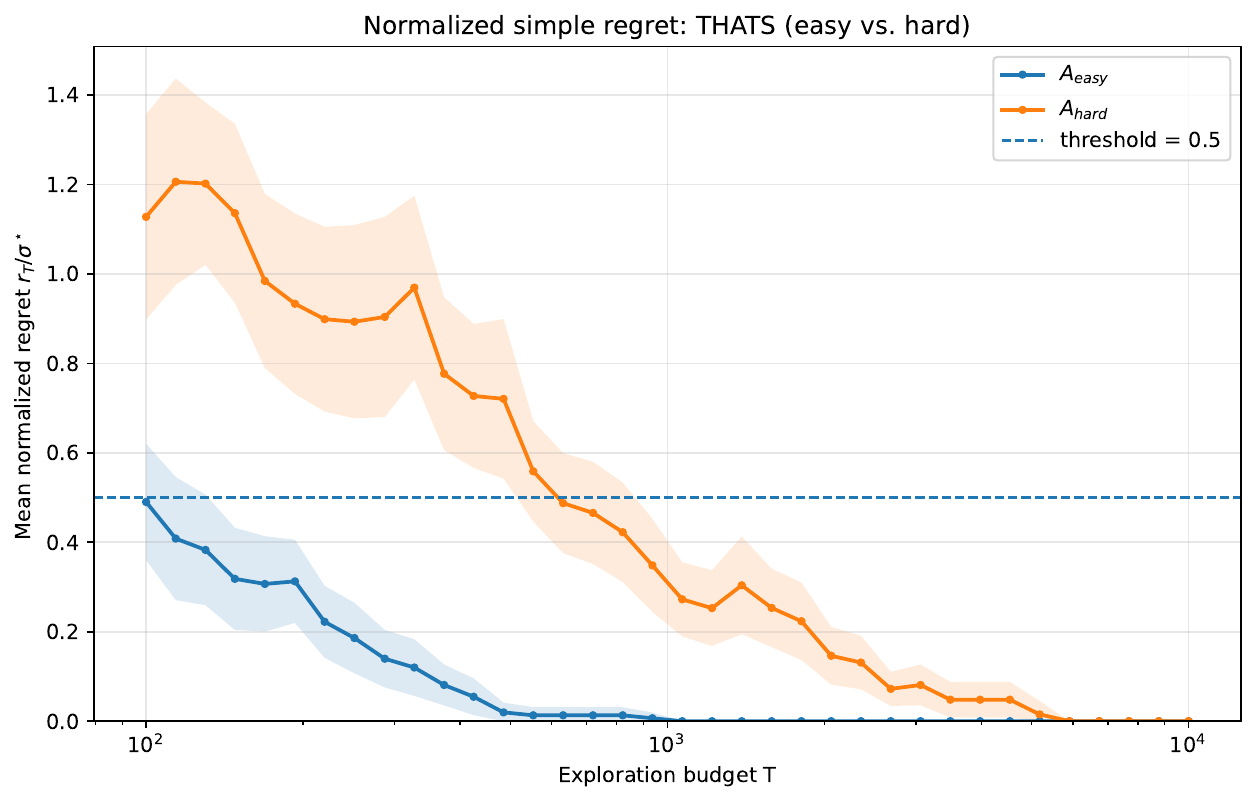}
        \caption{Normalized regret}
        \label{fig:thats_normalized_regret}
    \end{subfigure}

    \vspace{0.75em}

    \begin{subfigure}[t]{0.48\textwidth}
        \centering
        \includegraphics[width=\linewidth]{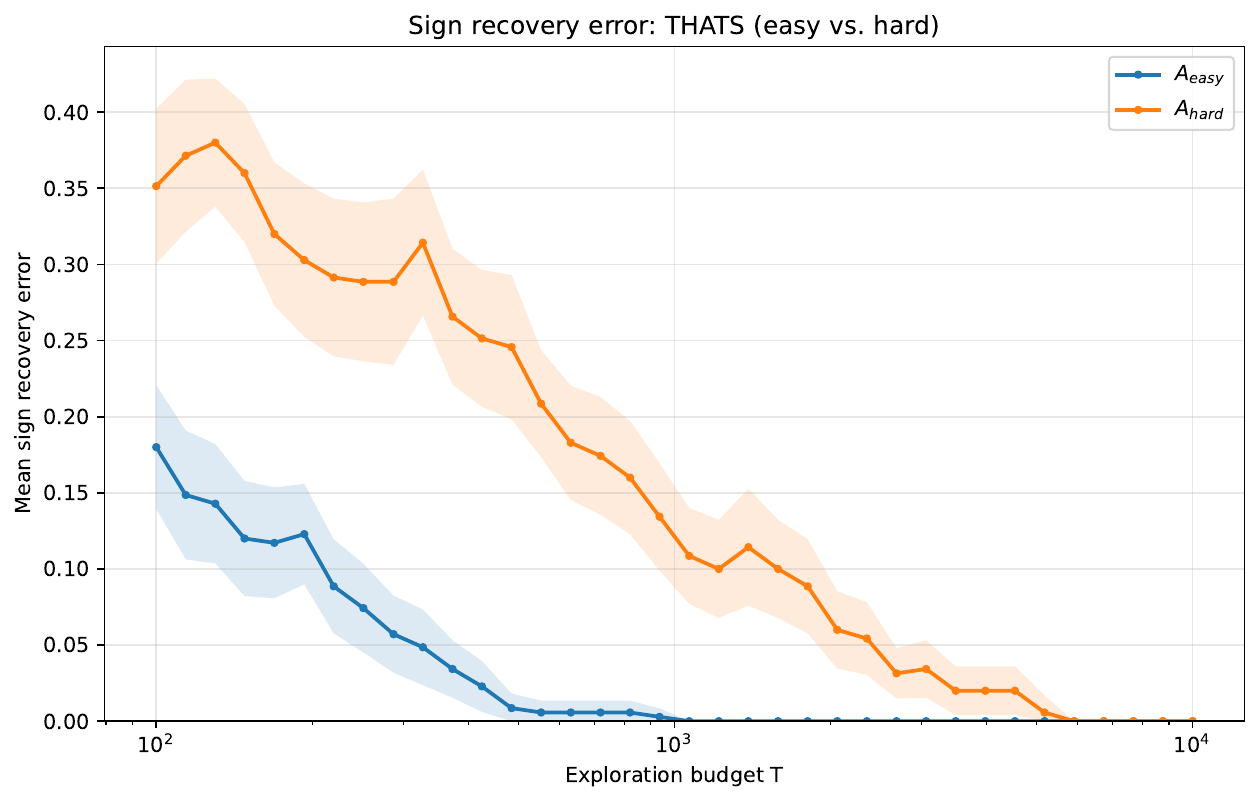}
        \caption{Sign recovery error}
        \label{fig:thats_sign_recovery_error}
    \end{subfigure}
    \hfill
    \begin{subfigure}[t]{0.48\textwidth}
        \centering
        \includegraphics[width=\linewidth]{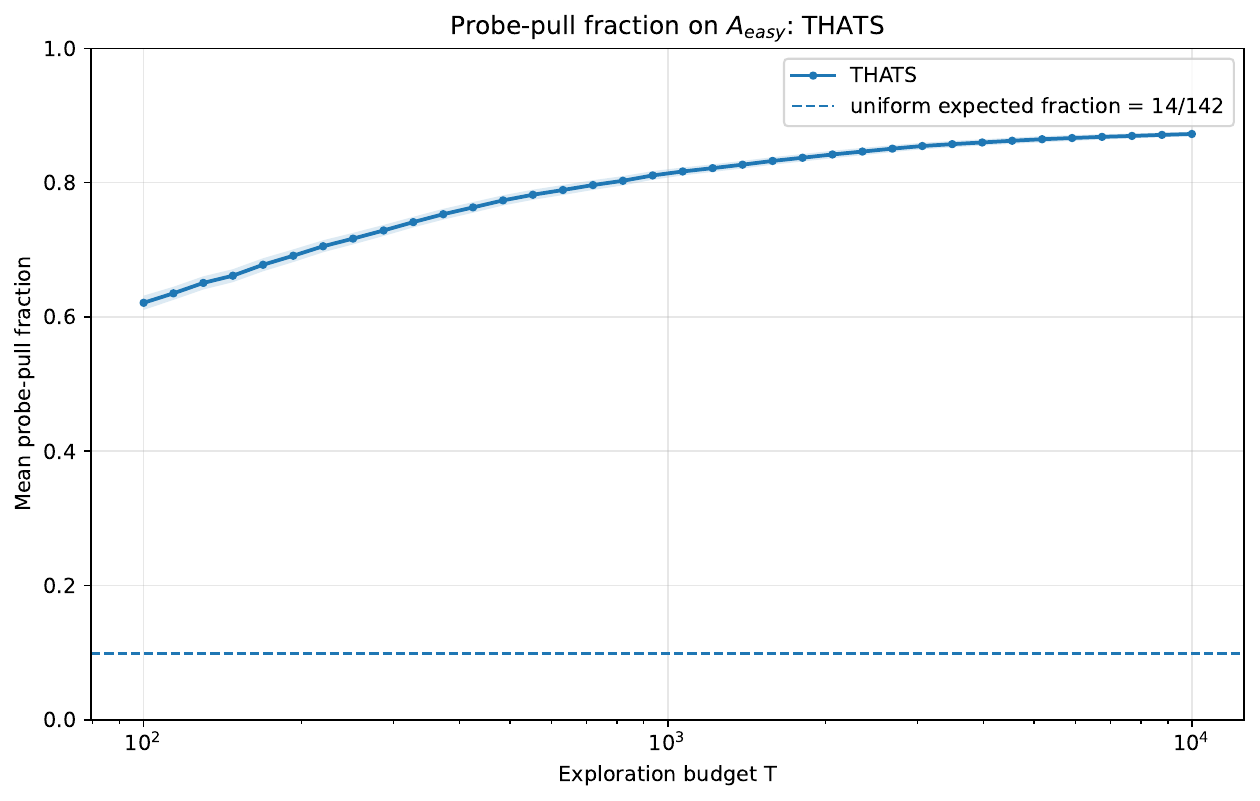}
        \caption{Probe fraction}
        \label{fig:thats_probe_fraction}
    \end{subfigure}

    \vspace{0.75em}

    \begin{subfigure}[t]{0.48\textwidth}
        \centering
        \includegraphics[width=\linewidth]{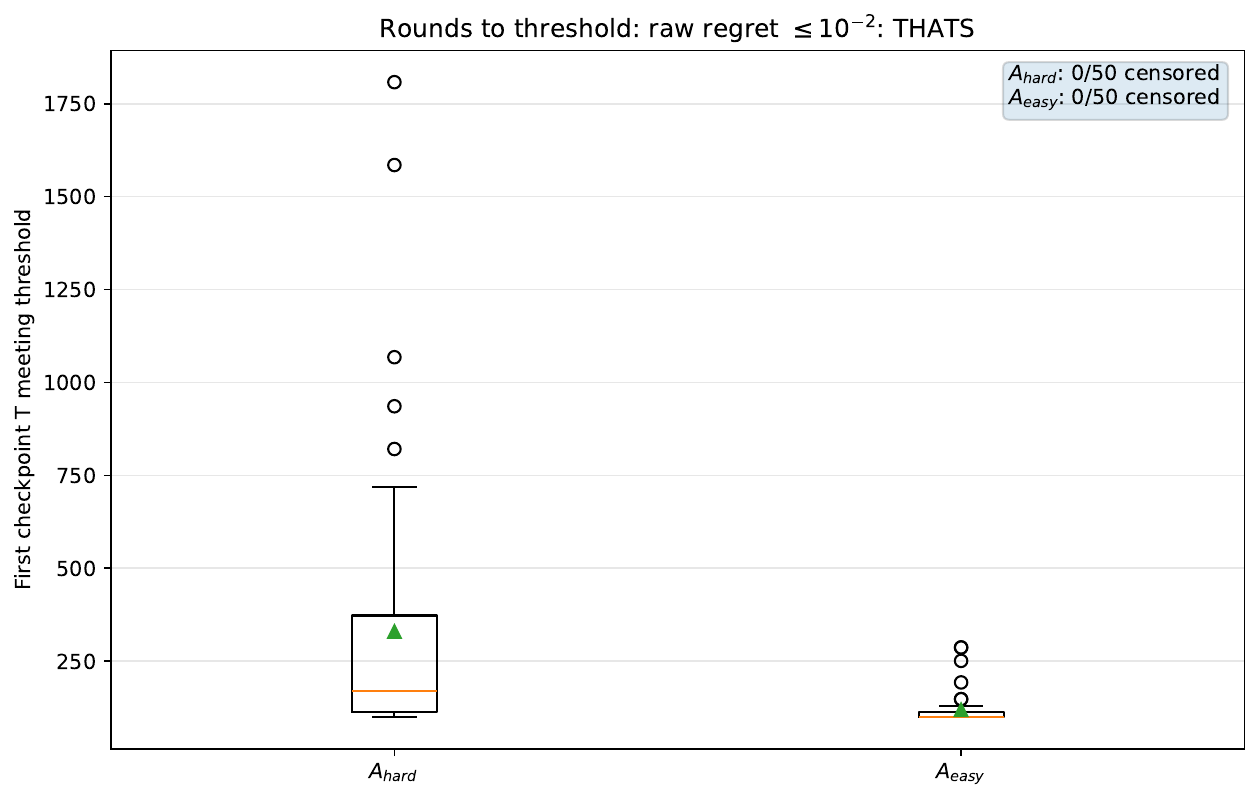}
        \caption{Rounds to raw threshold}
        \label{fig:thats_rounds_to_raw_threshold}
    \end{subfigure}
    \hfill
    \begin{subfigure}[t]{0.48\textwidth}
        \centering
        \includegraphics[width=\linewidth]{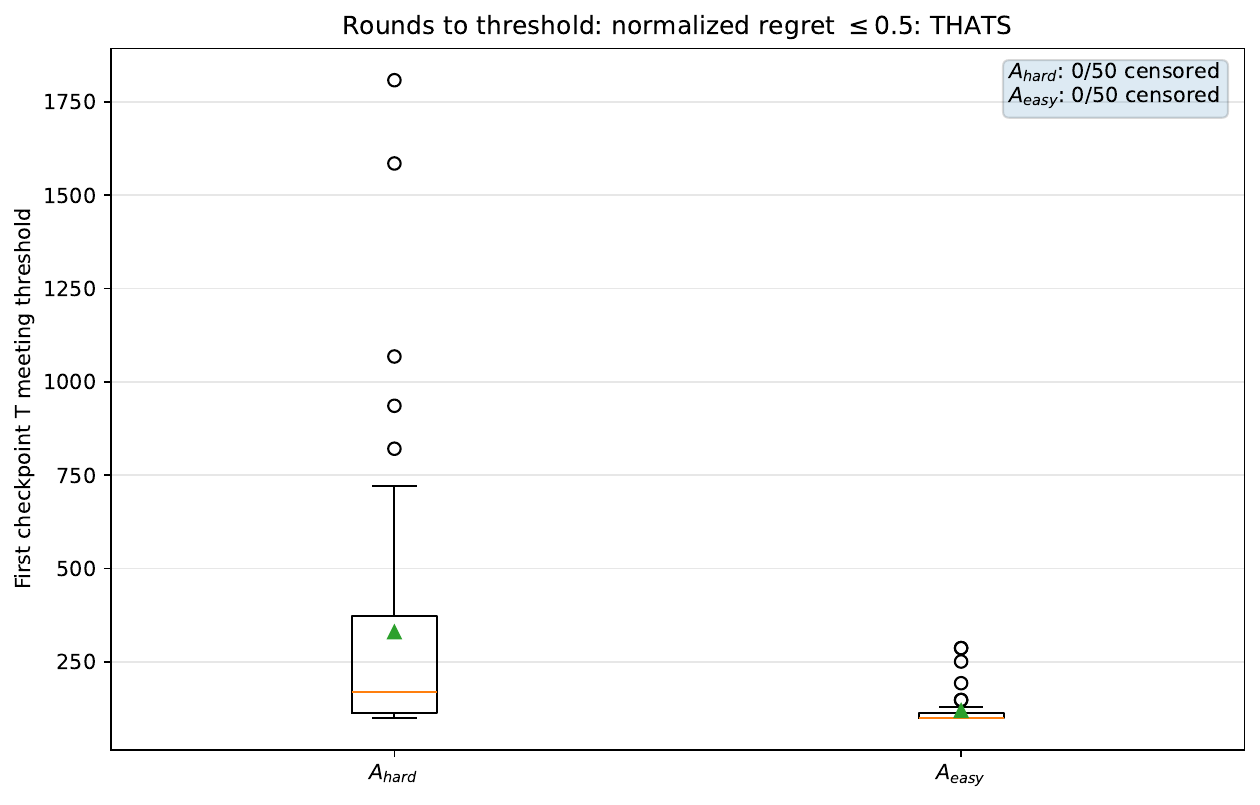}
        \caption{Rounds to normalized threshold}
        \label{fig:thats_rounds_to_normalized_threshold}
    \end{subfigure}

    \caption{Metrics for THATS.}
    \label{fig:thats-metrics}
\end{figure}

\begin{figure}[t]
    \centering

    \begin{subfigure}[t]{0.48\textwidth}
        \centering
        \includegraphics[width=\linewidth]{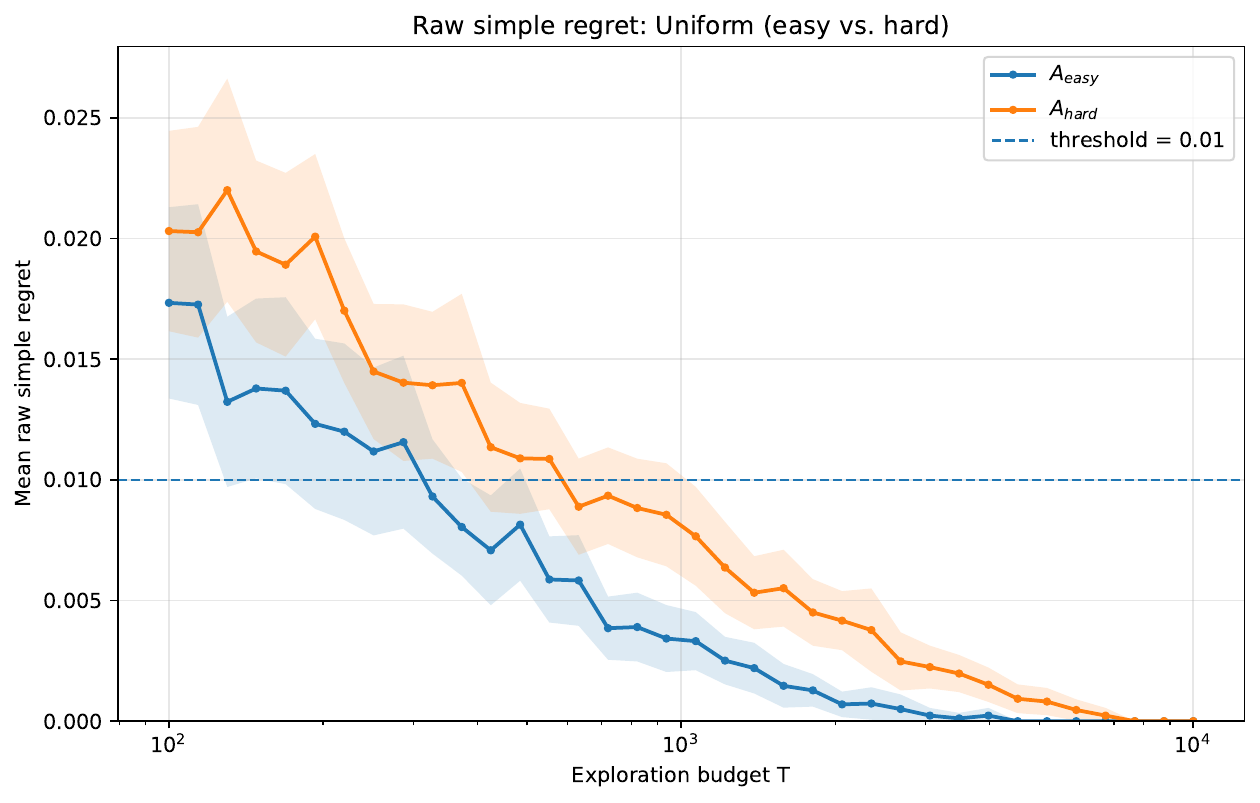}
        \caption{Raw regret}
        \label{fig:uniform_raw_regret}
    \end{subfigure}
    \hfill
    \begin{subfigure}[t]{0.48\textwidth}
        \centering
        \includegraphics[width=\linewidth]{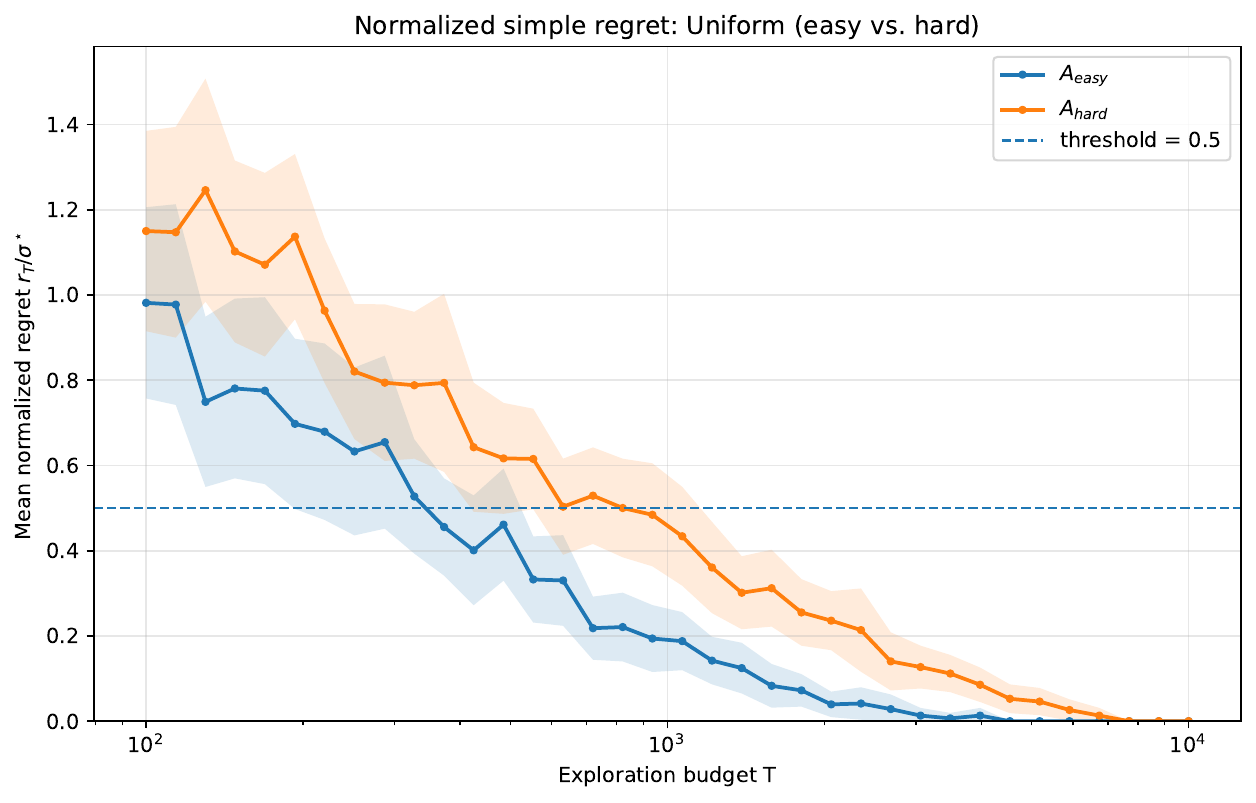}
        \caption{Normalized regret}
        \label{fig:uniform_normalized_regret}
    \end{subfigure}

    \vspace{0.75em}

    \begin{subfigure}[t]{0.48\textwidth}
        \centering
        \includegraphics[width=\linewidth]{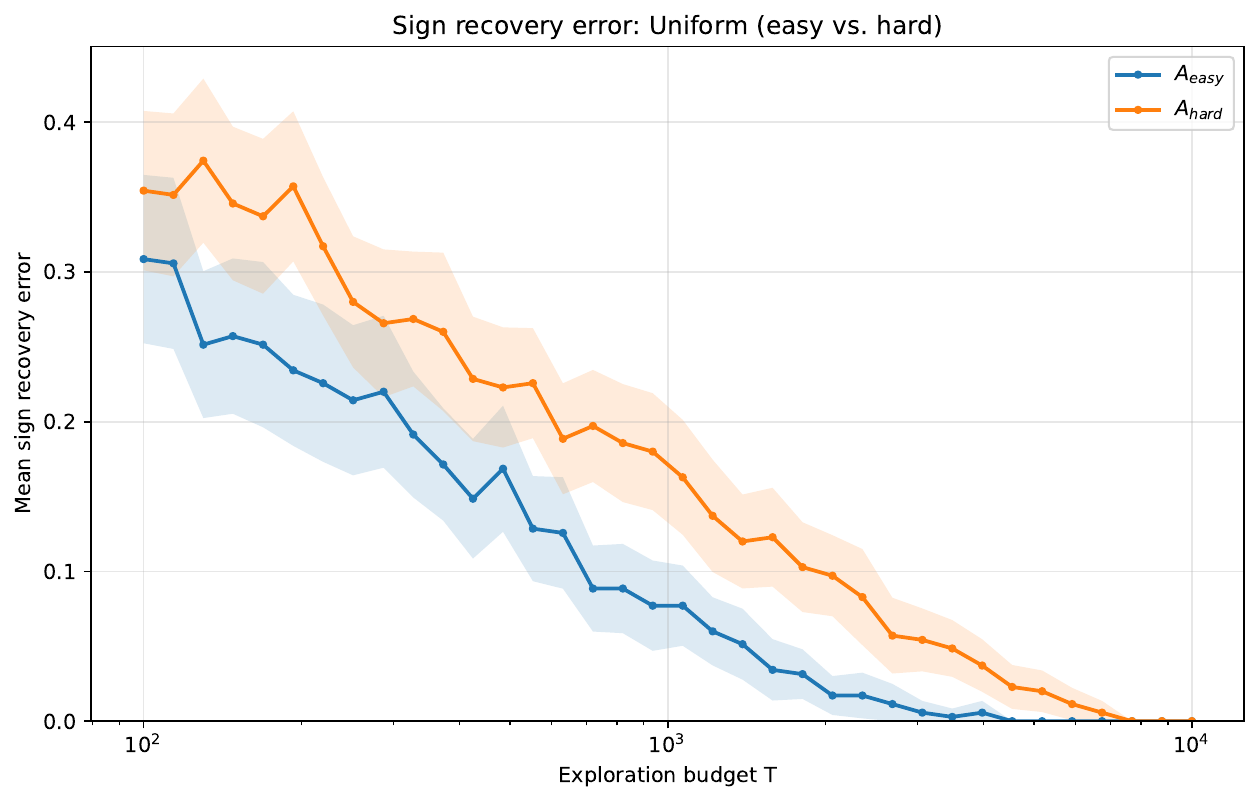}
        \caption{Sign recovery error}
        \label{fig:uniform_sign_recovery_error}
    \end{subfigure}
    \hfill
    \begin{subfigure}[t]{0.48\textwidth}
        \centering
        \includegraphics[width=\linewidth]{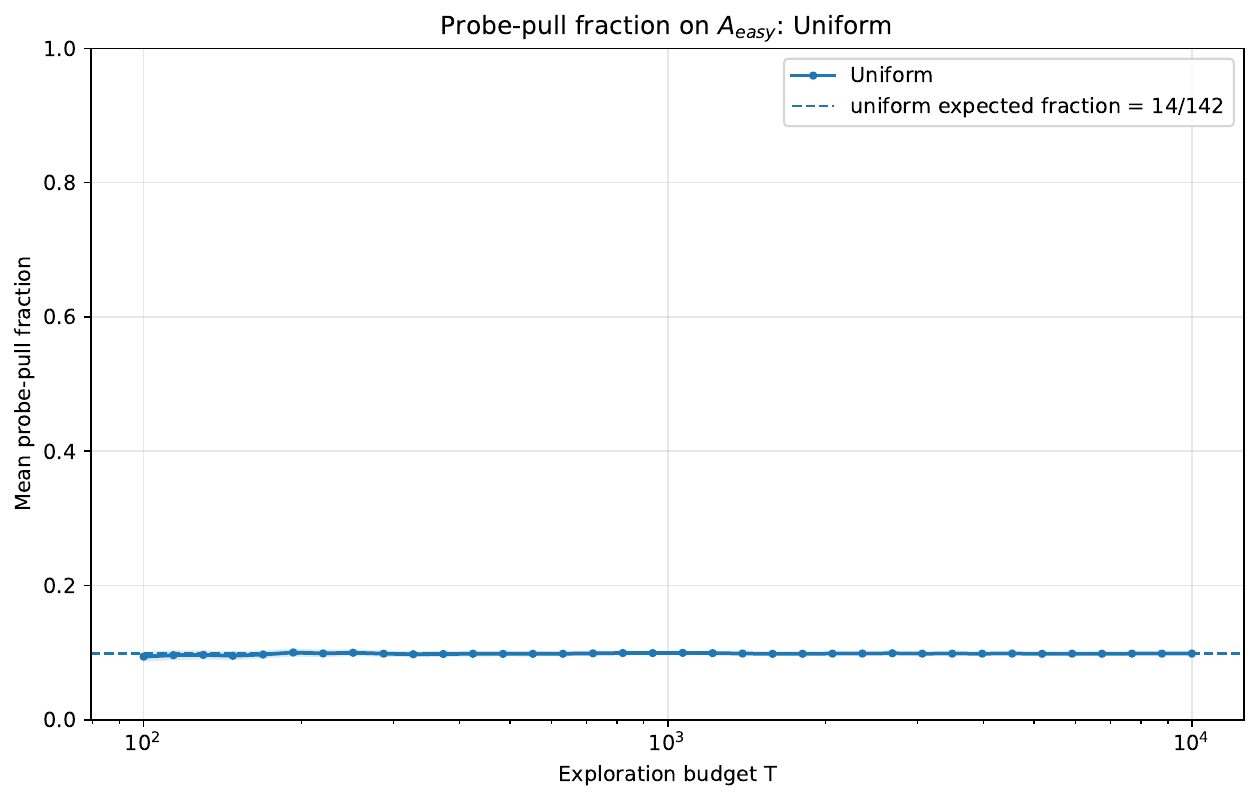}
        \caption{Probe fraction}
        \label{fig:uniform_probe_fraction}
    \end{subfigure}

    \vspace{0.75em}

    \begin{subfigure}[t]{0.48\textwidth}
        \centering
        \includegraphics[width=\linewidth]{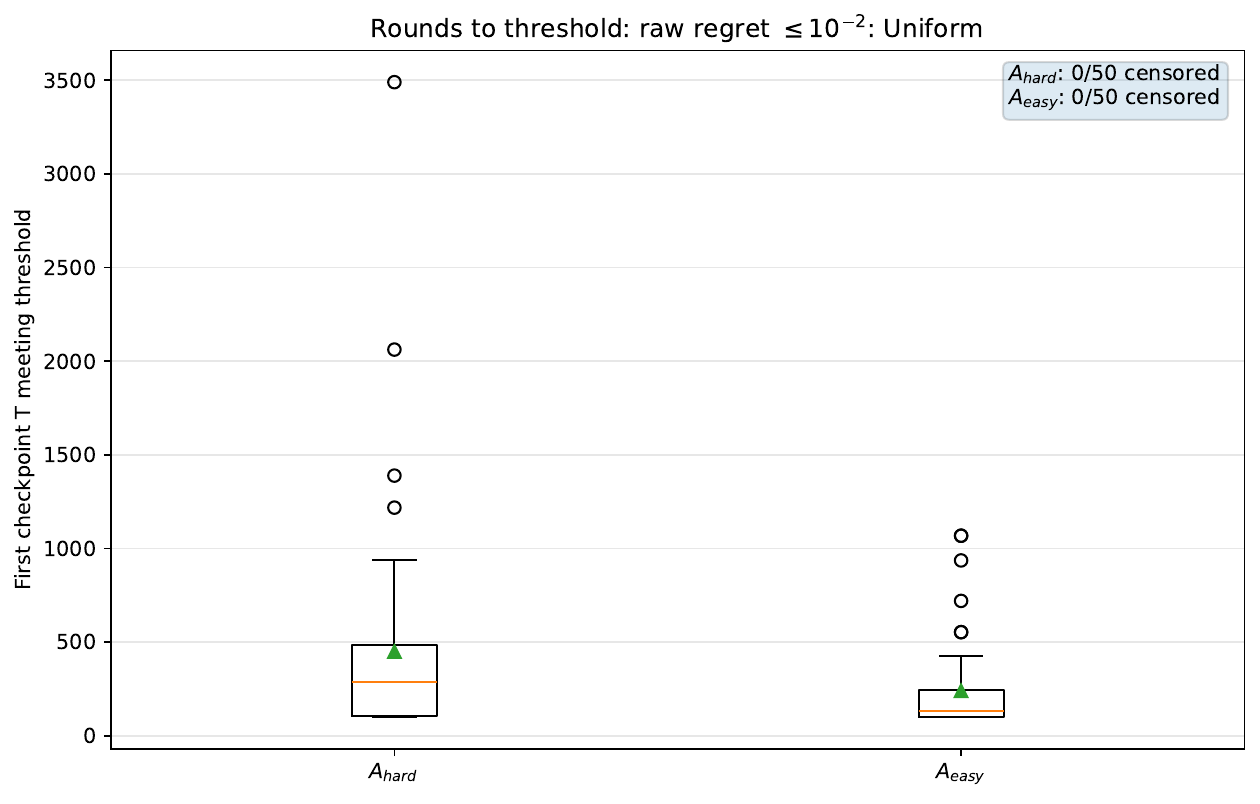}
        \caption{Rounds to raw threshold}
        \label{fig:uniform_rounds_to_raw_threshold}
    \end{subfigure}
    \hfill
    \begin{subfigure}[t]{0.48\textwidth}
        \centering
        \includegraphics[width=\linewidth]{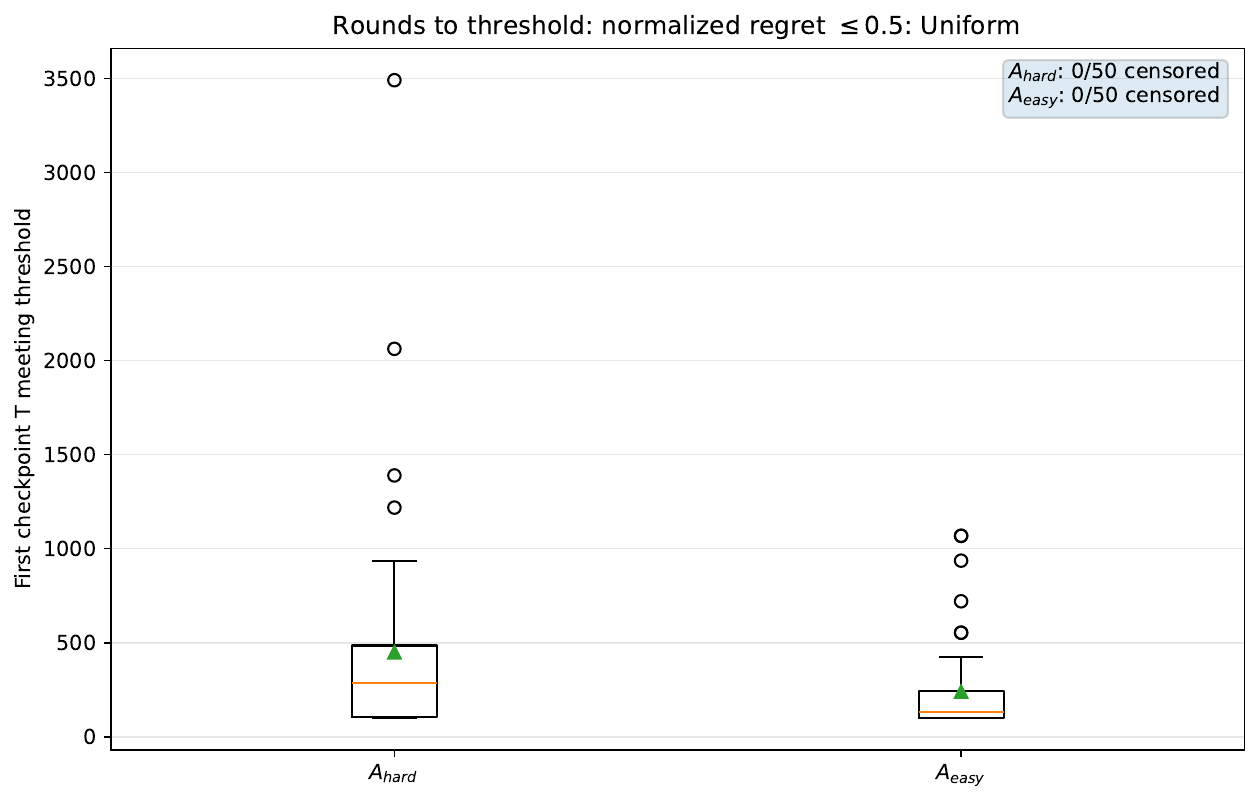}
        \caption{Rounds to normalized threshold}
        \label{fig:uniform_rounds_to_normalized_threshold}
    \end{subfigure}

    \caption{Metrics for uniform sampling.}
    \label{fig:uniform-metrics}
\end{figure}

\begin{figure}[t]
    \centering

    \begin{subfigure}[t]{0.48\textwidth}
        \centering
        \includegraphics[width=\linewidth]{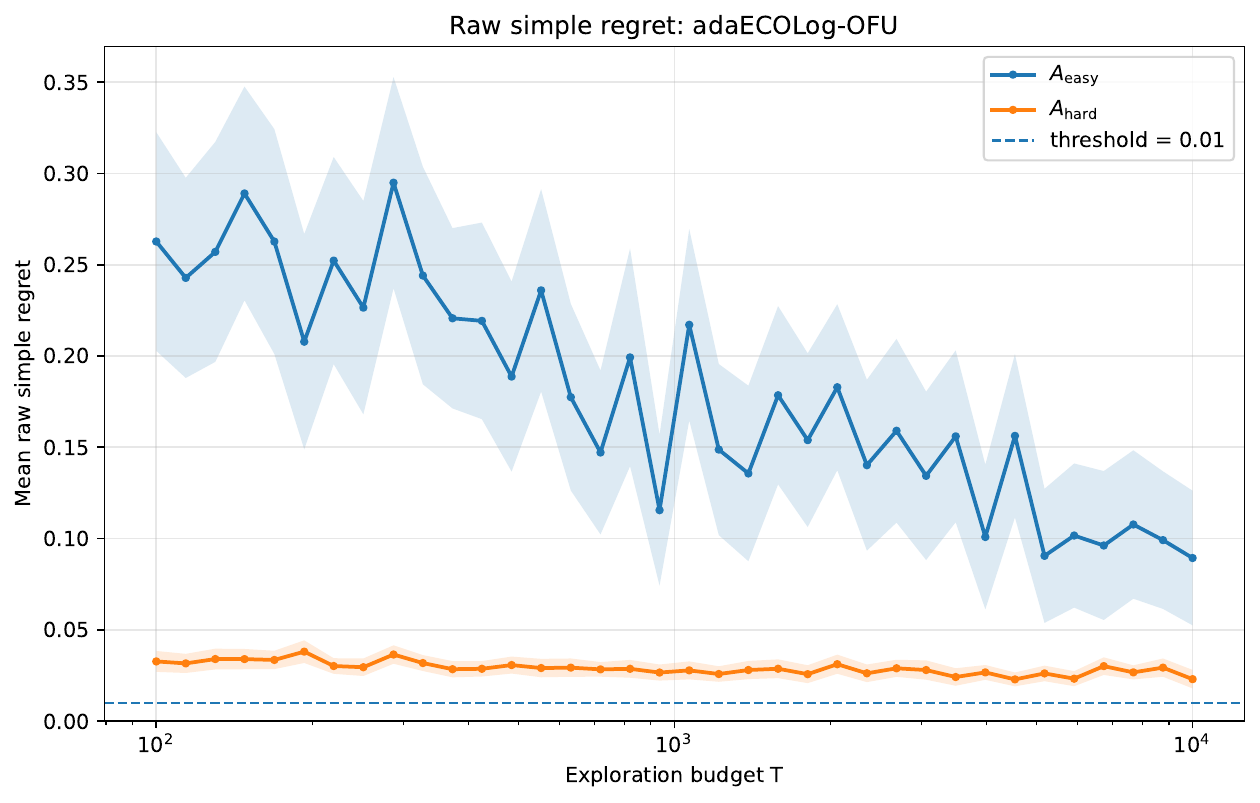}
        \caption{Raw regret}
        \label{fig:ada_ecolog_ofu_raw_regret}
    \end{subfigure}
    \hfill
    \begin{subfigure}[t]{0.48\textwidth}
        \centering
        \includegraphics[width=\linewidth]{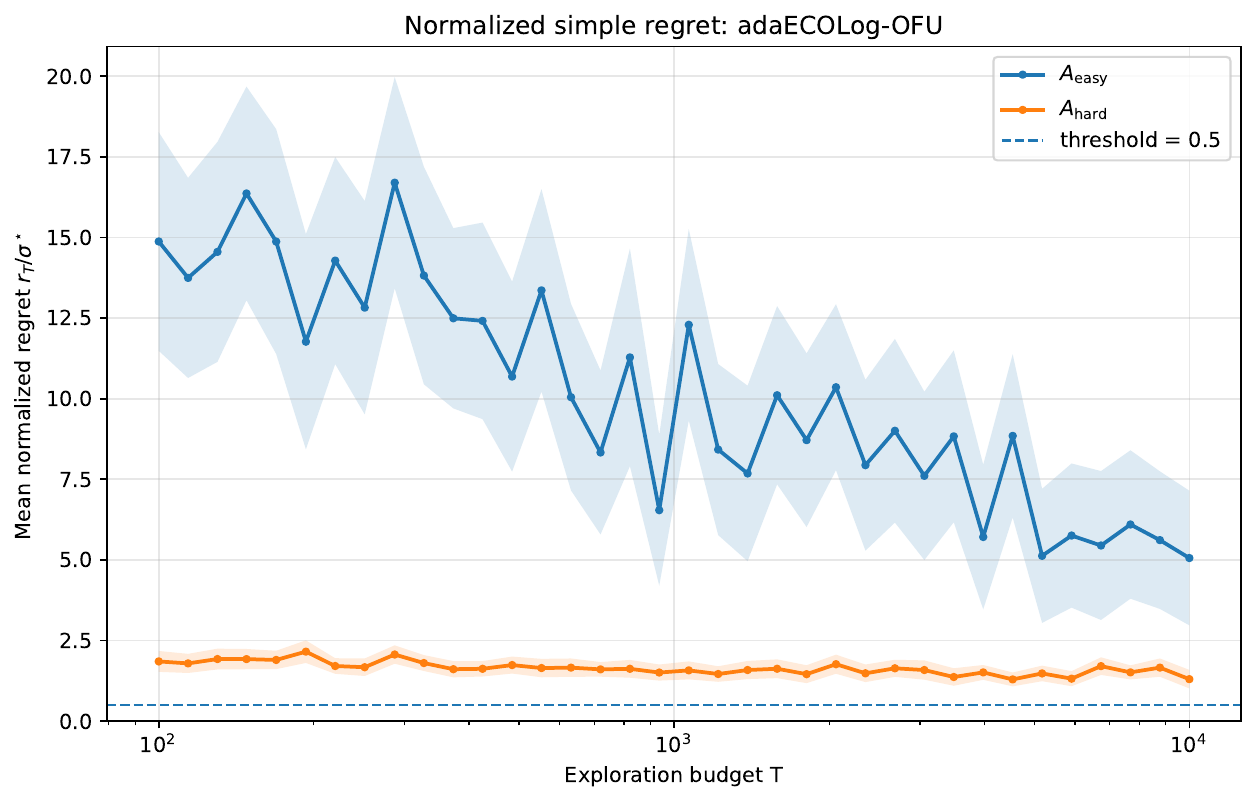}
        \caption{Normalized regret}
        \label{fig:ada_ecolog_ofu_normalized_regret}
    \end{subfigure}

    \vspace{0.75em}

    \begin{subfigure}[t]{0.48\textwidth}
        \centering
        \includegraphics[width=\linewidth]{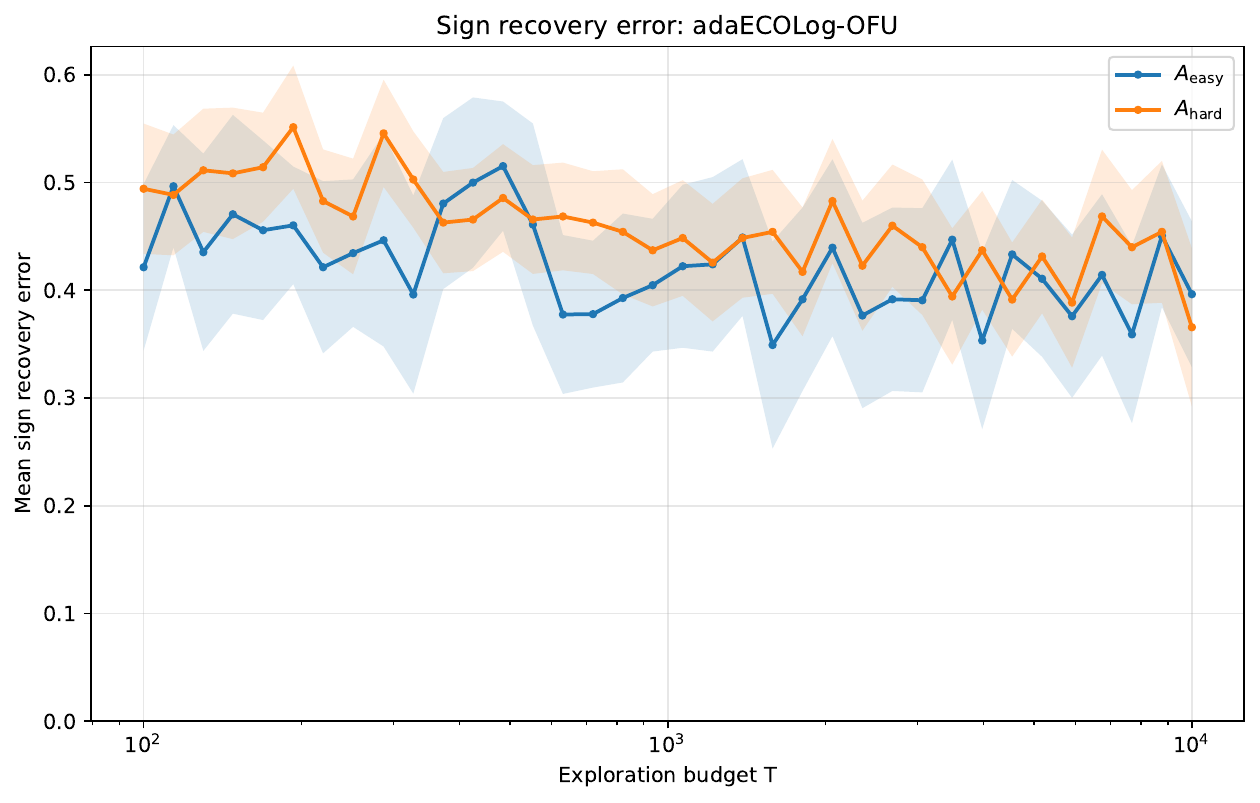}
        \caption{Sign recovery error}
        \label{fig:ada_ecolog_ofu_sign_recovery_error}
    \end{subfigure}
    \hfill
    \begin{subfigure}[t]{0.48\textwidth}
        \centering
        \includegraphics[width=\linewidth]{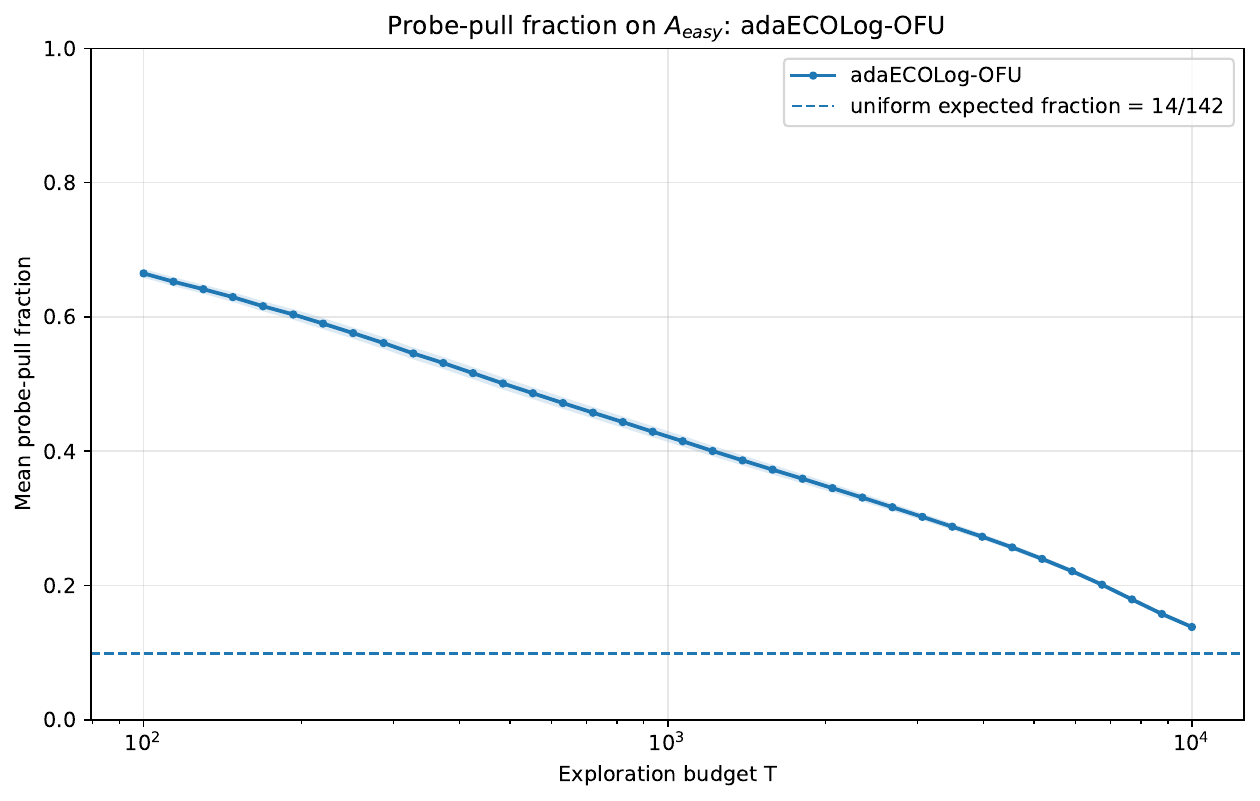}
        \caption{Probe fraction}
        \label{fig:ada_ecolog_ofu_probe_fraction}
    \end{subfigure}

    \vspace{0.75em}

    \begin{subfigure}[t]{0.48\textwidth}
        \centering
        \includegraphics[width=\linewidth]{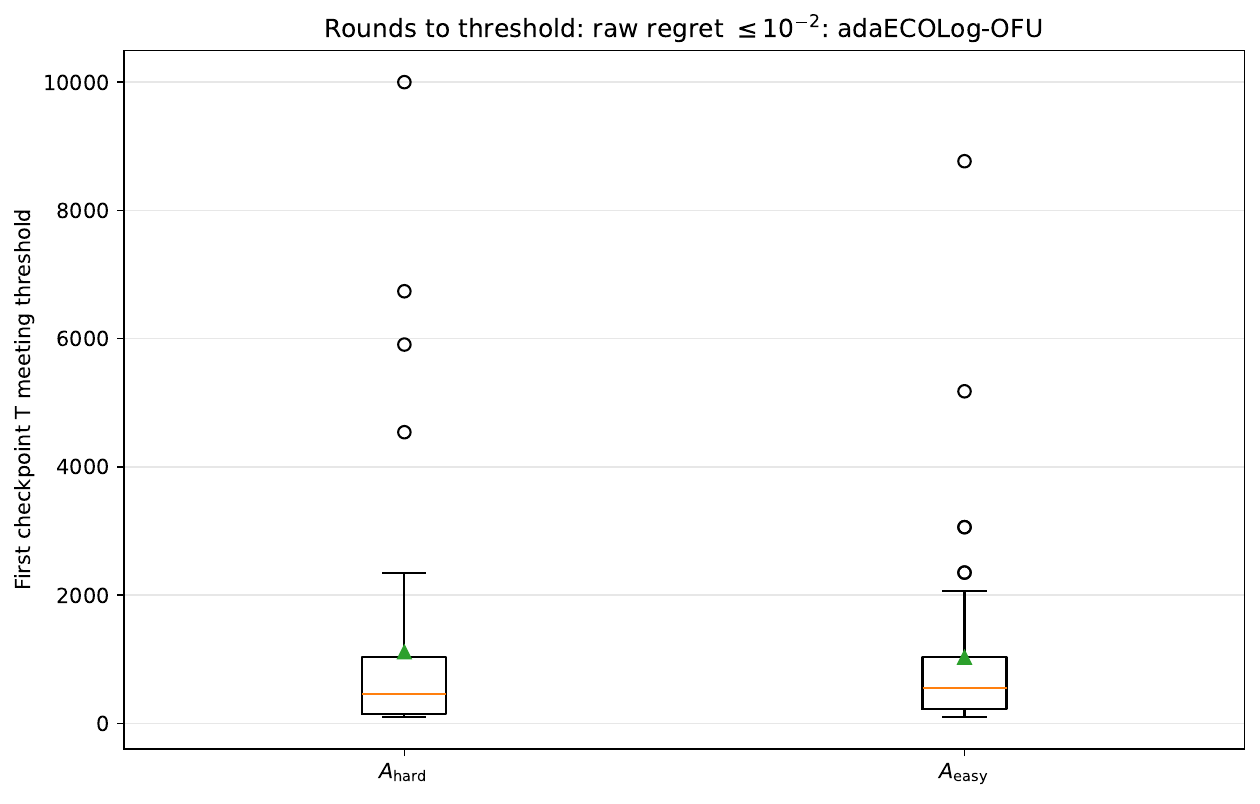}
        \caption{Rounds to raw threshold}
        \label{fig:ada_ecolog_ofu_rounds_to_raw_threshold}
    \end{subfigure}
    \hfill
    \begin{subfigure}[t]{0.48\textwidth}
        \centering
        \includegraphics[width=\linewidth]{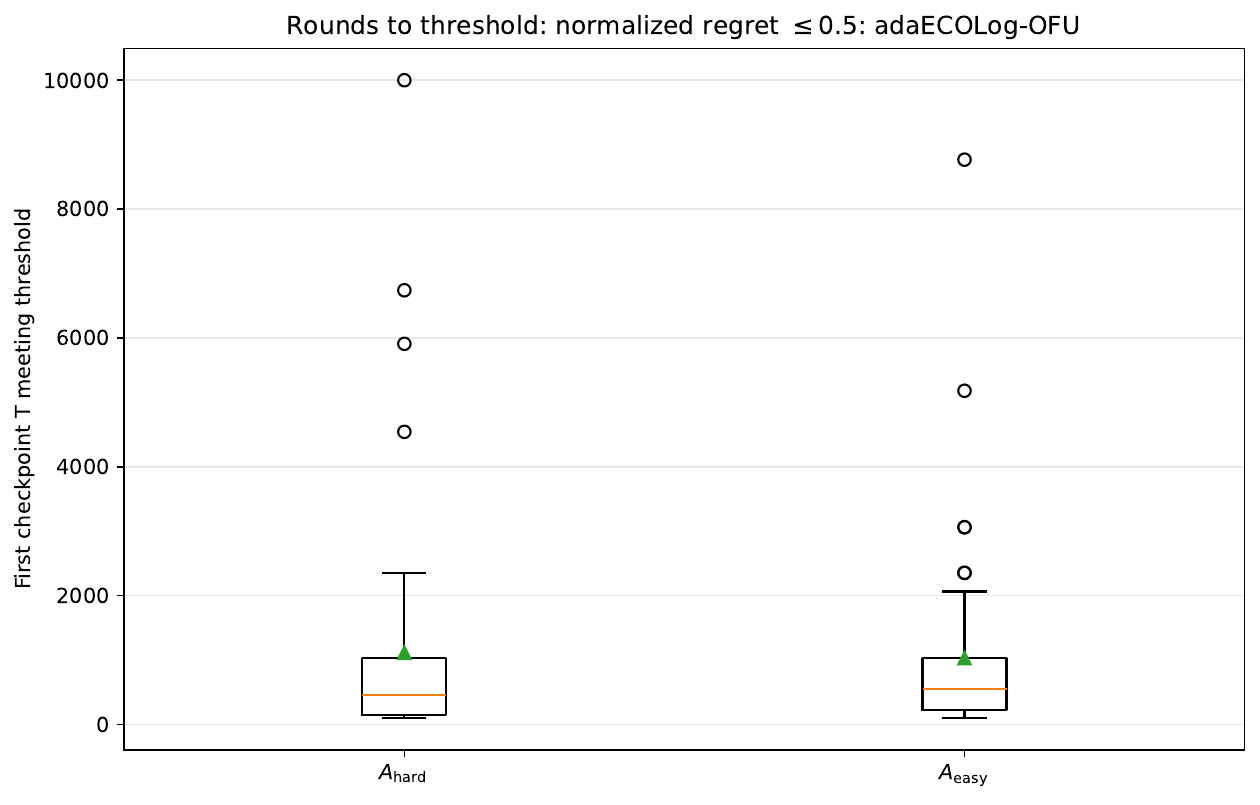}
        \caption{Rounds to normalized threshold}
        \label{fig:ada_ecolog_ofu_rounds_to_normalized_threshold}
    \end{subfigure}

    \caption{Metrics for Ada-EcoLog-OFU.}
    \label{fig:ada-ecolog-ofu-metrics}
\end{figure}

\begin{figure}[t]
    \centering

    \begin{subfigure}[t]{0.48\textwidth}
        \centering
        \includegraphics[width=\linewidth]{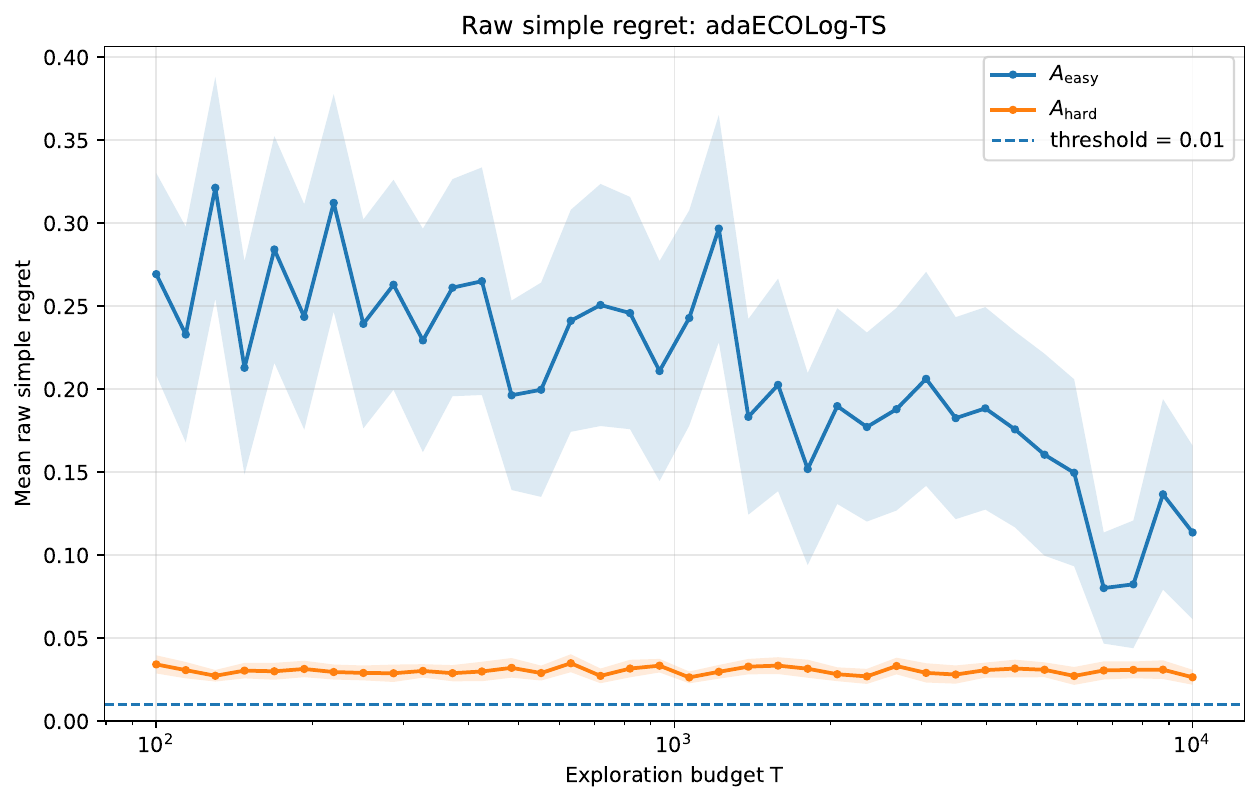}
        \caption{Raw regret}
        \label{fig:ada_ecolog_ts_raw_regret}
    \end{subfigure}
    \hfill
    \begin{subfigure}[t]{0.48\textwidth}
        \centering
        \includegraphics[width=\linewidth]{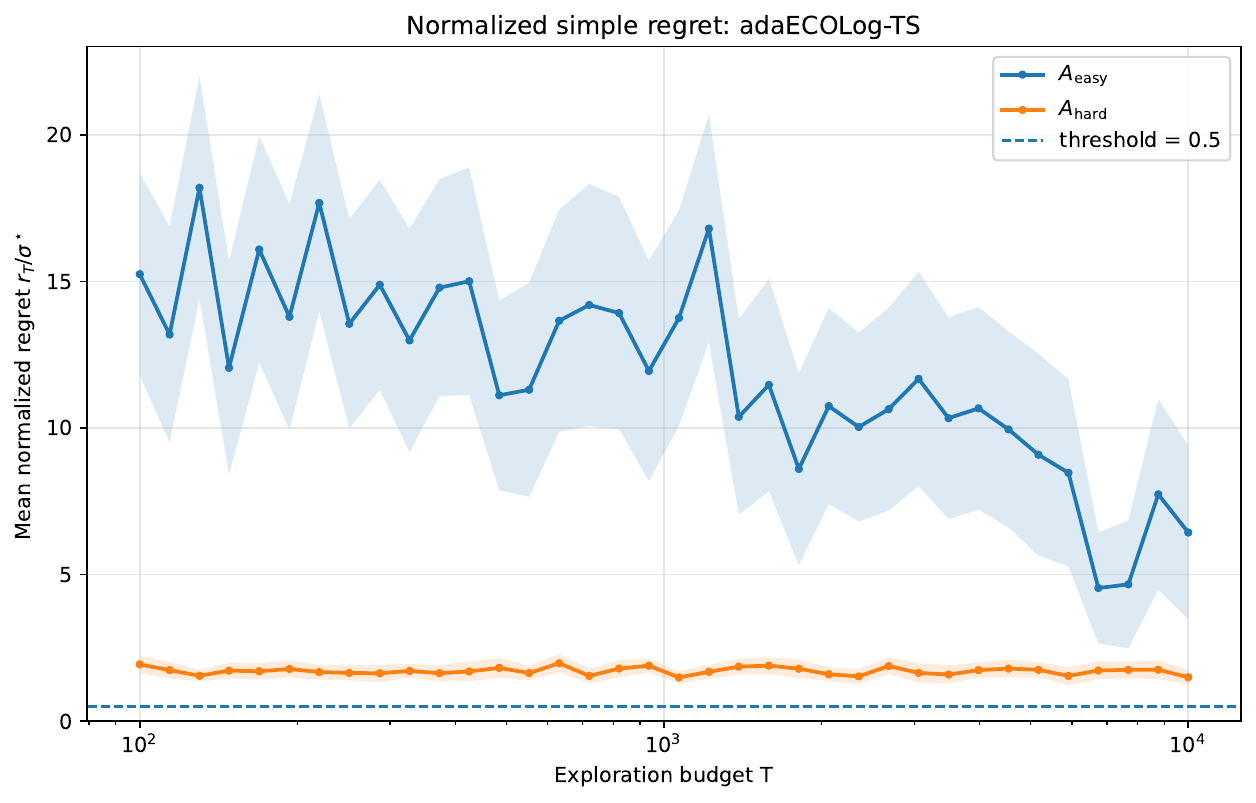}
        \caption{Normalized regret}
        \label{fig:ada_ecolog_ts_normalized_regret}
    \end{subfigure}

    \vspace{0.75em}

    \begin{subfigure}[t]{0.48\textwidth}
        \centering
        \includegraphics[width=\linewidth]{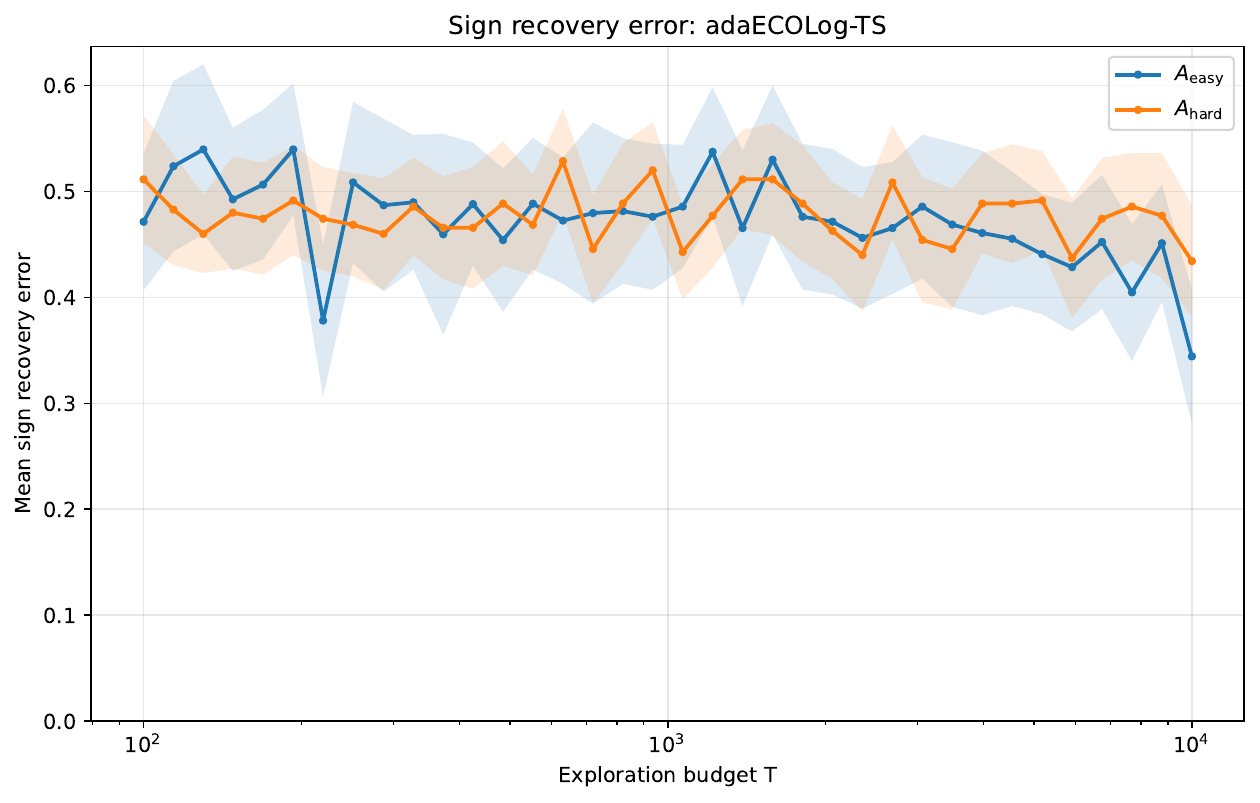}
        \caption{Sign recovery error}
        \label{fig:ada_ecolog_ts_sign_recovery_error}
    \end{subfigure}
    \hfill
    \begin{subfigure}[t]{0.48\textwidth}
        \centering
        \includegraphics[width=\linewidth]{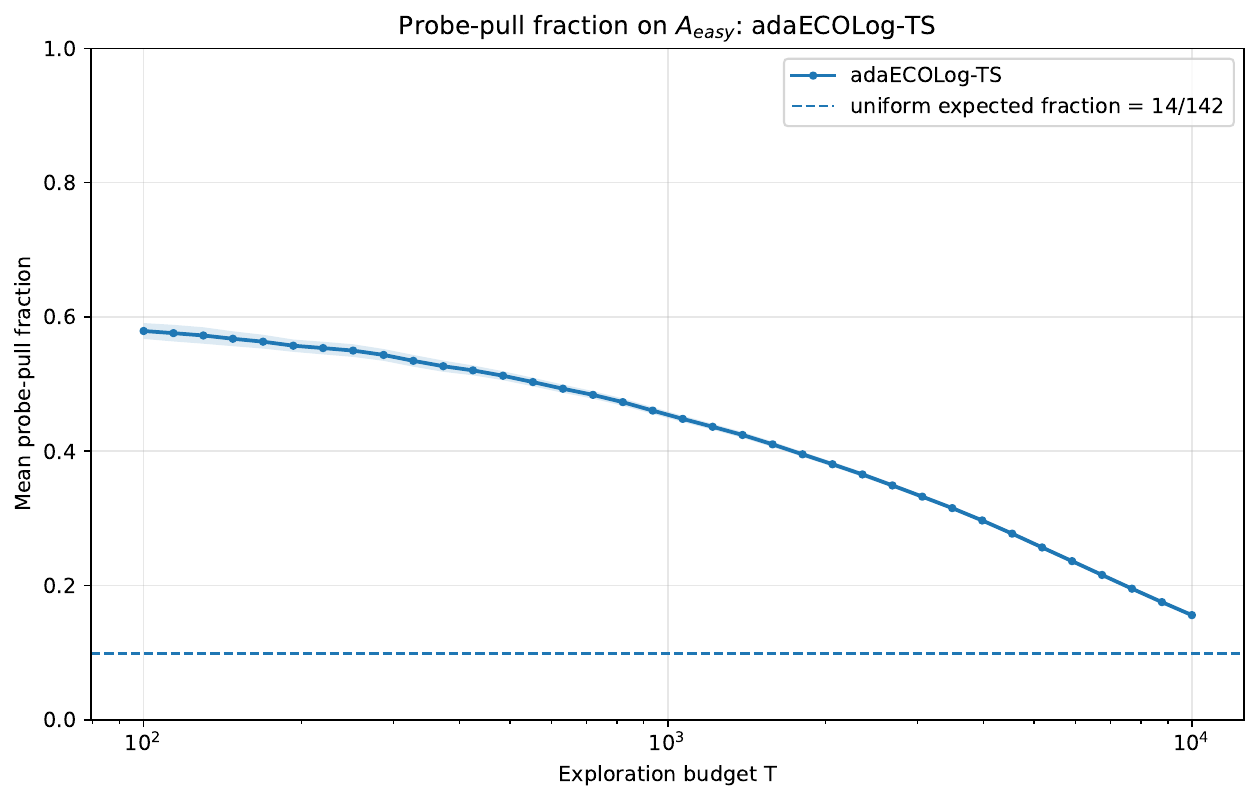}
        \caption{Probe fraction}
        \label{fig:ada_ecolog_ts_probe_fraction}
    \end{subfigure}

    \vspace{0.75em}

    \begin{subfigure}[t]{0.48\textwidth}
        \centering
        \includegraphics[width=\linewidth]{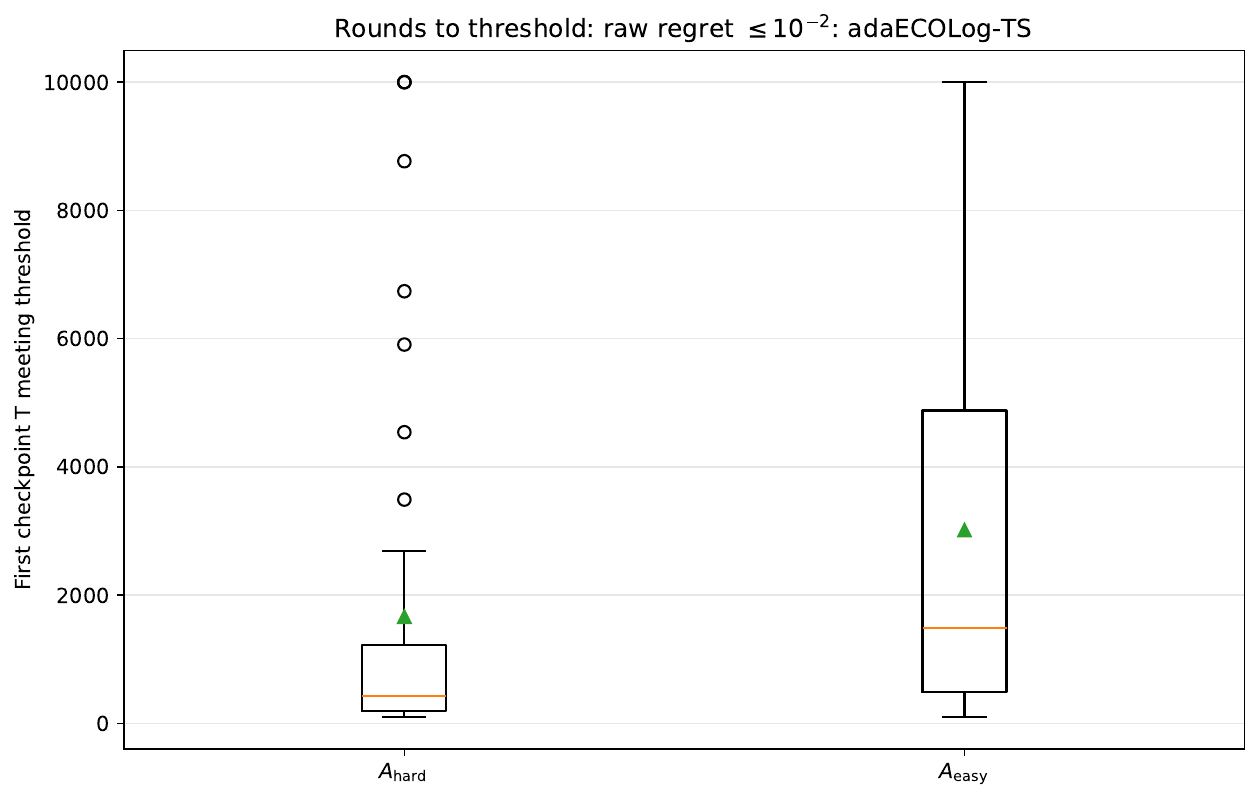}
        \caption{Rounds to raw threshold}
        \label{fig:ada_ecolog_ts_rounds_to_raw_threshold}
    \end{subfigure}
    \hfill
    \begin{subfigure}[t]{0.48\textwidth}
        \centering
        \includegraphics[width=\linewidth]{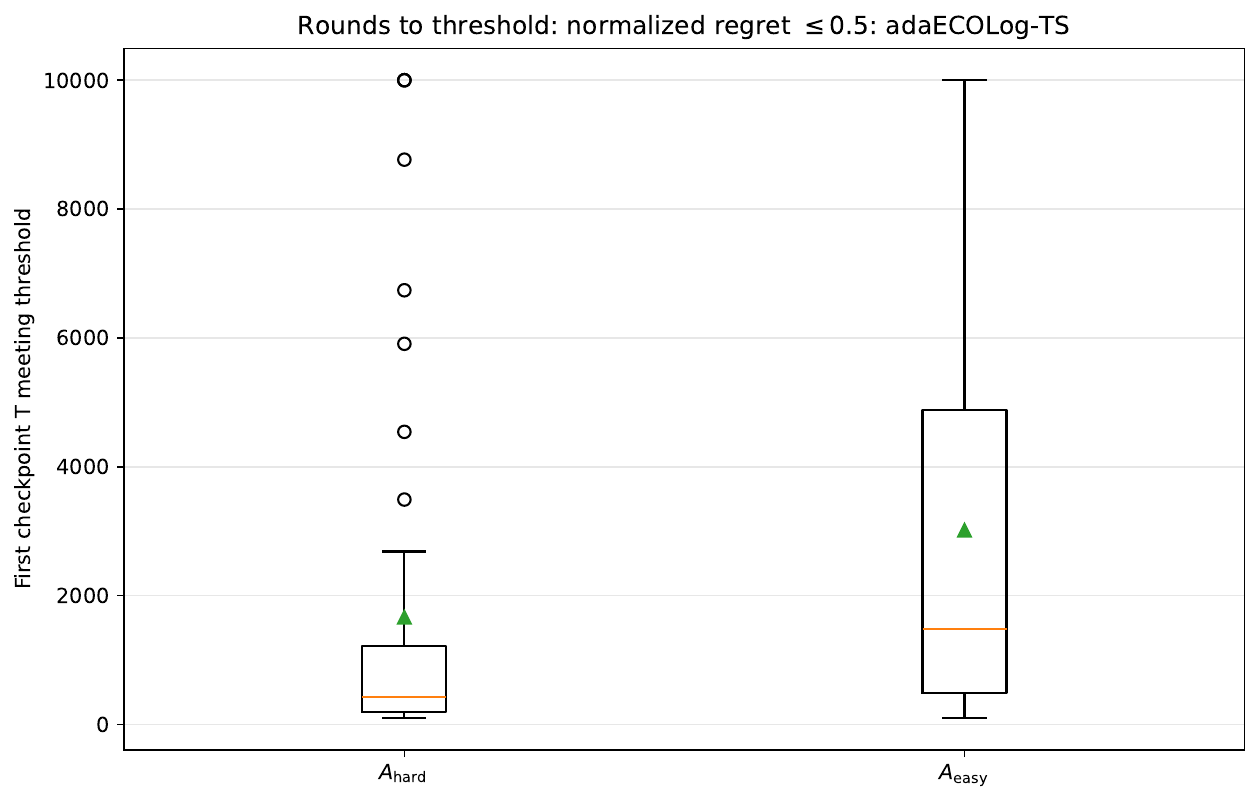}
        \caption{Rounds to normalized threshold}
        \label{fig:ada_ecolog_ts_rounds_to_normalized_threshold}
    \end{subfigure}

    \caption{Metrics for Ada-EcoLog-TS.}
    \label{fig:ada-ecolog-ts-metrics}
\end{figure}

\paragraph{Results and interpretation}
The results support the qualitative picture above. Moving from $\cA_{\mathrm{hard}}$ to $\cA_{\mathrm{easy}}$ substantially reduces both raw and normalized simple regret, confirming that the shifted saturated family used in the lower bound is a genuine worst-case geometry rather than a representative one. The gain is strongest and most reliable for $\MULog$ and $\THATS$: on the easy geometry, both methods reach the target regret thresholds within only a few hundred rounds in most runs, and the sign-recovery error falls to essentially zero far earlier than on $\cA_{\mathrm{hard}}$.

Cumulative Thompson sampling also benefits from the easier geometry, but the improvement is weaker and significantly more variable. This is exactly the regime where the pure-exploration perspective matters: once informative low-reward probe arms are available, the curvature-aware methods use them more efficiently for the final recommendation task. Uniform exploration benefits as well, since the informative probes are now sampled with nonzero frequency, but it remains less sample-efficient and less reliable than $\MULog$ and $\THATS$ on both the threshold summaries and the sign-recovery diagnostic. The main lesson is therefore not that only pure-exploration methods improve on $\cA_{\mathrm{easy}}$, but that they exploit the additional information most effectively.

\clearpage
\section*{NeurIPS Paper Checklist}

\begin{enumerate}

\item {\bf Claims}
    \item[] Question: Do the main claims made in the abstract and introduction accurately reflect the paper's contributions and scope?
    \item[] Answer: \answerYes{} % Replace by \answerYes{}, \answerNo{}, or \answerNA{}.
    \item[] Justification: {theorem 1, \cref{thm:mulog-regret,thm:THATS-regret}}
    \item[] Guidelines:
    \begin{itemize}
        \item The answer \answerNA{} means that the abstract and introduction do not include the claims made in the paper.
        \item The abstract and/or introduction should clearly state the claims made, including the contributions made in the paper and important assumptions and limitations. A \answerNo{} or \answerNA{} answer to this question will not be perceived well by the reviewers. 
        \item The claims made should match theoretical and experimental results, and reflect how much the results can be expected to generalize to other settings. 
        \item It is fine to include aspirational goals as motivation as long as it is clear that these goals are not attained by the paper. 
    \end{itemize}

\item {\bf Limitations}
    \item[] Question: Does the paper discuss the limitations of the work performed by the authors?
    \item[] Answer: \answerYes{} % Replace by \answerYes{}, \answerNo{}, or \answerNA{}.
    \item[] Justification: see conclusion
    \item[] Guidelines:
    \begin{itemize}
        \item The answer \answerNA{} means that the paper has no limitation while the answer \answerNo{} means that the paper has limitations, but those are not discussed in the paper. 
        \item The authors are encouraged to create a separate ``Limitations'' section in their paper.
        \item The paper should point out any strong assumptions and how robust the results are to violations of these assumptions (e.g., independence assumptions, noiseless settings, model well-specification, asymptotic approximations only holding locally). The authors should reflect on how these assumptions might be violated in practice and what the implications would be.
        \item The authors should reflect on the scope of the claims made, e.g., if the approach was only tested on a few datasets or with a few runs. In general, empirical results often depend on implicit assumptions, which should be articulated.
        \item The authors should reflect on the factors that influence the performance of the approach. For example, a facial recognition algorithm may perform poorly when image resolution is low or images are taken in low lighting. Or a speech-to-text system might not be used reliably to provide closed captions for online lectures because it fails to handle technical jargon.
        \item The authors should discuss the computational efficiency of the proposed algorithms and how they scale with dataset size.
        \item If applicable, the authors should discuss possible limitations of their approach to address problems of privacy and fairness.
        \item While the authors might fear that complete honesty about limitations might be used by reviewers as grounds for rejection, a worse outcome might be that reviewers discover limitations that aren't acknowledged in the paper. The authors should use their best judgment and recognize that individual actions in favor of transparency play an important role in developing norms that preserve the integrity of the community. Reviewers will be specifically instructed to not penalize honesty concerning limitations.
    \end{itemize}

\item {\bf Theory assumptions and proofs}
    \item[] Question: For each theoretical result, does the paper provide the full set of assumptions and a complete (and correct) proof?
    \item[] Answer: \answerYes{} % Replace by \answerYes{}, \answerNo{}, or \answerNA{}.
    \item[] Justification: See section 3 and appendix A-F
    \item[] Guidelines:
    \begin{itemize}
        \item The answer \answerNA{} means that the paper does not include theoretical results. 
        \item All the theorems, formulas, and proofs in the paper should be numbered and cross-referenced.
        \item All assumptions should be clearly stated or referenced in the statement of any theorems.
        \item The proofs can either appear in the main paper or the supplemental material, but if they appear in the supplemental material, the authors are encouraged to provide a short proof sketch to provide intuition. 
        \item Inversely, any informal proof provided in the core of the paper should be complemented by formal proofs provided in appendix or supplemental material.
        \item Theorems and Lemmas that the proof relies upon should be properly referenced. 
    \end{itemize}

    \item {\bf Experimental result reproducibility}
    \item[] Question: Does the paper fully disclose all the information needed to reproduce the main experimental results of the paper to the extent that it affects the main claims and/or conclusions of the paper (regardless of whether the code and data are provided or not)?
    \item[] Answer: \answerYes{} % Replace by \answerYes{}, \answerNo{}, or \answerNA{}.
    \item[] Justification: See appendix G
    \item[] Guidelines:
    \begin{itemize}
        \item The answer \answerNA{} means that the paper does not include experiments.
        \item If the paper includes experiments, a \answerNo{} answer to this question will not be perceived well by the reviewers: Making the paper reproducible is important, regardless of whether the code and data are provided or not.
        \item If the contribution is a dataset and\slash or model, the authors should describe the steps taken to make their results reproducible or verifiable. 
        \item Depending on the contribution, reproducibility can be accomplished in various ways. For example, if the contribution is a novel architecture, describing the architecture fully might suffice, or if the contribution is a specific model and empirical evaluation, it may be necessary to either make it possible for others to replicate the model with the same dataset, or provide access to the model. In general. releasing code and data is often one good way to accomplish this, but reproducibility can also be provided via detailed instructions for how to replicate the results, access to a hosted model (e.g., in the case of a large language model), releasing of a model checkpoint, or other means that are appropriate to the research performed.
        \item While NeurIPS does not require releasing code, the conference does require all submissions to provide some reasonable avenue for reproducibility, which may depend on the nature of the contribution. For example
        \begin{enumerate}
            \item If the contribution is primarily a new algorithm, the paper should make it clear how to reproduce that algorithm.
            \item If the contribution is primarily a new model architecture, the paper should describe the architecture clearly and fully.
            \item If the contribution is a new model (e.g., a large language model), then there should either be a way to access this model for reproducing the results or a way to reproduce the model (e.g., with an open-source dataset or instructions for how to construct the dataset).
            \item We recognize that reproducibility may be tricky in some cases, in which case authors are welcome to describe the particular way they provide for reproducibility. In the case of closed-source models, it may be that access to the model is limited in some way (e.g., to registered users), but it should be possible for other researchers to have some path to reproducing or verifying the results.
        \end{enumerate}
    \end{itemize}

\item {\bf Open access to data and code}
    \item[] Question: Does the paper provide open access to the data and code, with sufficient instructions to faithfully reproduce the main experimental results, as described in supplemental material?
    \item[] Answer: \answerYes{} % Replace by \answerYes{}, \answerNo{}, or \answerNA{}.
    \item[] Justification: See supplementary materials
    \item[] Guidelines:
    \begin{itemize}
        \item The answer \answerNA{} means that paper does not include experiments requiring code.
        \item Please see the NeurIPS code and data submission guidelines (\url{https://neurips.cc/public/guides/CodeSubmissionPolicy}) for more details.
        \item While we encourage the release of code and data, we understand that this might not be possible, so \answerNo{} is an acceptable answer. Papers cannot be rejected simply for not including code, unless this is central to the contribution (e.g., for a new open-source benchmark).
        \item The instructions should contain the exact command and environment needed to run to reproduce the results. See the NeurIPS code and data submission guidelines (\url{https://neurips.cc/public/guides/CodeSubmissionPolicy}) for more details.
        \item The authors should provide instructions on data access and preparation, including how to access the raw data, preprocessed data, intermediate data, and generated data, etc.
        \item The authors should provide scripts to reproduce all experimental results for the new proposed method and baselines. If only a subset of experiments are reproducible, they should state which ones are omitted from the script and why.
        \item At submission time, to preserve anonymity, the authors should release anonymized versions (if applicable).
        \item Providing as much information as possible in supplemental material (appended to the paper) is recommended, but including URLs to data and code is permitted.
    \end{itemize}

\item {\bf Experimental setting/details}
    \item[] Question: Does the paper specify all the training and test details (e.g., data splits, hyperparameters, how they were chosen, type of optimizer) necessary to understand the results?
    \item[] Answer: \answerYes{} % Replace by \answerYes{}, \answerNo{}, or \answerNA{}.
    \item[] Justification: See appendix G
    \item[] Guidelines:
    \begin{itemize}
        \item The answer \answerNA{} means that the paper does not include experiments.
        \item The experimental setting should be presented in the core of the paper to a level of detail that is necessary to appreciate the results and make sense of them.
        \item The full details can be provided either with the code, in appendix, or as supplemental material.
    \end{itemize}

\item {\bf Experiment statistical significance}
    \item[] Question: Does the paper report error bars suitably and correctly defined or other appropriate information about the statistical significance of the experiments?
    \item[] Answer: \answerYes{} % Replace by \answerYes{}, \answerNo{}, or \answerNA{}.
    \item[] Justification: See appendix G
    \item[] Guidelines:
    \begin{itemize}
        \item The answer \answerNA{} means that the paper does not include experiments.
        \item The authors should answer \answerYes{} if the results are accompanied by error bars, confidence intervals, or statistical significance tests, at least for the experiments that support the main claims of the paper.
        \item The factors of variability that the error bars are capturing should be clearly stated (for example, train/test split, initialization, random drawing of some parameter, or overall run with given experimental conditions).
        \item The method for calculating the error bars should be explained (closed form formula, call to a library function, bootstrap, etc.)
        \item The assumptions made should be given (e.g., Normally distributed errors).
        \item It should be clear whether the error bar is the standard deviation or the standard error of the mean.
        \item It is OK to report 1-sigma error bars, but one should state it. The authors should preferably report a 2-sigma error bar than state that they have a 96\% CI, if the hypothesis of Normality of errors is not verified.
        \item For asymmetric distributions, the authors should be careful not to show in tables or figures symmetric error bars that would yield results that are out of range (e.g., negative error rates).
        \item If error bars are reported in tables or plots, the authors should explain in the text how they were calculated and reference the corresponding figures or tables in the text.
    \end{itemize}

\item {\bf Experiments compute resources}
    \item[] Question: For each experiment, does the paper provide sufficient information on the computer resources (type of compute workers, memory, time of execution) needed to reproduce the experiments?
    \item[] Answer: \answerNo{} % Replace by \answerYes{}, \answerNo{}, or \answerNA{}.
    \item[] Justification: Any CPU would work
    \item[] Guidelines:
    \begin{itemize}
        \item The answer \answerNA{} means that the paper does not include experiments.
        \item The paper should indicate the type of compute workers CPU or GPU, internal cluster, or cloud provider, including relevant memory and storage.
        \item The paper should provide the amount of compute required for each of the individual experimental runs as well as estimate the total compute. 
        \item The paper should disclose whether the full research project required more compute than the experiments reported in the paper (e.g., preliminary or failed experiments that didn't make it into the paper). 
    \end{itemize}
    
\item {\bf Code of ethics}
    \item[] Question: Does the research conducted in the paper conform, in every respect, with the NeurIPS Code of Ethics \url{https://neurips.cc/public/EthicsGuidelines}?
    \item[] Answer: \answerYes{} % Replace by \answerYes{}, \answerNo{}, or \answerNA{}.
    \item[] Justification: The research conducted in the paper conform, in every respect, with the NeurIPS Code of Ethics \url{https://neurips.cc/public/EthicsGuidelines}
    \item[] Guidelines:
    \begin{itemize}
        \item The answer \answerNA{} means that the authors have not reviewed the NeurIPS Code of Ethics.
        \item If the authors answer \answerNo, they should explain the special circumstances that require a deviation from the Code of Ethics.
        \item The authors should make sure to preserve anonymity (e.g., if there is a special consideration due to laws or regulations in their jurisdiction).
    \end{itemize}

\item {\bf Broader impacts}
    \item[] Question: Does the paper discuss both potential positive societal impacts and negative societal impacts of the work performed?
    \item[] Answer: \answerNA{} % Replace by \answerYes{}, \answerNo{}, or \answerNA{}.
    \item[] Justification: there is no societal impact of the work performed.
    \item[] Guidelines:
    \begin{itemize}
        \item The answer \answerNA{} means that there is no societal impact of the work performed.
        \item If the authors answer \answerNA{} or \answerNo, they should explain why their work has no societal impact or why the paper does not address societal impact.
        \item Examples of negative societal impacts include potential malicious or unintended uses (e.g., disinformation, generating fake profiles, surveillance), fairness considerations (e.g., deployment of technologies that could make decisions that unfairly impact specific groups), privacy considerations, and security considerations.
        \item The conference expects that many papers will be foundational research and not tied to particular applications, let alone deployments. However, if there is a direct path to any negative applications, the authors should point it out. For example, it is legitimate to point out that an improvement in the quality of generative models could be used to generate Deepfakes for disinformation. On the other hand, it is not needed to point out that a generic algorithm for optimizing neural networks could enable people to train models that generate Deepfakes faster.
        \item The authors should consider possible harms that could arise when the technology is being used as intended and functioning correctly, harms that could arise when the technology is being used as intended but gives incorrect results, and harms following from (intentional or unintentional) misuse of the technology.
        \item If there are negative societal impacts, the authors could also discuss possible mitigation strategies (e.g., gated release of models, providing defenses in addition to attacks, mechanisms for monitoring misuse, mechanisms to monitor how a system learns from feedback over time, improving the efficiency and accessibility of ML).
    \end{itemize}
    
\item {\bf Safeguards}
    \item[] Question: Does the paper describe safeguards that have been put in place for responsible release of data or models that have a high risk for misuse (e.g., pre-trained language models, image generators, or scraped datasets)?
    \item[] Answer: \answerNA{} % Replace by \answerYes{}, \answerNo{}, or \answerNA{}.
    \item[] Justification: no such risks.
    \item[] Guidelines:
    \begin{itemize}
        \item The answer \answerNA{} means that the paper poses no such risks.
        \item Released models that have a high risk for misuse or dual-use should be released with necessary safeguards to allow for controlled use of the model, for example by requiring that users adhere to usage guidelines or restrictions to access the model or implementing safety filters. 
        \item Datasets that have been scraped from the Internet could pose safety risks. The authors should describe how they avoided releasing unsafe images.
        \item We recognize that providing effective safeguards is challenging, and many papers do not require this, but we encourage authors to take this into account and make a best faith effort.
    \end{itemize}

\item {\bf Licenses for existing assets}
    \item[] Question: Are the creators or original owners of assets (e.g., code, data, models), used in the paper, properly credited and are the license and terms of use explicitly mentioned and properly respected?
    \item[] Answer: \answerYes{} % Replace by \answerYes{}, \answerNo{}, or \answerNA{}.
    \item[] Justification: Apache License 2.0, https://github.com/criteo-research/logistic\_bandit, see appendix G
    \item[] Guidelines:
    \begin{itemize}
        \item The answer \answerNA{} means that the paper does not use existing assets.
        \item The authors should cite the original paper that produced the code package or dataset.
        \item The authors should state which version of the asset is used and, if possible, include a URL.
        \item The name of the license (e.g., CC-BY 4.0) should be included for each asset.
        \item For scraped data from a particular source (e.g., website), the copyright and terms of service of that source should be provided.
        \item If assets are released, the license, copyright information, and terms of use in the package should be provided. For popular datasets, \url{paperswithcode.com/datasets} has curated licenses for some datasets. Their licensing guide can help determine the license of a dataset.
        \item For existing datasets that are re-packaged, both the original license and the license of the derived asset (if it has changed) should be provided.
        \item If this information is not available online, the authors are encouraged to reach out to the asset's creators.
    \end{itemize}

\item {\bf New assets}
    \item[] Question: Are new assets introduced in the paper well documented and is the documentation provided alongside the assets?
    \item[] Answer: \answerNA{} % Replace by \answerYes{}, \answerNo{}, or \answerNA{}.
    \item[] Justification: the paper does not release new assets
    \item[] Guidelines:
    \begin{itemize}
        \item The answer \answerNA{} means that the paper does not release new assets.
        \item Researchers should communicate the details of the dataset\slash code\slash model as part of their submissions via structured templates. This includes details about training, license, limitations, etc. 
        \item The paper should discuss whether and how consent was obtained from people whose asset is used.
        \item At submission time, remember to anonymize your assets (if applicable). You can either create an anonymized URL or include an anonymized zip file.
    \end{itemize}

\item {\bf Crowdsourcing and research with human subjects}
    \item[] Question: For crowdsourcing experiments and research with human subjects, does the paper include the full text of instructions given to participants and screenshots, if applicable, as well as details about compensation (if any)? 
    \item[] Answer: \answerNA{} % Replace by \answerYes{}, \answerNo{}, or \answerNA{}.
    \item[] Justification: the paper does not involve crowdsourcing nor research with human subjects
    \item[] Guidelines:
    \begin{itemize}
        \item The answer \answerNA{} means that the paper does not involve crowdsourcing nor research with human subjects.
        \item Including this information in the supplemental material is fine, but if the main contribution of the paper involves human subjects, then as much detail as possible should be included in the main paper. 
        \item According to the NeurIPS Code of Ethics, workers involved in data collection, curation, or other labor should be paid at least the minimum wage in the country of the data collector. 
    \end{itemize}

\item {\bf Institutional review board (IRB) approvals or equivalent for research with human subjects}
    \item[] Question: Does the paper describe potential risks incurred by study participants, whether such risks were disclosed to the subjects, and whether Institutional Review Board (IRB) approvals (or an equivalent approval/review based on the requirements of your country or institution) were obtained?
    \item[] Answer: \answerNA{} % Replace by \answerYes{}, \answerNo{}, or \answerNA{}.
    \item[] Justification: the paper does not involve crowdsourcing nor research with human subjects.
    \item[] Guidelines:
    \begin{itemize}
        \item The answer \answerNA{} means that the paper does not involve crowdsourcing nor research with human subjects.
        \item Depending on the country in which research is conducted, IRB approval (or equivalent) may be required for any human subjects research. If you obtained IRB approval, you should clearly state this in the paper. 
        \item We recognize that the procedures for this may vary significantly between institutions and locations, and we expect authors to adhere to the NeurIPS Code of Ethics and the guidelines for their institution. 
        \item For initial submissions, do not include any information that would break anonymity (if applicable), such as the institution conducting the review.
    \end{itemize}

\item {\bf Declaration of LLM usage}
    \item[] Question: Does the paper describe the usage of LLMs if it is an important, original, or non-standard component of the core methods in this research? Note that if the LLM is used only for writing, editing, or formatting purposes and does \emph{not} impact the core methodology, scientific rigor, or originality of the research, declaration is not required.
    %this research? 
    \item[] Answer: \answerNA{} % Replace by \answerYes{}, \answerNo{}, or \answerNA{}.
    \item[] Justification:  the core method development in this research does not involve LLMs as any important, original, or non-standard components
    \item[] Guidelines:
    \begin{itemize}
        \item The answer \answerNA{} means that the core method development in this research does not involve LLMs as any important, original, or non-standard components.
        \item Please refer to our LLM policy in the NeurIPS handbook for what should or should not be described.
    \end{itemize}

\end{enumerate}
\end{document}